\documentclass[twoside,11pt]{article}
\usepackage[T1]{fontenc}
\usepackage{lmodern,microtype,mathtools,amsthm}
\usepackage[abbrvbib,preprint]{jmlr2e}
\usepackage[letterpaper,margin=1in]{geometry}
\usepackage{booktabs,array,tabularx,longtable,enumitem,needspace,float}
\usepackage{pgfplots}
\floatstyle{ruled}
\newfloat{algorithm}{tbp}{loa}
\floatname{algorithm}{Algorithm}
\makeatletter
\def\caption@documentclass{standard}
\makeatother
\usepackage[font=small,labelfont=it,labelsep=colon,format=hang,justification=raggedright]{caption}
\makeatletter
\renewenvironment{proof}[1][Proof]{\par
  \pushQED{\qed}\normalfont\topsep6\p@\@plus6\p@\relax
  \trivlist\item[\hskip\labelsep\bfseries #1\@addpunct{.}]\ignorespaces
}{\popQED\endtrivlist\@endpefalse}
\makeatother

\hypersetup{hidelinks,
  pdftitle={Geometry-Dependent Online and Offline Approximation for Non-Monotone DR-Submodular Maximization},
  pdfauthor={Vaneet Aggarwal and Yiyang Lu}}
\allowdisplaybreaks[2]
\numberwithin{theorem}{section}
\newcommand{\R}{\mathbb R}
\newcommand{\E}{\mathbb E}
\newcommand{\one}{\mathbf 1}
\newcommand{\zero}{\mathbf 0}

\newcommand{\alow}{\underline{\alpha}}
\newcommand{\aup}{\overline{\alpha}}

\newcommand{\aor}{\alpha_{\mathrm{oracle}}}
\newcommand{\aon}{\alpha_{\mathrm{on}}}
\newcommand{\ip}[2]{\langle #1,#2\rangle}
\newcommand{\norm}[1]{\lVert #1\rVert}
\DeclareMathOperator{\Reg}{Reg}
\DeclareMathOperator{\diam}{diam}

\DeclareMathOperator{\OPT}{OPT}
\pgfplotsset{compat=1.18}
\title{Geometry-Dependent Online and Offline Approximation\\
for Non-Monotone DR-Submodular Maximization}
\author{\name Vaneet Aggarwal \\
  \addr Purdue University 
  \AND
  \name Yiyang Lu \\
  \addr Purdue University}
\firstpageno{1}
\begin{document}
\pagestyle{plain}
\raggedbottom
\maketitle
\thispagestyle{plain}
\begin{abstract}
We study online and offline maximization of nonnegative, non-monotone DR-submodular functions over compact convex down-closed sets $K\subseteq[0,1]^d$ that contain a supplied diagonal point $\zeta\one$. We give an online algorithm whose expected total reward is at least $\underline\alpha(\zeta)=\tfrac12-(1-2\zeta)_+^2/[2(3-2\zeta)^2]$ times that of the best fixed feasible point, minus $O(\sqrt T)$, using one gradient query and one projection onto $K$ per round; the gradients may be stochastic, and $\underline\alpha(0)=4/9$. From value queries alone, the same coefficient is attained with $O(T^{3/4})$ regret from one post-decision noisy value query on the cube per round, or $O(\sqrt T)$ regret from two exact value queries. Conversely, no offline algorithm that makes polynomially many queries, each returning the exact value and gradient, attains a coefficient above an explicit constant $\beta_*\approx0.4704$ at $\zeta=0$. This improves the bound $2(e^{-1/2}-e^{-1})\approx0.478$ of Oveis Gharan and Vondr\'ak (2011), and for positive $\zeta$ two explicit families of hard instances give an upper bound, which we reduce exactly to finitely many convex problems below $\zeta=1/8$ and give in closed form above it. The bound $\beta_*$ also holds for nonnegative submodular set functions under a cardinality constraint, improving the previous $0.478$. An online-to-offline conversion gives the coefficient $\underline\alpha(\zeta)-\varepsilon$ offline with explicit query counts, and running an online learner on a repeated objective shows that the offline upper bound also limits online learners with polynomial query budgets and uniform regret bounds of order $T^\kappa$ with $\kappa<1$. The best online and offline coefficients both equal $1/2$ for $\zeta\ge1/2$ and fall short of $1/2$ by an amount of order $(1/2-\zeta)^2$ below this level. For quadratic objectives we attain the coefficient $1/2$, and for cubic objectives $\max\{8/17,2\sqrt{3-4r}/(1+\sqrt{3-4r})^2\}$ with $r=\min\{\zeta,1/2\}$, online and offline. For submodular set functions, the multilinear extension and established rounding give $\underline\alpha(\zeta)-\varepsilon$ under one matroid whose polytope contains $\zeta\one$ and $4/9-\varepsilon$ under a fixed number of knapsack constraints.
\end{abstract}

\begin{keywords}
DR-submodular maximization, approximation algorithms, online optimization, symmetry gaps, geometric constraints
\end{keywords}

\section{Introduction}\label{sec:intro}
In many allocation problems, adding more of a resource helps less as the allocation grows and can eventually hurt: activating more items creates interference, oversupplying one component wastes budget, and placing probability mass on competing choices lowers the expected payoff. A differentiable function on the cube $[0,1]^d$ models this when each partial derivative is nonincreasing as any coordinate grows, including its own, that is, when $u\le w$ implies $\nabla f(u)\ge\nabla f(w)$ coordinatewise; such functions are called DR-submodular, where DR stands for diminishing returns. They are concave along nonnegative directions but not concave in general, so even offline maximization over a convex set is approximated rather than solved exactly \citep{bian2017guaranteed,bian2017continuous,bian2019}.

We study maximization of nonnegative, non-monotone DR-submodular functions over a compact convex set $K\subseteq[0,1]^d$ that is down-closed, meaning that $x\in K$ and $\zero\le y\le x$ imply $y\in K$. In the offline problem, the algorithm queries one objective and then returns a feasible point. In the online problem, a learner chooses $p_t\in K$ on each round $t=1,\ldots,T$ before observing information about the objective $f_t$ of that round. For $\alpha\in(0,1]$, its $\alpha$-regret is $\alpha\max_{o\in K}\sum_tf_t(o)-\sum_tf_t(p_t)$, and we call $\alpha$ the approximation coefficient. The geometric information is a feasible point $\zeta\one$ on the diagonal of the cube, where $\zeta\in[0,1]$ is supplied. Under a cardinality relaxation $\sum_ix_i\le k$ one can take $\zeta=k/d$, and under packing constraints $Ax\le b$ with $A\ge0$ any $\zeta$ with $\zeta A\one\le b$ works.

Our first main result is the online coefficient
\begin{equation}\label{eq:main-profile}
 \alow(\zeta)
 =\frac12-\frac{(1-2\zeta)_+^2}{2(3-2\zeta)^2},
 \qquad \text{where } (s)_+=\max\{s,0\},
\end{equation}
which equals $4/9$ at $\zeta=0$. With one gradient query on the cube and one Euclidean projection onto $K$ per round, the expected $\alow(\zeta)$-regret is $O(\sqrt T)$, also when the gradients are stochastic and only conditionally unbiased. The coefficient $4/9$ improves the online coefficient $0.401$ of \citet{aggarwal2026} and the offline coefficient $0.401$ of \citet{buchbinder2024}. The same coefficient is attained from value queries alone, with regret $O(T^{3/4})$ from one noisy value query per round or $O(\sqrt T)$ from two exact value queries. In every case the rewarded action lies in $K$, while the queries are made after the action is fixed and may lie anywhere in the cube. Table~\ref{tab:perspective} compares these query models and rates with earlier work.

Our second main result is an upper bound on the coefficient of any offline algorithm that makes polynomially many queries, each returning the exact value and the full gradient of the objective. At $\zeta=0$ the bound is
\begin{equation}\label{eq:intro-upper-endpoint}
 \beta_* = \frac{4\tau_*}{1+6\tau_*+\tau_*^2}\approx0.4704,
 \qquad \text{where } \tau_*=2q_*(1-q_*)\ \text{ and }\
 q_*=\sqrt{\frac{2-3e^{-1/2}}{2-2e^{-1/2}}},
\end{equation}
which improves the bound $2(e^{-1/2}-e^{-1})\approx0.478$ of \citet[Appendix~E.2]{oveis2011}. For $\zeta>0$, two families of hard instances give upper bounds $U(\zeta)$ and $C(\zeta)$, and we write $\aup(\zeta)=\min\{U(\zeta),C(\zeta)\}$. For $0<\zeta<1/8$ we reduce $U(\zeta)$ exactly to finitely many convex minimization problems over the weights of the hard objective, and for $\zeta\ge1/8$ we give $U$ in closed form as a piecewise rational function. The bound $\beta_*$ also holds for nonnegative submodular set functions under a single cardinality constraint, and therefore under one matroid, with value queries.

Each main result transfers to the other setting. The online-to-batch conversion of \citet{pedramfar2024linear}, called OTB there, turns the online algorithm into an offline one with an explicit number of queries. Conversely, an online learner can be run on a sequence that repeats one offline objective, so the offline upper bound also limits every online learner with a polynomial query budget and a uniform regret bound $P_R(d,1+D,1+M,1+B)\,T^\kappa$, where $P_R$ is a fixed polynomial and $\kappa<1$ is fixed (Section~\ref{sec:online-upper-transfer}). For each supplied level $\zeta$, the best online and offline coefficients are worst-case guarantees over all admissible objectives and feasible sets containing $\zeta\one$, and at $\zeta=0$ both lie in $[4/9,\beta_*]$. They equal $1/2$ for $\zeta\ge1/2$, and below this level they are smaller than $1/2$ by an amount of order $(1/2-\zeta)^2$.

\paragraph{Technical novelty in achievability.}
The algorithm is driven by an inequality that holds for every state $x$ and every comparator $o$ in $[0,1]^d$. The state determines the action $h_\zeta(x)=x/(c_\zeta+x)$, where $c_\zeta=\max\{1/2,1-\zeta\}$, and the inequality bounds $\alow(\zeta)f(o)-f(h_\zeta(x))$ by $\alow(\zeta)\ip{g}{o-x}$ for a vector $g$, which we call the field, that depends on $f$ and $x$ but not on $o$. Comparisons of this form, called upper-linearizable, were introduced by \citet{pedramfar2024linear} and have been used for online approximation guarantees for monotone up-concave functions \citep{pedramfar2024linear}, non-monotone DR-submodular functions \citep{pedramfar2024linear,lu2026}, and $\gamma$-weakly $\theta$-up-concave functions \citep{pedramfar2026weakly}. To prove it, we join the cube endpoints to $h_\zeta(x)$ by a lower and an upper coordinate path. An exact identity makes the coefficient along the lower path constant, and ordering the remaining coefficients makes the intermediate terms of their weighted sum nonpositive, although individual increments of $f$ can have either sign; the argument also keeps a nonnegative multiple of $f(\zero)$. Convexity and down-closedness turn $\zeta\one\in K$ into feasibility of $h_\zeta(x)$. Sampling one coordinate and one point on the paths gives an unbiased estimate of $g$ from a single gradient response. The online regret bound then follows from the reduction of \citet{pedramfar2024linear}, called OMBQ there, which runs projected gradient ascent on the estimated fields.

\paragraph{Technical novelty in the upper bound.}
An upper bound of this kind is proved through the symmetry gap of \citet{vondrak2013}: for an objective and feasible set that are invariant under a group of coordinate permutations, it is the ratio between the best value at points that are fixed by the group and the best value overall. Our hard objective combines four submodular components that all take the value one at a common asymmetric point but penalize symmetric points in two competing ways. Exact quadratic inequalities control both kinds of symmetric points and identify $\beta_*$. Coupled packing constraints keep the diagonal point $\zeta\one$ feasible; matching primal and dual inequalities evaluate the resulting bound $U$ for $\zeta\ge1/8$, and an explicit cutoff reduces every smaller positive level to finitely many optimizations over the component weights. A second construction gives $C$. To pass from a symmetry gap to a lower bound against algorithms that also receive full gradients, we smooth the objective so that values and gradients agree exactly near the symmetric subspace, and we show that the constraint oracle reveals no information about the hidden permutation. The lower bound therefore applies to the same access to the objective on the whole cube that the algorithms use. Replicating one group of coordinates turns the four-component instance into one with a single cardinality constraint.

Polynomial structure gives further coefficients. For DR-submodular polynomials of degree at most two, we attain the coefficient $1/2$. For degree at most three, two constructions give $8/17$ and $2\sqrt{3-4\zeta}/(1+\sqrt{3-4\zeta})^2$ for $0\le\zeta\le1/2$; their maximum is $\alpha_3(\zeta)$, and we attain $1/2$ for $\zeta\ge1/2$. The first construction chooses at random between two explicit scalar maps, and the second uses one map adapted to $\zeta$. Inequalities for each component of the polynomial combine into an upper-linearizable comparison for every state and comparator. We implement the field from gradients or from values, treating exact reconstruction of the coefficients and noisy estimation separately, and the same comparisons give offline guarantees. For submodular set functions, we apply these results to the multilinear extension, which averages the set function over a random set containing each element independently with probability equal to its coordinate, and established rounding methods give results under a matroid and under a fixed number of knapsack constraints.

\paragraph{Contributions.}
\begin{enumerate}[leftmargin=*,itemsep=4pt]
\item \textbf{Online achievability.}
Section~\ref{sec:algebraic} proves the upper-linearizable comparison with coefficient $\alow(\zeta)$. Corollaries~\ref{thm:first-order} and~\ref{cor:value-feedback} attain $\alow(\zeta)$ with $O(\sqrt T)$ regret from one gradient or two exact values per round, and with $O(T^{3/4})$ regret from one noisy value.
\item \textbf{Offline upper bound.}
Section~\ref{sec:upper} proves the bound $\beta_*$ at $\zeta=0$ and the two bounds $U$ and $C$, which Theorem~\ref{thm:upper} combines into $\aup(\zeta)$ for algorithms with exact values and full gradients. Proposition~\ref{prop:upper-evaluation} evaluates $U$.
\item \textbf{Consequences for both settings.}
Section~\ref{sec:combined} gives the offline query complexities and proves that offline upper bounds apply to online learners. Theorem~\ref{thm:intervals} states the resulting intervals for both coefficients, the common value $1/2$ for $\zeta\ge1/2$, and the quadratic order of the shortfall below it.
\item \textbf{Submodular set functions.}
Section~\ref{sec:discrete} combines the continuous guarantees with swap rounding for matroids and with the reduction for a fixed number of knapsacks. This gives a $(4/9-\varepsilon)$-approximation under one matroid or a fixed number of knapsack constraints, improving the coefficient $0.401$ of \citet[Corollary~1.3]{buchbinder2024}, and $\alow(\zeta)-\varepsilon$ for a matroid whose polytope contains $\zeta\one$. Lemma~\ref{lem:cardinality-upper} and Corollary~\ref{cor:matroid-upper} show that no algorithm with polynomially many value queries exceeds $\beta_*$ under a cardinality constraint or one matroid.
\item \textbf{Quadratic and cubic objectives.}
Theorem~\ref{thm:polynomial} gives the coefficient $1/2$ for quadratics and $\alpha_3(\zeta)$ for cubics. Section~\ref{sec:polynomial-oracles} implements both from gradients or values, including $O(T^{3/4})$ regret from one noisy value per round, and Corollary~\ref{cor:polynomial-offline} gives the offline versions with explicit query counts.
\end{enumerate}

\begin{table}[H]
\centering\small
\setlength{\tabcolsep}{4pt}
\begin{tabularx}{\textwidth}{@{}>{\raggedright\arraybackslash}p{3.7cm}>{\raggedright\arraybackslash}p{2.3cm}>{\raggedright\arraybackslash}X@{}}
\toprule
Result & Coefficient & Setting and access\\
\midrule
\multicolumn{3}{@{}l}{\textit{Achievable approximation coefficients}}\\[2pt]
\citet{buchbinder2024} & $0.401$ & Offline; DR objectives satisfying their smoothness assumptions, over polytopes on which linear functions can be optimized; set-value queries for submodular set functions.\\
\citet{lu2026} & $1/e$ & Online; one gradient query and $O(\sqrt T)$ regret.\\
\citet{aggarwal2026} & $0.401$ & Online; one noisy value query and $O(T^{4/5})$ regret.\\
Corollary~\ref{thm:first-order} & $\alow(\zeta)$; $4/9$ at $0$ & Online; one gradient, one projection, and $O(\sqrt T)$ regret.\\
Corollary~\ref{cor:value-feedback} & $\alow(\zeta)$; $4/9$ at $0$ & Online; one post-decision noisy value query on the cube and $O(T^{3/4})$ regret, or two exact value queries and $O(\sqrt T)$ regret.\\
Section~\ref{sec:offline} & $\alow(\zeta)-\varepsilon$ & Offline; gradient or value queries and projections.\\
Theorem~\ref{thm:polynomial} & $1/2$; $\alpha_3(\zeta)$ & Quadratic and cubic objectives; gradient or value queries, polynomial time.\\
\midrule
\multicolumn{3}{@{}l}{\textit{Upper bounds at $\zeta=0$ for algorithms with polynomially many queries}}\\[2pt]
\citet{oveis2011} & $2(e^{-1/2}-e^{-1})$ & Submodular set functions under one matroid; value queries.\\
\citet{qi2022} & $0.478$ & Submodular set functions under a cardinality constraint; value queries.\\
Corollary~\ref{cor:upper-endpoint} & $\beta_*$ in \eqref{eq:intro-upper-endpoint} & DR objectives over down-closed sets; exact values and full gradients.\\
Lemma~\ref{lem:cardinality-upper} & $\beta_*$ & Submodular set functions under a cardinality constraint or one matroid; value queries.\\
\bottomrule
\end{tabularx}
\caption{Earlier coefficients and the results of this paper. Our continuous algorithms query the objective on $[0,1]^d$ and access $K$ through a separate projection; the matroid results also use the independence and decomposition oracles of Section~\ref{sec:discrete-model}. The polynomial coefficients apply only to the indicated subclasses.}\label{tab:perspective}
\end{table}

\section{Related Work}\label{sec:related}
\paragraph{Offline continuous relaxations.}
Continuous relaxations and the measured continuous greedy algorithm underlie constrained non-monotone submodular maximization \citep{feldman2011,ene2016,buchbinder2019}, and DR-submodularity extends this approach to continuous objectives \citep{bian2017guaranteed,bian2017continuous}. \citet{buchbinder2024} obtain the coefficient $0.401$ from stronger DR inequalities and an asymmetric procedure that depends on the objective, under their smoothness and solvability assumptions. Our offline algorithm instead follows from a comparison that holds before the objective is observed, and its coefficient depends on the diagonal point.

\paragraph{Online DR-submodular maximization.}
Online continuous submodular optimization was introduced by \citet{chen2018}, and non-monotone objectives and weaker feedback were studied by \citet{thang2021,zhang2023,pedramfar2024unified}. \citet{aggarwal2026} obtain $0.401$ with value queries made after each decision, controlling additional comparator-dependent terms with an asymmetric online learner. For general convex constraints, \citet{mualem2023} give an online coefficient $(1-h)/4$ with $h=\min_{x\in K}\|x\|_\infty$; every down-closed set contains $\zero$, so their coefficient is $1/4$ here. Their parameter measures the distance of $K$ from the origin, whereas $\zeta$ measures how far $K$ extends along the diagonal.

\paragraph{Upper-linearizable optimization.}
The framework used here is due to \citet{pedramfar2024linear}. They define upper-linearizable comparisons, reduce online maximization to such a comparison (their OMBQ), convert online algorithms to offline ones (their OTB), and convert gradient feedback to value feedback by smoothing (their FOTZO), with applications to monotone up-concave and non-monotone DR-submodular objectives. The framework was later applied to non-monotone DR-submodular maximization over down-closed sets with coefficient $1/e$ by \citet{lu2026}, and to $\gamma$-weakly $\theta$-up-concave functions by \citet{pedramfar2026weakly}. We use OMBQ and OTB with projected gradient ascent \citep{zinkevich2003}, and adapt FOTZO so that all value queries stay in the cube. Our comparison has a single linear term in the comparator at coefficient $4/9$, and our implementation results supply the unbiased field estimate, its conditional moments, and the query procedures for polynomials; for the random cubic action we average the comparison conditional on the state, and for noisy reconstruction of polynomial coefficients we bound the additional comparison error.

\paragraph{Coordinate paths and polynomial structure.}
The two coordinate paths resemble continuous double greedy \citep{bian2019} and online unconstrained submodular maximization \citep{roughgarden2018}. Here the action is determined by the learner's state, the comparator is fractional, and the coordinate order is taken from the state. The results for fixed degree use graph representations of pairwise submodular functions \citep{kolmogorov2004} and the quadratization of cubic terms \citep{billionnet1985}; we prove the decompositions into components, their extension to general polynomials by polarization, and the scalar inequalities for the two cubic actions.

\paragraph{Upper bounds.}
\citet{oveis2011} obtain the symmetry gap $2(e^{-1/2}-e^{-1})$ under a matroid constraint (their Appendix~E.2) and $0.491$ under a cardinality constraint, and \citet{qi2022} improves the latter to $0.478$ by replicating elements. \citet{vondrak2013} develops the general symmetry-gap framework for value queries. We add the smoothing and gradient-agreement steps needed for an oracle that returns full gradients, and construct the instances behind $\beta_*$ and the bounds for positive $\zeta$.

\paragraph{Rounding for set functions.}
\citet{chekuri2010swap} round fractional points of a matroid polytope by randomized swap rounding, which preserves the expected multilinear value of a submodular objective, and \citet{kulik2013knapsack} transfer a continuous approximation to a fixed number of knapsack constraints by enumeration and rounding of the residual instances. We use the uniform coefficient $4/9$ for knapsacks because the enumeration changes the residual capacities, so the residual relaxations need not contain the original diagonal point.

\section{Problem Definition and Background}\label{sec:background}
The objectives are nonnegative diminishing-returns functions on a cube, and the feasible region is a down-closed subset of that cube. The geometric assumption concerns only the feasible actions, while the objective oracles answer queries on the entire cube.

\subsection{Objectives and feasible sets}\label{sec:model-geometry}
For vectors, inequalities, products, divisions, minima, and scalar maps are interpreted coordinatewise. Let $Q=[0,1]^d$, let $e_i$ be its $i$th coordinate vector, and let $\one$ and $\zero$ denote the all-ones and all-zeros vectors. We use $[d]=\{1,\ldots,d\}$ and the Euclidean inner product and norm.

The objective class is defined by a differential form of diminishing returns (DR).
\begin{definition}[DR-submodularity]\label{def:dr}
A continuously differentiable function $f:Q\to\R$ is DR-submodular if
\begin{equation}\label{eq:dr}
 u\le w\quad\Longrightarrow\quad \nabla f(u)\ge\nabla f(w)
 \qquad(u,w\in Q).
\end{equation}
The gradients at the boundary are the continuous extensions of the interior gradients.
\end{definition}
Every objective in this paper is nonnegative on $Q$ but need not be monotone, so a partial derivative may have either sign. When $f$ is twice continuously differentiable, \eqref{eq:dr} is equivalent to every entry of its Hessian being nonpositive. By Lemma~\ref{lem:directional-dr}, the property implies concavity along every segment whose direction is coordinatewise nonnegative or coordinatewise nonpositive.

Let $K\subseteq Q$ be nonempty, compact, convex, and down-closed, meaning that $x\in K$ and $\zero\le y\le x$ imply $y\in K$; in particular, $\zero\in K$. A supplied number $\zeta\in[0,1]$ satisfies
\begin{equation}\label{eq:zeta-condition}
 \zeta\one\in K.
\end{equation}
We call $\zeta$ the \emph{diagonal level} and $\zeta\one$ the corresponding \emph{diagonal point}. The \emph{diagonal radius} of $K$ is the largest feasible level
\(\zeta(K):=\max\{s\in[0,1]:s\one\in K\}\), which exists because $K$ is compact and contains $\zero$. Down-closedness gives $s\one\in K$ for every $s\le\zeta(K)$, so \eqref{eq:zeta-condition} is equivalent to $\zeta\le\zeta(K)$, and the supplied level need not equal the radius. For example, when $K=\{x\in Q:Ax\le b\}$ with $A\ge0$, the condition is equivalent to $\zeta A\one\le b$; for the cardinality relaxation $\sum_i x_i\le k$, we may choose $\zeta=\min\{1,k/d\}$.

We write $D=\diam(K)$, which the algorithms may replace by any supplied upper bound such as $\sqrt d$. The algorithms access $K$ only through the Euclidean projection $\Pi_K:\R^d\to K$, and projections are counted separately from objective queries.

\subsection{Online and offline oracle models}\label{sec:model-oracles}
The online objective sequence $f_1,\ldots,f_T$ is chosen by an oblivious adversary, that is, before the learner's randomization. On round $t$, the learner first commits an action $p_t\in K$ and earns $f_t(p_t)$, after which it can query the current objective at points of $Q$. Every response on that round refers to $f_t$. We distinguish gradient responses from scalar value responses and state the number of queries of each type used by every implementation.

The regularity bounds used for algorithmic guarantees are
\begin{equation}\label{eq:regularity}
 0\le f_t(y)\le M,\qquad
 \|\nabla f_t(y)\|_2\le B
 \qquad(y\in Q).
\end{equation}
Here $M$ bounds the values and $B$ the gradients. A stochastic gradient response $V_t(y)$ satisfies
\begin{equation}\label{eq:grad-oracle}
 \E[V_t(y)\mid\mathcal F]=\nabla f_t(y),\qquad
 \E[\|V_t(y)\|_2^2\mid\mathcal F]\le B_1^2,
\end{equation}
where the information $\mathcal F$ includes the objective sequence, the current query point, and all choices and responses preceding this answer. Because $\mathcal F$ contains the earlier responses, the same conditions apply when several responses are requested on one round. When a noise bound is needed separately, we assume
\begin{equation}\label{eq:gradient-noise}
 \E[\|V_t(y)-\nabla f_t(y)\|_2^2\mid\mathcal F]\le\sigma_g^2;
 \qquad B_1^2=B^2+\sigma_g^2
\end{equation}
is then an admissible response bound, since the conditional mean of $V_t(y)-\nabla f_t(y)$ is zero and therefore
$\E[\|V_t(y)\|_2^2\mid\mathcal F]=\|\nabla f_t(y)\|_2^2+\E[\|V_t(y)-\nabla f_t(y)\|_2^2\mid\mathcal F]$.
Exact gradients correspond to $\sigma_g=0$.

A stochastic value response $Z_t(y)$ satisfies
\begin{equation}\label{eq:value-oracle}
 \begin{split}
 \E[Z_t(y)\mid\mathcal F]&=f_t(y),\qquad
 \E[(Z_t(y)-f_t(y))^2\mid\mathcal F]\le\sigma_v^2,\\
 \E[Z_t(y)^2\mid\mathcal F]&\le M_1^2,
 \qquad M_1^2=M^2+\sigma_v^2\ \text{is admissible},
 \end{split}
\end{equation}
where admissibility follows from the same variance decomposition. Exact values correspond to $\sigma_v=0$. Value feedback is a post-decision query to the current objective at a point of $Q$ that may differ from the rewarded action, as in the value-query model of \citet{aggarwal2026}.

For $\alpha\in[0,1]$, the expected static approximate regret is
\begin{equation}\label{eq:regret}
 \Reg_\alpha(T)=
 \alpha\max_{o\in K}\sum_{t=1}^T f_t(o)
 -\E\sum_{t=1}^T f_t(p_t).
\end{equation}
The expectation includes the learner's randomness and the oracle noise. The comparator $o$ is one feasible point used for the whole sequence; it exists because $K$ is compact and $\sum_tf_t$ is continuous, and it is deterministic because the adversary is oblivious.

In the offline model there is one objective $f$, and the algorithm queries it before returning a point $p\in K$. Write $\OPT=\max_{o\in K}f(o)$. An additive guarantee has the form $\E f(p)\ge\alpha\OPT-\epsilon$, and a multiplicative guarantee has the form $\E f(p)\ge(\alpha-\varepsilon)\OPT$. Conversions between the two use a supplied positive scale $v\le\OPT$, and we state the dependence on $v$. The upper bounds permit the stronger exact response $(f(y),\nabla f(y))$ at each query and count all such responses, so they apply to gradient and value-query algorithms alike.

\subsection{Upper-linearizable comparisons}\label{sec:model-linearizable}
An online algorithm can keep an internal state that differs from the action receiving reward. If the reward deficit is bounded by a term linear in the displacement from this state to the comparator, a linear online learner can select the states. We use the framework of \citet{pedramfar2024linear} in the following form.

\begin{definition}[Upper-linearizability]\label{def:linearizable}
A function class on $Q$, an action map $h:K\to K$, and a vector field $g(f,x)\in\R^d$ are upper-linearizable with parameters $\alpha,\beta>0$ if
\begin{equation}\label{eq:linearizable}
 \alpha f(o)-f(h(x))
 \le\beta\ip{g(f,x)}{o-x}
 \qquad(x,o\in K).
\end{equation}
The action map is independent of $f$, and the vector field is independent of $o$.
\end{definition}
Here $x$ is the learner's state, $h(x)$ is its feasible action, $g(f,x)$ is the vector supplied to the linear learner after feedback, $\alpha$ is the reward coefficient, and $\beta$ scales the linear regret. The field need not be the gradient of a potential. Concave maximization is the case $h(x)=x$, $\alpha=\beta=1$, and $g(f,x)=\nabla f(x)$, by the first-order inequality for concave functions.

We also use an objective-independent distribution of feasible actions $H(x)$. In that case the left side of \eqref{eq:linearizable} is $\alpha f(o)-\E_H f(H(x))$, with the expectation conditional on $x$, and the distribution is selected before the current objective is queried. The meta-algorithm OMBQ of \citet{pedramfar2024linear} runs a linear online learner on the states, plays the prescribed action, and passes an estimate of the field to the learner; Section~\ref{sec:online-transfer} gives the form of its guarantee used here. The algorithmic results therefore reduce to estimating the field with polynomially many queries.

\section{Structural Results}\label{sec:algebraic}
Nonnegative DR-submodular functions are upper-linearizable on every compact convex down-closed set containing $\zeta\one$. The action \(h_\zeta(x)\) and the field \(g_\zeta(f,x)\) are explicit: Lemma~\ref{lem:comparison} compares them with every fractional comparator in the cube, and Corollary~\ref{thm:geometry} uses the diagonal point to show that the action is feasible. Throughout this section, the objectives are nonnegative and continuously differentiable on the entire cube $[0,1]^d$, and their gradients satisfy \eqref{eq:dr}.

\subsection{General DR-submodular objectives}\label{sec:general-structural}
For DR-submodular objectives with no degree restriction, the action \(h_\zeta(x)\) depends only on the state \(x\) and the diagonal level \(\zeta\), and the field \(g_\zeta(f,x)\) averages partial derivatives along two sequences of coordinate segments. These segments are processed in an order chosen so that a weighted sum of function increments is bounded above even when individual increments are negative.

For $\zeta\in[0,1]$, define
\begin{equation}\label{eq:family}
 c_\zeta=\max\{1/2,1-\zeta\},\qquad
 [h_\zeta(x)]_i=\frac{x_i}{c_\zeta+x_i},\qquad
 \alow(\zeta)=\frac{4c_\zeta}{(2c_\zeta+1)^2}.
\end{equation}
Fix a state $x\in[0,1]^d$, and write
\begin{equation}\label{eq:coordinate-weights}
 p=h_\zeta(x),\qquad
 \mu_i=\frac{x_i(1-x_i)}{c_\zeta}\quad(i\in[d]).
\end{equation}
Let $\pi:[d]\to[d]$ be the permutation satisfying $\mu_{\pi(1)}\le\cdots\le\mu_{\pi(d)}$, with ties resolved by increasing original coordinate index.
Two chains of points, starting at $\zero$ and at $\one$, change one coordinate per step and meet at $p$. All vectors retain their original coordinate indices, and coordinate $\pi(k)$ is processed at step $k$.
Starting from $L^0=\zero$ and $U^0=\one$, define
\begin{equation}\label{eq:coordinate-states}
 L^k=L^{k-1}+p_{\pi(k)}e_{\pi(k)},\qquad
 U^k=U^{k-1}-(1-p_{\pi(k)})e_{\pi(k)}
 \quad(k\in[d]).
\end{equation}
Since $0\le p_j\le1/(1+c_\zeta)<1$, all these points lie in the cube. After step $k$, both points have coordinate $p_j$ for every $j\in\{\pi(1),\ldots,\pi(k)\}$, while every remaining coordinate is zero in $L^k$ and one in $U^k$; in particular, $L^d=U^d=p$. For $i=\pi(k)$ and $s\in[0,1]$, the two segments are
\begin{equation}\label{eq:paths}
 \ell_{i,s}=L^{k-1}+sp_i e_i,\qquad
 u_{i,s}=U^{k-1}-s(1-p_i)e_i.
\end{equation}
Define the averages $A_i$ and $C_i$ over $s$ and the field $g_\zeta(f,x)$ by
\begin{equation}\label{eq:averages}
 \begin{split}
 A_i&=\int_0^1\partial_i f(\ell_{i,s})\,ds,\qquad
 C_i=\int_0^1\partial_i f(u_{i,s})\,ds,\\
 [g_\zeta(f,x)]_i&=(1-p_i)A_i+p_iC_i\qquad(i\in[d]).
 \end{split}
\end{equation}
The entry computed at step $k$ is stored in coordinate $\pi(k)$ of $g_\zeta(f,x)$. The order and the segments depend only on $x$ and $\zeta$, and the averages remain defined for zero-length segments because they involve no division by $p_i$ or $1-p_i$.

The lower and upper chains $L^k$ and $U^k$ follow the comparison strategy of continuous double-greedy analysis \citep{bian2019}. In Lemma~\ref{lem:comparison} below, the action $h_\zeta(x)$ is prescribed by the state, the comparator $o$ is an arbitrary point of the cube, and the value at the origin $f(\zero)$ is retained. The proof uses nonnegativity of $f$ at the intermediate upper points $U^k$, which lie in the cube but need not lie in $K$.
\begin{lemma}[Comparison with every fractional comparator]\label{lem:comparison}\label{thm:algebraic}
For every $\zeta\in[0,1]$, every nonnegative continuously differentiable DR-submodular function $f:[0,1]^d\to\R$, and all $x,o\in[0,1]^d$,
\begin{equation}\label{eq:endpoint-comparison}
 f(o)+c_\zeta f(\zero)
 -\frac{1}{\alow(\zeta)}f(h_\zeta(x))
 \le\ip{g_\zeta(f,x)}{o-x}.
\end{equation}
For each fixed $f$, the field $x\mapsto g_\zeta(f,x)$ is Borel measurable.
\end{lemma}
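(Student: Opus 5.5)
The plan follows continuous double greedy: telescope $f(o)-f(p)$ along a comparator chain that meets $p=h_\zeta(x)$, bound each step by the averages $A_i,C_i$ of \eqref{eq:averages}, and pay for the $o$-independent leftover with the chain increments of $f$.

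\textbf{Step 1: comparator chain.} With $p=h_\zeta(x)$, define $O^0=o$ and $O^k=O^{k-1}+(p_{\pi(k)}-o_{\pi(k)})e_{\pi(k)}$. Thus $O^k$ agrees with $p$ on the first $k$ processed coordinates and with $o$ elsewhere, so $L^k\le O^k\le U^k$ and $O^d=p$. This gives
\[
f(o)-f(p)=\sum_{k=1}^d\bigl[f(O^{k-1})-f(O^k)\bigr].
\]

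\textbf{Step 2: bound each step by $A_i$ or $C_i$.} Fix $i=\pi(k)$ and use \eqref{eq:dr}. If $o_i\ge p_i$, the step integrates $\partial_i f$ over coordinate values in $[p_i,o_i]$ at base point $O^{k-1}\ge L^{k-1}$. These points dominate the points $\ell_{i,s}$, whose $i$th coordinate lies in $[0,p_i]$, so $\partial_if$ is smaller there and
\[
f(O^{k-1})-f(O^k)\le(o_i-p_i)A_i .
\]
If $o_i<p_i$, the same comparison against $u_{i,s}\ge$ (segment points) gives
\[
f(O^{k-1})-f(O^k)\le-(p_i-o_i)C_i .
\]
Also $\ell_{i,s}\le u_{i,s'}$ for all $s,s'$, so $A_i\ge C_i$. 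Hence $(o_i-p_i)A_i$ dominates $(o_i-p_i)C_i$ when $o_i\ge p_i$, and conversely when $o_i<p_i$. In both cases the step is at most $\phi_i(o_i):=\max\{(o_i-p_i)A_i,(o_i-p_i)C_i\}$, a convex piecewise-linear function of $o_i$.

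\textbf{Step 3: per-coordinate slack inequality.} I want nonnegative constants $a,b_i$, independent of $o$, with
\[
\phi_i(o_i)\le(o_i-x_i)\bigl[(1-p_i)A_i+p_iC_i\bigr]+a\,p_iA_i-b_i(1-p_i)C_i\qquad\text{for all } o_i\in[0,1].
\]
Here $p_iA_i=f(L^k)-f(L^{k-1})$ and $-(1-p_i)C_i=f(U^k)-f(U^{k-1})$ are the chain increments.

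Both sides are piecewise linear in $o_i$, so it suffices to check $o_i\in\{0,p_i,1\}$. Each check is a linear inequality in $(A_i,C_i)$ that must hold on the half-plane $A_i\ge C_i$. That reduces it to two conditions, one along the direction $A_i=C_i$ and one at $C_i=0$ with $A_i$ of either sign, and these pin down $a$ and $b_i$ as functions of $p_i$.

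I expect the identity $x_i=c_\zeta p_i/(1-p_i)$ to make the lower-path coefficient $a$ constant, equal to $1/\alow(\zeta)$ as the statement predicts. The upper coefficient should then be $b_i=1/\alow(\zeta)-c_\zeta-(\text{something increasing in }\mu_i)$, with $\mu_i=x_i(1-x_i)/c_\zeta$ as in \eqref{eq:coordinate-weights}. This identification is the main computation, and I expect the optimization of $c_\zeta$ to produce exactly $\alow(\zeta)=4c_\zeta/(2c_\zeta+1)^2$.

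\textbf{Step 4: summation by parts.} Summing Step 3 over $k$, the constant lower coefficient telescopes to $a[f(p)-f(\zero)]$. The upper terms form $\sum_k b_{\pi(k)}[f(U^k)-f(U^{k-1})]$. Abel summation turns this into
\[
b_{\pi(d)}f(p)-b_{\pi(1)}f(\one)+\sum_{k<d}\bigl(b_{\pi(k)}-b_{\pi(k+1)}\bigr)f(U^k).
\]
The ordering of $\pi$ by increasing $\mu$ makes $b_{\pi(k)}$ nonincreasing in $k$, so every difference $b_{\pi(k)}-b_{\pi(k+1)}$ is $\ge0$. I still need to decide whether these intermediate terms can be dropped; this is one of the two places where $f\ge0$ enters.

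The endpoint terms $f(\one)$ and $f(\zero)$ carry nonpositive coefficients, which is how the $c_\zeta f(\zero)$ term survives on the left side. Collecting the $f(p)$ coefficients should give exactly $1/\alow(\zeta)$ after the $f(p)$ already present in $f(o)-f(p)$ is moved over. Here $\alow(\zeta)\le1$ and $f\ge0$ let me weaken any surplus. The result is \eqref{eq:endpoint-comparison}.

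\textbf{Main obstacle.} The hard part is making the sign bookkeeping of Steps 3 and 4 consistent: a choice of $b_i$ whose monotonicity matches the ordering by $\mu_i$ and leaves the right leftover multiples of $f(\zero)$ and $f(\one)$. I would first try to solve the three endpoint inequalities of Step 3 with equality at $o_i=p_i$, read off $b_i$, and then verify its monotonicity in $\mu_i$.

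\textbf{Measurability.} The permutation $\pi$ is a piecewise-constant Borel function of $x$; each region is cut out by finitely many polynomial inequalities in the $\mu_i$ together with the tie-breaking rule. On each region, $A_i$ and $C_i$ are continuous in $x$ by dominated convergence, since $\nabla f$ is continuous on the compact cube. Hence $x\mapsto g_\zeta(f,x)$ is Borel.
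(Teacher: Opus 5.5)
Your skeleton matches the paper's: the comparator chain, per-step bounds by $A_i$ and $C_i$, telescoping along $L^k$ and $U^k$, and Abel summation in the order $\pi$. Steps 1--2 and the measurability argument are sound. The gap is Step 3, which carries the proof: you leave it uncomputed, and the values you predict for it would make Step 4 fail.

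Since $\phi_i$ is convex with $\phi_i(0)=-p_iC_i$ and $\phi_i(1)=(1-p_i)A_i$, the least affine majorant of $\phi_i$ on $[0,1]$ with slope $g_i=[g_\zeta(f,x)]_i$ is the chord $(1-p_i)o_iA_i-p_i(1-o_i)C_i=o_ig_i-p_iC_i$. Subtracting $(o_i-x_i)g_i$ leaves $x_i(1-p_i)A_i-p_i(1-x_i)C_i$. The identities $x_i(1-p_i)=c_\zeta p_i$ and $p_i(1-x_i)=\mu_i(1-p_i)$ then reduce your Step 3 condition to $(a-c_\zeta)p_iA_i+(\mu_i-b_i)(1-p_i)C_i\ge0$ on $\{A_i\ge C_i\}$. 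This holds exactly when $a\ge c_\zeta$ and $b_i=\mu_i+(a-c_\zeta)x_i/c_\zeta$.

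The identity does make the lower coefficient constant, but the constant is $a=c_\zeta$, with $b_i=\mu_i$. This $a$ is also what produces the term $c_\zeta f(\zero)$. Your guess $a=1/\alow(\zeta)$ cannot work. The coefficient of $f(p)$ on the right would already be $1+a>1/\alow(\zeta)$ before the upper chain is counted, and nonnegativity of $f$ only lets you enlarge such coefficients, not shrink them. Any $a>c_\zeta$ also makes $b_i$ depend on $x_i$ beyond $\mu_i$: the states $x_i$ and $1-x_i$ share $\mu_i$ but not $b_i$, so $\pi$ no longer orders the $b_i$.

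The sign in Step 4 is also reversed. Abel summation gives
$b_{\pi(d)}f(p)-b_{\pi(1)}f(\one)-\sum_{k<d}(b_{\pi(k+1)}-b_{\pi(k)})f(U^k)$.
The intermediate terms can be dropped using $f(U^k)\ge0$ only when $b_{\pi(k)}$ is \emph{nondecreasing} in $k$. With the nonincreasing weights you describe, they are nonnegative multiples of the values $f(U^k)$, which nothing bounds by $f(p)$, and the argument stops there.

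With $b_i=\mu_i$ and $\pi$ sorting $\mu$ increasingly, the intermediate terms drop. The upper chain then contributes at most $\mu_{\pi(d)}f(p)\le f(p)/(4c_\zeta)$, because $x_i(1-x_i)\le1/4$. The total coefficient of $f(p)$ is $1+c_\zeta+1/(4c_\zeta)=(2c_\zeta+1)^2/(4c_\zeta)=1/\alow(\zeta)$. No optimization over $c_\zeta$ enters the lemma, since $c_\zeta$ is fixed.
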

\begin{proof}
Fix $x$ and $o$, and replace the coordinates of $o$ by those of $p=h_\zeta(x)$ in the order $\pi$. Let $o^0=o$ and, for $i=\pi(k)$, let
\[
 o^k=o^{k-1}+(p_i-o_i)e_i,\qquad
 v_{i,s}=o^{k-1}+s(p_i-o_i)e_i\quad(0\le s\le1),
\]
so that $o^d=p$ and $v_{i,s}$ traverses the segment from $o^{k-1}$ to $o^k$.
The coordinates $\pi(1),\ldots,\pi(k-1)$ of each of
$\ell_{i,s},v_{i,s},u_{i,s}$ equal the corresponding coordinates of $p$. In coordinate $i$, their values are
$sp_i$, $(1-s)o_i+sp_i$, and $(1-s)+sp_i$, respectively, which are ordered because $0\le o_i\le1$. Each remaining coordinate of $v_{i,s}$ belongs to $[0,1]$, while the corresponding coordinates of $\ell_{i,s}$ and $u_{i,s}$ are zero and one. Consequently, \eqref{eq:dr} gives
\begin{equation}\label{eq:comparator-path-order}
 \ell_{i,s}\le v_{i,s}\le u_{i,s},\qquad
 \partial_i f(\ell_{i,s})\ge
 \partial_i f(v_{i,s})\ge
 \partial_i f(u_{i,s}).
\end{equation}

To use these inequalities for either sign of $o_i-p_i$, write
$a_i=(1-p_i)o_i\ge0$ and $b_i=p_i(1-o_i)\ge0$, so that $o_i-p_i=a_i-b_i$. Multiplying the first inequality in \eqref{eq:comparator-path-order} by $a_i$ and the second by $b_i$ gives
\[
 (o_i-p_i)\partial_i f(v_{i,s})
 =a_i\partial_i f(v_{i,s})-b_i\partial_i f(v_{i,s})
 \le a_i\partial_i f(\ell_{i,s})-b_i\partial_i f(u_{i,s}).
\]
Lemma~\ref{lem:calculus}(i), applied to the segment from $o^k$ to $o^{k-1}$, and integration of the last display over $s\in[0,1]$ give
\begin{align}
 f(o^{k-1})-f(o^k)
 &=(o_i-p_i)\int_0^1\partial_i f(v_{i,s})\,ds\nonumber\\
 &\le (1-p_i)o_iA_i-p_i(1-o_i)C_i.
 \label{eq:coordinate-comparison}
\end{align}

Since $o^0=o$ and $o^d=p$, summing \eqref{eq:coordinate-comparison} over $k$ and regrouping with \eqref{eq:averages} gives
\[
 f(o)-f(p)
 \le\sum_{i=1}^d\left(o_i\bigl[(1-p_i)A_i+p_iC_i\bigr]-p_iC_i\right)
 =\ip{g_\zeta(f,x)}{o}-\sum_{i=1}^dp_iC_i.
\]
Adding and subtracting $\ip{g_\zeta(f,x)}{x}$, and using
$x_i[(1-p_i)A_i+p_iC_i]-p_iC_i=x_i(1-p_i)A_i-p_i(1-x_i)C_i$, we obtain
\begin{equation}\label{eq:intercept}
 \begin{split}
 f(o)\le{}&f(p)+\ip{g_\zeta(f,x)}{o-x}\\
 &+\sum_{i=1}^d
 \left[x_i(1-p_i)A_i-p_i(1-x_i)C_i\right].
 \end{split}
\end{equation}

The last sum telescopes along the chains in
\eqref{eq:coordinate-states}. Since $1-p_i=c_\zeta/(c_\zeta+x_i)$, the definitions of $p_i$ and $\mu_i$ give
\begin{equation}\label{eq:balance}
 \begin{split}
 x_i(1-p_i)
 &=\frac{c_\zeta x_i}{c_\zeta+x_i}=c_\zeta p_i,\\
 p_i(1-x_i)
 &=\frac{x_i(1-x_i)}{c_\zeta+x_i}=\mu_i(1-p_i).
 \end{split}
\end{equation}
For $i=\pi(k)$, the segments in \eqref{eq:paths} run from $L^{k-1}$ to $L^k$ and from $U^{k-1}$ to $U^k$, with displacements $p_ie_i$ and $-(1-p_i)e_i$. Lemma~\ref{lem:calculus}(i) therefore gives
\begin{equation}\label{eq:path-increments}
 p_iA_i=f(L^k)-f(L^{k-1}),\qquad
 -(1-p_i)C_i=f(U^k)-f(U^{k-1}).
\end{equation}
By \eqref{eq:balance}, the last sum in \eqref{eq:intercept} equals
$\sum_{k}\bigl[c_\zeta p_{\pi(k)}A_{\pi(k)}-\mu_{\pi(k)}(1-p_{\pi(k)})C_{\pi(k)}\bigr]$. Substituting \eqref{eq:path-increments} and telescoping the first part with $L^0=\zero$ and $L^d=p$ yields
\begin{align}
 &\sum_{i=1}^d
 \left[x_i(1-p_i)A_i-p_i(1-x_i)C_i\right]\nonumber\\
 &\qquad=c_\zeta\bigl[f(p)-f(\zero)\bigr]
 +\sum_{k=1}^d\mu_{\pi(k)}
       \bigl[f(U^k)-f(U^{k-1})\bigr].
 \label{eq:residual-telescope}
\end{align}

The weights in the remaining sum are not constant, so we collect the coefficient of each intermediate value $f(U^k)$. Lemma~\ref{lem:calculus}(ii) with $w_k=\mu_{\pi(k)}$ and $a_k=f(U^k)$ gives
\begin{align}
 &\sum_{k=1}^d\mu_{\pi(k)}
       \bigl[f(U^k)-f(U^{k-1})\bigr]\nonumber\\
 &\quad=\mu_{\pi(d)}f(U^d)-\mu_{\pi(1)}f(U^0)
 -\sum_{k=1}^{d-1}
     \bigl(\mu_{\pi(k+1)}-\mu_{\pi(k)}\bigr)f(U^k)
 \nonumber\\
 &\quad\le\mu_{\pi(d)}f(p)
 \le\frac{1}{4c_\zeta}f(p).
 \label{eq:upper-telescope}
\end{align}
The first inequality uses $f(U^k)\ge0$ for $0\le k\le d$, $\mu_{\pi(1)}\ge0$, and
$\mu_{\pi(k+1)}-\mu_{\pi(k)}\ge0$, which holds by the choice of $\pi$; it also uses $U^d=p$. The second uses
$x_i(1-x_i)=1/4-(x_i-1/2)^2\le1/4$ and $f(p)\ge0$.
When $d=1$, the sum over $k\le d-1$ is empty and the same inequalities apply.

Substituting \eqref{eq:residual-telescope} and
\eqref{eq:upper-telescope} into \eqref{eq:intercept} gives
\[
 f(o)+c_\zeta f(\zero)
 \le\left(1+c_\zeta+\frac{1}{4c_\zeta}\right)f(p)
     +\ip{g_\zeta(f,x)}{o-x}.
\]
Since
$(\alow(\zeta))^{-1}=(2c_\zeta+1)^2/(4c_\zeta)
=(4c_\zeta^2+4c_\zeta+1)/(4c_\zeta)=1+c_\zeta+1/(4c_\zeta)$, this is
\eqref{eq:endpoint-comparison}. Lemma~\ref{lem:calculus}(i) applies to segments on the boundary of the cube, and every displayed expression remains defined when a segment has length zero.

For measurability, partition the cube according to the permutation selected by the tie rule. The set of states selecting a given permutation $\sigma$ is defined by the inequalities $\mu_{\sigma(k)}\le\mu_{\sigma(k+1)}$, strict when $\sigma(k)>\sigma(k+1)$, between continuous functions of $x$, so it is Borel. For a fixed order, the points $\ell_{i,s}$ and $u_{i,s}$ are continuous functions of $(x,s)$, because $p=h_\zeta(x)$ is continuous. Since $\nabla f$ is uniformly continuous on the compact cube, the averages $A_i$ and $C_i$ are continuous in $x$ on each part. Thus each coordinate of $g_\zeta(f,x)$ is Borel measurable on the full cube.
\end{proof}

\begin{corollary}[Geometry-dependent upper-linearizability]\label{thm:geometry}
Let $K\subseteq[0,1]^d$ be nonempty, compact, convex, and down-closed, with $\zeta\one\in K$. Then $h_\zeta(K)\subseteq K$, and
\begin{equation}\label{eq:main-comparison}
 \alow(\zeta)f(o)-f(h_\zeta(x))
 \le\alow(\zeta)\ip{g_\zeta(f,x)}{o-x}
 \qquad(x,o\in K).
\end{equation}
In particular, Definition~\ref{def:linearizable} holds with
$\alpha=\beta=\alow(\zeta)$. This coefficient is nondecreasing and equals
\begin{equation}\label{eq:geometry-profile}
 \begin{aligned}
 \alow(\zeta)&=\begin{cases}
 \displaystyle\frac{4(1-\zeta)}{(3-2\zeta)^2},&0\le\zeta\le1/2,\\[4pt]
 1/2,&1/2\le\zeta\le1,
 \end{cases}\\[4pt]
 &=\frac12-\frac{(1-2\zeta)_+^2}{2(3-2\zeta)^2}.
 \end{aligned}
\end{equation}
Consequently,
\begin{equation}\label{eq:geometry-special}
 \alow(0)=\frac49,\qquad
 \alow(\zeta)=\frac12\quad(\zeta\ge1/2).
\end{equation}
\end{corollary}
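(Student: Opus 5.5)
The corollary has three parts: feasibility of the action, the comparison inequality, and the closed form of the coefficient. We take them in that order.

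\textbf{Feasibility, $h_\zeta(K)\subseteq K$.} Fix $x\in K$ and write $p=h_\zeta(x)$. Each coordinate satisfies $p_i=x_i/(c_\zeta+x_i)$. We aim to show that $p$ is coordinatewise dominated by a convex combination of $x$ and $\zeta\one$; down-closedness then gives $p\in K$. We try the combination $\lambda x+(1-\lambda)\zeta\one$ with $\lambda=1/(1+c_\zeta)$.
\begin{itemize}
\item The map $t\mapsto t/(c_\zeta+t)$ is concave and vanishes at $0$. Its chord from $0$ to $1$ lies below it, so this bound alone does not help.
\item Instead we use tangency at $t=1$. Concavity gives $t/(c+t)\le 1/(1+c)+\frac{c}{(1+c)^2}(t-1)$. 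The right side equals $\frac{c}{(1+c)^2}t+\frac{1}{(1+c)^2}$.
\item For this to be dominated by $\lambda t+(1-\lambda)\zeta$, we choose $\lambda=c/(1+c)^2+\theta$ with $\theta\ge0$. We also need $(1-\lambda)\zeta\ge 1/(1+c)^2$.
\end{itemize}
We simplify by setting $\lambda'=c/(1+c)^2$. The slack $1-\lambda'=(1+c+c^2)/(1+c)^2$ is the weight left for a third point. We use the three-point combination $\lambda' x+\mu\zeta\one+(1-\lambda'-\mu)\zeta$, where the last term is the point $\zero$; this is allowed because $\zero\in K$ and $K$ is convex. Its $i$th coordinate is $\lambda' x_i+\mu\zeta$, so we need $\mu\zeta\ge 1/(1+c)^2$ with $\lambda'+\mu\le1$. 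This amounts to $1/\zeta\le 1+c+c^2$.

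\textbf{Where this fails, and the fallback.} At $\zeta=0$ the requirement $1/\zeta\le 1+c+c^2$ is impossible. Yet $c_0=1$ gives $p_i=x_i/(1+x_i)$, and one must then get $p\le$ some point of $K$ from $x$ alone. Since $x_i/(1+x_i)\le x_i$, the point $x$ itself dominates $p$. So we split into two cases:
\begin{itemize}
\item $\zeta\le1/2$, where $c=1-\zeta$. Use $p_i\le\lambda x_i+(1-\lambda)\zeta$ with $\lambda$ chosen so the bound holds for all $t\in[0,1]$. The difference $\lambda t+(1-\lambda)\zeta-t/(c+t)$ is convex in $t$. We first try the tangent at $t=0$ (slope $1/c$): $t/(c+t)\le t/c$ is too strong. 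So we pick $\lambda$ and verify nonnegativity at the minimizer $t^*=\sqrt{c/\lambda}-c$ by checking a discriminant condition. We take $\lambda=1-\zeta$ and check $(1-\zeta)t+\zeta^2\ge t/(1-\zeta+t)$. Multiplying out, this is a quadratic in $t$ with nonnegative coefficients after using $c=1-\zeta$; we expect it to factor as a perfect square times a positive term.
\item $\zeta\ge1/2$, where $c=1/2$. Here the point $\tfrac12x+\tfrac12\zeta\one$ dominates $\tfrac12x+\tfrac14\one$, so it suffices to check $t/(1/2+t)\le t/2+1/4$. This is $(2t-1)^2\ge0$ after clearing denominators.
\end{itemize}
In each case convexity and down-closedness give $p\in K$. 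Finding this dominating convex combination uniformly in $\zeta$ is the main obstacle.

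\textbf{The comparison inequality.} For $x,o\in K\subseteq[0,1]^d$, apply Lemma~\ref{lem:comparison}. Drop the term $c_\zeta f(\zero)$, which is nonnegative, and multiply through by $\alow(\zeta)>0$. This yields \eqref{eq:main-comparison}.

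\textbf{The closed form.} Substitute $c_\zeta$ into $4c/(2c+1)^2$:
\begin{itemize}
\item For $\zeta\le1/2$, $c=1-\zeta$ gives $4(1-\zeta)/(3-2\zeta)^2$.
\item For $\zeta\ge1/2$, $c=1/2$ gives $2/4=1/2$.
\end{itemize}
The identity $\tfrac12-\frac{4(1-\zeta)}{(3-2\zeta)^2}=\frac{(1-2\zeta)^2}{2(3-2\zeta)^2}$ is direct algebra. Monotonicity follows because $c\mapsto 4c/(2c+1)^2$ is decreasing for $c\ge1/2$ and $c_\zeta$ is nonincreasing in $\zeta$.
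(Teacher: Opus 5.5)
Your final case-split argument is correct and is essentially the paper's. For $\zeta\le1/2$ your dominating point $(1-\zeta)x+\zeta^2\one$ is exactly the paper's $c_\zeta x+(1-c_\zeta)^2\one$, and the check you left as an expectation is the identity $\bigl((1-\zeta)t+\zeta^2\bigr)(1-\zeta+t)-t=(1-\zeta)(t-\zeta)^2$; its linear coefficient $-2\zeta(1-\zeta)$ is negative, so the square gives nonnegativity, not the signs of the coefficients. For $\zeta\ge1/2$ the paper first shrinks $\zeta\one$ to $(1-c_\zeta)\one=\tfrac12\one$ by down-closedness and then takes the convex combination, which is your domination step done in the other order. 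The comparison inequality and the closed form match the paper, and your monotonicity argument via $c\mapsto4c/(2c+1)^2$ decreasing on $[1/2,\infty)$ replaces the paper's direct derivative in $\zeta$.
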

\begin{proof}
By \eqref{eq:family}, $1/2\le c_\zeta\le1$ and $c_\zeta\ge1-\zeta$, so the vector
$q=(1-c_\zeta)\one$ satisfies $\zero\le q\le\zeta\one$, and down-closedness gives $q\in K$. For each $x\in K$, convexity with weight $c_\zeta\in[1/2,1]$ therefore gives
\[
 v=c_\zeta x+(1-c_\zeta)q
   =c_\zeta x+(1-c_\zeta)^2\one\in K.
\]
For every scalar $t\in[0,1]$,
\[
 \bigl(c_\zeta t+(1-c_\zeta)^2\bigr)(c_\zeta+t)-t
 =c_\zeta t^2-2c_\zeta(1-c_\zeta)t+c_\zeta(1-c_\zeta)^2
 =c_\zeta(t+c_\zeta-1)^2,
\]
and division by $c_\zeta+t>0$ gives
\begin{equation}\label{eq:tangent-feasibility}
 c_\zeta t+(1-c_\zeta)^2-\frac{t}{c_\zeta+t}
 =\frac{c_\zeta(t+c_\zeta-1)^2}{c_\zeta+t}\ge0.
\end{equation}
Applying \eqref{eq:tangent-feasibility} with $t=x_i$ in every coordinate yields
$\zero\le h_\zeta(x)\le v$, and a second application of down-closedness proves $h_\zeta(x)\in K$.

Multiply \eqref{eq:endpoint-comparison} by $\alow(\zeta)>0$ and move the origin term to the right. Since $f(\zero)\ge0$,
\[
 \alow(\zeta)f(o)-f(h_\zeta(x))
 \le\alow(\zeta)\ip{g_\zeta(f,x)}{o-x}
     -c_\zeta\alow(\zeta)f(\zero)
 \le\alow(\zeta)\ip{g_\zeta(f,x)}{o-x},
\]
which is \eqref{eq:main-comparison}.

If $\zeta\le1/2$, then $c_\zeta=1-\zeta$ and $2c_\zeta+1=3-2\zeta$; if $\zeta\ge1/2$, then $c_\zeta=1/2$ and $4c_\zeta/(2c_\zeta+1)^2=1/2$.
These substitutions into \eqref{eq:family} give the two branches in \eqref{eq:geometry-profile}.
For the final expression, expand
\[
 \frac12-\frac{4c_\zeta}{(2c_\zeta+1)^2}
 =\frac{(2c_\zeta+1)^2-8c_\zeta}{2(2c_\zeta+1)^2}
 =\frac{(2c_\zeta-1)^2}{2(2c_\zeta+1)^2},
\]
where $2c_\zeta-1=(1-2\zeta)_+$ and $2c_\zeta+1=3-2\zeta$ whenever $\zeta\le1/2$, while the numerator vanishes for $\zeta\ge1/2$. On $0\le\zeta<1/2$, the quotient rule gives
\[
 \frac{d}{d\zeta}\,\frac{4(1-\zeta)}{(3-2\zeta)^2}
 =\frac{-4(3-2\zeta)+16(1-\zeta)}{(3-2\zeta)^3}
 =\frac{4(1-2\zeta)}{(3-2\zeta)^3}\ge0,
\]
and on $[1/2,1]$ the coefficient is constant. The two branches agree at $1/2$, so the coefficient is nondecreasing on $[0,1]$. Substituting $\zeta=0$ gives $4/9$, the first value in \eqref{eq:geometry-special}.
\end{proof}

\begin{remark}[Origin of the factor $4/9$]\label{rem:four-ninths-proof}
At $\zeta=0$ we have $c_0=1$, so the lower-chain contribution in
\eqref{eq:residual-telescope} equals $f(p)-f(\zero)$, and
\eqref{eq:upper-telescope} bounds the upper-chain contribution by
$f(p)/4$. Together with the term $f(p)$ in
\eqref{eq:intercept}, the coefficient of $f(p)$ is
$1+1+1/4=9/4$, and multiplication by $4/9$ gives the comparison.
\end{remark}

\subsection{Quadratic and cubic objectives}\label{sec:polynomial-structure}
Polynomial objectives of fixed degree admit a stronger comparison. We write the objective as a nonnegative combination of constants, unary terms $z_i$ and $1-z_i$, directed cuts $z_i(1-z_j)$, and, for cubics, two kinds of three-variable terms, and we replace each term by a concave majorant. A scalar map applied to every coordinate retains a fixed fraction of this majorant, and a supergradient of the majorant is the comparator-independent field. The representation builds on the quadratic graph-cut construction of \citet{kolmogorov2004} and the cubic reduction of \citet{billionnet1985}, and the approximation coefficients come from inequalities for the individual terms.

For $r\in[0,1/2]$, define
\begin{equation}\label{eq:cubic-analytic-coefficient}
 b_r=\sqrt{3/4-r},\qquad
 A_3(r)=\frac{4b_r}{(2b_r+1)^2}
       =\frac{2\sqrt{3-4r}}{(1+\sqrt{3-4r})^2}.
\end{equation}

\begin{theorem}[Polynomial comparison]\label{thm:polynomial}
Let $m\in\{2,3\}$ and let $f$ be a nonnegative DR-submodular polynomial of total degree at most $m$ on $[0,1]^d$. Let $K\subseteq[0,1]^d$ be nonempty, compact, convex, and down-closed, with $\zeta\one\in K$. There is an action $p_\zeta(x)$, whose distribution depends only on $x$ and $\zeta$, and a field $s_f(x)$ satisfying
\begin{equation}\label{eq:polynomial-comparison}
 \alpha_m(\zeta)f(o)-\E f(p_\zeta(x))
 \le \alpha_m(\zeta)\ip{s_f(x)}{o-x}
 \qquad(x,o\in K),
\end{equation}
where $p_\zeta(x)\in K$ almost surely,
\begin{equation}\label{eq:cubic-geometric-profile}
 \alpha_2(\zeta)=\frac12,\qquad
 \alpha_3(\zeta)=
 \begin{cases}
 \displaystyle\max\left\{\frac8{17},A_3(\zeta)\right\},&0\le\zeta\le1/2,\\[2pt]
 1/2,&1/2\le\zeta\le1.
 \end{cases}
\end{equation}
For quadratics, the action is the coordinatewise minimum $p_\zeta(x)=x\wedge\tfrac12\one$. For cubics, the action either applies one of two scalar maps, chosen at random, to every coordinate (coefficient $8/17$), or applies one deterministic scalar map that depends on $\zeta$ (coefficient $A_3(\min\{\zeta,1/2\})$). The field is invariant under addition of a constant to $f$, is computable by polynomially many arithmetic operations from the nonconstant coefficients of $f$, and satisfies
\begin{equation}\label{eq:polynomial-field-bound}
 \norm{s_f(x)}_2
 \le\Lambda_m\left(f(\tfrac12\one)-\min_{[0,1]^d}f\right)
 \le\frac{\Lambda_m}{2}B\sqrt d,
 \qquad \Lambda_2=4,\quad\Lambda_3=\frac{16}{3},
\end{equation}
where the last inequality assumes $\norm{\nabla f(y)}_2\le B$ on the cube. Monomials with repeated factors, such as $x_i^2$ and $x_i^2x_j$, are allowed.
\end{theorem}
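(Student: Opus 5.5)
The plan is to reduce \eqref{eq:polynomial-comparison} to inequalities for individual terms, via a concave majorant. First, I would write $f$ as a constant plus a nonnegative combination of \emph{elementary terms}. For $m=2$ these are $z_i$, $1-z_i$, $z_i(1-z_i)$ and the directed cuts $z_i(1-z_j)$. For $m=3$ I would add two three-variable families, for instance $z_i(1-z_j)(1-z_k)$ and $z_iz_j(1-z_k)$.

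DR-submodularity means every Hessian entry is nonpositive. For a quadratic this says $b_{ij}\le0$ for the coefficient of $z_iz_j$ and $q_i\le0$ for the coefficient of $z_i^2$. Then the identities $-|b|z_iz_j=|b|z_i(1-z_j)-|b|z_i$ and $-|q|z_i^2=|q|z_i(1-z_i)-|q|z_i$ produce cuts with nonnegative weights. Leftover unary coefficients of either sign are written with $z_i$ or $1-z_i$, the latter shifting the constant. For cubics I would first polarize monomials with repeated factors, such as $x_i^2x_j$, into multilinear-type terms. I would then use the cubic reduction of \citet{billionnet1985} to rewrite each negative-coefficient cubic monomial as a nonnegative combination of the two three-variable families plus quadratic terms, and process those quadratic terms as above.

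The delicate point is the constant. I would perform the decomposition in a normal form, as in the graph-cut representation of \citet{kolmogorov2004}, where the constant can be chosen as a lower bound on $f$ that is at most $\min_{[0,1]^d}f$. Since $f\ge0$, one can then take this constant to be nonnegative, and it enters both sides of the comparison harmlessly because $\alpha_m\le1$.

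Second, I would replace each elementary term $\phi$ by a concave majorant $\hat\phi\ge\phi$: $\min\{z_i,1-z_j\}$ for a cut, $\min\{z_i,1-z_i\}$ for $z_i(1-z_i)$, and a minimum of the participating linear factors for the cubic terms (possibly with a tighter piecewise-linear concave majorant if needed for $8/17$). Let $\Phi$ be the resulting concave majorant of $f$, and let $s_f(x)$ be a supergradient of $\Phi$ at $x$, chosen by a fixed tie rule. Then $f(o)\le\Phi(o)\le\Phi(x)+\ip{s_f(x)}{o-x}$. Adding constants to $f$ changes only the constant of the decomposition, so $s_f$ is invariant under such shifts and is computed from the nonconstant coefficients.

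It therefore suffices to show, term by term, $\E\,\phi(p_\zeta(x))\ge\alpha_m\hat\phi(x)$, and to show $p_\zeta(x)\in K$. For quadratics, $p=x\wedge\tfrac12\one\le x$ lies in $K$ by down-closedness. The term checks are short: $p_i=\min\{x_i,\tfrac12\}\ge x_i/2$; $1-p_i\ge1-x_i$; and
\[
p_i(1-p_j)\ge\tfrac12\min\{x_i,\max\{1-x_j,\tfrac12\}\}\ge\tfrac12\min\{x_i,1-x_j\},
\]
using $1-p_j\ge\tfrac12$ in one case and $p_i=\tfrac12$ in the other. The term $z_i(1-z_i)$ is the case $j=i$. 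This gives $\alpha_2=1/2$.

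For cubics, the deterministic map would be a rational map of the form $t\mapsto t/(b+t)$ with a parameter adapted to $\zeta$; the formula $A_3(r)=4b_r/(2b_r+1)^2$ is exactly the form of $\alow$ with $c_\zeta$ replaced by $b_r$. Feasibility would be proved as in Corollary~\ref{thm:geometry}: exhibit $v=c\,x+(1-c)q\in K$ with $q\le\zeta\one$, and verify the scalar inequality $h(t)\le ct+(1-c)(\cdot)$ as a perfect-square identity. The randomized $8/17$ action would mix two explicit maps bounded by $t$ (e.g.\ $t\wedge\tfrac12$ and a second map), with probabilities optimized so that the worst case over the term families, and over the free coordinates $x_i,x_j,x_k\in[0,1]$, equals $8/17$. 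For $\zeta\ge1/2$ the value $1/2$ would come from taking the map with parameter $1/2$, which is feasible by the same argument.

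Finally, for \eqref{eq:polynomial-field-bound}, each supergradient entry is bounded by the total weight on terms containing coordinate $i$. I would bound the total weight by $\Lambda_m(f(\tfrac12\one)-\min f)$, since every elementary term is at least a fixed fraction ($1/4$, or the corresponding cubic value) at $\tfrac12\one$. The final inequality follows from $f(\tfrac12\one)-f(y)\le B\norm{\tfrac12\one-y}_2\le B\sqrt d/2$.

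I expect the main obstacle to be the cubic case. It requires a decomposition in which every weight is nonnegative (including the constant) despite polarization of repeated factors. It also requires proving the multivariate scalar inequalities $\E\,\phi(p)\ge\alpha\hat\phi(x)$ for the three-variable terms, which is where the constants $8/17$ and $A_3$ must come out exactly. I would first handle that step by reducing each term inequality to the boundary of the region where the minimum defining $\hat\phi$ switches branches, and checking one-variable polynomial inequalities there.
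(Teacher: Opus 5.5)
Your quadratic argument is essentially the paper's. This holds provided the ``normal form'' is the residual max-flow representation, whose constant equals the vertex minimum and is therefore nonnegative; an arbitrary lower bound on $\min f$ would not do. The cubic part, however, has genuine gaps. First, the three-variable families you suggest cannot satisfy the term-by-term inequality, because each contains a supermodular pair: $\partial_{z_i}\partial_{z_j}[z_iz_j(1-z_k)]=1-z_k\ge0$ and $\partial_{z_j}\partial_{z_k}[z_i(1-z_j)(1-z_k)]=z_i\ge0$.

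Take $\phi=z_iz_j(1-z_k)$. Concavity along the segment from $\zero$ to $e_i+e_j$ gives $\hat\phi(te_i+te_j)\ge t$ for every concave majorant. On the other hand, any scalar action that is feasible for all admissible $K$ at $\zeta=0$ satisfies $h(s)\le s$ (test $K=\{y\in[0,1]^d:\sum_iy_i\le s\}$ at $x=se_1$). Hence $\E\phi(p)\le t^2<\alpha t$ for small $t>0$. This term cannot be avoided in your scheme: it is the only listed term with a negative cubic coefficient, so it must appear in any decomposition of $1-z_1z_2z_3$.

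Likewise, every concave majorant of $z_i(1-z_j)(1-z_k)$ is at least $\tfrac12$ at $(x_i,x_j,x_k)=(1,\tfrac12,\tfrac12)$. Your map $t\mapsto t/(\tfrac12+t)$ for $\zeta\ge\tfrac12$ yields only $\tfrac23\cdot\tfrac14=\tfrac16<\tfrac14$ there. The components must themselves be DR-submodular. The paper uses $H_{i;jk}$ and $J_{i;jk}$ from \eqref{eq:cubic-H}--\eqref{eq:cubic-J}; their majorants $\min\{1-x_i,x_j+x_k\}$ and $\min\{x_i,2-x_j-x_k\}$ are also what give $\Lambda_3=16/3$.

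Second, the nonnegative decomposition with a nonnegative constant, which you flag as the main obstacle, does not follow from rewriting cubic monomials one at a time and then taking the quadratic normal form of the remainder. For instance, $z_2+z_3-z_1z_2z_3$ is nonnegative and DR-submodular. But $-z_1z_2z_3=J_{1;23}-z_1$ (or the symmetric average over the three choices) leaves a remainder that is negative at $e_1$. The constant $c$ is then negative, and your comparison picks up the additive term $-(1-\alpha)c>0$. The paper needs a global construction:
\begin{enumerate}
\item polarization into three clones, with a proof that DR-submodularity and the minimum are preserved (Lemma~\ref{lem:cubic-polarization});
\item one auxiliary variable per cubic monomial and a single minimum cut on the enlarged graph;
\item partial minimization over each auxiliary, giving a nonnegative submodular function of three variables;
\item the constrained choice of $J$-weights in Lemma~\ref{lem:cubic-ternary}.
\end{enumerate}

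Third, the deterministic map $t/(b_r+t)$ with $b_r=\sqrt{3/4-r}$ (the choice that produces $A_3$) is not feasible. Since $(1-r)^2-b_r^2=(r-\tfrac12)^2$, for $\zeta<\tfrac12$ (where $r=\zeta$) we get $b_\zeta<1-\zeta$. The perfect-square argument of Corollary~\ref{thm:geometry} needs $c\ge1-\zeta$, so it breaks. Concretely, for $s>0$ with $\zeta\le s<1-b_\zeta$, the set $\{y\in[0,1]^d:\sum_iy_i\le ds\}$ contains $\zeta\one$ and $s\one$ but not $\tfrac{s}{b_\zeta+s}\one$. The paper clips the map as in \eqref{eq:cubic-geometric-action}, $H_r=\min\{t/(b+t),\lambda t+(1-\lambda)r\}$. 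It then proves the component inequalities through the lower bound $kt(b+1-t)\le H_r(t)$ (Lemma~\ref{lem:cubic-geometry}).

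Finally, your $8/17$ mixture is never actually constructed. That constant comes from the specific maps $A,B$ of \eqref{eq:cubic-baseline-maps} with weights $8/17,9/17$, together with the case analyses of Lemmas~\ref{lem:cubic-baseline-coverage} and~\ref{lem:cubic-baseline-product}.
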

\begin{proof}
See Appendix~\ref{app:polynomial}. Appendix~\ref{app:quadratic} treats $m=2$. For $m=3$, Lemma~\ref{lem:cubic-polarization} reduces the objective to a multilinear one, Lemma~\ref{lem:cubic-multilinear} decomposes it into nonnegative components, Lemmas~\ref{lem:cubic-rounding} and~\ref{lem:cubic-geometry} give the component inequalities for the two actions, and Appendix~\ref{app:cubic-assembly} constructs the field and proves \eqref{eq:polynomial-comparison} and \eqref{eq:polynomial-field-bound}.
\end{proof}
With exact feedback, the field is computed by reconstructing the nonconstant coefficients from $d+1$ gradient queries for $m=2$, $\binom{d+2}{2}$ gradient queries for $m=3$, or $\binom{d+m}{m}$ value queries per round (Proposition~\ref{prop:polynomial-oracle}); Corollary~\ref{cor:polynomial-one-value} gives a one-value implementation with $O(T^{3/4})$ regret.

\begin{remark}\label{rem:cubic-history}
The first arXiv version of this paper \citep{aggarwal2026geometryv1} established the online cubic coefficient $8/17$ with a computer-assisted exact rational certificate. Concurrent work of \citet[Section~4.6 and Theorem~4.5]{du2026roofs} independently obtains an offline $8/17$ approximation and gives the two scalar maps, denoted $A$ and $B$ in \eqref{eq:cubic-baseline-maps}. Appendix~\ref{app:cubic-action} proves their component inequalities analytically. Combined with our nonnegative decomposition and polarization, these inequalities give the comparison of Theorem~\ref{thm:polynomial}. For the online algorithm, the random choice between $A$ and $B$ is made before the current objective is observed, and the field is computed from the coefficients of the objective. The online results of \citet{du2026roofs} concern contention resolution and prophet settings, whereas here the comparison is upper-linearizable for every comparator and yields regret guarantees in the adversarial online model of Section~\ref{sec:model-oracles}.
\end{remark}

\begin{remark}[Strict improvement from cubic structure]\label{rem:cubic-improvement}
For $0\le\zeta<1/2$, $A_3(\zeta)>\alow(\zeta)$. Moreover,
\begin{equation}\label{eq:cubic-crossing}
 A_3(\zeta)>\frac8{17}
 \quad\Longleftrightarrow\quad
 \zeta>\frac{47-9\sqrt{17}}{128}
 \qquad(0\le\zeta<1/2).
\end{equation}
In particular, $\alpha_3(0)=8/17$, and $\alpha_3(\zeta)=A_3(\zeta)$ above this crossing.
\end{remark}
\begin{proof}
Let $b=\sqrt{3/4-\zeta}$ and $c=1-\zeta$. Then for $0\le\zeta<1/2$, we have $b,c>1/2$ and $c^2-b^2=(1/2-\zeta)^2>0$, thus $b<c$. In addition, $a(t)=4t/(2t+1)^2$ has derivative $4(1-2t)/(2t+1)^3<0$ for $t>1/2$. By \eqref{eq:family} with $c_\zeta=1-\zeta$, we have $\alow(\zeta)=a(c)$, and since $A_3(\zeta)=a(b)$, we get $A_3(\zeta)>\alow(\zeta)$. Since $a$ is decreasing on $(1/2,\infty)$, $A_3(\zeta)>8/17$ holds exactly when $b$ is smaller than the root of $a(t)=8/17$ in $(1/2,\sqrt3/2]$. Clearing denominators, $a(t)=8/17$ becomes $8t^2-9t+2=0$, whose roots are $(9\pm\sqrt{17})/16$, of which only $t_*=(9+\sqrt{17})/16$ lies in $(1/2,\sqrt3/2]$. Finally, $b<t_*$ is equivalent to
\[
 \zeta>\frac34-t_*^2=\frac34-\frac{98+18\sqrt{17}}{256}
 =\frac{47-9\sqrt{17}}{128},
\]
which is \eqref{eq:cubic-crossing}. Since $(47-9\sqrt{17})/128>0$, \eqref{eq:cubic-geometric-profile} gives $\alpha_3(0)=8/17$, and $\alpha_3(\zeta)=A_3(\zeta)>8/17$ for $\zeta$ above this threshold.
\end{proof}

\section{Online Achievability}\label{sec:feedback}
The OMBQ reduction of \citet[Theorem~1]{pedramfar2024linear} turns an upper-linearizable comparison into an online algorithm once the field can be computed or estimated in polynomial time. For the comparison of Section~\ref{sec:algebraic}, the field can be estimated from one gradient or from one or two function values per round, with explicit second-moment bounds. Each round queries the current objective only after the rewarded action has been selected.

\subsection{A single gradient query for general objectives}\label{sec:first-order}
Let $x\in K$, let $p=h_\zeta(x)$ as defined in \eqref{eq:coordinate-weights}, and use the order $\pi$ and the segments $\ell_{i,s},u_{i,s}$ of \eqref{eq:paths}, indexed by original coordinates. Draw a coordinate $I$ uniformly from $[d]$, a number $S$ uniformly from $[0,1]$, and a branch with probabilities $1-p_I$ and $p_I$, and set
\begin{equation}\label{eq:query-law}
 Y=
 \begin{cases}
  \ell_{I,S},&\text{with probability }1-p_I,\\
  u_{I,S},&\text{with probability }p_I,
 \end{cases}
 \qquad
 \text{ and } \widehat g=d[V(Y)]_I e_I ,
\end{equation}
where $V(Y)$ is the gradient response at $Y$, equal to $\nabla f(Y)$ for exact gradients. The query point lies in $Q$, and if $p_I=0$, the lower segment is constant and the upper branch has probability zero.

\begin{lemma}[Unbiased one-gradient query]\label{lem:sampler}
Conditional on the current objective $f$ and state $x$, the estimate $\widehat g$ in \eqref{eq:query-law} obeys
\begin{equation}\label{eq:sample-moments}
 \begin{gathered}
 \E\widehat g=g_\zeta(f,x),\qquad
 \E\|\widehat g\|_2^2\le V_{\rm gen},\\
 V_{\rm gen}:=
 \begin{cases}
 2dB^2,&\text{exact gradients},\\
 d^2B_1^2,&\text{gradients satisfying \eqref{eq:grad-oracle}},\\
 \min\{d^2B_1^2,\,2dB^2+d^2\sigma_g^2\},&\text{gradients satisfying \eqref{eq:grad-oracle} and \eqref{eq:gradient-noise}}.
 \end{cases}
 \end{gathered}
\end{equation}
It requires one gradient response, $O(d\log d)$ arithmetic for the order, and $O(d)$ arithmetic to form the query point.
\end{lemma}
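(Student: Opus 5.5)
The plan is to condition on $f$ and $x$, so that $p$, the order $\pi$, and the segments $\ell_{i,s},u_{i,s}$ are all fixed, and then compute expectations by iterating over the randomness: first the oracle noise given $Y$, then the branch, then $S$, then $I$.

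\textbf{Unbiasedness.} By \eqref{eq:grad-oracle}, with $\mathcal F$ including the query point $Y$, we have $\E[V(Y)\mid I,S,\text{branch}]=\nabla f(Y)$. Averaging over the branch with probabilities $1-p_I$ and $p_I$ gives $(1-p_I)\partial_I f(\ell_{I,S})+p_I\partial_I f(u_{I,S})$ in the coordinate $I$. Integrating over $S\sim U[0,1]$ turns this into $(1-p_I)A_I+p_IC_I=[g_\zeta(f,x)]_I$ by \eqref{eq:averages}. The integrals are well defined because $\nabla f$ is continuous on the compact cube. Finally, averaging $d\,[g]_I e_I$ over uniform $I$ gives $\sum_i [g]_i e_i=g_\zeta(f,x)$. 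The degenerate case $p_I=0$ causes no trouble, since that branch then has probability zero.

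\textbf{Second moment.} We have $\|\widehat g\|_2^2=d^2[V(Y)]_I^2$. For the generic stochastic bound, drop the coordinate restriction: $[V(Y)]_I^2\le\|V(Y)\|_2^2$, and the conditional second moment is at most $B_1^2$, giving $d^2B_1^2$. For exact gradients the bound should gain a factor of $d$ from averaging over $I$, but the query point $Y$ depends on $I$, so one cannot simply use $\sum_i\partial_i f(y)^2\le B^2$ at a single point $y$. This is the main obstacle. I would handle it as follows. For each fixed $i$ and $s$, the points $\ell_{i,s}$ and $u_{i,s}$ are specific points in the cube. I bound $(1-p_i)\partial_i f(\ell_{i,s})^2+p_i\partial_i f(u_{i,s})^2\le \partial_i f(\ell_{i,s})^2+\partial_i f(u_{i,s})^2$. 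I then need $\sum_i\int_0^1 \partial_i f(\ell_{i,s})^2\,ds\le B^2$, and similarly for the upper chain.

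To get this, I would try to use DR-submodularity to compare the different points. Along the lower chain, $\ell_{i,s}\ge L^{k-1}\ge\zero$ coordinatewise, so monotonicity of gradients gives $\partial_i f(\ell_{i,s})\le\partial_i f(\zero)$. This only bounds one side, though, and the derivative may be negative. A cleaner route is to accept losing a factor of $2$: bound each squared partial derivative by $B^2$ directly, which gives $d^2\cdot\E[\partial_I f(Y)^2]\le d^2 B^2$. That route would give $d^2B^2$, not $2dB^2$.

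So I expect the intended argument is different. Perhaps the intended bound is per coordinate, $|\partial_i f|\le\|\nabla f\|_\infty$, combined with something like $\sum_i\sup|\partial_i f|^2\le 2B^2$ obtained by comparing at the two extreme points $\zero$ and $\one$. By DR-submodularity, $\partial_i f(\one)\le\partial_i f(y)\le\partial_i f(\zero)$ for every $y$. Hence $\partial_i f(y)^2\le\partial_i f(\zero)^2+\partial_i f(\one)^2$, and summing over $i$ gives at most $\|\nabla f(\zero)\|^2+\|\nabla f(\one)\|^2\le2B^2$. This is the key step: it yields $\E\|\widehat g\|^2=d\sum_i\E[\partial_i f(Y_i)^2]\le d\cdot 2B^2$.

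For the noise-separated case, decompose $V(Y)=\nabla f(Y)+\xi$, where $\xi$ has conditional mean zero. The cross term vanishes, leaving $d^2\E[\partial_I f(Y)^2]+d^2\E[\xi_I^2]\le 2dB^2+d^2\sigma_g^2$. The minimum with $d^2B_1^2$ then follows from the generic bound.

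\textbf{Cost.} Sorting the $\mu_i$ takes $O(d\log d)$ arithmetic. Forming $Y$ requires the prefix $L^{k-1}$ or $U^{k-1}$ for the position $k$ of $I$ in $\pi$, which is $O(d)$ work to write down, and the estimator uses one gradient call.
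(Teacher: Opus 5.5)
Your proposal is correct and follows the paper's proof. You get unbiasedness by conditioning on $I$, $S$, and the branch and averaging into $A_I$ and $C_I$; you get $d^2B_1^2$ by dropping the coordinate restriction; and you get $2dB^2$ from the endpoint envelope $\partial_i f(\one)\le\partial_i f(y)\le\partial_i f(\zero)$, which is exactly Lemma~\ref{lem:gradient-envelope}, with the noisy case handled by the vanishing cross term. You can delete the exploratory detours, namely the chain-wise integral bound and the crude $d^2B^2$ route.
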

\begin{proof}
Fix $i\in[d]$. Only the event $\{I=i\}$, of probability $1/d$, contributes to $\widehat g_i$. Conditional on $I=i$, $S$, and the branch, the point $Y$ is determined, and \eqref{eq:grad-oracle} gives $\E[[V(Y)]_i\mid I,S,Y]=\partial_i f(Y)$. By \eqref{eq:averages}, averaging over the two branches and then over $S$ gives
\begin{align}
 \E[\widehat g_i]
 &=d\cdot\frac1d\Bigl((1-p_i)\int_0^1\partial_i f(\ell_{i,s})\,ds
       +p_i\int_0^1\partial_i f(u_{i,s})\,ds\Bigr)
 =[g_\zeta(f,x)]_i. \label{eq:sampler-unbiased-proof}
\end{align}
Since $\widehat g$ has the single nonzero coordinate $I$, $\|\widehat g\|_2^2=d^2[V(Y)]_I^2\le d^2\|V(Y)\|_2^2$. For stochastic responses, \eqref{eq:grad-oracle} therefore gives $\E\|\widehat g\|_2^2\le d^2 B_1^2$.

For an exact gradient, let $m_i(f)=\max\{|\partial_i f(\zero)|,|\partial_i f(\one)|\}$ as in \eqref{eq:coordinate-envelope}. Lemma~\ref{lem:gradient-envelope} gives $|\partial_i f(y)|\le m_i(f)$ for every $y\in Q$, so every point used when $I=i$ satisfies $|\partial_i f(Y)|^2\le m_i(f)^2$. Averaging over $I$ and using \eqref{eq:envelope-sum},
\begin{equation}\label{eq:sampler-exact-proof}
 \E\|\widehat g\|_2^2
 =d^2\cdot\frac1d\sum_{i=1}^d\E\bigl[\partial_i f(Y)^2\mid I=i\bigr]
 \le d\sum_{i=1}^d m_i(f)^2
 \le2dB^2.
\end{equation}
Under \eqref{eq:gradient-noise}, put $\xi(Y)=V(Y)-\nabla f(Y)$. The choices of $I$, $S$, and the branch precede the response, so conditional unbiasedness removes the cross term and gives
\[
 \E\bigl[[V(Y)]_i^2\mid I=i,Y\bigr]
 =\partial_if(Y)^2+\E\bigl[\xi_i(Y)^2\mid I=i,Y\bigr]
 \le m_i(f)^2+\sigma_g^2,
\]
since $\xi_i(Y)^2\le\|\xi(Y)\|_2^2$. Averaging over $I$ as in \eqref{eq:sampler-exact-proof} gives
\begin{equation}\label{eq:sampler-noise-proof}
 \E\|\widehat g\|_2^2
 \le d\sum_{i=1}^d\bigl(m_i(f)^2+\sigma_g^2\bigr)
 \le2dB^2+d^2\sigma_g^2 .
\end{equation}
Both stochastic bounds hold under \eqref{eq:grad-oracle} and \eqref{eq:gradient-noise}, so the smaller one does.
For the cost, $p$ and the weights $\mu_i$ take $O(d)$ arithmetic and sorting them takes $O(d\log d)$. Given the order, the point $Y$ has coordinate $p_j$ in each coordinate $j$ processed before $I$, coordinate $Sp_I$ or $1-S(1-p_I)$ in coordinate $I$, and coordinate $0$ or $1$ elsewhere, so it takes $O(d)$ arithmetic and no integral is evaluated.
\end{proof}

\subsection{Applying the online reduction}\label{sec:online-transfer}
OMBQ keeps a state $x_t\in K$ updated by a linear online learner, plays the action prescribed by the comparison, and returns the query procedure's vector $\widehat g_t$ to the learner. We use Euclidean projected ascent \citep{zinkevich2003}, $x_{t+1}=\Pi_K(x_t+\eta\widehat g_t)$ with $x_1\in K$. For every $o\in K$, nonexpansiveness of $\Pi_K$ gives
$\norm{x_{t+1}-o}_2^2\le\norm{x_t-o}_2^2-2\eta\ip{\widehat g_t}{o-x_t}+\eta^2\norm{\widehat g_t}_2^2$. Summing over $t$ and using $\norm{x_1-o}_2\le D$ gives
\begin{equation}\label{eq:projected-ascent-regret}
 \sum_{t=1}^T\ip{\widehat g_t}{o-x_t}
 \le\frac{D^2}{2\eta}+\frac\eta2\sum_{t=1}^T\norm{\widehat g_t}_2^2.
\end{equation}
If $\E\norm{\widehat g_t}_2^2\le V$ for every $t$ and $\eta=D/\sqrt{VT}$, the expectation of the right side is at most $D\sqrt{VT}$.

\citet[Theorem~1]{pedramfar2024linear} state the stochastic case of OMBQ for bounded query outputs, and the projected ascent algorithm uses that bound only through the second moments in \eqref{eq:projected-ascent-regret}. We use the reduction in the following form, which also covers a randomized action and an additive comparison error. Let $\mathcal F_t$ contain the objective sequence and all random choices before round $t$, so that $x_t$ is $\mathcal F_t$-measurable. For a randomized action $H$, the comparison is applied conditionally on the state,
\begin{equation}\label{eq:randomized-ombq}
 \alpha f_t(o)-\E[f_t(H(x_t))\mid x_t]
 \le\beta\ip{g(f_t,x_t)}{o-x_t}.
\end{equation}
Suppose more generally that the played action $p_t$ and the vector $\widehat g_t$ satisfy, for every $o\in K$,
\begin{equation}\label{eq:ombq-conditions}
 \alpha f_t(o)-\E[f_t(p_t)\mid\mathcal F_t]
 \le\beta\,\E[\ip{\widehat g_t}{o-x_t}\mid\mathcal F_t]+e_t,
 \qquad
 \E[\norm{\widehat g_t}_2^2\mid\mathcal F_t]\le V,
\end{equation}
with deterministic $e_t$. If $\widehat g_t$ is conditionally unbiased for $g(f_t,x_t)$ given $\mathcal F_t$ and $p_t$ is drawn from the law of $H(x_t)$ independently of $\mathcal F_t$ given $x_t$, then \eqref{eq:randomized-ombq} implies the first condition with $e_t=0$, because $o-x_t$ is $\mathcal F_t$-measurable. Take $o$ to be a deterministic maximizer in \eqref{eq:regret}. Summing \eqref{eq:ombq-conditions} over $t$, taking expectations, and applying \eqref{eq:projected-ascent-regret} gives
\begin{equation}\label{eq:ombq-transfer}
 \Reg_\alpha(T)\le\beta D\sqrt{VT}+\sum_{t=1}^Te_t.
\end{equation}
If $V=0$, then $\widehat g_t=\zero$ almost surely, and a fixed state gives \eqref{eq:ombq-transfer} with zero linear regret.

Specialized to the comparison of Section~\ref{sec:general-structural}, the reduction is the following algorithm.
\begin{algorithm}[H]
\caption{OMBQ with one gradient query for DR-submodular objectives}\label{alg:gradient}
\smallskip
\textbf{Input:} $K$, $\zeta\one\in K$, horizon $T$, diameter bound $D>0$, moment bound $V_{\rm gen}>0$.

\textbf{Initialize:} $x_1=\zero$ and $\eta=D/\sqrt{T V_{\rm gen}}$.

\textbf{For $t=1,\ldots,T$:}
\begin{enumerate}[leftmargin=2em,itemsep=2pt,topsep=3pt]
 \item Commit $p_t=h_\zeta(x_t)$ for reward.
 \item Form the coordinate order and draw $I_t,S_t,Y_t$ as in \eqref{eq:query-law}.
 \item Query $V_t(Y_t)$ and set $\widehat g_t=d[V_t(Y_t)]_{I_t}e_{I_t}$.
 \item Set $x_{t+1}=\Pi_K(x_t+\eta\widehat g_t)$.
\end{enumerate}
\end{algorithm}

The guarantee retains the origin term of Lemma~\ref{lem:comparison}.
\begin{corollary}[General first-order guarantee]\label{thm:first-order}
Algorithm~\ref{alg:gradient} uses one gradient query and one projection per round. Its expected regret satisfies
\begin{equation}\label{eq:first-order-rates}
 \Reg_{\alow(\zeta)}(T)
 +\alow(\zeta)c_\zeta\sum_{t=1}^T f_t(\zero)
 \le
 \begin{cases}
  \alow(\zeta)DB\sqrt{2dT},&\text{exact gradients},\\
  \alow(\zeta)DdB_1\sqrt T,&\text{stochastic gradients},\\
  \alow(\zeta)D\sqrt{(2dB^2+d^2\sigma_g^2)T},&\text{stochastic gradients with \eqref{eq:gradient-noise}}.
 \end{cases}
\end{equation}
Under \eqref{eq:gradient-noise}, the second and third bounds both hold when $V_{\rm gen}$ is the minimum in \eqref{eq:sample-moments}.
\end{corollary}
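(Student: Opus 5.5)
The plan is to feed the comparison with the origin term into the regret argument of Section~\ref{sec:online-transfer}, using the sampler of Lemma~\ref{lem:sampler}. First, the per-round cost. Each round makes one gradient query at $Y_t$ and one projection $\Pi_K$, and $p_t=h_\zeta(x_t)$ is computed without touching $f_t$. Feasibility of the actions follows by induction. We have $x_1=\zero\in K$, and $x_{t+1}=\Pi_K(\cdot)\in K$, so every state lies in $K$. Corollary~\ref{thm:geometry} then gives $p_t=h_\zeta(x_t)\in K$.

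Second, I would keep the origin term. Multiplying \eqref{eq:endpoint-comparison} by $\alow(\zeta)$, with $x=x_t$ and $f=f_t$, gives for every $o\in K$
\[
 \alow(\zeta)f_t(o)+\alow(\zeta)c_\zeta f_t(\zero)-f_t(p_t)\le\alow(\zeta)\ip{g_\zeta(f_t,x_t)}{o-x_t}.
\]
Let $\mathcal F_t$ be as in Section~\ref{sec:online-transfer}. Then $x_t$ and $p_t$ are $\mathcal F_t$-measurable. The sampling of $I_t,S_t$, the branch, and the oracle noise happens after $\mathcal F_t$ is fixed. The oracle condition \eqref{eq:grad-oracle} conditions on all earlier information, so Lemma~\ref{lem:sampler} applies conditionally and gives
\[
 \E[\widehat g_t\mid\mathcal F_t]=g_\zeta(f_t,x_t),\qquad \E[\norm{\widehat g_t}_2^2\mid\mathcal F_t]\le V_{\rm gen}.
\]
Since $o-x_t$ is $\mathcal F_t$-measurable, the linear term equals $\E[\ip{\widehat g_t}{o-x_t}\mid\mathcal F_t]$. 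This is \eqref{eq:ombq-conditions} with $\alpha=\beta=\alow(\zeta)$, except that the deterministic quantity $\alow(\zeta)c_\zeta f_t(\zero)$ now sits on the left.

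Third, I would sum and apply the ascent bound. Take $o$ to be a deterministic maximizer of $\sum_t f_t$ over $K$; it is deterministic because the adversary is oblivious. Summing over $t$ and taking total expectations gives
\[
 \Reg_{\alow(\zeta)}(T)+\alow(\zeta)c_\zeta\sum_{t=1}^T f_t(\zero)\le\alow(\zeta)\,\E\sum_{t=1}^T\ip{\widehat g_t}{o-x_t}.
\]
By \eqref{eq:projected-ascent-regret}, using $\norm{x_1-o}_2\le D$ and the tower rule, the right side is at most
\[
 \alow(\zeta)\Bigl(\frac{D^2}{2\eta}+\frac{\eta}{2}TV_{\rm gen}\Bigr)=\alow(\zeta)D\sqrt{V_{\rm gen}T}
\]
when $\eta=D/\sqrt{TV_{\rm gen}}$. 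Substituting the three values of $V_{\rm gen}$ from \eqref{eq:sample-moments} gives the three cases:
\[
 \sqrt{2dB^2T}=B\sqrt{2dT},\qquad \sqrt{d^2B_1^2T}=dB_1\sqrt T,\qquad \sqrt{(2dB^2+d^2\sigma_g^2)T}.
\]
When the minimum in \eqref{eq:sample-moments} is used, $\sqrt{V_{\rm gen}}$ is at most each of the last two expressions, so both bounds hold.

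The main point needing care is the conditioning. $\E\ip{\widehat g_t}{o-x_t}$ must be replaced by $\ip{g_\zeta}{o-x_t}$ even though $x_t$ depends on earlier noisy estimates. This works because $x_t$ is $\mathcal F_t$-measurable and unbiasedness holds conditionally on $\mathcal F_t$. Integrability is not an issue, since all quantities are bounded or have bounded second moments.
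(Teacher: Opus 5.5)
Your proposal is correct and matches the paper's proof: the paper also multiplies \eqref{eq:endpoint-comparison} by $\alow(\zeta)$, uses Lemma~\ref{lem:sampler} for conditional unbiasedness given $\mathcal F_t$ and the moment bound $V_{\rm gen}$, and applies the OMBQ transfer \eqref{eq:ombq-transfer} with $e_t=-\alow(\zeta)c_\zeta f_t(\zero)$. The only difference is cosmetic: you carry out the summation with projected ascent inline and keep the origin term on the left, rather than writing it as a negative comparison error.
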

\begin{proof}
Write $\alpha=\alow(\zeta)$. Every state lies in $K$, since $x_1=\zero\in K$ and each update is a projection onto $K$, so Corollary~\ref{thm:geometry} gives $p_t\in K$. Multiplying \eqref{eq:endpoint-comparison} for $f_t$ by $\alpha$ gives
\[
 \alpha f_t(o)-f_t(p_t)
 \le\alpha\ip{g_\zeta(f_t,x_t)}{o-x_t}-\alpha c_\zeta f_t(\zero).
\]
By Lemma~\ref{lem:sampler}, $\widehat g_t$ is conditionally unbiased for $g_\zeta(f_t,x_t)$ given $\mathcal F_t$, with conditional second moment at most $V_{\rm gen}$. Hence \eqref{eq:ombq-conditions} holds with $\beta=\alpha$, $V=V_{\rm gen}$, and $e_t=-\alpha c_\zeta f_t(\zero)$, and \eqref{eq:ombq-transfer} gives
$\Reg_\alpha(T)+\alpha c_\zeta\sum_tf_t(\zero)\le\alpha D\sqrt{TV_{\rm gen}}$. Substituting $V_{\rm gen}=2dB^2$, $V_{\rm gen}=d^2B_1^2$, or $V_{\rm gen}=\min\{d^2B_1^2,2dB^2+d^2\sigma_g^2\}$ gives the three cases of \eqref{eq:first-order-rates}.
\end{proof}
The term involving $f_t(\zero)$ appears in the bound but is never queried. If $D=0$ or $V_{\rm gen}=0$, a fixed state gives zero linear regret and the same bound.

\subsection{Value feedback on the cube}\label{sec:value-feedback}
Function values suffice for the same geometry-dependent coefficient. We adapt the zeroth-order constructions FOTZO and FOTZO-2P of \citet[Theorems~5--6]{pedramfar2024linear}, which estimate gradients of a smoothed objective from values, by contracting the query domain into the cube before smoothing. Uniform smoothing over a small box gives a selected partial derivative as a difference of values on two opposite faces. The smoothed objective remains DR-submodular, and the gradient bound $B$ controls its distance from the original objective.

For $0<\rho<1/2$, put $s_\rho=1-2\rho$ and define
\begin{equation}\label{eq:value-smoothed-objective}
 F_\rho(y)=\E_{U\sim\operatorname{Unif}([-1,1]^d)}
       f(s_\rho y+\rho\one+\rho U),\qquad y\in Q.
\end{equation}
Each coordinate of the argument of $f$ lies in $[\rho-\rho,\,s_\rho+\rho+\rho]=[0,1]$, so every argument belongs to $Q$. The learner still plays $h_\zeta(x)$; smoothing changes only the function whose field is estimated. A value response $Z(y)$ obeys \eqref{eq:value-oracle}, and exact values are the special case $Z(y)=f(y)$.

\begin{lemma}[Value estimates of a smoothed field]\label{lem:value-smoothing}
The function $F_\rho$ is nonnegative, continuously differentiable, and DR-submodular on $Q$, with
\begin{equation}\label{eq:value-smoothing-bias}
 \norm{\nabla F_\rho(y)}_2\le B,
 \qquad \sup_{y\in Q}|F_\rho(y)-f(y)|\le \frac{2}{\sqrt3}B\rho\sqrt d.
\end{equation}
Draw $I,S,Y$ and the branch as in \eqref{eq:query-law}. Independently draw a vector $U^{(-I)}$ whose $I$th coordinate is zero and whose other coordinates are independent and uniform on $[-1,1]$. Put
\begin{equation}\label{eq:value-query-points}
 W^\pm=s_\rho Y+\rho\one+\rho U^{(-I)}\pm\rho e_I.
\end{equation}
With exact values, use two queries and return
\begin{equation}\label{eq:value-two-point-estimator}
 \widehat g^{(2)}=
 \frac{d s_\rho}{2\rho}\bigl(f(W^+)-f(W^-)\bigr)e_I.
\end{equation}
With stochastic values, draw an independent uniform sign $E\in\{-1,1\}$, query only $W^E$, and return
\begin{equation}\label{eq:value-one-point-estimator}
 \widehat g^{(1)}=\frac{d s_\rho}{\rho}E Z(W^E)e_I.
\end{equation}
Both estimators are conditionally unbiased for $g_\zeta(F_\rho,x)$. Their conditional second moments satisfy
\begin{equation}\label{eq:value-estimator-moments}
 \E\norm{\widehat g^{(2)}}_2^2\le2dB^2,
 \qquad
 \E\norm{\widehat g^{(1)}}_2^2\le\frac{d^2M_1^2}{\rho^2}.
\end{equation}
All query points lie in $Q$. Apart from the values and the projection, each round uses $O(d\log d)$ arithmetic.
\end{lemma}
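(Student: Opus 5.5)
I would first handle the properties of $F_\rho$ itself. Write $\phi(y,u)=s_\rho y+\rho\one+\rho u$. For each fixed $u$, the function $y\mapsto f(\phi(y,u))$ has gradient $s_\rho\nabla f(\phi(y,u))$. Since $s_\rho>0$ and $\phi$ is coordinatewise increasing in $y$, this gradient inherits \eqref{eq:dr}. Differentiating under the expectation is justified by uniform continuity of $\nabla f$ on the compact cube. It gives continuous differentiability, DR-submodularity, nonnegativity, and $\norm{\nabla F_\rho}_2\le s_\rho B\le B$.

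For the bias, I would write
\[
F_\rho(y)-f(y)=\E\bigl[f(\phi(y,U))-f(y)\bigr].
\]
Then I would use $|f(a)-f(b)|\le B\norm{a-b}_2$ with $\phi(y,U)-y=\rho(\one-2y+U)$. Jensen gives
\[
|F_\rho-f|\le B\rho\sqrt{\E\norm{\one-2y+U}_2^2}.
\]
Coordinatewise the mean square is $(1-2y_i)^2+1/3\le 4/3$, which yields $\tfrac{2}{\sqrt3}B\rho\sqrt d$.

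Next comes unbiasedness. Fix $I=i$ and the point $Y$. Integrating the $i$th coordinate of $U$ over $[-1,1]$ by the fundamental theorem of calculus gives
\[
\partial_iF_\rho(Y)=s_\rho\,\E_{U^{(-i)}}\frac{f(W^+)-f(W^-)}{2\rho}.
\]
So $\E[\widehat g^{(2)}\mid I,S,\text{branch}]=d\,\partial_IF_\rho(Y)e_I$. Averaging over the branch, $S$, and $I$ exactly as in the proof of Lemma~\ref{lem:sampler} gives $g_\zeta(F_\rho,x)$. The segments and the order used in that proof depend only on $x$ and $\zeta$, not on the function.

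For $\widehat g^{(1)}$, the sign $E$ and the query point are fixed before the response. By \eqref{eq:value-oracle}, then,
\[
\E[EZ(W^E)\mid E,W]=Ef(W^E),
\]
and averaging over $E$ gives $\tfrac12(f(W^+)-f(W^-))$. This reduces to the two-point case.

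For the moments, $\norm{\widehat g^{(1)}}_2^2=d^2s_\rho^2\rho^{-2}Z^2\le d^2\rho^{-2}Z^2$, and conditioning gives the bound $d^2M_1^2/\rho^2$. The two-point bound $2dB^2$ is the step I expect to be hardest. A crude Lipschitz argument only gives $|f(W^+)-f(W^-)|\le 2\rho B$, hence $d^2B^2$. To get $2dB^2$, I would instead use the mean value theorem along the segment from $W^-$ to $W^+$, which moves only coordinate $I$. This gives
\[
\frac{s_\rho}{2\rho}\bigl(f(W^+)-f(W^-)\bigr)=s_\rho\,\partial_If(\xi)
\]
for some $\xi\in Q$. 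Then Lemma~\ref{lem:gradient-envelope} bounds $|\partial_If(\xi)|\le m_I(f)$. This gives
\[
\E\norm{\widehat g^{(2)}}_2^2\le d\sum_i m_i(f)^2\le 2dB^2
\]
via \eqref{eq:envelope-sum}, mirroring \eqref{eq:sampler-exact-proof}.

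Finally, I would check that all query points lie in $Q$. For each coordinate, $s_\rho Y_j+\rho+\rho u_j$ with $u_j\in[-1,1]$ lies in $[0,1]$, as noted after \eqref{eq:value-smoothed-objective}. The cost claim follows from Lemma~\ref{lem:sampler} plus $O(d)$ work to form $W^\pm$.
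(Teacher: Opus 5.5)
Your proposal is correct and follows essentially the same route as the paper: differentiation under the expectation for the regularity, DR and gradient bounds of $F_\rho$; the Lipschitz-plus-Jensen bias bound; integration over the $I$th smoothing coordinate (the face identity) followed by the averaging of Lemma~\ref{lem:sampler} for unbiasedness; the sign trick for the one-point estimator; and the envelope bound of Lemma~\ref{lem:gradient-envelope} for the two-point moment. The only cosmetic difference is that you use the mean value theorem on the segment from $W^-$ to $W^+$, where the paper uses the integral form $s_\rho\int_0^1\partial_If(W^-+2\rho re_I)\,dr$ from Lemma~\ref{lem:calculus}(i). The two give the same bound $s_\rho m_I(f)$, and your version is valid even on boundary faces because that lemma makes the one-variable restriction continuously differentiable.
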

\begin{proof}
Write $\phi_\rho(y,U)=s_\rho y+\rho\one+\rho U$. Since $\nabla f$ is continuous and bounded on the compact cube, differentiation under the expectation is justified by dominated convergence and gives
\begin{equation}\label{eq:value-smoothed-gradient}
 \nabla F_\rho(y)=s_\rho\E_U\nabla f(\phi_\rho(y,U)),
\end{equation}
which is continuous in $y$ by uniform continuity of $\nabla f$. If $y\le y'$, then $\phi_\rho(y,U)\le\phi_\rho(y',U)$ for every $U$ because $s_\rho>0$, so \eqref{eq:dr} for $f$ and \eqref{eq:value-smoothed-gradient} give $\nabla F_\rho(y)\ge\nabla F_\rho(y')$. Nonnegativity passes through the expectation, and $\norm{\nabla F_\rho(y)}_2\le s_\rho\E_U\norm{\nabla f(\phi_\rho(y,U))}_2\le B$. For the value bias, $\phi_\rho(y,U)-y=\rho(\one-2y+U)$, and Lemma~\ref{lem:calculus}(i) on the segment from $y$ to $\phi_\rho(y,U)$, which lies in the convex set $Q$, gives $|f(\phi_\rho(y,U))-f(y)|\le B\rho\norm{\one-2y+U}_2$. The coordinates of $U$ are independent with $\E U_i=0$ and $\E U_i^2=1/3$, and $|1-2y_i|\le1$, so averaging over $U$ and applying Jensen's inequality give
\[
 |F_\rho(y)-f(y)|
 \le B\rho\,\E\norm{\one-2y+U}_2
 \le B\rho\sqrt{\norm{\one-2y}_2^2+d/3}
 \le\frac{2}{\sqrt3}B\rho\sqrt d,
\]
which proves \eqref{eq:value-smoothing-bias}.

For fixed $i$ and $y$, write $w=s_\rho y+\rho\one+\rho U^{(-i)}$, so that $\phi_\rho(y,U)=w+\rho U_ie_i$. By Fubini's theorem, the $i$th coordinate of \eqref{eq:value-smoothed-gradient} can be integrated first over $U_i$, and Lemma~\ref{lem:calculus}(i) gives
\[
 \frac12\int_{-1}^1\partial_if(w+\rho ue_i)\,du
 =\frac1{2\rho}\bigl[f(w+\rho e_i)-f(w-\rho e_i)\bigr].
\]
Therefore
\begin{equation}\label{eq:value-face-identity}
 \partial_i F_\rho(y)=\frac{s_\rho}{2\rho}
 \E_{U^{(-i)}}\bigl[f(w+\rho e_i)-f(w-\rho e_i)\bigr].
\end{equation}
Conditional on $I=i$ and $Y$, the two query points in \eqref{eq:value-query-points} are $w\pm\rho e_i$ with $y=Y$, so averaging \eqref{eq:value-two-point-estimator} over $U^{(-I)}$ gives $d\,\partial_iF_\rho(Y)e_i$. Averaging over the coordinate, segment, and branch as in \eqref{eq:sampler-unbiased-proof}, with $F_\rho$ in place of $f$, proves $\E\widehat g^{(2)}=g_\zeta(F_\rho,x)$; the segments are the same because they depend only on $x$ and $\zeta$. For \eqref{eq:value-one-point-estimator}, conditional unbiasedness of $Z$ and the independent uniform sign give
\[
 \E\bigl[EZ(W^E)\mid I,Y,U^{(-I)}\bigr]
 =\tfrac12\bigl[f(W^+)-f(W^-)\bigr],
\]
so $\widehat g^{(1)}$ has the same conditional mean as $\widehat g^{(2)}$.

For the exact estimator, Lemma~\ref{lem:calculus}(i) on the segment from $W^-$ to $W^+$ gives
$\frac{s_\rho}{2\rho}\bigl(f(W^+)-f(W^-)\bigr)=s_\rho\int_0^1\partial_If(W^-+2\rho re_I)\,dr$, an average of $\partial_If$ over points of the cube multiplied by $s_\rho$. By Lemma~\ref{lem:gradient-envelope}, its absolute value is at most $s_\rho m_I(f)$. As in \eqref{eq:sampler-exact-proof}, averaging over $I$ and using \eqref{eq:envelope-sum} and $s_\rho\le1$ gives
\[
 \E\norm{\widehat g^{(2)}}_2^2
 \le d^2\cdot\frac1d\sum_{i=1}^ds_\rho^2m_i(f)^2\le2dB^2.
\]
For the one-query estimator, $E^2=1$, and the response bound in \eqref{eq:value-oracle} gives
\[
 \E\norm{\widehat g^{(1)}}_2^2
 =\frac{d^2s_\rho^2}{\rho^2}\E[Z(W^E)^2]
 \le\frac{d^2M_1^2}{\rho^2}.
\]
Since $s_\rho Y+\rho\one\in[\rho,1-\rho]^d$, each coordinate of $W^\pm$ lies in $[0,1]$, so both points lie in $Q$. After the coordinate order is formed, constructing $W^\pm$ and sampling $U^{(-I)}$ and $E$ take $O(d)$ arithmetic, and no integral is evaluated.
\end{proof}

The uniform value error in \eqref{eq:value-smoothing-bias} transfers the comparison for $F_\rho$ to the original rewards at an additive cost per round.
\begin{corollary}[Online value-feedback guarantee]\label{cor:value-feedback}
Let $\alpha=\alow(\zeta)$ and define
\begin{equation}\label{eq:value-online-constants}
 K_0=\frac{1+\alpha}{2\sqrt3}B\sqrt d,\qquad
 K_1=4\alpha DdM_1+K_0,\qquad
 K_2=\alpha DB\sqrt{2d}.
\end{equation}
Replace the gradient response in Algorithm~\ref{alg:gradient} by either estimator in Lemma~\ref{lem:value-smoothing}, and use the step size $D/\sqrt{TV_\rho}$, where $V_\rho$ is its bound in \eqref{eq:value-estimator-moments}; use a fixed state if $V_\rho=0$. Then
\begin{equation}\label{eq:value-online-general}
 \Reg_\alpha(T)\le
 \alpha D\sqrt{T V_\rho}
 +\frac{2}{\sqrt3}(1+\alpha)B\rho\sqrt d\,T.
\end{equation}
In particular, two exact value queries per round with $\rho=1/(4T)$ give
\begin{equation}\label{eq:value-online-exact}
 \Reg_\alpha(T)\le K_2\sqrt T+K_0.
\end{equation}
One stochastic value query per round with $\rho=T^{-1/4}/4$ gives
\begin{equation}\label{eq:value-online-stochastic}
 \Reg_\alpha(T)\le K_1 T^{3/4}.
\end{equation}
Both procedures use one projection per round, reward an action in $K$, and query the current objective only at points of $Q$ after that action is fixed.
\end{corollary}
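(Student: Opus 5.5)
The argument applies the comparison of Lemma~\ref{lem:comparison} to the smoothed objective $F_\rho$ rather than to $f$, and then pays for the smoothing with the uniform value error in \eqref{eq:value-smoothing-bias}. By Lemma~\ref{lem:value-smoothing}, $F_\rho$ is nonnegative, continuously differentiable and DR-submodular on $Q$. Corollary~\ref{thm:geometry} applied to $F_\rho$ (with $F_\rho(\zero)\ge0$) therefore gives, for $x_t,o\in K$,
\[
 \alpha F_{\rho,t}(o)-F_{\rho,t}(h_\zeta(x_t))\le\alpha\ip{g_\zeta(F_{\rho,t},x_t)}{o-x_t}.
\]
Write $\delta=\tfrac{2}{\sqrt3}B\rho\sqrt d$. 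Replacing $F_{\rho,t}(o)$ by $f_t(o)$ costs at most $\alpha\delta$, since $f_t(o)\le F_{\rho,t}(o)+\delta$. Replacing $F_{\rho,t}(p_t)$ by $f_t(p_t)$ costs at most $\delta$. Hence
\[
 \alpha f_t(o)-f_t(p_t)\le\alpha\ip{g_\zeta(F_{\rho,t},x_t)}{o-x_t}+(1+\alpha)\delta.
\]
This is the first condition in \eqref{eq:ombq-conditions} with $\beta=\alpha$ and deterministic $e_t=(1+\alpha)\delta$. To see this, note that $p_t=h_\zeta(x_t)$ is $\mathcal F_t$-measurable and that Lemma~\ref{lem:value-smoothing} makes $\widehat g_t$ conditionally unbiased for $g_\zeta(F_{\rho,t},x_t)$ given $\mathcal F_t$. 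Because the query randomness ($I,S$, the branch, $U^{(-I)}$, $E$) is drawn fresh after $\mathcal F_t$, the conditional second moment is at most $V_\rho$. Feasibility of states and actions is exactly as in Corollary~\ref{thm:first-order}. Then \eqref{eq:ombq-transfer} gives \eqref{eq:value-online-general}, and the degenerate case $V_\rho=0$ follows from the remark after \eqref{eq:ombq-transfer}.

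The two specializations are arithmetic. For two exact values, $V_\rho=2dB^2$, so the first term is $\alpha DB\sqrt{2dT}=K_2\sqrt T$. With $\rho=1/(4T)$, which satisfies $\rho<1/2$, the second term is $\tfrac{2}{\sqrt3}(1+\alpha)B\sqrt d\cdot\tfrac14=\tfrac{1+\alpha}{2\sqrt3}B\sqrt d=K_0$. For one stochastic value, $V_\rho=d^2M_1^2/\rho^2$ and $\rho=T^{-1/4}/4<1/2$. The first term is then $\alpha DdM_1\sqrt T/\rho=4\alpha DdM_1T^{3/4}$. The second term is $\tfrac{2}{\sqrt3}(1+\alpha)B\sqrt d\,T\cdot T^{-1/4}/4=K_0T^{3/4}$. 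Their sum is $K_1T^{3/4}$.

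The step most likely to need care is the measurability and conditioning bookkeeping. One must check that $F_{\rho,t}$ is a fixed deterministic function of $f_t$, which holds because the adversary is oblivious. One must also check that the comparator $o$ can be taken as the deterministic maximizer for the original sum $\sum_tf_t$, not for $\sum_tF_{\rho,t}$; this is why the value-error transfer is done pointwise at that $o$. The remaining claims are immediate. Each procedure uses one projection per round. The rewarded action $h_\zeta(x_t)$ lies in $K$ by Corollary~\ref{thm:geometry}. By Lemma~\ref{lem:value-smoothing}, the queries are at points $W^\pm\in Q$ formed after $p_t$ is committed.
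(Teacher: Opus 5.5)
Your proposal is correct and follows the paper's proof essentially step for step. You apply Corollary~\ref{thm:geometry} to $F_\rho$, transfer the bound to $f$ through the uniform bias with weights $\alpha$ and $1$ to get $e_t=\frac{2}{\sqrt3}(1+\alpha)B\rho\sqrt d$, feed the conditionally unbiased estimator with moment bound $V_\rho$ into \eqref{eq:ombq-transfer}, and then substitute the two choices of $\rho$ (your arithmetic for $K_0$, $K_1$, $K_2$ and your point about taking $o$ as the maximizer of $\sum_t f_t$ both match the paper).
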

\begin{proof}
By Lemma~\ref{lem:value-smoothing}, $F_\rho$ is a nonnegative continuously differentiable DR-submodular function on $Q$, so Corollary~\ref{thm:geometry} applies to it. For every comparator $o\in K$ and played action $p=h_\zeta(x)$, the uniform estimate in \eqref{eq:value-smoothing-bias}, applied at $o$ with weight $\alpha$ and at $p$ with weight one, gives
\begin{align*}
 \alpha f(o)-f(p)
 &\le\alpha F_\rho(o)-F_\rho(p)
      +\frac{2}{\sqrt3}(1+\alpha)B\rho\sqrt d\\
 &\le\alpha\ip{g_\zeta(F_\rho,x)}{o-x}
      +\frac{2}{\sqrt3}(1+\alpha)B\rho\sqrt d.
\end{align*}
Lemma~\ref{lem:value-smoothing} supplies a conditionally unbiased estimate of $g_\zeta(F_\rho,x_t)$ with second moment at most $V_\rho$. Thus \eqref{eq:ombq-conditions} holds with $\beta=\alpha$, $V=V_\rho$, and $e_t=(2/\sqrt3)(1+\alpha)B\rho\sqrt d$, and \eqref{eq:ombq-transfer} gives \eqref{eq:value-online-general}.

Since $T\ge1$, both choices of $\rho$ satisfy $0<\rho\le1/4<1/2$, so Lemma~\ref{lem:value-smoothing} applies and every query point lies in $Q$. For two exact values, $V_\rho=2dB^2$ and $\rho=1/(4T)$, so
\[
 \alpha D\sqrt{2dB^2T}+\frac{2}{\sqrt3}(1+\alpha)B\sqrt d\,T\cdot\frac{1}{4T}
 =\alpha DB\sqrt{2d}\,\sqrt T+\frac{1+\alpha}{2\sqrt3}B\sqrt d
 =K_2\sqrt T+K_0.
\]
For one stochastic value, $V_\rho=d^2M_1^2/\rho^2$ and $\rho=T^{-1/4}/4$, so $\sqrt{TV_\rho}=4dM_1T^{3/4}$ and
\[
 4\alpha DdM_1T^{3/4}+\frac{2}{\sqrt3}(1+\alpha)B\sqrt d\,T\cdot\frac{T^{-1/4}}4
 =\Bigl(4\alpha DdM_1+\frac{1+\alpha}{2\sqrt3}B\sqrt d\Bigr)T^{3/4}
 =K_1T^{3/4}.
\]
\end{proof}
Since the two-value moment bound $2dB^2$ does not depend on $\rho$, the radius can decrease like $1/T$, and the coefficient $K_2$ of $\sqrt T$ in \eqref{eq:value-online-exact} equals the exact-gradient constant in \eqref{eq:first-order-rates}.

Bandit feedback, which supplies only a noisy value at the rewarded action, is more restrictive than this post-decision value-oracle feedback, whose queried point may belong to $Q\setminus K$. The online $0.401$ value-oracle result of \citet{aggarwal2026} builds on the offline framework of \citet{buchbinder2024}; its one-query rate is $O(T^{4/5})$, while its $O(T^{3/4})$ implementation uses $O(dT^{1/4})$ queries per round. In the cube-query and projection model, \eqref{eq:value-online-stochastic} attains this rate with one value query per round and coefficient $\alow(\zeta)\ge4/9$ under the bounds in \eqref{eq:regularity} and \eqref{eq:value-oracle}.

\subsection{Gradient and value queries for polynomial objectives}\label{sec:polynomial-oracles}
For quadratic and cubic objectives, either gradients or values at a fixed set of polynomially many points determine the nonconstant coefficients, so exact responses compute the field of Theorem~\ref{thm:polynomial} without smoothing. With noisy responses, interpolation gives unbiased coefficient estimates, a linear program restores DR-submodularity, and the coefficient error of the result bounds the additional comparison error in the online reduction.

Write
\begin{equation}\label{eq:polynomial-oracle-dimensions}
 P_m=\binom{d+m}{m}-1,\qquad
 Q_2=d+1,\quad Q_3=\binom{d+2}{2},\qquad
 \kappa_2=2,\quad\kappa_3=16,
\end{equation}
where $P_m$ is the number of nonconstant monomials of degree at most $m$, $Q_m$ is the number of gradient query points, and $\kappa_m$ bounds the gradient interpolation weights. For value interpolation, define
\begin{equation}\label{eq:polynomial-value-dimensions}
 \overline Q_m=P_m+1=\binom{d+m}{m},\qquad
 \overline\kappa_2=16,\quad\overline\kappa_3=216.
\end{equation}
Each feedback type then has a query count, a weight bound, and a noise level:
\begin{equation}\label{eq:polynomial-feedback-parameters}
 (q_m,k_m,\sigma)=
 \begin{cases}
  (Q_m,\kappa_m,\sigma_g),&\text{gradient responses},\\
  (\overline Q_m,\overline\kappa_m,\sigma_v),&\text{value responses}.
 \end{cases}
\end{equation}
At each of the $q_m$ prescribed points the procedure averages $n$ responses, which satisfy \eqref{eq:gradient-noise} or \eqref{eq:value-oracle}, respectively.

\begin{proposition}[Polynomial field oracle]\label{prop:polynomial-oracle}
Under the assumptions of Theorem~\ref{thm:polynomial}, with $\norm{\nabla f(y)}_2\le B$ on the cube, either feedback choice in \eqref{eq:polynomial-feedback-parameters} uses $q_mn$ queries and a number of arithmetic operations polynomial in $d$ and $n$. It returns a vector $\widehat s_f(x)$ such that, conditionally on the current objective and state,
\begin{align}
 \alpha_m(\zeta)f(o)-\E f(p_\zeta(x))
 &\le\alpha_m(\zeta)\E\ip{\widehat s_f(x)}{o-x}
       +2\ell_{m,n}\qquad(o\in K),\label{eq:polynomial-noisy-comparison}\\
 \left(\E\norm{\widehat s_f(x)}_2^2\right)^{1/2}
 &\le\Lambda_m\left(\frac{B\sqrt d}{2}+2\ell_{m,n}\right),\qquad
 \ell_{m,n}=\frac{2P_m k_m\sigma}{\sqrt n},
 \label{eq:polynomial-noisy-moment}
\end{align}
with $\Lambda_m$ from \eqref{eq:polynomial-field-bound}. For exact responses of either type, $n=1$ gives the exact field of Theorem~\ref{thm:polynomial} and zero comparison error.
\end{proposition}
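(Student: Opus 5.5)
The plan is to reduce the noisy case to the exact comparison of Theorem~\ref{thm:polynomial}, applied to a surrogate polynomial $\tilde f$ that is DR-submodular and whose coefficients are close to those of $f$. The procedure has three steps.

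\textbf{Interpolation.} First, fix $q_m$ query points in the cube: for gradients, $d+1$ points for $m=2$ and $\binom{d+2}{2}$ points for $m=3$; for values, $\binom{d+m}{m}$ points, such as a scaled grid of multi-indices. Each nonconstant coefficient of $f$ is then a fixed linear combination of the exact responses at these points, with all weights bounded in absolute value by $k_m$. For gradients, $\partial_i f$ is a polynomial of degree $m-1$. At a point $a e_j$ or $a e_j+be_k$, its coefficients come from finite differences in $a$ and $b$ with weights at most $2$ or $16$. Verifying these weight bounds, namely $\kappa_m$ and $\overline\kappa_m$, is routine bookkeeping with explicit Lagrange weights.

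\textbf{Coefficient error.} Averaging $n$ conditionally unbiased responses at each point gives an unbiased estimate $\widehat c$ of the coefficient vector $c$. Each coordinate combines at most $q_m$ averaged responses with weights at most $k_m$. I will arrange the design so that each coefficient uses $O(1)$ points, or else absorb the count into $P_m$. Then each coordinate has conditional standard deviation at most $k_m\sigma/\sqrt n$, and Cauchy--Schwarz gives $\E\norm{\widehat c-c}_1\le P_mk_m\sigma/\sqrt n$.

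\textbf{Restoring DR-submodularity.} Next, solve a linear program. It minimizes $\norm{\tilde c-\widehat c}_1$ over coefficient vectors $\tilde c$ whose polynomial is DR-submodular on the cube. For degree at most three, every second partial derivative is affine, so nonpositivity on the cube is equivalent to finitely many linear inequalities at the vertices; the resulting program has polynomial size. Since $c$ is feasible, $\norm{\tilde c-c}_1\le 2\norm{\widehat c-c}_1=:\varepsilon$, so $\E\varepsilon\le\ell_{m,n}$. Every nonconstant monomial has absolute value at most $1$ on the cube. Hence $\tilde f$, given the constant of $f$ (which the field never uses), satisfies $\sup_Q|\tilde f-f|\le\varepsilon$. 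Adding a constant keeps $\tilde f$ nonnegative, and by invariance this does not change the field. I set $\widehat s_f(x)=s_{\tilde f}(x)$. Theorem~\ref{thm:polynomial} applied to $\tilde f$, with the action drawn independently of the responses, gives
\[
\alpha_m f(o)-\E f(p_\zeta(x))\le \alpha_m\E\ip{\widehat s_f(x)}{o-x}+(1+\alpha_m)\E\varepsilon ,
\]
and $(1+\alpha_m)\E\varepsilon\le 2\ell_{m,n}$. This is \eqref{eq:polynomial-noisy-comparison}.

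\textbf{Moment bound.} By \eqref{eq:polynomial-field-bound} for $\tilde f$, $\norm{\widehat s_f}_2\le\Lambda_m(\tilde f(\tfrac12\one)-\min\tilde f)\le\Lambda_m(f(\tfrac12\one)-\min f+2\varepsilon)$. The mean value theorem bounds the first difference by $B\sqrt d/2$. Minkowski's inequality then yields \eqref{eq:polynomial-noisy-moment}, provided $(\E\varepsilon^2)^{1/2}\le\ell_{m,n}$. I obtain this by bounding the $L^2$ norm of $\norm{\widehat c-c}_1$ through the triangle inequality in $L^2$, using per-coordinate standard deviations. For exact responses, $\widehat c=c$, the program returns $c$, and both error terms vanish.

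The step I expect to be hardest is the interpolation design with the claimed weight constants, for example $216$ for cubic values. The weights must stay bounded and the points must lie in the cube. I would first try the nodes $\{0,1/2,1\}$ on each of at most three coordinates and compute the inclusion--exclusion weights explicitly.
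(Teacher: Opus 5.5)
You have the paper's pipeline: interpolate, average $n$ responses per node, project the estimated coefficients in $\ell_1$ onto the DR-feasible set by a linear program, use $\norm{\tilde c-c}_1\le2\norm{\widehat c-c}_1$ and the uniform error bound on the cube, apply the exact comparison to the surrogate, and get the moment bound from \eqref{eq:polynomial-field-bound} and Minkowski. The linear program, however, does not have polynomial size as you describe it. A Hessian entry $\partial_i\partial_jq_v(x)=H_{ij,0}(v)+\sum_kH_{ij,k}(v)x_k$ generally involves all $d$ coordinates, through the coefficients of $x_ix_jx_k$ and $x_i^2x_k$. Imposing one inequality per vertex of the cube therefore gives $2^d$ constraints for each entry. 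The paper writes the vertex maximum as $H_{ij,0}(v)+\sum_k(H_{ij,k}(v))_+$ and linearizes it with auxiliary variables $z_{ijk}\ge0$, $z_{ijk}\ge H_{ij,k}(v)$, $H_{ij,0}(v)+\sum_kz_{ijk}\le0$. Because the estimates are arbitrary reals, a count of arithmetic operations polynomial in $d$ and $n$ also needs a strongly polynomial solver. The paper notes that the constraint matrix has data-independent integer entries of absolute value at most six, with the estimates only on the right-hand side, and applies Tardos's algorithm.

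The second gap is the interpolation, which carries the stated counts $q_m$ and constants $k_m$ and which you leave as bookkeeping. Your concrete fallback for cubic values cannot work. At every node whose coordinates lie in $\{0,1/2,1\}$ the polynomial $x_i(x_i-\tfrac12)(x_i-1)$ vanishes, so the coefficient of $x_i^3$, which Theorem~\ref{thm:polynomial} allows, is not identifiable. For $d\ge3$ such a grid on triples also has more than $\binom{d+3}{3}$ points.

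The paper's value scheme has four parts:
\begin{itemize}
\item nodes on the lattice $\{\lambda/m:|\lambda|\le m\}$, which has exactly $\binom{d+m}{m}$ points;
\item Newton forward differences $\widehat\Delta_\nu$ with weight sum $2^{|\nu|}$;
\item unisolvence from the unit-triangular matrix $\bigl(\binom\lambda\nu\bigr)$;
\item expansion of $\binom{3x}{k}$ into monomials, giving weight sums $20,54,108,36,108,216$.
\end{itemize}
For gradients it queries $\zero,e_j/2,e_j,(e_j+e_k)/2$, with weight sums $1,8,8,16$. It reads off $\widehat a_\nu$ as the coefficient of $x^{\nu-e_{i(\nu)}}$ in the reconstructed $\partial_{i(\nu)}f$, divided by $\nu_{i(\nu)}\ge1$.

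Note that $k_m$ bounds the absolute \emph{sum} of the weights, which is exactly what Minkowski needs for standard deviation $k_m\sigma/\sqrt n$. Absorbing a node count into $P_m$ would change the stated $\ell_{m,n}$.

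Finally, $\tilde f=f(\zero)+q_{\tilde c}$ may be negative. Shifting it by $c'=(-\min\tilde f)_+\le\varepsilon$ before invoking the theorem adds $(1-\alpha_m)c'$. The comparison error is then $(1+\alpha_m)\varepsilon+(1-\alpha_m)\varepsilon=2\varepsilon$, not $(1+\alpha_m)\varepsilon$. Your final bound $2\ell_{m,n}$ is unaffected; this matches the paper's bookkeeping with $c=f(\zero)+\mu\ge-\delta$.
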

\begin{proof}
\emph{Gradient interpolation.}
Let $\overline V(y)$ denote the average of the $n$ gradient responses at $y$. For $m=2$, each $\partial_if$ is affine; query $\zero,e_1,\ldots,e_d$ and set
\begin{equation}\label{eq:quadratic-gradient-interpolation}
 \widehat r_i(x)=\overline V_i(\zero)+\sum_{j=1}^d\bigl(\overline V_i(e_j)-\overline V_i(\zero)\bigr)x_j .
\end{equation}
Each coefficient of $\widehat r_i$ is an unbiased estimate of the corresponding coefficient of $\partial_i f$, and the weights on the responses have absolute sum at most two.
For $m=3$, each $\partial_i f$ is a quadratic polynomial; query
\begin{equation}\label{eq:cubic-gradient-nodes}
 \zero,\quad e_j,\quad e_j/2\ (1\le j\le d),\quad
 (e_j+e_k)/2\ (1\le j<k\le d).
\end{equation}
These are $1+2d+\binom d2=\binom{d+2}{2}=Q_3$ points in the cube. Define
\begin{align}
 \widehat b_i&=\overline V_i(\zero),\nonumber\\
 \widehat a_{ij}&=4\overline V_i(e_j/2)-\overline V_i(e_j)-3\overline V_i(\zero),\nonumber\\
 \widehat c_{ij}&=2\overline V_i(e_j)-4\overline V_i(e_j/2)+2\overline V_i(\zero),\label{eq:cubic-gradient-coefficients}\\
 \widehat d_{i;jk}&=4\bigl(\overline V_i((e_j+e_k)/2)-\overline V_i(e_j/2)-\overline V_i(e_k/2)+\overline V_i(\zero)\bigr),\nonumber\\
 \widehat r_i(x)&=\widehat b_i+\sum_j\widehat a_{ij}x_j
                   +\sum_j\widehat c_{ij}x_j^2
                   +\sum_{j<k}\widehat d_{i;jk}x_jx_k.\nonumber
\end{align}
If a quadratic $r(x)=\beta+\sum_j\gamma_jx_j+\sum_j\eta_jx_j^2+\sum_{j<k}\theta_{jk}x_jx_k$ replaces $\overline V_i$, then $r(e_j)=\beta+\gamma_j+\eta_j$, $r(e_j/2)=\beta+\gamma_j/2+\eta_j/4$, and $r((e_j+e_k)/2)=r(e_j/2)+r(e_k/2)-\beta+\theta_{jk}/4$, so the four formulas return $\beta,\gamma_j,\eta_j,\theta_{jk}$. With exact responses they therefore recover $\partial_if$, and with noisy responses each coefficient is unbiased. The absolute sums of the response weights in $\widehat b_i,\widehat a_{ij},\widehat c_{ij},\widehat d_{i;jk}$ are $1,8,8,16$.

Index the nonconstant monomials by multi-indices $\nu\in\mathbb N_0^d$ with $1\le|\nu|\le m$. Fix $i(\nu)=\min\{i:\nu_i>0\}$ and set
\begin{equation}\label{eq:polynomial-coefficient-estimator}
 \widehat a_\nu
   =\frac{[x^{\nu-e_{i(\nu)}}]\widehat r_{i(\nu)}(x)}{\nu_{i(\nu)}},
\end{equation}
where $[x^\mu]r$ is the coefficient of $x^\mu$ in $r$. Write
$q(x)=f(x)-f(\zero)=\sum_\nu a_\nu x^\nu$. The coefficient of $x^{\nu-e_i}$ in $\partial_iq$ is $\nu_ia_\nu$ whenever $\nu_i>0$, so $\E\widehat a_\nu=a_\nu$. Since $\nu_{i(\nu)}\ge1$, each $\widehat a_\nu$ is a linear combination of averaged gradient coordinates whose weights have absolute sum at most $2=\kappa_2$ for $m=2$ and at most $16=\kappa_3$ for $m=3$.

\emph{Value interpolation.}
Query all nodes
\begin{equation}\label{eq:polynomial-value-nodes}
 \mathcal N_m=\{\lambda/m:\lambda\in\mathbb N_0^d,\ |\lambda|\le m\}\subseteq Q,
\end{equation}
of which there are $\binom{d+m}{m}=\overline Q_m$, and let $\overline Z(\lambda/m)$ be the averaged value response. For multi-indices write
$\binom\nu\lambda=\prod_i\binom{\nu_i}{\lambda_i}$, and form
\begin{align}
 \widehat\Delta_\nu
 &=\sum_{0\le\lambda\le\nu}
       (-1)^{|\nu-\lambda|}\binom\nu\lambda \overline Z(\lambda/m),\label{eq:polynomial-value-differences}\\
 \widehat q(x)
 &=\sum_{1\le|\nu|\le m}\widehat\Delta_\nu
             \prod_{i=1}^d\binom{m x_i}{\nu_i},
 \label{eq:polynomial-value-newton}
\end{align}
where $\binom{t}{k}=t(t-1)\cdots(t-k+1)/k!$ for $k\ge1$ and $\binom t0=1$.
For exact values, $\widehat\Delta_\nu$ is the forward difference of $\lambda\mapsto f(\lambda/m)$ at $\zero$, and the multivariate Newton forward formula gives $\sum_{|\nu|\le m}\widehat\Delta_\nu\binom\lambda\nu=f(\lambda/m)$ for every node; since $\widehat\Delta_\zero=f(\zero)$, this says $\widehat q(\lambda/m)=f(\lambda/m)-f(\zero)$. The polynomials
$\prod_i\binom{m x_i}{\nu_i}$ with $|\nu|\le m$ form a basis of the polynomials of degree at most $m$: their evaluation matrix $\bigl(\binom\lambda\nu\bigr)$ on the nodes vanishes unless $\nu\le\lambda$, so it is triangular with unit diagonal when the multi-indices are ordered by total degree. Hence \eqref{eq:polynomial-value-newton} equals $q=f-f(\zero)$ as a polynomial. With noisy values, take $\widehat a_\nu=[x^\nu]\widehat q(x)$; this is linear in the responses, so conditional unbiasedness gives $\E\widehat a_\nu=a_\nu$.

The absolute sum of the response weights in $\widehat\Delta_\nu$ is $\sum_{\lambda\le\nu}\binom\nu\lambda=2^{|\nu|}$. For $m=2$, $\binom{2x}{1}=2x$ and $\binom{2x}{2}=2x^2-x$; the coefficient of $x_i$ is $2\widehat\Delta_{e_i}-\widehat\Delta_{2e_i}$, that of $x_i^2$ is $2\widehat\Delta_{2e_i}$, and that of $x_ix_j$ is $4\widehat\Delta_{e_i+e_j}$, with weight sums $8,8,16$. For $m=3$,
\[
 \binom{3x}{1}=3x,\qquad
 \binom{3x}{2}=\tfrac92x^2-\tfrac32x,
 \qquad
 \binom{3x}{3}=\tfrac92x^3-\tfrac92x^2+x .
\]
Collecting terms, the monomial types
$x_i,x_i^2,x_ix_j,x_i^3,x_i^2x_j,x_ix_jx_k$
have weight sums
$3\cdot2+\tfrac32\cdot4+8=20$, $\tfrac92\cdot4+\tfrac92\cdot8=54$, $9\cdot4+2\cdot\tfrac92\cdot8=108$, $\tfrac92\cdot8=36$, $\tfrac92\cdot3\cdot8=108$, and $27\cdot8=216$. These types exhaust the nonconstant monomials, which proves the bounds $\overline\kappa_m$ in \eqref{eq:polynomial-value-dimensions}.

\emph{Coefficient error.}
Under either feedback choice, each coordinate of a single response has conditional mean-square error at most $\sigma^2$. The errors of the $n$ responses at a node are conditionally mean-zero given the past, even when successive responses are dependent, so the cross terms vanish when the square of their sum is expanded and each averaged coordinate has mean-square error at most $\sigma^2/n$. Minkowski's inequality in $L^2$ then gives
\begin{equation}\label{eq:coefficient-moment-bound}
 \E|\widehat a_\nu-a_\nu|^2\le\frac{k_m^2\sigma^2}{n},\qquad
 \E\norm{\widehat a-a}_1^2\le\frac{P_m^2 k_m^2\sigma^2}{n}.
\end{equation}

\emph{Enforcing DR-submodularity.}
For a coefficient vector $v$, write $q_v(x)=\sum_\nu v_\nu x^\nu$. Every Hessian entry is affine when $m\le3$:
\begin{equation}\label{eq:polynomial-hessian-affine}
 \partial_i\partial_jq_v(x)=H_{ij,0}(v)+\sum_{k=1}^dH_{ij,k}(v)x_k,
 \qquad 1\le i\le j\le d,
\end{equation}
where each $H_{ij,k}$ is a linear function of $v$ with integer coefficients of absolute value at most six (the largest arises from $\partial_i^2x_i^3=6x_i$). An affine function is nonpositive on the cube if and only if its maximum over the vertices is nonpositive, that is,
$H_{ij,0}(v)+\sum_k(H_{ij,k}(v))_+\le0$.
Therefore the following is a linear program with polynomially many variables and constraints:
\begin{align}
 \text{minimize }&\sum_\nu t_\nu\nonumber\\
 \text{subject to }&-t_\nu\le v_\nu-\widehat a_\nu\le t_\nu,\quad t_\nu\ge0,\nonumber\\
 &z_{ijk}\ge0,\quad z_{ijk}\ge H_{ij,k}(v),\quad
 H_{ij,0}(v)+\sum_kz_{ijk}\le0
 \quad(1\le i\le j\le d).\label{eq:polynomial-shape-lp}
\end{align}
For quadratics, $H_{ij,k}=0$ for $k\ge1$, and the last line reduces to $H_{ij,0}(v)\le0$. The point $v=\zero$, $z=\zero$, $t_\nu=|\widehat a_\nu|$ is feasible and the objective is bounded below by zero, so an optimum exists. The constraint matrix has integer entries of absolute value at most six that do not depend on the data; the estimates enter only the right-hand side. The algorithm of \citet{tardos1986} therefore solves the program in a number of arithmetic operations polynomial in $d$. Select an optimum $\widetilde a$ by a fixed tie rule. The true coefficient vector $a$ is feasible because $f$ is DR-submodular, so $\norm{\widetilde a-\widehat a}_1\le\norm{a-\widehat a}_1$ and
\begin{equation}\label{eq:polynomial-projection-error}
 \delta:=\norm{\widetilde a-a}_1
 \le\norm{\widetilde a-\widehat a}_1+\norm{\widehat a-a}_1
 \le2\norm{\widehat a-a}_1.
\end{equation}
By \eqref{eq:coefficient-moment-bound}, $\E\delta^2\le4P_m^2k_m^2\sigma^2/n=\ell_{m,n}^2$, and hence $\E\delta\le\ell_{m,n}$. Since each monomial takes values in $[0,1]$ on the cube,
\begin{equation}\label{eq:polynomial-uniform-error}
 |q_{\widetilde a}(y)-q(y)|\le\delta\qquad(y\in[0,1]^d).
\end{equation}

\emph{Normalization and the field.}
Compute $\mu=\min_{[0,1]^d}q_{\widetilde a}$ and apply the construction of Appendix~\ref{app:polynomial} to the nonnegative DR-submodular polynomial
$F=q_{\widetilde a}-\mu$. The minimum is computable in polynomial time: since $q_{\widetilde a}$ is concave in each coordinate, it is attained at a vertex of the cube, and for $m=2$ the graph construction of Lemma~\ref{lem:quadratic-representation} computes the vertex minimum by one minimum cut. For $m=3$, Lemma~\ref{lem:cubic-polarization} shows that polarization preserves the minimum, and the auxiliary-variable construction in the proof of Lemma~\ref{lem:cubic-multilinear} computes the minimum of the polarized polynomial by one minimum cut. The same residual graph gives the nonnegative decomposition and the field. Let $\widehat s_f(x)$ be the resulting supergradient, which is the field that Theorem~\ref{thm:polynomial} assigns to $F$ and, by shift invariance, also to $q_{\widetilde a}$.

Fix a realization of the estimated coefficients, and put $e=q-q_{\widetilde a}$ and $c=f(\zero)+\mu$. Then $f=F+e+c$ and $|e|\le\delta$ by \eqref{eq:polynomial-uniform-error}. Moreover, \eqref{eq:polynomial-uniform-error} gives $\mu\ge\min_Q q-\delta$, so $c\ge\min_Q f-\delta\ge-\delta$ because $f\ge0$. Write $\alpha=\alpha_m(\zeta)$ and let $\E_{\rm act}$ average the action law with the estimated coefficients held fixed. Applying \eqref{eq:polynomial-comparison} to $F$, and using $\alpha e(o)\le\alpha\delta$, $-\E_{\rm act}e(p_\zeta(x))\le\delta$, and $-(1-\alpha)c\le(1-\alpha)\delta$, we obtain
\begin{align}
 \alpha f(o)-\E_{\rm act}f(p_\zeta(x))
 &=\alpha F(o)-\E_{\rm act}F(p_\zeta(x))
   +\alpha e(o)-\E_{\rm act}e(p_\zeta(x))-(1-\alpha)c\nonumber\\
 &\le\alpha\ip{\widehat s_f(x)}{o-x}
       +(1+\alpha)\delta+(1-\alpha)\delta
 =\alpha\ip{\widehat s_f(x)}{o-x}+2\delta.
 \label{eq:polynomial-perturbed-comparison}
\end{align}
The left side depends only on $f$ and $x$, since the action law does not depend on the coefficients. Averaging over the oracle responses and using $\E\delta\le\ell_{m,n}$ proves \eqref{eq:polynomial-noisy-comparison}.

Finally, \eqref{eq:polynomial-field-bound} applied to $q_{\widetilde a}$ and \eqref{eq:polynomial-uniform-error} give
\begin{equation}\label{eq:polynomial-surrogate-field}
 \begin{split}
 \norm{\widehat s_f(x)}_2
 &\le\Lambda_m\left(q_{\widetilde a}(\tfrac12\one)
                  -\min_Q q_{\widetilde a}\right)\\
 &\le\Lambda_m\left(f(\tfrac12\one)-\min_Q f+2\delta\right)
 \le\Lambda_m\left(\frac{B\sqrt d}{2}+2\delta\right).
 \end{split}
\end{equation}
The last inequality applies \eqref{eq:segment-ftc} on the segment from a minimizer $y$ of $f$ to $\tfrac12\one$, whose length is at most $\sqrt d/2$. Taking the $L^2$ norm and using Minkowski's inequality with $(\E\delta^2)^{1/2}\le\ell_{m,n}$ proves \eqref{eq:polynomial-noisy-moment}. Exact responses give $\widehat a=a$, hence $\widetilde a=a$ and $\delta=0$, which proves the exact assertion. The node sets, the linear program, and the cut constructions all have polynomial size.
\end{proof}

\begin{corollary}[Online polynomial objectives]\label{cor:polynomial-online}
Suppose every objective has degree at most $m\in\{2,3\}$ and satisfies the preceding assumptions. Set $\alpha=\alpha_m(\zeta)$, and define
\begin{equation}\label{eq:polynomial-online-constants}
 C_m=4P_m k_m\sigma,\qquad
 R_m=\frac{\alpha D\Lambda_m}{2}B\sqrt d
       +\left(\alpha D\Lambda_m+1\right)C_m,
\end{equation}
with $\Lambda_m$ from \eqref{eq:polynomial-field-bound}. Using Proposition~\ref{prop:polynomial-oracle} in OMBQ with projected linear ascent gives
\begin{equation}\label{eq:polynomial-online-regret}
 \Reg_{\alpha}(T)\le R_m\sqrt T.
\end{equation}
For exact feedback, the algorithm uses $Q_m$ gradient queries or $\overline Q_m$ value queries per round, and $R_m=(\alpha D\Lambda_m/2)B\sqrt d$. For noisy feedback, take $n=T$; the corresponding counts are $Q_mT$ gradient queries or $\overline Q_mT$ value queries per round. Each version uses one projection per round, queries the objective only in the cube, and plays actions in $K$ almost surely.
\end{corollary}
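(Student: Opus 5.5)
The plan is to verify the two hypotheses in \eqref{eq:ombq-conditions} for the vector returned by Proposition~\ref{prop:polynomial-oracle}, and then apply the transfer bound \eqref{eq:ombq-transfer} with $\beta=\alpha$.

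\emph{Comparison condition.} On round $t$, the state $x_t$ is $\mathcal F_t$-measurable. The action $p_t$ is drawn from the law of $p_\zeta(x_t)$, using fresh randomness chosen before any query. By Theorem~\ref{thm:polynomial} this law depends only on $x_t$ and $\zeta$, and $p_t\in K$ almost surely because $x_t\in K$. The oracle queries and their responses come after $p_t$ is fixed, and $\widehat s_t:=\widehat s_{f_t}(x_t)$ is built from them. Conditioning \eqref{eq:polynomial-noisy-comparison} on $\mathcal F_t$ gives the first line of \eqref{eq:ombq-conditions} with $e_t=2\ell_{m,n}$. Here $o-x_t$ is $\mathcal F_t$-measurable, so the conditional expectation passes through the inner product. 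With $n=T$, $\ell_{m,T}=2P_mk_m\sigma/\sqrt T=C_m/(2\sqrt T)$, so $\sum_te_t=T\cdot C_m/\sqrt T=C_m\sqrt T$.

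\emph{Second moment.} From \eqref{eq:polynomial-noisy-moment},
\[
\bigl(\E[\norm{\widehat s_t}_2^2\mid\mathcal F_t]\bigr)^{1/2}\le\Lambda_m\Bigl(\tfrac{B\sqrt d}{2}+2\ell_{m,T}\Bigr)=:\sqrt V,
\]
and we use the step size $\eta=D/\sqrt{VT}$. Then \eqref{eq:ombq-transfer} gives
\[
\Reg_\alpha(T)\le\alpha D\sqrt T\,\Lambda_m\Bigl(\tfrac{B\sqrt d}{2}+\tfrac{C_m}{\sqrt T}\Bigr)+C_m\sqrt T.
\]
Since $T\ge1$, we have $C_m/\sqrt T\le C_m$, and the right side is at most $\bigl(\tfrac{\alpha D\Lambda_m}{2}B\sqrt d+(\alpha D\Lambda_m+1)C_m\bigr)\sqrt T=R_m\sqrt T$, which is \eqref{eq:polynomial-online-regret}. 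In the exact case we take $n=1$. Then $\sigma=0$ gives $C_m=0$ and $e_t=0$, and the field is the exact one, which yields the stated $R_m$. If $V=0$ or $D=0$, a fixed state gives zero linear regret, as noted after \eqref{eq:ombq-transfer}.

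\emph{Remaining bookkeeping.} Each round averages $n$ responses at each of the $q_m$ nodes, so the query count per round is $q_mn$, which is $Q_m$ or $\overline Q_m$ when $n=1$ and $Q_mT$ or $\overline Q_mT$ when $n=T$. Each round makes one projection $\Pi_K$. The nodes \eqref{eq:cubic-gradient-nodes} and \eqref{eq:polynomial-value-nodes} lie in the cube. Actions lie in $K$ by Theorem~\ref{thm:polynomial}.

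The main obstacle is the measurability and independence needed to condition \eqref{eq:polynomial-noisy-comparison} on $\mathcal F_t$. The responses may depend on the past, so I will rely on the fact that \eqref{eq:polynomial-noisy-comparison} is stated conditionally on the current objective and state, with the oracle conditions holding given all earlier information. I will also use that the action randomness is independent of the responses given $x_t$, so the expectation over actions and the expectation over the field separate. This is the same argument already used for randomized actions in Section~\ref{sec:online-transfer}.
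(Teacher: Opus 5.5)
Your proposal is correct and follows the paper's proof essentially step for step. You verify \eqref{eq:ombq-conditions} from Proposition~\ref{prop:polynomial-oracle} with $e_t=2\ell_{m,T}=C_m/\sqrt T$ and root second moment $\Lambda_m(B\sqrt d/2+C_m/\sqrt T)$, apply \eqref{eq:ombq-transfer} with $\beta=\alpha$, and use $T\ge1$ to absorb the term $\alpha D\Lambda_mC_m$ into $\alpha D\Lambda_mC_m\sqrt T$.

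One small remark: you do not need the action draw to be independent of the responses, and the oracle model does allow the responses to depend on $p_t$. The comparison in the proof of the proposition holds for every realization of the responses with the action-law average $\E_{\mathrm{act}}$, and the conditional moment bounds hold given all preceding choices, including $p_t$.
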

\begin{proof}
For exact responses, \eqref{eq:polynomial-field-bound} bounds the field norm by $\Lambda_mB\sqrt d/2$, and the comparison error is zero. Apply the OMBQ transfer following \eqref{eq:randomized-ombq} with this bound, using a fixed state if the bound is zero. For noisy responses and $n=T$, $\ell_{m,T}=C_m/(2\sqrt T)$, so Proposition~\ref{prop:polynomial-oracle} gives comparison error at most $C_m/\sqrt T$ per round and root second moment at most $G=\Lambda_m(B\sqrt d/2+C_m/\sqrt T)$. Projected ascent with step size $D/(G\sqrt T)$ has expected linear regret at most $DG\sqrt T$, and the comparison errors add $T\cdot C_m/\sqrt T$. Since $\beta=\alpha$ and $T\ge1$,
\begin{equation}\label{eq:polynomial-online-intermediate}
 \Reg_{\alpha}(T)
 \le\alpha D\Lambda_m\left(\frac{B\sqrt d}{2}+\frac{C_m}{\sqrt T}\right)\sqrt T
       +C_m\sqrt T
 \le R_m\sqrt T.
\end{equation}
The expectation form \eqref{eq:randomized-ombq} covers the randomized cubic action, whose law is chosen before the current objective is observed. The query counts follow from \eqref{eq:polynomial-feedback-parameters}.
\end{proof}

If the state is held fixed for a block of rounds, a single value query per round suffices, at the cost of the rate $O(T^{3/4})$: the per-round estimates are unbiased for the coefficients of the block-average objective.
\begin{corollary}[One value query for polynomial objectives]\label{cor:polynomial-one-value}
Under the oblivious-adversary and value-oracle model of Section~\ref{sec:model-oracles}, suppose every objective has degree at most $m\in\{2,3\}$ and satisfies the assumptions of Theorem~\ref{thm:polynomial} with $\norm{\nabla f_t}_2\le B$ on the cube. Put $\alpha=\alpha_m(\zeta)$ and
\begin{equation}\label{eq:polynomial-block-parameters}
 L=\lceil\sqrt T\rceil,\qquad b=\lceil T/L\rceil,\qquad
 \Gamma_m=2P_m\overline\kappa_m\sqrt{\overline Q_m}\,M_1.
\end{equation}
There is an algorithm using one value query per round and one projection per block whose regret satisfies
\begin{equation}\label{eq:polynomial-block-regret}
 \Reg_\alpha(T)
 \le\frac{\alpha D\Lambda_m}{2}B\sqrt d\,L\sqrt b
      +2\alpha D\Lambda_m\Gamma_m\sqrt{bL}
      +2\Gamma_m b\sqrt L
 =O(T^{3/4}).
\end{equation}
The constants in this rate are polynomial in $d$, $B$, $D$, and $M_1$, the actions lie in $K$ almost surely, and the value queries lie in $Q$.
\end{corollary}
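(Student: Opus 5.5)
The algorithm is a blocked version of OMBQ. Split the horizon into $b=\lceil T/L\rceil$ consecutive blocks of at most $L$ rounds. Within block $j$ the state $x^{(j)}$ stays fixed. On every round of the block, the learner draws a fresh action from the law of $p_\zeta(x^{(j)})$, which is fixed before the objective is seen. Each round then spends its single value query on a node of $\mathcal N_m$ chosen uniformly at random, and returns the importance-weighted response $\overline Q_m Z_t(\lambda/m)$ for that node and zero for every other node. Fix a node $\lambda$. Averaging these weighted responses over the $L_j\le L$ rounds of block $j$ gives an unbiased estimate of $\bar f_j(\lambda/m)$, where $\bar f_j=L_j^{-1}\sum_{t\in\text{block }j}f_t$ is the block-average objective. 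Unbiasedness needs the adversary to be oblivious and the node choices to be independent of the objectives. The block average $\bar f_j$ is again a nonnegative DR-submodular polynomial of degree at most $m$ with gradient bound $B$.

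The estimation error comes from the second moment of one weighted response, at most $\overline Q_m^2 M_1^2$ per hit. Averaging over $L_j$ rounds, where each node is hit with probability $1/\overline Q_m$, the per-node mean-square error is at most $\overline Q_m M_1^2/L_j$. The cross terms vanish by conditional unbiasedness, exactly as in the coefficient-error step of Proposition~\ref{prop:polynomial-oracle}. I then feed these node estimates into the value-interpolation, LP-projection and normalization steps of Proposition~\ref{prop:polynomial-oracle}, without changing them. The Minkowski argument there gives $\E\delta^2\le 4P_m^2\overline\kappa_m^2\overline Q_m M_1^2/L_j=\Gamma_m^2/L_j$, so $\E\delta\le\Gamma_m/\sqrt{L_j}$. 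The perturbed comparison \eqref{eq:polynomial-perturbed-comparison} holds for $\bar f_j$ and the fixed state. Multiplying it by $L_j$ gives, for the whole block,
\[
\alpha\sum_{t\in j}f_t(o)-\E\sum_{t\in j}f_t(p_t)\le \alpha L_j\,\E\ip{\widehat s_j}{o-x^{(j)}}+2L_j\E\delta .
\]
The field norm bound \eqref{eq:polynomial-noisy-moment}, in its $\E\delta^2$ form, gives the root second moment $\Lambda_m(B\sqrt d/2+2\Gamma_m/\sqrt{L_j})$.

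Next I run projected ascent once per block on the scaled vectors $L_j\widehat s_j$, which explains the one projection per block. Its second moment is at most $L^2G^2$ with $G=\Lambda_m(B\sqrt d/2+2\Gamma_m/\sqrt L)$. By \eqref{eq:projected-ascent-regret} over $b$ blocks with the tuned step size, the linear regret is at most $\alpha D L G\sqrt b$. This equals $\tfrac{\alpha D\Lambda_m}{2}B\sqrt d\,L\sqrt b+2\alpha D\Lambda_m\Gamma_m\sqrt{bL}$. The comparison errors add $\sum_j 2L_j\Gamma_m/\sqrt{L_j}\le 2\Gamma_m b\sqrt L$. Together these give \eqref{eq:polynomial-block-regret}, and $L\sqrt b\asymp T^{3/4}$ and $b\sqrt L\asymp T^{3/4}$.

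The main obstacle is conditioning. The estimated coefficients in block $j$ must be unbiased for $\bar f_j$ given $\mathcal F$ at the start of the block, while the block's rewards are taken against the same $\bar f_j$. The key point is that the action law and the state are both fixed at the start of the block, so the expected block reward equals $L_j\E_{\rm act}\bar f_j(p_\zeta(x^{(j)}))$. I will check carefully that the conditional mean-zero argument survives the random node selection. A shorter final block, with $L_j<L$, only helps, through $\sqrt{L_j}\le\sqrt L$ and $L_j\le L$. Actions stay in $K$ by Theorem~\ref{thm:polynomial}, and the queries lie in $\mathcal N_m\subseteq Q$.
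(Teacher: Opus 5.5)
Your proposal is correct and follows the paper's proof essentially step for step. Both use blocked OMBQ with the state fixed within each block, uniform node sampling with importance weight $\overline Q_m$, and the reconstruction, LP projection and normalization of Proposition~\ref{prop:polynomial-oracle} applied to the block-average objective, with the same bounds $\Gamma_m/\sqrt L$ and $\Lambda_m(B\sqrt d/2+2\Gamma_m/\sqrt L)$. The differences are cosmetic: the paper pads the horizon with zero objectives to get $b$ full blocks rather than allowing a shorter last block, and it forms the coefficient estimates directly as $\overline Q_m c_{\nu,J_t}Z_t(y_{J_t})$, which by linearity of interpolation coincides with your node-then-interpolate estimator.
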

\begin{proof}
List the nodes in \eqref{eq:polynomial-value-nodes} as $y_1,\ldots,y_{\overline Q_m}$. The value interpolation in the proof of Proposition~\ref{prop:polynomial-oracle} supplies fixed weights $c_{\nu j}$ such that the coefficients $a_{\nu,t}$ of $f_t-f_t(\zero)$ satisfy
\begin{equation}\label{eq:polynomial-node-linear-weights}
 a_{\nu,t}=\sum_{j=1}^{\overline Q_m}c_{\nu j}f_t(y_j),
 \qquad \sum_j|c_{\nu j}|\le\overline\kappa_m
 \quad(1\le|\nu|\le m).
\end{equation}
Pad the sequence to length $bL$ with zero objectives, which contribute neither reward nor queries, and split it into $b$ blocks of $L$ rounds. During a block the internal state $x$ is fixed. Each real round first plays an independent draw from the action law $p_\zeta(x)$, then draws a uniform node index $J_t$, queries $Z_t(y_{J_t})$, and forms
\begin{equation}\label{eq:polynomial-block-observation}
 \widehat a_{\nu,t}=\overline Q_m c_{\nu,J_t}Z_t(y_{J_t})
 \qquad(1\le|\nu|\le m).
\end{equation}
A padded round has $\widehat a_{\nu,t}=0$. At the end of the block, average these vectors, enforce DR-submodularity with \eqref{eq:polynomial-shape-lp}, normalize by the cube minimum, and compute the field as in Proposition~\ref{prop:polynomial-oracle}. Projected linear ascent uses this field to select the state for the next block.

To bound the coefficient error, condition on the objective sequence and the history before a real round. Averaging over the uniform node and using the conditional moments in \eqref{eq:value-oracle},
\begin{align}
 \E \widehat a_{\nu,t}&=a_{\nu,t},\nonumber\\
 \E|\widehat a_{\nu,t}-a_{\nu,t}|^2
 &\le\E \widehat a_{\nu,t}^2
 \le\overline Q_m M_1^2\sum_jc_{\nu j}^2
 \le\overline Q_m\overline\kappa_m^2M_1^2.
 \label{eq:polynomial-block-variance}
\end{align}
The centered errors $\widehat a_{\nu,t}-a_{\nu,t}$ therefore form a martingale difference sequence even though the objectives vary across rounds, so their cross terms have zero mean and the block average estimates the coefficient $\overline a_\nu$ of
$\overline f=L^{-1}\sum_{t\text{ in block}}f_t$
with mean-square error at most $\overline Q_m\overline\kappa_m^2M_1^2/L$.
Summing over the $P_m$ coefficients with Minkowski's inequality and applying \eqref{eq:polynomial-projection-error} gives
\begin{equation}\label{eq:polynomial-block-shape-error}
 \left(\E\norm{\widetilde a-\overline a}_1^2\right)^{1/2}
 \le\frac{\Gamma_m}{\sqrt L}.
\end{equation}
The block average $\overline f$ is a nonnegative DR-submodular polynomial of degree at most $m$ with gradient norm at most $B$. Repeating the last two steps of the proof of Proposition~\ref{prop:polynomial-oracle} with $\overline f$ in place of $f$, its estimated field has comparison error at most $2\Gamma_m/\sqrt L$ in expectation and root second moment at most
\begin{equation}\label{eq:polynomial-block-field-moment}
 G_L=\Lambda_m\left(\frac{B\sqrt d}{2}+\frac{2\Gamma_m}{\sqrt L}\right)
 =\frac{\Lambda_m}{2}B\sqrt d+\frac{2\Lambda_m\Gamma_m}{\sqrt L}.
\end{equation}

Conditional on the state at the start of a block, the independent action draws have total expected reward
$L\E_{\rm act}\overline f(p_\zeta(x))$, and $L\overline f(o)$ is the comparator's reward in the block. Multiplying the comparison for $\overline f$ by $L$ therefore controls the regret of the block. If $G_L=0$, then $B=M_1=0$ and all rewards vanish, so a fixed state suffices. Otherwise, projected ascent over the $b$ blocks with step size $D/(G_L\sqrt b)$ has expected linear regret at most $DG_L\sqrt b$. Summing the block comparisons and the errors from \eqref{eq:polynomial-block-shape-error} gives
\[
 \Reg_\alpha(T)\le\alpha L D G_L\sqrt b+2\Gamma_m b\sqrt L,
\]
which expands to \eqref{eq:polynomial-block-regret}. Finally, $L\le\sqrt T+1\le2\sqrt T$, $b\le T/L+1\le2\sqrt T$, and $bL\le T+L\le2T$. Hence $L\sqrt b$ and $b\sqrt L$ are at most $2\sqrt2\,T^{3/4}$ and $\sqrt{bL}\le\sqrt{2T}$, which proves the rate. The coefficient operations and the reconstruction at each block boundary have polynomial size.
\end{proof}

\section{Offline Upper Bound}\label{sec:upper}
We prove two upper bounds for offline approximation under the diagonal condition $\zeta\one\in K$. The first, $U(\zeta)$, comes from a construction with four DR-submodular components; it equals $\beta_*$ at $\zeta=0$, reduces to a finite minimization for $0<\zeta<1/8$, and has a piecewise rational formula for $\zeta\ge1/8$. The second, $C(\zeta)$, is given by a closed formula and is the smaller of the two for $\zeta\ge1/4$. Both rest on a passage from a finite symmetry gap to hardness for algorithms that receive exact function values and full gradients.

\subsection{Oracle model and the symmetry-gap argument}\label{sec:upper-setup}\label{sec:upper-instance}
Since the supplied level only has to satisfy $\zeta\one\in K$, an instance whose diagonal radius $\zeta(K)$ exceeds $\zeta$ also belongs to the class for $\zeta$. The following definition fixes the approximation and query requirements, including the normalization needed for a relative guarantee.

\begin{definition}[Offline oracle coefficient]\label{def:oracle-coefficient}
Fix $\zeta\in[0,1]$. An instance consists of a nonempty compact convex down-closed set $K\subseteq[0,1]^d$ containing $\zeta\one$, a nonnegative continuously differentiable DR-submodular function $f$ on $[0,1]^d$, and supplied bounds
\[
 0\le f(y)\le M,\qquad \|\nabla f(y)\|_2\le B
 \quad(y\in[0,1]^d),\qquad
 0<v\le\OPT:=\max_{y\in K}f(y).
\]
Let $D=\operatorname{diam}(K)$, or a supplied upper bound on this diameter. An objective query at $y\in[0,1]^d$ returns the exact pair $(f(y),\nabla f(y))$, and exact Euclidean projection onto $K$ is supplied separately.

A coefficient $a\in[0,1]$ is achievable if one randomized algorithm and one fixed polynomial $P$ have the following property: for every such instance and every $\varepsilon\in(0,1)$, the algorithm returns $p\in K$ almost surely, satisfies
\begin{equation}\label{eq:oracle-coefficient-guarantee}
 \E f(p)\ge(a-\varepsilon)\OPT,
\end{equation}
and makes, on every execution, at most
\begin{equation}\label{eq:oracle-polynomial-budget}
 P\bigl(d,\varepsilon^{-1},1+D,1+M/v,1+B/v\bigr)
\end{equation}
calls to each oracle. The algorithm and polynomial may depend on the fixed $\zeta$ and $a$. Define $\aor(\zeta)$ as the supremum of the achievable coefficients.
\end{definition}

The ratios $M/v$ and $B/v$ are the value and gradient bounds of the rescaled objective $f/v$, whose optimum is at least one. In the hard instances below, $M$, $B$ and $v$ do not depend on the dimension and $D\le\sqrt d$, so the budget \eqref{eq:oracle-polynomial-budget} is polynomial in $d$.

Let $\mathcal G$ be a finite group of permutations of $[d_0]$, acting on $y\in\R^{d_0}$ by permuting coordinates, and let $\mathsf T y=|\mathcal G|^{-1}\sum_{\sigma\in\mathcal G}\sigma(y)$. For each $i$, every element of the orbit of $i$ is the image of $i$ under the same number of group elements, so $\mathsf T$ replaces the coordinates in each orbit by their mean. A function $F$ is $\mathcal G$-invariant if $F(\sigma(y))=F(y)$ for all $\sigma\in\mathcal G$. For a $\mathcal G$-invariant objective $F$ and a feasible set $P$, define
\begin{equation}\label{eq:symmetry-gap-main}
 O=\max_{y\in P}F(y),\qquad
 S=\max_{y\in P}F(\mathsf T y),\qquad
 \gamma=S/O.
\end{equation}
The ratio $\gamma$ is the symmetry gap. Lemma~\ref{lem:continuous-refinement} adapts the hidden-permutation argument of \citet{vondrak2013} to queries that return the full gradient; its smoothing step, proved in Appendix~\ref{app:upper-smoothing}, makes values and gradients agree near the invariant subspace.

\begin{lemma}[Exact value-and-gradient hardness from a symmetry gap]\label{lem:continuous-refinement}
Let $F:[0,1]^{d_0}\to\R_+$ be a multilinear DR-submodular polynomial invariant under $\mathcal G$. Let $P\subseteq[0,1]^{d_0}$ be a nonempty compact convex down-closed polytope such that
\[
 y\in P\quad\Longleftrightarrow\quad\mathsf T y\in P.
\]
Suppose $\zeta\one\in P$ and $0<S\le O$ in \eqref{eq:symmetry-gap-main}. For each fixed $\varepsilon>0$, there are instances in dimensions $d=md_0$ on which an expected approximation coefficient $\gamma+\varepsilon$ requires $\exp(\Omega(d))$ exact value-and-full-gradient queries. Their feasible polytopes are independent of the hidden objective, and their value bounds, gradient bounds, and positive optimum lower bounds can be chosen independently of $m$. Supplying exact projections onto these polytopes preserves the lower bound. Consequently $\aor(\zeta)\le\gamma$.
\end{lemma}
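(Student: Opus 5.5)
The proof follows the symmetry-gap framework of \citet{vondrak2013}, adapted to the continuous setting and to oracles that return full gradients.

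\textbf{Step 1: refined instances.} For $m\ge1$ we replace each coordinate $j\in[d_0]$ by a block of $m$ coordinates. The refined point $x\in[0,1]^{md_0}$ is mapped to $\bar x\in[0,1]^{d_0}$ by averaging each block. A hidden permutation $\sigma$ is drawn from the structure induced by $\mathcal G$; as in Vondr\'ak's construction, it permutes coordinates inside orbits of blocks so as to hide which orbit element plays which role. The feasible set is the pullback $\{x:\mathsf T\bar x\in P\}$, i.e.\ $\{x:\bar x\in P\}$ by the hypothesis $y\in P\Leftrightarrow\mathsf Ty\in P$. This set is compact, convex, down-closed, contains $\zeta\one$ because $\zeta\one\in P$, and does not depend on $\sigma$. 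Since it is a fixed polytope, exact projection onto it reveals nothing about $\sigma$, which gives the clause about supplied projections.

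\textbf{Step 2: two objectives.} The first objective is $\hat F(x)=F(\sigma$-decoded $\bar x)$. We construct a second objective $\hat G(x)=\hat F(x)$ when $\bar x$ is far from the invariant subspace and $\hat G(x)=F(\mathsf T\bar x)$ near it, glued by a smooth interpolation $(1-\phi)F(\bar x)+\phi\,F(\mathsf T\bar x)$, where $\phi$ depends on $\|\bar x-\mathsf T\bar x\|^2$. This is the smoothing step, proved in Appendix~\ref{app:upper-smoothing}. The requirements are:
\begin{enumerate}[leftmargin=*,itemsep=2pt]
\item $\hat G$ remains nonnegative and DR-submodular after adding a small correction. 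Following Vondr\'ak, we add a term $\delta$ times a fixed concave-along-nonnegative-directions penalty, using multilinearity of $F$ so that all Hessian entries are bounded; the resulting loss in value is at most $\varepsilon$.
\item $\hat G$ and $F\circ\mathsf T$ agree \emph{exactly}, together with their gradients, on a neighborhood of the invariant subspace. This holds because $\phi\equiv1$ there, so both values and derivatives coincide.
\item $\hat G$ and $\hat F$ have maxima differing by at most $O(\varepsilon)$, with value, gradient, and optimum bounds independent of $m$.
\end{enumerate}
Under $\hat G$, any feasible point has value at most $S+\varepsilon$, up to the smoothing error, because near-symmetric points are evaluated through $\mathsf T$ and far points are handled by the construction. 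Under $\hat F$, the optimum is at least $O$.

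\textbf{Step 3: indistinguishability.} For a fixed query $x$ that is independent of $\sigma$, concentration (Chernoff/Hoeffding over the random permutation inside blocks) shows that the decoded point lies within the symmetric neighborhood with probability $1-\exp(-\Omega(m))$. On that event, $\hat F$ and $\hat G$ return identical values \emph{and} gradients; this is where exact gradient agreement is essential. By induction over adaptive queries, as in Vondr\'ak's argument, an algorithm making $\exp(o(d))$ queries sees identical transcripts on both instances with high probability. Its output distribution is therefore the same, so its expected value on $\hat F$ is at most about $(S+\varepsilon)$, while the optimum is $O$. This yields a ratio of at most $\gamma+O(\varepsilon)$.

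Since the bounds $M$, $B$, $v$ do not depend on $m$, and $D\le\sqrt d$, the polynomial budget in \eqref{eq:oracle-polynomial-budget} is polynomial in $d$. It is therefore eventually below $\exp(\Omega(d))$, which gives $\aor(\zeta)\le\gamma$.

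\textbf{Main obstacle.} The hardest part is Step 2: making the smoothed $\hat G$ simultaneously DR-submodular on the entire cube, nonnegative, and exactly equal in value and gradient to $F\circ\mathsf T$ near the invariant subspace. The interpolation weight $\phi$ introduces positive cross-derivatives. I would first attempt to control these with a quadratic penalty of the form $-c\|\bar x-\mathsf T\bar x\|^2$-type corrections, absorbed through a concave function of the block averages, and bound their effect on the optimum by $\varepsilon$.
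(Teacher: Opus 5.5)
Your Steps~2--3 compare the wrong pair of functions, and the argument fails there. The transcript induction needs a reference objective that is \emph{independent of $\sigma$}, so that for a fixed seed the query points and the output are deterministic and concentration applies to each. It must also be \emph{symmetric everywhere}, so that its value on the whole feasible set is at most $S$ plus smoothing error. Your $\hat G$ has neither property. By construction it equals the hidden $\hat F$ away from the invariant subspace, so it depends on $\sigma$. Its maximum is about $O$: this is your own requirement~3, and the lifted asymmetric optimum is exactly such a far point. Hence ``under $\hat G$ every feasible point has value at most $S+\varepsilon$'' is false.

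The exact-agreement claim also fails for your pair. Near the subspace $\phi=1$, so $\hat F-\hat G=F(y)-F(\mathsf Ty)$ at the decoded point $y$. This is second order in $y-\mathsf Ty$ but not zero: for $F=u+v-2uv$ with $u\leftrightarrow v$ it equals $2\epsilon^2$ at $(q+\epsilon,q-\epsilon)$, so the oracle responses differ. Finally, the cutoff must be evaluated at the decoded point $y=\Xi_\sigma x$, where $(\Xi_\sigma x)_i=m^{-1}\sum_jx_{j,\sigma_j(i)}$ with independent uniform $\sigma_j\in\mathcal G$. It cannot be evaluated at the plain block average $\bar x$: it is $\Xi_\sigma x$ that concentrates around $\mathsf T\bar x$, whereas $\bar x$ itself may be far from symmetric.

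The repair is to swap the roles, as the paper does. Smooth once on the base cube, $\widehat F=(1-\phi(D_0))F+\phi(D_0)\,F\circ\mathsf T+aJ_0$ with $D_0(y)=\norm{y-\mathsf Ty}_2^2$, and put $\widehat G=\widehat F\circ\mathsf T$. The \emph{smoothed} function is the hidden objective $f_\sigma=\widehat F\circ\Xi_\sigma$. Its optimum is at least $O-\nu$: set $z_{j,\sigma_j(i)}=y_i$ for a maximizer $y$. The reference is $g_m=\widehat G\circ\Xi_\sigma$, that is, $g_m(z)=\widehat F(\mathsf T\bar z)$. It is $\sigma$-independent and at most $S+\nu$ on all of the pulled-back polytope. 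The two agree in value and gradient wherever $D_0(\Xi_\sigma z)\le\delta$. Run the algorithm against $g_m$ with a fixed seed, apply Hoeffding at each query \emph{and at the output}, and take a union bound. This gives $f_\sigma(z_{\rm out})=g_m(z_{\rm out})\le S+\nu$ except with probability at most $2d_0(Q+1)e^{-m\delta/d_0}$.

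As for the correction in your ``main obstacle'': $-c\norm{y-\mathsf Ty}_2^2$ has Hessian $-2c(I-\mathsf T)$, whose off-diagonal entries $2c\,\mathsf T_{ij}$ are nonnegative, so it cannot cancel positive cross-derivatives. What works is a $\mathsf T$-invariant term with all second derivatives negative, such as $J_0=d_0^2+3d_0s-s^2$ with $s=\sum_iy_i$. Combine it with a cutoff on a logarithmic scale, $\phi(t)=\psi(\log(t/\delta)/L)$, so that $|t\phi'(t)|$ and $|t^2\phi''(t)|$ are $O(1/L)$. This is what allows the correction coefficient, and hence the value loss, to be made smaller than $\nu$.
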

\begin{proof}
Fix $\varepsilon>0$. Since $O\ge S>0$, we can choose $\nu$ with
\[
 0<\nu<\frac S2,\qquad
 \frac{S+\nu}{O-\nu}<\gamma+\varepsilon.
\]
Lemma~\ref{lem:upper-smoothing} in Appendix~\ref{app:upper-smoothing}, applied with error $\nu$, gives nonnegative twice continuously differentiable DR-submodular functions $\widehat F,\widehat G$ on $[0,1]^{d_0}$ and a number $\delta>0$ such that
\begin{equation}\label{eq:upper-symmetrization}
 \begin{gathered}
 \widehat G(y)=\widehat F(\mathsf T y),\qquad
 \|\widehat F-F\|_\infty\le\nu,\\
 \bigl(\widehat F(y),\nabla\widehat F(y)\bigr)
 =\bigl(\widehat G(y),\nabla\widehat G(y)\bigr)
 \quad\text{if }\|y-\mathsf T y\|_2^2\le\delta.
 \end{gathered}
\end{equation}
The function $\widehat F$ interpolates between $F$ and $F\circ\mathsf T$ through a cutoff in the squared distance $\|y-\mathsf Ty\|_2^2$ and adds a small invariant concave quadratic; the Hessians of $\widehat F$ and $\widehat G$ are entrywise nonpositive on the whole cube, and \eqref{eq:upper-symmetrization} also holds on its boundary.

Independently for each $j\in[m]$, choose $\sigma_j$ uniformly from $\mathcal G$, and write $\sigma=(\sigma_1,\ldots,\sigma_m)$. For $z\in[0,1]^{m\times d_0}$ with rows $z_1,\ldots,z_m$, define
\begin{equation}\label{eq:refinement-map}
 (\Xi_\sigma z)_i=\frac1m\sum_{j=1}^m z_{j,\sigma_j(i)},
 \qquad \bar z_i=\frac1m\sum_{j=1}^m z_{j,i}.
\end{equation}
Each $\sigma_j$ maps every orbit of $\mathcal G$ onto itself, so the sum of $(\Xi_\sigma z)_i$ over an orbit equals the sum of $\bar z_i$ over the same orbit. Since $y\in P$ if and only if $\mathsf Ty\in P$, it follows that
\begin{equation}\label{eq:refinement-symmetry}
 \mathsf T\Xi_\sigma z=\mathsf T\bar z,
 \qquad
 P_m:=\{z\in[0,1]^{m\times d_0}:\mathsf T\bar z\in P\}
       =\{z:\Xi_\sigma z\in P\}.
\end{equation}
The set $P_m$ is the preimage of $P$ under the linear map $z\mapsto\mathsf T\bar z$, which has an entrywise nonnegative matrix and maps the cube into $[0,1]^{d_0}$. Hence $P_m$ is a compact convex polytope; it is down-closed because $P$ is, and it contains $\zeta\one$ because $\mathsf T\bar z=\zeta\one$ for $z=\zeta\one$. Neither $P_m$ nor any projection onto it depends on $\sigma$.

Define the hidden and reference objectives by
\begin{equation}\label{eq:refined-objectives}
 f_\sigma(z)=\widehat F(\Xi_\sigma z),\qquad
 g_m(z)=\widehat G(\Xi_\sigma z)=\widehat F(\mathsf T\bar z).
\end{equation}
The last equality follows from $\widehat G=\widehat F\circ\mathsf T$ and \eqref{eq:refinement-symmetry}, so $g_m$ does not depend on $\sigma$. Both $\Xi_\sigma$ and $z\mapsto\mathsf T\bar z$ satisfy the hypotheses of Lemma~\ref{lem:affine-dr}, which shows that $f_\sigma$ and $g_m$ are nonnegative and DR-submodular. Let $M_0=\max F$ and $\widehat B=\max\|\nabla\widehat F\|_2$, both over $[0,1]^{d_0}$. By the chain rule, $\partial_{z_{j,\ell}}f_\sigma(z)=m^{-1}\partial_{\sigma_j^{-1}(\ell)}\widehat F(\Xi_\sigma z)$, so $\|\nabla f_\sigma(z)\|_2^2=m^{-2}\sum_{j}\|\nabla\widehat F(\Xi_\sigma z)\|_2^2\le\widehat B^2/m$. The row of $\nabla g_m(z)$ indexed by $j$ is $m^{-1}\mathsf T\nabla\widehat F(\mathsf T\bar z)$, and $\mathsf T$ is an orthogonal projection, so the same bound holds for $g_m$. With $\|\widehat F-F\|_\infty\le\nu$ and $\widehat F\ge0$, this gives
\begin{equation}\label{eq:refined-gradient}
 \begin{gathered}
 0\le f_\sigma,g_m\le M_0+\nu,\qquad
 \partial_{z_{j,\ell}}f_\sigma(z)
  =\frac1m\partial_{\sigma_j^{-1}(\ell)}\widehat F(\Xi_\sigma z),\\
 \|\nabla f_\sigma(z)\|_2,\|\nabla g_m(z)\|_2\le \widehat B/\sqrt m.
 \end{gathered}
\end{equation}
For every $y\in P$, the point $z$ with $z_{j,\sigma_j(i)}=y_i$ for all $j,i$ satisfies $\Xi_\sigma z=y$, so $z\in P_m$, $f_\sigma(z)=\widehat F(y)$, and $g_m(z)=\widehat F(\mathsf Ty)$. Taking $y$ to attain $O$ or $S$, and using $\|\widehat F-F\|_\infty\le\nu$, gives the lower bounds below. Conversely, for $z\in P_m$ the point $\mathsf T\bar z\in P$ is invariant, so $g_m(z)\le F(\mathsf T\bar z)+\nu=F(\mathsf T(\mathsf T\bar z))+\nu\le S+\nu$. Therefore
\begin{equation}\label{eq:refined-optima}
 \max_{P_m}f_\sigma\ge O-\nu,
 \qquad
 S-\nu\le\max_{P_m}g_m\le S+\nu.
\end{equation}
Since $\nu<S/2$, both objectives admit the supplied bounds $M=M_0+\nu$, $B=\widehat B$, $v=S/2$, and $D=\sqrt{md_0}$, none of which depends on $\sigma$. These bounds make the reference run an admissible instance of Definition~\ref{def:oracle-coefficient}, and the algorithm receives the same inputs in the hidden and reference runs.

Fix a cube point $z$ and a coordinate $i\in[d_0]$. For $\sigma_j$ uniform on $\mathcal G$, the index $\sigma_j(i)$ is uniform on the orbit of $i$, so the summands $z_{j,\sigma_j(i)}$, $j\in[m]$, are independent random variables in $[0,1]$ with means $(\mathsf T z_j)_i$. Their average $(\Xi_\sigma z)_i$ therefore has mean $(\mathsf T\bar z)_i=(\mathsf T\Xi_\sigma z)_i$. By Hoeffding's inequality \citep{hoeffding1963} with deviation $\sqrt{\delta/(2d_0)}$, each coordinate deviates by at least this amount with probability at most $2\exp(-m\delta/d_0)$. If no coordinate does, then $\|\Xi_\sigma z-\mathsf T\Xi_\sigma z\|_2^2<\delta/2$, so a union bound over the $d_0$ coordinates gives
\begin{equation}\label{eq:refinement-concentration}
 \Pr\!\left\{\|\Xi_\sigma z-\mathsf T\Xi_\sigma z\|_2^2\ge\delta/2\right\}
 \le 2d_0\exp(-m\delta/d_0).
\end{equation}
On the complementary event, \eqref{eq:upper-symmetrization} at $y=\Xi_\sigma z$ gives equal values and gradients of $\widehat F$ and $\widehat G$. Since $f_\sigma=\widehat F\circ\Xi_\sigma$ and $g_m=\widehat G\circ\Xi_\sigma$, the chain rule yields
\begin{equation}\label{eq:upper-exact-response}
 (f_\sigma(z),\nabla f_\sigma(z))=(g_m(z),\nabla g_m(z)).
\end{equation}

Now consider an algorithm that makes at most $Q(m)$ objective queries on every execution. Fix its internal random seed and run it against $(g_m,\nabla g_m)$ and the projection onto $P_m$. The resulting query points and output $z_{\rm out}$ are deterministic and independent of $\sigma$. Let $E$ be the event that $\|\Xi_\sigma y-\mathsf T\Xi_\sigma y\|_2^2<\delta/2$ for each of these query points $y$ and for $y=z_{\rm out}$. By the union bound, $\Pr(E^c)\le p_m$, where
\begin{equation}\label{eq:upper-transcript-failure}
 p_m=2d_0\bigl(Q(m)+1\bigr)\exp(-m\delta/d_0).
\end{equation}
On $E$, induction on the oracle calls shows that the run against $f_\sigma$ makes the same calls and receives the same responses. If the first $l$ calls and responses coincide, the next call coincides; an objective query then receives the same response by \eqref{eq:upper-exact-response}, and a projection call receives the same response because $P_m$ does not depend on $\sigma$. Hence on $E$ the hidden run also returns $z_{\rm out}\in P_m$, and \eqref{eq:upper-exact-response} at $z_{\rm out}$ together with \eqref{eq:refined-optima} gives $f_\sigma(z_{\rm out})=g_m(z_{\rm out})\le S+\nu$. On $E^c$ the returned value is at most $M_0+\nu$. Averaging over $\sigma$ and then over the seed gives
\begin{equation}\label{eq:refinement-output}
 \E_{\sigma,\mathrm{alg}} f_\sigma(z_{\rm out})
 \le S+\nu+(M_0+\nu)p_m.
\end{equation}
If $Q(m)\le\exp(m\delta/(2d_0))$, then $p_m\le2d_0\{\exp(-m\delta/(2d_0))+\exp(-m\delta/d_0)\}\to0$. By the choice of $\nu$, $S+\nu<(\gamma+\varepsilon)(O-\nu)$, so for all sufficiently large $m$, \eqref{eq:refinement-output} and \eqref{eq:refined-optima} give
\[
 \E_{\sigma}\E_{\mathrm{alg}} f_\sigma(z_{\rm out})
 <(\gamma+\varepsilon)(O-\nu)
 \le(\gamma+\varepsilon)\min_\sigma\max_{P_m}f_\sigma.
\]
Some $\sigma$ therefore satisfies $\E_{\mathrm{alg}}f_\sigma(z_{\rm out})<(\gamma+\varepsilon)\max_{P_m}f_\sigma$, and the algorithm does not attain expected coefficient $\gamma+\varepsilon$ on that instance. The base dimension $d_0$ is fixed before $m$ grows, so $\exp(m\delta/(2d_0))=\exp(\Omega(d))$.

Finally, suppose that some $a>\gamma$ were achievable with polynomial $P$, and put $\varepsilon'=(a-\gamma)/2$. On the instances above with $\varepsilon=\varepsilon'$, the algorithm run with accuracy $\varepsilon'$ attains expected coefficient $a-\varepsilon'=\gamma+\varepsilon'$, and by \eqref{eq:oracle-polynomial-budget} with the supplied bounds above it makes at most $P(md_0,1/\varepsilon',1+\sqrt{md_0},1+M/v,1+B/v)$ objective queries. This number is bounded by a polynomial in $m$ and is therefore smaller than $\exp(m\delta/(2d_0))$ for large $m$, a contradiction. Hence $\aor(\zeta)\le\gamma$.
\end{proof}

\subsection{A four-component construction for every diagonal level}\label{sec:upper-base}
For $0\le\zeta<1/2$, the hard instances combine four multilinear components. Two of them, a cut between two central coordinates and a coverage function, extend the construction of \citet[Appendix~E.2]{oveis2011}; the other two also take the value one at its asymmetric optimal point but penalize symmetric points in different ways.

Fix an integer $n\ge2$ and write $N=2n$. The coordinates are two central coordinates $u,v$ and $N$ matched pairs $(a_i,b_i)$, whose indices are partitioned into two sets $I_L,I_R$ of size $n$. Define
\begin{equation}\label{eq:hard-parameters}
 r_n=\frac1{2n},\qquad
 \rho_{n,\zeta}=\max\{\zeta,r_n\},\qquad
 \xi_{n,\zeta}=\frac{(\zeta-r_n)_+}{1/2-\zeta}.
\end{equation}
The feasible polytope is
\begin{equation}\label{eq:hard-polytope}
 P_{n,\zeta}=\left\{(u,v,a,b)\in[0,1]^{4n+2}:
 \begin{array}{l}
 u+v\le1,\qquad \displaystyle\sum_{i=1}^{N}a_i\le N-1,\\[2pt]
 \displaystyle\sum_{i=1}^{N}b_i\le N\rho_{n,\zeta},\\[2pt]
 \displaystyle\sum_{i=1}^{N}b_i+n\xi_{n,\zeta}(u+v)\le1+n\xi_{n,\zeta}
 \end{array}\right\}.
\end{equation}
Write $P_n=P_{n,0}$. If $\zeta\le r_n$, then $\rho_{n,\zeta}=r_n$ and $\xi_{n,\zeta}=0$, so \eqref{eq:hard-polytope} reduces to $u+v\le1$, $\sum_i a_i\le N-1$, and $\sum_i b_i\le1$, and $P_{n,\zeta}=P_n$.

Define the following four multilinear polynomials:
\begin{align}\label{eq:four-components}
 F_{\mathrm{cut}}&=u(1-v)+v(1-u),
 &M_n&=\sum_{i=1}^{N}b_i(1-a_i),\nonumber\\
 J_n&=\frac{\sum_{i=1}^{N}a_i}{N-1}(1-uv),
 &H_n&=(1-u)\left[1-\prod_{i\in I_L}(1-b_i)\right]
       +(1-v)\left[1-\prod_{i\in I_R}(1-b_i)\right].
\end{align}
We call $F_{\mathrm{cut}}$ the central cut, $M_n$ the matched term, and $H_n$ the coverage term. For weights $\omega=(\eta,\theta,\lambda,\mu)$ in the simplex
$\Delta_4=\{\omega\in\R_+^4:\eta+\theta+\lambda+\mu=1\}$, let
\begin{equation}\label{eq:four-objective}
 F_{n,\omega}=\eta F_{\mathrm{cut}}+\theta M_n+\lambda J_n+\mu H_n.
\end{equation}
Let $\mathcal G$ be the group generated by the permutations of matched pairs within $I_L$, the permutations of matched pairs within $I_R$, and one involution that maps $I_L$ onto $I_R$ through a fixed bijection and exchanges $u$ with $v$; every element moves $a_i$ and $b_i$ together. The three coordinate orbits are the central coordinates, all $a$-coordinates, and all $b$-coordinates. Thus a symmetric point has $u=v=q$, $a_i=s$, and $b_i=t$. Set
\begin{equation}\label{eq:upper-sequence}
 \chi_n=1-(1-r_n)^n,\qquad
 \ell_n=n(1-r_n)^{n-1}.
\end{equation}
With $x=1/2-q$, Lemma~\ref{lem:upper-u0} bounds the objective at symmetric points with $s=0$ and $s=1-r_n$ by the following quadratics:
\begin{align}\label{eq:parameterized-branches}
 Q_{0,n,\zeta}(x;\omega)
 &=\eta(1/2-2x^2)+\theta(1+N\xi_{n,\zeta}x)
   +\mu(1+2x)(\chi_n+\ell_n\xi_{n,\zeta}x),\nonumber\\
 Q_{1,n,\zeta}(x;\omega)
 &=\eta(1/2-2x^2)+\theta(r_n+\xi_{n,\zeta}x)
   +\lambda(3/4+x-x^2)
   +\mu(1+2x)(\chi_n+\ell_n\xi_{n,\zeta}x).
\end{align}
Optimizing these quadratic bounds defines
\begin{equation}\label{eq:upper-profile}
 \begin{split}
 V_n(\zeta)&=\min_{\omega\in\Delta_4}
   \max_{j\in\{0,1\}}\max_{0\le x\le1/2}Q_{j,n,\zeta}(x;\omega),\\
 U(\zeta)&=\inf_{n\ge2}V_n(\zeta)\quad(0\le\zeta<1/2),
 \qquad U(\zeta)=1/2\quad(1/2\le\zeta\le1).
 \end{split}
\end{equation}
For fixed $\omega$, each inner maximum is attained at $x=0$, at $x=1/2$, or at the vertex of the quadratic when it lies in $[0,1/2]$.

\begin{lemma}[Four-component upper bound]\label{lem:upper-u0}\label{prop:finite-upper}\label{lem:upper-components}
For every $n\ge2$, $0\le\zeta<1/2$, and $\omega\in\Delta_4$, the pair $(F_{n,\omega},P_{n,\zeta})$ satisfies the hypotheses of Lemma~\ref{lem:continuous-refinement}. Its optimum is at least one and its symmetry gap is at most $\max_{j,x}Q_{j,n,\zeta}(x;\omega)$. For $\zeta\le r_n$, its optimum is one and this bound is the exact symmetry gap.

Consequently $\aor(\zeta)\le U(\zeta)<1/2$ for $\zeta<1/2$, and every fixed coefficient $U(\zeta)+\varepsilon$ requires exponentially many exact value-and-gradient queries. The function $U$ is nondecreasing.
\end{lemma}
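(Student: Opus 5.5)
We verify the hypotheses of Lemma~\ref{lem:continuous-refinement} for $(F_{n,\omega},P_{n,\zeta})$, exhibit one asymmetric point of value one, and bound the objective at symmetric points by the quadratics \eqref{eq:parameterized-branches}. The corollaries then follow by optimizing over $\omega$ and $n$.

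\emph{Hypotheses.} Each component in \eqref{eq:four-components} is multilinear and nonnegative on the cube, so $F_{n,\omega}$ is too. DR-submodularity of a multilinear polynomial reduces to nonpositive mixed partials, which we check pair by pair:
\begin{itemize}
\item $\partial_u\partial_vF_{\mathrm{cut}}=-2$ and $\partial_{a_i}\partial_{b_i}M_n=-1$;
\item $\partial_u\partial_vJ_n=-\sum_ia_i/(N-1)$ and $\partial_{a_i}\partial_uJ_n=-v/(N-1)$;
\item for $i\in I_L$, $\partial_u\partial_{b_i}H_n=-\prod_{k\in I_L\setminus\{i\}}(1-b_k)$; for $i,j\in I_L$, $\partial_{b_i}\partial_{b_j}H_n=-(1-u)\prod_{k\in I_L\setminus\{i,j\}}(1-b_k)$; the $I_R$ block is analogous.
\end{itemize}
Invariance holds because every generator of $\mathcal G$ permutes the matched pairs within the prescribed blocks or swaps the blocks together with $u\leftrightarrow v$, and each component is symmetric under these moves. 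Every constraint of $P_{n,\zeta}$ is a sum over a union of orbits, so it is unchanged by $\mathsf T$; hence $y\in P$ if and only if $\mathsf Ty\in P$. Down-closedness holds because all coefficients are nonnegative.

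\emph{Feasibility of $\zeta\one$.} At $\zeta\one$ the checks are:
\begin{itemize}
\item $u+v=2\zeta\le1$;
\item $\sum_ia_i=N\zeta\le N-1$, since $\zeta<1/2$ and $N\ge4$;
\item $\sum_ib_i=N\zeta\le N\rho_{n,\zeta}$;
\item in the last constraint, $n\xi_{n,\zeta}(1-2\zeta)=2n(\zeta-r_n)_+\ge N\zeta-1$, so the constraint holds, with equality when $\zeta\ge r_n$.
\end{itemize}

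\emph{Optimum at least one.} Take $u=1$, $v=0$, choose one index $j\in I_R$, and set $a_j=0$, $b_j=1$, $a_i=1$ and $b_i=0$ for $i\ne j$. This point is feasible: $\sum_ib_i=1\le N\rho_{n,\zeta}$, and the last constraint reads $1+n\xi\le1+n\xi$. At this point $F_{\mathrm{cut}}=J_n=M_n=H_n=1$, so $F_{n,\omega}=1$. When $\zeta\le r_n$, each component is at most one on $P_n$: $F_{\mathrm{cut}}\le1$, $J_n\le1$, and $M_n\le\sum_ib_i\le1$. For the coverage term, the union bound gives
\[
H_n\le(1-u)\sum_{I_L}b_i+(1-v)\sum_{I_R}b_i\le\sum_ib_i\le1 .
\]
Hence the optimum equals one when $\zeta\le r_n$.

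\emph{Symmetric points.} Let $u=v=q$, $a_i=s$, $b_i=t$, and $x=1/2-q\in[0,1/2]$. The objective becomes
\[
\eta(1/2-2x^2)+\theta Nt(1-s)+\lambda\tfrac{Ns}{N-1}\bigl(1-q^2\bigr)+\mu(1+2x)\bigl[1-(1-t)^n\bigr].
\]
The constraints give $s\le1-r_n$ and $t\le r_n+\xi_{n,\zeta}x$. The objective is affine in $s$, so it suffices to take $s\in\{0,1-r_n\}$. The map $t\mapsto1-(1-t)^n$ is concave, so it lies below its tangent at $r_n$, which gives $1-(1-t)^n\le\chi_n+\ell_n\xi_{n,\zeta}x$. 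Substituting the bound on $t$ then yields exactly $Q_{0,n,\zeta}$ when $s=0$ and $Q_{1,n,\zeta}$ when $s=1-r_n$, where $1-q^2=3/4+x-x^2$. Therefore $S\le\max_{j,x}Q_{j,n,\zeta}$, and since $O\ge1$ the gap satisfies $\gamma\le\max_{j,x}Q_{j,n,\zeta}$.

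When $\xi_{n,\zeta}=0$, the choice $t=r_n$ is feasible and makes the tangent bound an equality. Together with $O=1$, this shows the bound is the exact symmetry gap for $\zeta\le r_n$.

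\emph{Consequences.} Lemma~\ref{lem:continuous-refinement} gives $\aor(\zeta)\le\gamma$ for every $n$ and $\omega$. Taking the minimum over $\omega$ and the infimum over $n$ gives $\aor(\zeta)\le U(\zeta)$, with exponential query hardness at every fixed $U(\zeta)+\varepsilon$.

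For the strict inequality $U(\zeta)<1/2$, fix any $n$ and take $\omega=(1-\epsilon,0,0,\epsilon)$. Since $\lambda=\theta=0$, the branch $Q_1$ equals $Q_0$. Also $\chi_n<1-e^{-1/2}<1/2$. The maximum over $x$ is therefore
\[
\frac12-\epsilon\Bigl(\frac12-\chi_n\Bigr)+O(\epsilon^2),
\]
because the $\mu$-part has bounded slope while the $\eta$-part contributes $-2(1-\epsilon)x^2$. This is below $1/2$ for small $\epsilon$.

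Monotonicity of $U$ follows from the form of the quadratics. The parameter $\xi_{n,\zeta}$ is nondecreasing in $\zeta$, and it enters $Q_{j,n,\zeta}$ only through terms with nonnegative coefficients, since $x,\theta,\mu,\ell_n\ge0$. So each $V_n$ is nondecreasing, and hence so is their infimum. Because $U<1/2$ below $1/2$ and $U=1/2$ from $1/2$ on, $U$ is nondecreasing across $\zeta=1/2$ as well.

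The step most likely to need care is the exactness claim for $\zeta\le r_n$, specifically the bound $O\le1$ on $P_n$. The coverage term is handled by the union bound above, so I expect every step to be elementary.
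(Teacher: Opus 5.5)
Your proof follows the paper's argument step for step: mixed partials of each component, constraints that depend only on orbit sums, the asymmetric witness, the componentwise bound by one on $P_n$, affinity in $s$ with the tangent bound for $1-(1-t)^n$ at $r_n$, and monotonicity through $\xi_{n,\zeta}$. Two points need repair.

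\emph{The inequality $\chi_n<1-e^{-1/2}$ is false.} Since $1-r_n<e^{-r_n}$, one has $(1-r_n)^n<e^{-1/2}$, hence $\chi_n>1-e^{-1/2}$. For example, $\chi_2=7/16$ while $1-e^{-1/2}<0.40$. What your small-$\epsilon$ argument actually needs is the strict inequality $\chi_n<1/2$. This follows from Bernoulli's inequality: $(1-r_n)^n>1-nr_n=1/2$ for $n\ge2$. With this replacement, your estimate $\max_xQ_{0,n,\zeta}(x;\omega_\epsilon)\le\tfrac12-\epsilon(\tfrac12-\chi_n)+O(\epsilon^2)$ for $\omega_\epsilon=(1-\epsilon,0,0,\epsilon)$ is correct and gives $V_n(\zeta)<1/2$.

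\emph{You never check the hypothesis $S>0$.} Lemma~\ref{lem:continuous-refinement} assumes it, so you cannot invoke that lemma for arbitrary $\omega$ without it. It holds for every $\omega\in\Delta_4$: the symmetric point $q=1/4$, $s=(1-r_n)/2$, $t=r_n$ is feasible and gives all four components positive values.

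The one step where your route genuinely differs is $U(\zeta)<1/2$. The paper also takes $\theta=\lambda=0$, but fixes the explicit weight $\mu=2\upsilon_n/[(1+z)\{z+2(1-\chi_n)\}]$, where $z=\ell_n\xi_{n,\zeta}$ and $\upsilon_n=1-2\chi_n$. With this weight the quadratic is an explicit constant minus a square. This yields the quantitative bound $V_n(\zeta)\le\tfrac12-\tfrac{(1-2\chi_n)^2}{2(1+\ell_n\xi_{n,\zeta})^2}$, which is reused as the upper bound $v_n$ in Proposition~\ref{prop:upper-evaluation}. Your perturbation is shorter and suffices for the strict inequality in this lemma. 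It gives no explicit value, however.
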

\begin{proof}
Each polynomial in \eqref{eq:four-components} is nonnegative on the cube, since every factor lies in $[0,1]$, and multilinear, so its diagonal second derivatives vanish. The nonzero mixed derivatives of the first three components are
\begin{equation}\label{eq:upper-component-derivatives}
 \begin{gathered}
 \partial_{uv}F_{\mathrm{cut}}=-2,\qquad \partial_{a_i b_i}M_n=-1,\\
 \partial_{uv}J_n=-\frac{\sum_i a_i}{N-1},\qquad
 \partial_{u a_i}J_n=-\frac{v}{N-1},\qquad
 \partial_{v a_i}J_n=-\frac{u}{N-1}.
 \end{gathered}
\end{equation}
For the left summand of $H_n$ and distinct $i,j\in I_L$, they are
\[
 \partial_{u b_i}H_n=-\prod_{k\in I_L\setminus\{i\}}(1-b_k),\qquad
 \partial_{b_i b_j}H_n=-(1-u)\prod_{k\in I_L\setminus\{i,j\}}(1-b_k),
\]
and the right summand has the corresponding formulas with $v$ and $I_R$. All these derivatives are nonpositive on the cube, so every component is DR-submodular, and so is the nonnegative combination $F_{n,\omega}$.

The identity
\begin{equation}\label{eq:upper-diagonal-identity}
 \rho_{n,\zeta}(1+\xi_{n,\zeta})=r_n+\xi_{n,\zeta}/2
\end{equation}
holds with both sides equal to $r_n$ when $\zeta\le r_n$. When $\zeta>r_n$, we have $\rho_{n,\zeta}=\zeta$ and $\zeta(1+\xi_{n,\zeta})-r_n-\xi_{n,\zeta}/2=(\zeta-r_n)-\xi_{n,\zeta}(1/2-\zeta)=0$. At the diagonal point $\rho_{n,\zeta}\one$, the central sum is $2\rho_{n,\zeta}\le1$ because $\zeta<1/2$ and $r_n\le1/4$, the $a$-sum is $N\rho_{n,\zeta}\le N/2\le N-1$, the $b$-sum is $N\rho_{n,\zeta}$, and the left side of the last constraint is $2n\rho_{n,\zeta}(1+\xi_{n,\zeta})=1+n\xi_{n,\zeta}$ by \eqref{eq:upper-diagonal-identity}. Hence $\rho_{n,\zeta}\one\in P_{n,\zeta}$, and $\zeta\one\in P_{n,\zeta}$ by down-closedness since $\zeta\le\rho_{n,\zeta}$.

Choose $j\in I_R$, set $u=1,v=0,b_j=1,a_j=0$, and set $a_i=1,b_i=0$ for $i\ne j$. This vertex satisfies $u+v=1$, $\sum_ia_i=N-1$, $\sum_ib_i=1\le N\rho_{n,\zeta}$, and $1+n\xi_{n,\zeta}\le1+n\xi_{n,\zeta}$, so this point, which we call the asymmetric witness, is feasible for every $\zeta$. All four components equal one there, hence $O\ge1$. When $\zeta\le r_n$, each component is at most one throughout $P_n$, because
\[
 F_{\mathrm{cut}}\le u+v,\qquad M_n\le\sum_i b_i,\qquad
 J_n\le\frac{\sum_i a_i}{N-1},\qquad
 H_n\le\sum_i b_i,
\]
where the last inequality follows from $1-\prod_{i\in I}(1-b_i)\le\sum_{i\in I}b_i$. Thus $O=1$ in this range.

Each component is invariant under $\mathcal G$. Membership in \eqref{eq:hard-polytope} depends only on $u+v$, $\sum_ia_i$, and $\sum_ib_i$, which $\mathsf T$ preserves, so $y\in P_{n,\zeta}$ if and only if $\mathsf Ty\in P_{n,\zeta}$; in particular $S\le O$. At a symmetric point, the constraints of \eqref{eq:hard-polytope} become
\begin{equation}\label{eq:upper-symmetric-region}
 0\le q\le\frac12,\qquad
 0\le s\le1-r_n,\qquad
 0\le t\le\min\{\rho_{n,\zeta},r_n+\xi_{n,\zeta}(1/2-q)\},
\end{equation}
where the last bound divides $Nt+2n\xi_{n,\zeta}q\le1+n\xi_{n,\zeta}$ by $N=1/r_n$. The symmetric objective is
\begin{equation}\label{eq:upper-symmetric-four}
 2\eta q(1-q)+\theta Nt(1-s)
 +\lambda\frac{Ns}{N-1}(1-q^2)
 +2\mu(1-q)\{1-(1-t)^n\}.
\end{equation}
It is nondecreasing in $t$ and affine in $s$, and the function $\Phi_n(t)=1-(1-t)^n$ in its last term is concave with $\Phi_n(r_n)=\chi_n$ and $\Phi_n'(r_n)=\ell_n\ge0$. Since $t-r_n\le\xi_{n,\zeta}x$ by \eqref{eq:upper-symmetric-region}, its tangent inequality gives
\begin{equation}\label{eq:upper-coverage-tangent}
 \Phi_n(t)\le\Phi_n(r_n)+\Phi_n'(r_n)(t-r_n)
 \le\chi_n+\ell_n\xi_{n,\zeta}x,
 \qquad x=1/2-q.
\end{equation}
In terms of $x$, $2q(1-q)=1/2-2x^2$, $2(1-q)=1+2x$, and $1-q^2=3/4+x-x^2$. At $s=0$, the matched term is $\theta Nt\le\theta(1+N\xi_{n,\zeta}x)$ and the $J_n$ term vanishes. At $s=1-r_n$, the matched term is $\theta Nr_nt=\theta t\le\theta(r_n+\xi_{n,\zeta}x)$ and the $J_n$ term is $\lambda(1-q^2)$, because $N(1-r_n)=N-1$. With \eqref{eq:upper-coverage-tangent}, these two cases give \eqref{eq:parameterized-branches}. Since the objective is affine in $s\in[0,1-r_n]$, its value at every intermediate $s$ lies between its endpoint values, so $\max_{j,x}Q_{j,n,\zeta}(x;\omega)$ bounds $S$, and also $\gamma=S/O$ because $O\ge1$. When $\zeta\le r_n$, we have $\xi_{n,\zeta}=0$ and $\rho_{n,\zeta}=r_n$; the point $t=r_n$ is feasible for every $q$ and $s$, and there $Nt=1$ and $\Phi_n(t)=\chi_n$, so every bound above is attained and $\gamma=\max_{j,x}Q_{j,n,\zeta}(x;\omega)$. Finally, the symmetric point with $q=1/4$, $s=(1-r_n)/2$, and $t=r_n$ is feasible for every $\zeta$ and gives every component a positive value, which proves $S>0$.

The function $\omega\mapsto\max_{j,x}Q_{j,n,\zeta}(x;\omega)$ is continuous, since each $Q_{j,n,\zeta}$ is jointly continuous on $[0,1/2]\times\Delta_4$, so the minimum defining $V_n(\zeta)$ is attained on the compact simplex. For a minimizing $\omega$, Lemma~\ref{lem:continuous-refinement} gives $\aor(\zeta)\le\gamma\le V_n(\zeta)$, and taking the infimum over $n$ gives $\aor(\zeta)\le U(\zeta)$. Given $\varepsilon>0$, choose $n$ with $V_n(\zeta)<U(\zeta)+\varepsilon/2$ and a minimizing $\omega$. Then $\gamma+\varepsilon/2<U(\zeta)+\varepsilon$, and Lemma~\ref{lem:continuous-refinement} with $\varepsilon/2$ shows that expected coefficient $U(\zeta)+\varepsilon$ requires exponentially many queries.

Each quadratic in \eqref{eq:parameterized-branches} is nondecreasing in $\xi_{n,\zeta}$ for $0\le x\le1/2$, and $\xi_{n,\zeta}$ is nondecreasing in $\zeta$ on $[0,1/2)$. Taking the maxima over $j$ and $x$, the minimum over $\omega$, and the infimum over $n$ preserves this monotonicity, so $U$ is nondecreasing on $[0,1/2)$. To prove $U(\zeta)<1/2$, take any $n\ge2$, put $z=\ell_n\xi_{n,\zeta}\ge0$ and $\upsilon_n=1-2\chi_n$, and choose
\begin{equation}\label{eq:upper-two-feasible-weights}
 \theta=\lambda=0,\qquad
 \mu=\frac{2\upsilon_n}{(1+z)\{z+2(1-\chi_n)\}},\qquad
 \eta=1-\mu.
\end{equation}
Bernoulli's inequality gives $1-\chi_n=(1-1/(2n))^n>1/2$, and $1-r<e^{-r}$ gives $1-\chi_n<e^{-1/2}<2/3$. Hence $1/3<\chi_n<1/2$, so $0<\upsilon_n<1/3$ and $0<\mu\le\upsilon_n/(1-\chi_n)=1-\chi_n/(1-\chi_n)<1/2$. With these weights the two quadratics coincide, and expanding shows that both equal
\begin{equation}\label{eq:upper-strict-half}
 \frac12-\frac{\upsilon_n^2}{2(1+z)^2}
 -\frac{2(z+2\chi_n)}{z+2(1-\chi_n)}
   \left(x-\frac{\upsilon_n}{2(1+z)}\right)^2.
\end{equation}
The vertex $\upsilon_n/(2(1+z))$ lies in $(0,1/6)$, so
\begin{equation}\label{eq:upper-two-feasible}
 V_n(\zeta)\le
 \frac12-\frac{(1-2\chi_n)^2}{2(1+\ell_n\xi_{n,\zeta})^2}<\frac12.
\end{equation}
Hence $U(\zeta)<1/2=U(1/2)$ for $\zeta<1/2$, and $U$ is nondecreasing on $[0,1]$.
\end{proof}

The endpoint $\zeta=0$ admits a closed-form evaluation. Define
\begin{equation}\label{eq:upper-constants}
 \chi=1-e^{-1/2},\qquad
 q_0=\sqrt{\frac{3\chi-1}{2\chi}},\qquad
 \tau=2q_0(1-q_0),\qquad q_1=\frac{1-\tau}{1+\tau},
\end{equation}
and set
\begin{equation}\label{eq:upper-zero}
 \beta_*:=\frac{4\tau}{1+6\tau+\tau^2}.
\end{equation}
Since $3\chi-1=2-3e^{-1/2}$ and $2\chi=2-2e^{-1/2}$, these are the constants $q_*=q_0$ and $\tau_*=\tau$ of \eqref{eq:intro-upper-endpoint}; numerically $q_0\approx0.47880$, $q_1\approx0.33413$, $\tau\approx0.49910$, and $\beta_*\approx0.470439$.

\begin{corollary}[Offline upper bound at zero diagonal level]\label{cor:upper-endpoint}
The endpoint satisfies $U(0)=\beta_*$. Hence $\aor(0)\le\beta_*$, and every fixed expected coefficient $\beta_*+\varepsilon$ requires exponentially many exact value-and-full-gradient queries.
\end{corollary}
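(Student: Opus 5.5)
The plan is to identify $U(0)$ with the value of a limiting finite-dimensional minimax problem, and then solve that problem exactly with matching primal and dual certificates. The consequences for $\aor(0)$ and for the query lower bound then follow directly from Lemma~\ref{lem:upper-u0}.

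\emph{Step 1: reduction to a limiting problem.} At $\zeta=0$ we have $\xi_{n,0}=0$, so the two quadratics in \eqref{eq:parameterized-branches} become
\[
 Q_{0,n}=\eta(\tfrac12-2x^2)+\theta+\mu\chi_n(1+2x),\qquad
 Q_{1,n}=\eta(\tfrac12-2x^2)+\theta r_n+\lambda(\tfrac34+x-x^2)+\mu\chi_n(1+2x).
\]
\begin{itemize}
\item Since $(1-1/(2n))^n$ increases to $e^{-1/2}$, $\chi_n$ decreases to $\chi=1-e^{-1/2}$, and $r_n\downarrow0$.
\item Both quadratics are nondecreasing in $\chi_n$ and in $r_n$, because their coefficients $\mu(1+2x)$ and $\theta$ are nonnegative.
\item Hence $Q_{j,n}\ge Q_{j,\infty}$ pointwise, where $Q_{j,\infty}$ is obtained by setting $\chi_n=\chi$ and $r_n=0$. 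This gives $V_n(0)\ge V_\infty$, where $V_\infty$ is the same min--max with $Q_{j,\infty}$ in place of $Q_{j,n}$.
\item Conversely, $|Q_{j,n}-Q_{j,\infty}|\le 2(\chi_n-\chi)+r_n\to0$ uniformly on $[0,1/2]\times\Delta_4$, so $V_n(0)\to V_\infty$.
\end{itemize}
Therefore $U(0)=\inf_nV_n(0)=V_\infty$, and it remains to prove $V_\infty=\beta_*$.

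\emph{Step 2: minimax duality.} The objective $\max_{j,x}Q_{j,\infty}(x;\omega)$ is a maximum of functions that are linear in $\omega$, taken over the compact index set $\{0,1\}\times[0,1/2]$. By the minimax theorem, $V_\infty$ therefore equals the maximum, over probability measures $\pi$ on $(j,x)$, of the smallest of the four $\pi$-averaged component coefficients:
\[
 \eta:\ \tfrac12-2x^2,\qquad \theta:\ \one\{j=0\},\qquad \lambda:\ \one\{j=1\}(\tfrac34+x-x^2),\qquad \mu:\ \chi(1+2x).
\]
The proof then needs two certificates.
\begin{itemize}
\item \emph{Upper bound $V_\infty\le\beta_*$.} Exhibit explicit weights $\omega^*$ for which both quadratics have maximum at most $\beta_*$.
\item \emph{Lower bound $V_\infty\ge\beta_*$.} Exhibit a two-point measure with mass $w$ at $(0,x_0)$ and mass $1-w$ at $(1,x_1)$, with $x_0=1/2-q_0$ and $x_1=1/2-q_1$ as suggested by \eqref{eq:upper-constants}, such that each of the four averaged coefficients is at least $\beta_*$.
\end{itemize}

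\emph{Step 3: guessing and checking the certificate.} Complementary slackness tells us how to find both certificates.
\begin{itemize}
\item For the lower bound, the averages for the components with positive weight in $\omega^*$ should equal $\beta_*$. Choosing the support points $x_0,x_1$ to maximize the relevant averages should make $q_0^2=(3\chi-1)/(2\chi)$ appear from the $\mu$ and $\eta$ balance, and make $q_1=(1-\tau)/(1+\tau)$ appear as the vertex condition for branch $1$.
\item For the upper bound, $\omega^*$ should make $x_0$ the maximizer of $Q_{0,\infty}$ and $x_1$ the maximizer of $Q_{1,\infty}$, with equal values. These first-order and equal-value conditions are a linear system in $\omega$ that I would solve in terms of $\tau$.
\item Verifying $\max_xQ_{j,\infty}\le\beta_*$ then reduces to a vertex computation for each concave quadratic on $[0,1/2]$.
\end{itemize}

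The main obstacle is Step 3. One must decide which weights in $\omega^*$ vanish; I expect $\theta$ may well be zero, since it costs a full unit in branch $0$. One must also carry out the algebra showing that the common value simplifies to $4\tau/(1+6\tau+\tau^2)$, using the identity $2q_0^2\chi=3\chi-1$ to eliminate $\chi$. I would first solve the equal-value and stationarity system symbolically in $\tau$. I would then check the dual inequality for the component that is not tight (presumably $\theta$) numerically, and after that algebraically.

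Once $U(0)=\beta_*$ is established, Lemma~\ref{lem:upper-u0} gives $\aor(0)\le\beta_*$ together with the exponential query lower bound for every fixed coefficient $\beta_*+\varepsilon$.
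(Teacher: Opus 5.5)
Your Steps 1 and 2 are sound and follow the paper's architecture. Your uniform-convergence argument for $U(0)=V_\infty$ is a clean substitute for the paper's explicit finite-$n$ error terms, and the minimax theorem is not needed, since weak duality already gives the lower bound.

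The gap is Step 3, which is the substance of the statement. You do not produce either certificate, and the one structural prediction you make about them is wrong in a way that would derail the computation. You expect $\theta^*=0$ with the $\theta$-constraint slack. But if $\theta=0$, then $Q_{1,\infty}-Q_{0,\infty}=\lambda(\tfrac34+x-x^2)\ge0$, so only branch $1$ matters. On $[0,1/2]$ we have $\tfrac34+x-x^2\ge\tfrac34>\tfrac12\ge\tfrac12-2x^2$, so shifting mass from $\lambda$ to $\eta$ lowers $Q_{1,\infty}$ pointwise while keeping it above $Q_{0,\infty}$; hence $\lambda=0$ as well. The problem then collapses to the cut-plus-coverage instance of Oveis Gharan and Vondr\'ak, whose value is $2\chi(1-\chi)=2(e^{-1/2}-e^{-1})>\beta_*$. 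The matched term $M_n$ is exactly what charges the branch with $s=0$ and makes $J_n$ useful, so it cannot be dropped.

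In the paper's certificate all four weights are positive and all four averaged coefficients equal $\beta_*$. Tightness of $\theta$ forces the branch-$0$ mass to be $w=\beta_*$. The remaining averages then equal $\beta_*$ by
\[
 1-q_1^2=\frac{4\tau}{(1+\tau)^2}=\frac{\beta_*}{1-\beta_*},\qquad
 (1-\beta_*)\,2q_1(1-q_1)=\beta_*(1-\tau),\qquad
 (1-\beta_*)(1-q_1)=\frac{\beta_*(1+\tau)}{2},
\]
together with $\tau=2q_0(1-q_0)$ and $\chi(3-2q_0^2)=1$. In particular, $q_1=(1-\tau)/(1+\tau)$ comes from dividing the $\eta$- and $\lambda$-balances, not from a vertex condition.

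The primal weights are $\eta_*=1/Z$, $\lambda_*=2\eta_*(q_0/q_1-1)$, $\theta_*=\lambda_*(1-q_0q_1)$, and $\mu_*=\eta_*(1-2q_0)/\chi$, with $Z$ the normalizer; numerically they are about $(0.370,0.269,0.321,0.040)$.
\begin{itemize}
\item The relations $\eta_*-\mu_*\chi=2\eta_*q_0=(2\eta_*+\lambda_*)q_1$ put the vertices of the two branches (in $q=1/2-x$) at $q_0$ and $q_1$.
\item The relation $\theta_*=\lambda_*(1-q_0q_1)$ equalizes the two peak values.
\item The dual computation then identifies the common peak as $\beta_*$.
\end{itemize}
To make this valid you also need $0<q_1<q_0<1/2$, which the paper derives from $29/48<e^{-1/2}<5/8$. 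This places both vertices inside the interval and makes every weight positive.
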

\begin{proof}
The alternating series for $e^{-1/2}$ gives $29/48<e^{-1/2}<5/8$, so $3/8<\chi<19/48<2/5$. Since $q_0^2=3/2-1/(2\chi)$, this gives $1/6<q_0^2<1/4$, so $2/5<1/\sqrt6<q_0<1/2$ and $0<\tau<1/2$. Then $0<q_1<1$, and $q_1<q_0$ is equivalent to $1-q_0<\tau(1+q_0)=2q_0(1-q_0)(1+q_0)$, that is, to $2q_0(1+q_0)>1$, which holds because $q_0>2/5$. Thus $0<q_1<q_0<1/2$.

At $\zeta=0$ we have $\xi_{n,0}=0$. Writing $q=1/2-x$, the bounds in \eqref{eq:parameterized-branches} become $2\eta q(1-q)+\theta+2\mu\chi_n(1-q)$ and $2\eta q(1-q)+\theta r_n+\lambda(1-q^2)+2\mu\chi_n(1-q)$. Since $r_n\to0$ and $\chi_n\to\chi$, they converge to
\begin{equation}\label{eq:base-branches}
 \begin{split}
 B_0(q)&=2\eta q(1-q)+\theta+2\mu\chi(1-q),\\
 B_1(q)&=2\eta q(1-q)+\lambda(1-q^2)+2\mu\chi(1-q).
 \end{split}
\end{equation}
Consider the first expression at $q_0$ with probability $\beta_*$ and the second at $q_1$ with probability $1-\beta_*$. We claim that this distribution gives expectation $\beta_*$ to each of the four component contributions. The definitions \eqref{eq:upper-constants} and \eqref{eq:upper-zero} imply
\[
 1-q_1^2=\frac{4\tau}{(1+\tau)^2},\quad
 \frac{\beta_*}{1-\beta_*}=\frac{4\tau}{(1+\tau)^2},\quad
 \chi(3-2q_0^2)=1.
\]
The expectations for $M_n$ and $J_n$ are therefore $\beta_*$ and $(1-\beta_*)(1-q_1^2)=\beta_*$. Moreover, $2q_1(1-q_1)=4\tau(1-\tau)/(1+\tau)^2$ and $1-q_1=2\tau/(1+\tau)$, so $(1-\beta_*)2q_1(1-q_1)=\beta_*(1-\tau)$ and $(1-\beta_*)(1-q_1)=\beta_*(1+\tau)/2$. With $\tau=2q_0-2q_0^2$, the expectations for $F_{\mathrm{cut}}$ and $H_n$ are
\begin{align}\label{eq:endpoint-component-expectations}
 \beta_*\tau+(1-\beta_*)2q_1(1-q_1)
 &=\beta_*\tau+\beta_*(1-\tau)=\beta_*,\nonumber\\
 2\chi\{\beta_*(1-q_0)+(1-\beta_*)(1-q_1)\}
 &=\beta_*\chi(3-2q_0+\tau)=\beta_*\chi(3-2q_0^2)=\beta_*.
\end{align}
For every $\omega\in\Delta_4$, the maximum of $B_0,B_1$ over $q\in[0,1/2]$ is at least this expectation, $\beta_*(\eta+\theta+\lambda+\mu)=\beta_*$. The finite expressions exceed their limiting counterparts pointwise, since $r_n>0$ and $\chi_n>\chi$ (the latter because $1-r_n<e^{-r_n}$). Hence $V_n(0)\ge\beta_*$ for every $n$, proving $U(0)\ge\beta_*$.

For the reverse inequality, define weights by
\begin{equation}\label{eq:endpoint-weights}
 \begin{split}
 Z&=1+2(q_0/q_1-1)(2-q_0q_1)+(1-2q_0)/\chi,\\
 \eta_*&=Z^{-1},\qquad
 \lambda_*=2\eta_*(q_0/q_1-1),\\
 \theta_*&=\lambda_*(1-q_0q_1),\qquad
 \mu_*=\eta_*(1-2q_0)/\chi.
 \end{split}
\end{equation}
Since $0<q_1<q_0<1/2$, all four weights are positive, and their sum is $\eta_*Z=1$. Numerically $(\eta_*,\theta_*,\lambda_*,\mu_*)\approx(0.3702,0.2693,0.3206,0.0399)$. The definitions give
\begin{equation}\label{eq:endpoint-stationarity}
 \eta_*-\mu_*\chi=2\eta_*q_0=(2\eta_*+\lambda_*)q_1,
 \qquad \theta_* =\lambda_*(1-q_0q_1).
\end{equation}
With these weights,
\[
 B_0(q)=\theta_*+2\mu_*\chi+2(\eta_*-\mu_*\chi)q-2\eta_*q^2,\qquad
 B_1(q)=\lambda_*+2\mu_*\chi+2(\eta_*-\mu_*\chi)q-(2\eta_*+\lambda_*)q^2,
\]
so the first identity in \eqref{eq:endpoint-stationarity} places the vertices of $B_0,B_1$ at $q_0,q_1$. Their maximal values differ by
\[
 \theta_*-\lambda_*+
 (\eta_*-\mu_*\chi)^2
 \left(\frac1{2\eta_*}-\frac1{2\eta_*+\lambda_*}\right)
 =\theta_*-\lambda_*+\lambda_*q_0q_1=0,
\]
where the middle expression follows from \eqref{eq:endpoint-stationarity} and $2\eta_*+\lambda_*=2\eta_*q_0/q_1$. Their common maximal value equals $\beta_*B_0(q_0)+(1-\beta_*)B_1(q_1)$, which is $\beta_*$ by \eqref{eq:endpoint-component-expectations}. Since both vertices lie in $(0,1/2)$,
\begin{equation}\label{eq:endpoint-squares}
 B_0(q)=\beta_*-2\eta_*(q-q_0)^2,\qquad
 B_1(q)=\beta_*-(2\eta_*+\lambda_*)(q-q_1)^2,
\end{equation}
and the maximum of both over $[0,1/2]$ is $\beta_*$.

For these weights, the finite expressions exceed $B_0,B_1$ by $2\mu_*(\chi_n-\chi)(1-q)$ and $\theta_*r_n+2\mu_*(\chi_n-\chi)(1-q)$, respectively, with $0\le1-q\le1$. Since $\zeta=0\le r_n$, Lemma~\ref{lem:upper-u0} identifies the exact finite symmetry gap $\gamma_n$ of $(F_{n,\omega_*},P_n)$ with the maximum of these expressions, and the first part of the proof gives $\gamma_n\ge V_n(0)\ge\beta_*$. Hence
\begin{equation}\label{eq:finite-endpoint-gap}
 \beta_*\le\gamma_n\le\beta_*+\frac{\theta_*}{2n}
                    +\frac{\mu_*e^{-1/2}}{2(2n-1)}.
\end{equation}
For the last term, expanding the logarithm gives
\[
 -x-\frac{x^2}{2(1-x)}\le\log(1-x)\le-x
 \quad(0<x<1),
\]
because $\sum_{k\ge2}x^k/k\le\frac12\sum_{k\ge2}x^k$. At $x=1/(2n)$, multiplication by $n$ gives $n\log(1-r_n)\ge-1/2-1/(4(2n-1))$, so $1-\chi_n\ge e^{-1/2}\{1-1/(4(2n-1))\}$ and $0<\chi_n-\chi\le e^{-1/2}/[4(2n-1)]$. Letting $n\to\infty$ in \eqref{eq:finite-endpoint-gap} proves $U(0)\le\beta_*$. Lemma~\ref{lem:upper-u0} then gives $\aor(0)\le\beta_*$ and the query lower bound.
\end{proof}

The central-cut and coverage construction of \citet[Appendix~E.2]{oveis2011} gives the limiting value $2(e^{-1/2}-e^{-1})=2\chi(1-\chi)$, and the improvement in \eqref{eq:upper-zero} is strict. The function $t\mapsto4t/(1+6t+t^2)$ has derivative $4(1-t^2)/(1+6t+t^2)^2>0$ on $[0,1)$ and equals $8/17$ at $t=1/2$, so $0<\tau<1/2$ gives $\beta_*<8/17$. The inequality $e^{1/2}>1+1/2+1/8=13/8$ gives $\chi>5/13$, and since $2\chi(1-\chi)$ increases for $\chi<1/2$,
\[
 \beta_*<\frac8{17}<\frac{80}{169}<2\chi(1-\chi).
\]
The improvement comes from $M_n$ and $J_n$, which equal one at the asymmetric witness while a larger symmetric $a$-coordinate decreases $M_n$ and increases $J_n$; \eqref{eq:endpoint-component-expectations}--\eqref{eq:endpoint-squares} show how the two branches balance.

\subsection{Exact evaluation of the four-component bound}\label{sec:upper-evaluation}
\begin{proposition}[Evaluation of $U$]\label{prop:upper-evaluation}
At zero, $U(0)=\beta_*$ as defined in \eqref{eq:upper-zero}. For $0<\zeta<1/8$, let
\begin{equation}\label{eq:upper-index-range}
 m_\zeta=\left\lfloor\frac1{2\zeta}\right\rfloor,
 \qquad N_\zeta=\left\lceil\frac5{2\zeta}-4\right\rceil.
\end{equation}
Then
\begin{equation}\label{eq:upper-finite-evaluation}
 U(\zeta)=\min_{m_\zeta\le n\le N_\zeta}V_n(\zeta).
\end{equation}
Put $\zeta_{43}=3329/24151$ and $\zeta_{32}=103/506$. For $1/8\le\zeta\le1$,
\begin{equation}\label{eq:upper-rational-evaluation}
 U(\zeta)=
 \begin{cases}
 \displaystyle\frac12-
 \frac{353^2(1-2\zeta)^2}{32(169+1720\zeta)^2},
       &\frac18\le\zeta\le\zeta_{43},\\[6pt]
 \displaystyle\frac{11375}{23328},
       &\zeta_{43}\le\zeta\le\frac16,\\[6pt]
 \displaystyle\frac12-
 \frac{17^2(1-2\zeta)^2}{18(11+78\zeta)^2},
       &\frac16\le\zeta\le\zeta_{32},\\[6pt]
 \displaystyle\frac{63}{128},
       &\zeta_{32}\le\zeta\le\frac14,\\[6pt]
 \displaystyle\frac12-
 \frac{(1-2\zeta)^2}{8(1+4\zeta)^2},
       &\frac14\le\zeta<\frac12,\\[6pt]
 \displaystyle\frac12,&\frac12\le\zeta\le1.
 \end{cases}
\end{equation}
The adjacent formulas agree at each breakpoint, and $U(\zeta)\to\beta_*$ as $\zeta\downarrow0$.
\end{proposition}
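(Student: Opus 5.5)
The plan splits into three parts: the endpoint, the truncation of the infimum for $0<\zeta<1/8$, and the closed forms for $\zeta\ge1/8$. The endpoint $U(0)=\beta_*$ is Corollary~\ref{cor:upper-endpoint}, and the branch $U=1/2$ for $\zeta\ge1/2$ is the definition \eqref{eq:upper-profile}.

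For $0<\zeta<1/8$ the task is to show that $V_n(\zeta)\ge\min_{m_\zeta\le n\le N_\zeta}V_n(\zeta)$ for $n$ outside this range.

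\emph{Small $n$.} For $n<m_\zeta$ we have $r_n=1/(2n)>\zeta$ up to the floor convention, so $\xi_{n,\zeta}=0$ and $P_{n,\zeta}=P_n$. I would compare $V_n(\zeta)=V_n(0)$ with $V_{m_\zeta}(\zeta)$. Since $\chi_n$ decreases in $n$ and $r_n$ decreases, each quadratic in \eqref{eq:parameterized-branches} with $\xi=0$ is pointwise nonincreasing in $n$. Hence $V_n(0)\ge V_{m_\zeta}(0)$, and $V_{m_\zeta}(\zeta)$ equals $V_{m_\zeta}(0)$ whenever $\xi_{m_\zeta,\zeta}=0$. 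The index $m_\zeta=\lfloor1/(2\zeta)\rfloor$ is exactly the largest $n$ with $r_n\ge\zeta$, so this case closes.

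\emph{Large $n$.} For $n>N_\zeta$ I need a lower bound showing $V_n(\zeta)$ exceeds, say, $V_{m_\zeta}(\zeta)$. I would reuse the two-point averaging argument of Corollary~\ref{cor:upper-endpoint}: exhibit a distribution over $(j,x)$ whose expectation of each component coefficient is at least a common value $c_n(\zeta)$. Then $V_n(\zeta)\ge c_n(\zeta)$ for every $\omega$, and I would show $c_n(\zeta)$ exceeds the value at $m_\zeta$ once $n\ge 5/(2\zeta)-4$. As $n$ grows, $\xi_{n,\zeta}\approx\zeta/(1/2-\zeta)$ stabilizes while $N\xi_{n,\zeta}$ and $\ell_n\xi_{n,\zeta}\approx n\xi$ blow up. The matched term $\theta(1+N\xi x)$ and the coverage slope therefore become very favorable to the maximizer at $x>0$, so the symmetric value grows. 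Quantifying this gives the linear-in-$1/\zeta$ cutoff. The matching constant $5/2$ and offset $4$ are what I would expect to emerge from comparing the slope $N\xi x$ against the $\beta_*$-level gap, and \emph{this quantitative cutoff is the main obstacle}. My first attempt would be to take $x=1/2$ in branch $0$ and $x$ at the $B_1$ vertex, bound $V_n$ below by a convex combination, and verify the inequality for $n\ge N_\zeta$ by monotonicity in $n$.

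For $\zeta\ge1/8$ I would proceed by exact primal--dual matching. For each interval, the plan is to identify the optimal $n$, namely $n=4$ on $[1/8,1/6]$, $n=3$ on $[1/6,1/4]$, and $n=2$ on $[1/4,1/2)$; the indices $43$ and $32$ suggest ties between $n=4,3$ and $n=3,2$. For each such $n$ I would solve the min–max over $\Delta_4$ in closed form. The upper bound uses explicit weights, supported on the cut and coverage terms as in \eqref{eq:upper-two-feasible-weights} on the rational-$(1-2\zeta)^2$ pieces. The lower bound uses a dual distribution over $(j,x)$ equalizing component expectations, as in \eqref{eq:endpoint-component-expectations}.

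On the plateaus $11375/23328$ and $63/128$, the value is $V_n$ at the $\xi=0$ threshold $\zeta=r_n$, namely $1/8$ and $1/6$. Monotonicity of $U$ from Lemma~\ref{lem:upper-u0} gives one direction. The other direction needs $V_{n-1}(\zeta)$ and $V_n(\zeta)$ to stay above the plateau until the crossing $\zeta_{43}$ or $\zeta_{32}$, which is found by equating formulas.

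All other $n$ would be excluded by the same truncation as in the small-$\zeta$ part. Continuity at the breakpoints and the limit $U(\zeta)\to\beta_*$ are direct substitutions. The limit uses $N_\zeta\to\infty$ together with the convergence already established at $\zeta=0$.
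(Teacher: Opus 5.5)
\emph{Assessment: the proposal has two genuine gaps.} The large-$n$ cutoff for $0<\zeta<1/8$ is left open, and for $\zeta\ge1/8$ nothing excludes the non-optimal indices $n$.

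\emph{The large-$n$ exclusion for $0<\zeta<1/8$.} The certificate you sketch cannot produce $N_\zeta$. Putting branch $0$ at $x=1/2$ kills its cut contribution $\tfrac12-2x^2$. Any target $\vartheta\ge\beta_*$ then forces the branch-$0$ probability to satisfy $w\le1-2\vartheta\le1-2\beta_*\approx0.059$. The matched expectation is then at most $w(1+n\xi_{n,\zeta})+r_n+\xi_{n,\zeta}/2$. For small $\zeta$ and $2n\xi_{n,\zeta}$ near $8$ this is at most about $5(1-2\beta_*)\approx0.3$, far below $V_{m_\zeta}(\zeta)=V_{m_\zeta}(0)\ge\beta_*$.

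What works is a common point for both branches. Whenever $b_n:=2n\xi_{n,\zeta}\ge1$, take $x=1/(2b_n)$ and $w=1/3$. Then $r_n+\xi_{n,\zeta}x=3/(4n)$, and the four expectations are:
the matched term $(r_n+\xi_{n,\zeta}x)(w/r_n+1-w)$ equals $\tfrac12+r_n$;
the $J_n$ term $(1-w)(\tfrac34+x-x^2)$ is at least $\tfrac12$;
the coverage term $(1+2x)(\chi_n+\ell_n\xi_{n,\zeta}x)$ exceeds $\tfrac12$, because $\ell_n/n\ge1-\chi_n$ and $\chi_n>1/3$;
the cut term is $\tfrac12-1/(2b_n^2)$.
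Hence $V_n(\zeta)\ge\tfrac12-1/(2b_n^2)$. The comparison value is $63/128\ge V_2(\zeta)\ge V_{m_\zeta}(\zeta)$, not a $\beta_*$-level gap. When $\zeta\ge r_n$, the condition $b_n\ge8$ is exactly $n\ge5/(2\zeta)-4$, which is where $N_\zeta$ comes from.

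\emph{The more serious gap: lower bounds for $\zeta\ge1/8$.} A dual distribution for the optimal index $m$ bounds only $V_m(\zeta)$ from below. The lower bound on $U(\zeta)=\inf_{n\ge2}V_n(\zeta)$ needs the same bound for every $n\ge2$ and every $\zeta$ in the interval. Your ``same truncation as in the small-$\zeta$ part'' does not supply this. The monotone comparison reaches only indices with $r_n\ge\zeta$, and the tail bound above is too weak in between. At $\zeta=1/8$, for instance, neither covers $5\le n\le12$. On $[1/4,1/2)$ the target $\tfrac12-(1-2\zeta)^2/[8(1+4\zeta)^2]$ exceeds $63/128$, and the tail bound never covers $n=3$.

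The missing idea is one certificate valid for all $n$ at once:
place both branches at the contact point $x_m$ of the optimal index;
choose $w$ so that the $n$-independent cut and $J_n$ contributions $\tfrac12-2x^2$ and $(1-w)(\tfrac34+x-x^2)$ reach the target;
prove that the coverage and matched contributions do not decrease when $m$ is replaced by $n>m$.
For coverage this needs quantitative estimates such as $\ell_n\ge\ell_m$ and $\chi_m-\chi_n\le\ell_n(r_m-r_n)/(4m)$. These give the comparison when $\zeta\ge r_m$ and $x\ge(1/2-\zeta)/(4m)$. For the matched term, regard $r=r_n$ as a real variable and check that its derivative is negative on $(0,r_m]$. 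Indices with $r_n>\zeta$ are then handled directly using $\xi_{n,\zeta}=0$.

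\emph{Correction to your identification of the pieces.} The optimal index is $4$ on $[1/8,\zeta_{43}]$, $3$ on $[\zeta_{43},\zeta_{32}]$, and $2$ on $[\zeta_{32},1/2)$. The plateau values are $V_3(\zeta)=V_3(0)=11375/23328$ for $\zeta\le r_3=1/6$ and $V_2(\zeta)=V_2(0)=63/128$ for $\zeta\le r_2=1/4$, not values at $1/8$ and $1/6$. Once the three non-plateau pieces are proved, monotonicity of $U$ gives both directions on each plateau, because the adjacent formulas agree at $\zeta_{43}$, $1/6$, $\zeta_{32}$, and $1/4$.

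\emph{The limit at zero.} This needs no $N_\zeta$. If $V_n(0)<\beta_*+a$, then $\beta_*=U(0)\le U(\zeta)\le V_n(\zeta)=V_n(0)$ for $0\le\zeta\le r_n$.
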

The proof is in Appendix~\ref{app:upper-evaluation}. On each interval of \eqref{eq:upper-rational-evaluation}, the weights \eqref{eq:upper-two-feasible-weights} for $n=4$, $3$, or $2$ give the stated value as an upper bound, and a distribution over the two branches at one value of $x$ gives the same value as a lower bound for every $n\ge2$.

\subsection{A closed-form bound near the half-diagonal level}\label{sec:upper-c2}
Taking $n=2$ and $\theta=\lambda=0$ in \eqref{eq:four-objective} gives the objective $(1-\mu)F_{\mathrm{cut}}+\mu H_2$ on $P_{2,\zeta}$. For $n=2$ the coverage term $1-(1-t)^2=2t-t^2$ is quadratic, and choosing the tangent point and $\mu$ so that the resulting bound is attained at a feasible symmetric point gives the exact symmetry gap $C(\zeta)$.

\begin{lemma}[Explicit geometric upper bound]\label{prop:upper-near}
For $0\le\zeta<1/2$, an instance on $P_{2,\zeta}$ with objective $(1-\mu)F_{\mathrm{cut}}+\mu H_2$ for a suitable $\mu\in(0,2/9]$ has optimum one and exact symmetry gap
\begin{equation}\label{eq:upper-n2-function}
 C(\zeta)=
 \begin{cases}
 63/128,&0\le\zeta\le1/4,\\
 \displaystyle\frac12-\frac{(1-2\zeta)^2}{2(1+2\sqrt\zeta)^4},
       &1/4\le\zeta<1/2.
 \end{cases}
\end{equation}
Thus $\aor(\zeta)\le C(\zeta)$, and every fixed coefficient $C(\zeta)+\varepsilon$ requires exponentially many exact value-and-gradient queries.
\end{lemma}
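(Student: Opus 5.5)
The plan is to specialize the four-component construction of Section~\ref{sec:upper-base} to $n=2$ and $\eta=1-\mu$, $\theta=\lambda=0$. The hypotheses of Lemma~\ref{lem:continuous-refinement} then hold by Lemma~\ref{lem:upper-u0}: DR-submodularity, invariance, $\zeta\one\in P_{2,\zeta}$, $S>0$, and the fact that membership depends only on orbit sums. So it remains to compute $O$ and $S$ exactly and then invoke Lemma~\ref{lem:continuous-refinement}. For $n=2$ we have $N=4$, $r_2=1/4$, $\chi_2=7/16$ and $\ell_2=3/2$. Since $\theta=0$, the $a$-coordinates are irrelevant, and the symmetric objective from \eqref{eq:upper-symmetric-four} is
\[
 \psi(q,t)=2(1-\mu)q(1-q)+2\mu(1-q)(2t-t^2),
\]
to be maximized over the region \eqref{eq:upper-symmetric-region}.

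\emph{Case $0\le\zeta\le1/4$.} Here $\zeta\le r_2$, so $P_{2,\zeta}=P_2$. Lemma~\ref{lem:upper-u0} gives $O=1$ and identifies the exact gap with $\max_x Q_{0,2,\zeta}(x;\omega)$. I will take the weights \eqref{eq:upper-two-feasible-weights} with $z=0$, namely $\mu=\upsilon_2/(1-\chi_2)=(1/8)/(9/16)=2/9$. Then \eqref{eq:upper-strict-half} gives the value $1/2-(1-2\chi_2)^2/2=1/2-1/128=63/128$. This value is attained at $t=1/4$, so it is the exact symmetry gap.

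\emph{Case $1/4<\zeta<1/2$.} Now $\rho=\zeta$ and $\xi=(\zeta-1/4)/(1/2-\zeta)$. The $t$-constraint is $t\le\min\{\zeta,\,1/4+\xi(1/2-q)\}$, and the two bounds cross exactly at $q=\zeta$. Rather than using the tangent at $r_2$, I will use the cruder bound $t\le\zeta$ everywhere. This gives
\[
 \psi\le h_\mu(q):=2(1-\mu)q(1-q)+2\mu c(1-q),\qquad c=2\zeta-\zeta^2 .
\]
The function $h_\mu$ is concave with vertex $q_v=1/2-\mu c/(2(1-\mu))$. If $q_v\le\zeta$, then $t=\zeta$ is feasible at $q_v$, so $\max_q h_\mu$ equals the exact symmetric maximum. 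I will therefore choose $\mu$ so that $q_v=\zeta$, i.e.\ $\mu c=(1-\mu)(1-2\zeta)$, which gives
\[
 \mu=\frac{1-2\zeta}{1-2\zeta+2\zeta-\zeta^2}=\frac{1-2\zeta}{1-\zeta^2}.
\]
Then $S=h_\mu(\zeta)$, and I expect the algebra to reduce this to $1/2-(1-2\zeta)^2/(2(1+2\sqrt\zeta)^4)$. If it does not, the right choice of $\mu$ must be the one minimizing $\max_q\psi$. In that event I will redo the computation with the exact piecewise bound: the tangent of $\Phi_2$ at a point $t_0\in[1/4,\zeta]$ on the branch $q>\zeta$, and $t=\zeta$ on the branch $q\le\zeta$. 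I will then choose $t_0$ and $\mu$ so that the maximizer lies where the tangent is tight; the $\sqrt\zeta$ in the target strongly suggests a tangency condition solved by $t_0$ or $q$ equal to $\sqrt\zeta$-type quantities. In either version I must also check that $\mu\in(0,2/9]$.

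The main obstacle is proving $O=1$ when $\xi>0$. The asymmetric witness gives $O\ge1$, but $\sum_i b_i$ may exceed one once $u+v<1$. For the upper bound I will use $F_{\mathrm{cut}}\le u+v$ together with $H_2\le(1-u)\min\{1,b_{L}\}+(1-v)\min\{1,b_{R}\}$, where $b_L,b_R$ are the sums of $b$ over $I_L,I_R$. Combining these with $\sum b\le\min\{4\zeta,\,1+2\xi(1-u-v)\}$, I expect the small weight $\mu\le2/9$ to make the loss $(1-\mu)(1-u-v)$ in the cut term dominate the gain $\mu\cdot2\xi(1-u-v)$ in coverage. This may require a case split on $u+v$ and on how the $b$-mass is distributed between $I_L$ and $I_R$. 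With $O=1$ and $\gamma=S$ established, Lemma~\ref{lem:continuous-refinement} yields $\aor(\zeta)\le C(\zeta)$ and the exponential query bound.
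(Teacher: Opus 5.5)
\textbf{There is a genuine gap in the case $1/4<\zeta<1/2$.} Your specialization to $n=2$ with $\theta=\lambda=0$ matches the paper, as do the case $0\le\zeta\le1/4$ (weight $\mu=2/9$, value $63/128$) and the final appeal to Lemma~\ref{lem:continuous-refinement}. But the case $1/4<\zeta<1/2$ is the real content of the lemma, and there your primary computation fails.

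\emph{Why the primary plan fails.} With $\mu=(1-2\zeta)/(1-\zeta^2)$ you get $1-\mu=c/(1-\zeta^2)$, and hence
\[
 h_\mu(\zeta)=\frac{2\zeta(2-\zeta)(1-\zeta)}{1+\zeta},\qquad
 h_\mu(\zeta)-\frac12=\frac{(1-2\zeta)(5\zeta-2\zeta^2-1)}{2(1+\zeta)}>0
 \quad(1/4<\zeta<1/2).
\]
So this design certifies nothing below $1/2$. As $\zeta\downarrow1/4$ it gives $21/40$, with $\mu\to8/15$.

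No other $\mu$ rescues the bound $t\le\zeta$. Since $h_\mu(1/2)=(1-\mu)/2+\mu c\ge1/2$ for every $\mu$ once $c=2\zeta-\zeta^2\ge1/2$, this bound cannot give anything below $1/2$ for $\zeta\ge1-1/\sqrt2$. The underlying reason is that the extremal symmetric configuration has $q>\zeta$. There the binding constraint is $t\le1/4+\xi(1/2-q)$, not $t\le\zeta$.

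\emph{What the fallback is missing.} Your fallback points toward a tangent bound, but it omits the choices that have to be made. Note first that $t\le1/4+\xi x$, with $x=1/2-q$, holds on all of $[0,1/2]$. So one global tangent bound suffices, and no separate treatment of $q\le\zeta$ is needed. The paper proceeds as follows.
\begin{itemize}
\item Set $x_*=(1/2-\zeta)/(1+2\sqrt\zeta)^2$, $q_*=1/2-x_*$, and $t_*=\sqrt\zeta/(1+2\sqrt\zeta)$. These satisfy $t_*=1/4+\xi x_*$ and $2t_*-t_*^2=q_*$.
\item Take the tangent of $2t-t^2$ at $t_*$, with slope $d_*=2(1-t_*)$.
\item Choose $\mu=4x_*/\{(1+2x_*)(1+d_*\xi)\}$, equivalently $\mu(1+d_*\xi)=(1-2q_*)/(1-q_*)$.
\end{itemize}
With these choices the symmetric objective is at most
\[
 2q_*(1-q_*)-\frac{2q_*}{1-q_*}(x-x_*)^2\le\frac12-2x_*^2=C(\zeta).
\]
Equality holds at $q=q_*$, $s=0$, $t=t_*$, which is feasible because $x_*\le1/2-\zeta$ gives $t_*\le\zeta$. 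The weight satisfies $\mu\le4x_*/(1+2x_*)\le2/9$ because $x_*\le1/16$. Without these choices and identities, your proposal does not establish the value $C(\zeta)$.

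\emph{The optimum is not an obstacle.} Since $1-\prod_{i\in I}(1-b_i)\le1$, you have $H_2\le(1-u)+(1-v)$ whatever the $b$-mass. Hence $(1-\mu)F_{\mathrm{cut}}+\mu H_2\le2\mu+(1-2\mu)(u+v)\le1$ for $\mu\le1/2$. Your own bound with $\min\{1,b_L\}$ and $\min\{1,b_R\}$ already gives this, with no case split.
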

\begin{proof}
First let $1/4\le\zeta<1/2$, so that $\rho_{2,\zeta}=\zeta$ and $\xi_{2,\zeta}=(\zeta-1/4)/(1/2-\zeta)$. Define
\begin{equation}\label{eq:upper-n2-parameters}
 \begin{gathered}
 x_* =\frac{1/2-\zeta}{(1+2\sqrt\zeta)^2},\quad
 t_* =\frac{\sqrt\zeta}{1+2\sqrt\zeta},\quad
 q_* =\frac12-x_*,\\
 d_* =2(1-t_*),\qquad
 \mu=\frac{4x_*}{(1+2x_*)(1+d_*\xi_{2,\zeta})}.
 \end{gathered}
\end{equation}
With $a=\sqrt\zeta\in[1/2,1/\sqrt2)$, the derivative of $x_*=(1/2-a^2)/(1+2a)^2$ in $a$ is $-2(1+a)/(1+2a)^3<0$, so $x_*$ decreases from its value $1/16$ at $\zeta=1/4$ and remains positive. Also $x_*\le1/2-\zeta$ because $(1+2\sqrt\zeta)^2\ge1$. Since $d_*\xi_{2,\zeta}\ge0$ and $4x/(1+2x)$ increases, $0<\mu\le4x_*/(1+2x_*)\le2/9$. The definitions imply
\begin{equation}\label{eq:upper-n2-identities}
 t_* =\frac14+\xi_{2,\zeta}x_*,\qquad
 2t_*-t_*^2=q_*,\qquad
 \mu(1+d_*\xi_{2,\zeta})=\frac{1-2q_*}{1-q_*}.
\end{equation}
For the first, $\xi_{2,\zeta}x_*=(a^2-1/4)/(1+2a)^2=(2a-1)/[4(1+2a)]=t_*-1/4$. For the second, both sides equal $a(2+3a)/(1+2a)^2$. For the third, both sides equal $4x_*/(1+2x_*)$, since $1-2q_*=2x_*$ and $1-q_*=1/2+x_*$.

By \eqref{eq:upper-symmetric-four} with $n=2$ and $\theta=\lambda=0$, and with $q=1/2-x$, the symmetric objective is $(1+2x)[(1-\mu)(1/2-x)+\mu(2t-t^2)]$, and \eqref{eq:upper-symmetric-region} gives $t\le1/4+\xi_{2,\zeta}x$. Since $2t-t^2$ is concave with derivative $d_*\ge0$ at $t_*$, and $t-t_*\le\xi_{2,\zeta}(x-x_*)$ by the first identity in \eqref{eq:upper-n2-identities}, the second identity yields
\begin{equation}\label{eq:upper-n2-tangent}
 2t-t^2\le2t_*-t_*^2+d_*(t-t_*)
 \le q_*+d_*\xi_{2,\zeta}(x-x_*).
\end{equation}
Put $y=x-x_*$, so that $1+2x=2(1-q_*+y)$ and $1/2-x=q_*-y$. The symmetric objective is at most
\begin{align}\label{eq:upper-n2-square}
 &(1+2x)\left[(1-\mu)(1/2-x)
       +\mu\{q_*+d_*\xi_{2,\zeta}(x-x_*)\}\right]\nonumber\\
 &\quad=2(1-q_*+y)\left[q_*-\frac{q_*}{1-q_*}y\right]
 =2q_*(1-q_*)-\frac{2q_*}{1-q_*}y^2
 \le\frac12-2x_*^2.
\end{align}
The first equality uses $(1-\mu)(q_*-y)+\mu q_*+\mu d_*\xi_{2,\zeta}y=q_*-\{1-\mu(1+d_*\xi_{2,\zeta})\}y$ and the last identity in \eqref{eq:upper-n2-identities}; the final step uses $2q_*(1-q_*)=1/2-2x_*^2$. Since $x_*\le1/2-\zeta$ and $\xi_{2,\zeta}(1/2-\zeta)=\zeta-1/4$, the first identity gives $t_*\le\zeta=\rho_{2,\zeta}$. Hence $q=q_*$, $s=0$, and $t=t_*=1/4+\xi_{2,\zeta}x_*$ form a feasible symmetric point by \eqref{eq:upper-symmetric-region}, and at this point $t=t_*$ and $y=0$, so every inequality in \eqref{eq:upper-n2-tangent}--\eqref{eq:upper-n2-square} holds with equality. The symmetric maximum is therefore $1/2-2x_*^2$, which is the right-hand side of \eqref{eq:upper-n2-function}.

The common asymmetric witness of Lemma~\ref{lem:upper-u0} has value one. Conversely, $F_{\mathrm{cut}}\le u+v$ and $H_2\le(1-u)+(1-v)$, so everywhere in $P_{2,\zeta}$
\[
 (1-\mu)F_{\mathrm{cut}}+\mu H_2
 \le(1-\mu)(u+v)+\mu(2-u-v)
 =2\mu+(1-2\mu)(u+v)\le1,
\]
using $u+v\le1$ and $\mu\le1/2$. Thus the optimum is one, and the symmetry gap is $C(\zeta)$. At $\zeta=1/4$, the formula gives $1/2-2/16^2=63/128$. For $\zeta\le1/4=r_2$, we have $P_{2,\zeta}=P_2=P_{2,1/4}$, so the instance constructed for $\zeta=1/4$ applies and contains $\zeta\one$. By Lemma~\ref{lem:upper-u0} with $\omega=(1-\mu,0,0,\mu)$, these instances satisfy the hypotheses of Lemma~\ref{lem:continuous-refinement}, which gives $\aor(\zeta)\le C(\zeta)$ and the query lower bound.
\end{proof}

\subsection{The combined offline upper bound}\label{sec:upper-results}
The proof of Theorem~\ref{thm:upper} shows that a two-coordinate cut gives the bound $1/2$ for every $\zeta$. Extend $C(\zeta)=1/2$ for $\zeta\ge1/2$ and combine the two constructions as
\begin{equation}\label{eq:strengthened-profile}
 \boxed{\aup(\zeta)=\min\{U(\zeta),C(\zeta)\}
 =\begin{cases}
 U(\zeta),&0\le\zeta\le1/4,\\
 C(\zeta),&1/4\le\zeta\le1.
 \end{cases}}
\end{equation}
For $\zeta\le1/4$, monotonicity of $U$ and Proposition~\ref{prop:upper-evaluation} give $U(\zeta)\le U(1/4)=63/128=C(\zeta)$. For $1/4\le\zeta<1/2$, comparing \eqref{eq:upper-rational-evaluation} with \eqref{eq:upper-n2-function} shows that $C(\zeta)\le U(\zeta)$ is equivalent to $(1+2\sqrt\zeta)^4\le4(1+4\zeta)^2$, which holds because
\[
 2(1+4\zeta)-(1+2\sqrt\zeta)^2=(2\sqrt\zeta-1)^2\ge0.
\]

\begin{theorem}[Offline upper bound]\label{thm:upper}\label{thm:upper-parameterized}\label{cor:upper-profile}
For every $\zeta\in[0,1]$, $\aor(\zeta)\le\aup(\zeta)$. For each fixed $\varepsilon>0$, uniformly attaining expected coefficient $\aup(\zeta)+\varepsilon$ requires $\exp(\Omega(d))$ exact value-and-full-gradient queries. The hard instances have explicit down-closed constraints with polynomial-time exact Euclidean projection in real arithmetic. Their values and gradient norms are bounded uniformly in the growing dimension, and their optima have a common positive lower bound. They can be chosen with diagonal radius $\zeta(K)=\zeta$.

The function $\aup$ is nondecreasing, with $\aup(0)=\beta_*$, $\aup(\zeta)<1/2$ for $\zeta<1/2$, and $\aup(\zeta)=1/2$ for $\zeta\ge1/2$.
\end{theorem}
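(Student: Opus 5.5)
The theorem mostly assembles Lemmas~\ref{lem:upper-u0} and~\ref{prop:upper-near}, so the plan is to split by the value of $\zeta$ and check the few properties that are new.

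\emph{Case $\zeta<1/2$.} Here $\aup(\zeta)=\min\{U(\zeta),C(\zeta)\}$. Lemma~\ref{lem:upper-u0} gives $\aor(\zeta)\le U(\zeta)$, and Lemma~\ref{prop:upper-near} gives $\aor(\zeta)\le C(\zeta)$. Each lemma also supplies the exponential query lower bound for $U(\zeta)+\varepsilon$, respectively $C(\zeta)+\varepsilon$. Whichever of the two is smaller equals $\aup(\zeta)$, so the corresponding family of instances proves the claim for $\aup(\zeta)+\varepsilon$.

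\emph{Case $\zeta\ge1/2$.} I would use a two-coordinate cut instance: $F=u(1-v)+v(1-u)$ with the constraint $u+v\le 2\max\{\zeta,1/2\}$ truncated to the cube, under the swap group. Its optimum is one, attained at $(1,0)$. For a symmetric point $u=v=q$ the value is $2q(1-q)\le1/2$, so the symmetry gap is at most $1/2$. Membership depends only on $u+v$, which $\mathsf T$ preserves, so Lemma~\ref{lem:continuous-refinement} applies. However, for $\zeta>1/2$ the point $\zeta\one$ needs $u+v\le2\zeta$, and then $u=v=1/2$ is still feasible and the gap is exactly $1/2$. For $\zeta=1$ the polytope is the whole cube, and the symmetric maximum $2q(1-q)$ is still $1/2$. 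So for every $\zeta\ge1/2$ I take $P=\{u+v\le2\zeta\}\cap[0,1]^2$. It contains $\zeta\one$, it is down-closed, it is invariant, and its gap is $1/2$.

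\emph{Instance properties.} The remaining claims are about the hard instances and the shape of $\aup$.
\begin{itemize}
\item \emph{Projection.} The refined polytope $P_m$ is cut out by a few constraints on orbit sums of the row averages. These are linear constraints with nonnegative coefficients, intersected with the box. Projecting onto such a set is a quadratic program with a constant number of linear constraints plus the box. I would argue it is solvable exactly by enumerating active sets and solving for the Lagrange multipliers via a sort-based breakpoint search, as in simplex projection. This is polynomial in real arithmetic.
\item \emph{Bounds.} The uniform value and gradient bounds and the common positive lower bound on the optimum come directly from the bounds fixed in the proof of Lemma~\ref{lem:continuous-refinement}.
\item \emph{Diagonal radius exactly $\zeta$.} I would intersect with the extra constraint $\sum_{\text{all coords}}y\le \zeta d$. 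This is invariant, keeps $\zeta\one$ feasible, and makes $s\one$ infeasible for $s>\zeta$. It must not cut off the asymmetric witness, which has coordinate sum $N+1$ out of dimension $4n+2$ for the four-component family. Some care is needed here. If it would remove the witness, I would instead add dummy coordinates that do not enter the objective, so that the extra budget absorbs the witness. I expect this radius adjustment to be the main fiddly step.
\end{itemize}

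\emph{Monotonicity and values.} $\aup$ is nondecreasing:
\begin{itemize}
\item $U$ is nondecreasing by Lemma~\ref{lem:upper-u0}.
\item $C$ is constant on $[0,1/4]$. On $[1/4,1/2)$ it is increasing, because $x_*$ decreases, as shown in the proof of Lemma~\ref{prop:upper-near}.
\item The two cases of \eqref{eq:strengthened-profile} agree at $1/4$, where both equal $63/128$.
\end{itemize}
For the values: $\aup(0)=U(0)=\beta_*$ by Corollary~\ref{cor:upper-endpoint}. For $\zeta<1/2$ we have $\aup(\zeta)\le U(\zeta)<1/2$. For $\zeta\ge1/2$, $\aup(\zeta)=1/2$ by definition.
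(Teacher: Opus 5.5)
Your case split, the $\zeta\ge1/2$ cut instance (your polytope $\{u+v\le2\zeta\}$ is a harmless variant of the paper's full square), the bounds, the projection sketch, and the monotonicity and endpoint values of $\aup$ all follow the paper's proof. The genuine gap is the diagonal-radius step.

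\textbf{Failure at $\zeta=0$.} Intersecting with $\sum_{\text{all coordinates}}y\le\zeta d$ does not work at $\zeta=0$, which is the level carrying $\aup(0)=\beta_*$. There the constraint forces $K=\{\zero\}$. Every component in \eqref{eq:four-components} vanishes at $\zero$, so the optimum becomes $0$. The instance then admits no $v>0$, so it is not even admissible, and it certifies no hardness. Adding dummy coordinates cannot rescue this, because the budget $\zeta d'$ is zero in every dimension $d'$. The adjustment really is needed at $\zeta=0$, since $P_n$ has radius $r_n=1/(2n)>0$.

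\textbf{Small positive $\zeta$.} For $\zeta>0$ your dummy-coordinate idea can be repaired, but as written it is left open. One repair is to add enough dummies that $\zeta(d+D)\ge d$. Then the new constraint never binds on the original coordinates, so the witness, the symmetric points, and the lower bound $v$ are all unaffected.

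\textbf{The paper's construction.} The paper avoids the issue by replacing each hard set $K$ with the product $K\times[0,\zeta]$ and letting the objective ignore the new coordinate. A point $s\one$ is then feasible iff $s\le\zeta$ and $s\one\in K$, so the radius is exactly $\zeta$ because $\zeta\one\in K$. The optimum, the value and gradient bounds, and the witness are unchanged. The hidden-permutation argument is also untouched, since the new factor does not depend on $\sigma$ and the new gradient component is zero. The projection splits into the original projection plus one clipping. This works for every $\zeta\in[0,1]$, including $\zeta=0$, where the extra coordinate is simply fixed at $0$.
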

\begin{proof}
On the square $[0,1]^2$, which is down-closed and contains every $\zeta\one$, the function $F(y_1,y_2)=y_1+y_2-2y_1y_2$ is a nonnegative multilinear DR-submodular polynomial with $\partial_{12}F=-2$ and optimum $F(1,0)=1$. Under exchange of its two coordinates, its symmetric maximum is $\max_{0\le q\le1}2q(1-q)=1/2$. Lemma~\ref{lem:continuous-refinement} gives the bound $1/2$, with the query lower bound, on the replicated cubes $[0,1]^{2m}$. This justifies the constant extensions of $U$ and $C$.

For $\zeta<1/2$, Lemmas~\ref{lem:upper-u0} and~\ref{prop:upper-near} give $\aor(\zeta)\le\min\{U(\zeta),C(\zeta)\}$ and exponential hardness above this minimum. The bounds $M,B,v$ supplied in the proof of Lemma~\ref{lem:continuous-refinement} do not depend on $m$.

For the seeds $P_{n,\zeta}$, the replicated polytope $P_m$ of \eqref{eq:refinement-symmetry} is obtained by multiplying each right-hand side in \eqref{eq:hard-polytope} by $m$ and taking each coordinate sum over all rows. Its coordinates form three groups (central, $a$, and $b$ coordinates), and apart from the box every constraint involves only the three group sums. Let $y$ be the point to be projected. The Karush--Kuhn--Tucker conditions of the projection, a strictly convex quadratic program with linear constraints, show that the projection has the form $z_i=\min\{1,\max\{0,y_i-h_g\}\}$ for $i$ in group $g$, where each shift $h_g$ is a combination of the multipliers of the sum constraints. Every point of this form lies in the box, so the projection minimizes $\sum_i(z_i-y_i)^2$ over shifts $(h_1,h_2,h_3)$ subject to the sum constraints. The breakpoints $y_i$ and $y_i-1$ divide each shift axis into at most $2d+1$ intervals. On each product of intervals, the group sums are affine in the shifts and the objective is a convex quadratic, since each unclipped coordinate contributes $h_g^2$ and each clipped coordinate a constant. The resulting problem has three variables and a fixed number of linear inequalities, so it is solved exactly by enumerating its active constraints. Comparing the minima over the $O(d^3)$ products of intervals gives the exact projection with polynomially many arithmetic operations. For the replicated full cube, projection is coordinatewise clipping.

To prescribe the diagonal radius, replace a hard set $K$ by $K\times[0,\zeta]$ and extend the objective so that it does not depend on the added coordinate. Since $K$ contains $\zeta\one$, the new diagonal radius is $\zeta$. The new set is compact, convex, and down-closed, the optimum, value bound, and gradient bound are unchanged, the new gradient component is zero, and projection separates into the original projection and one clipping operation.

Finally, $U$ is nondecreasing by Lemma~\ref{lem:upper-u0}. The quantity $(1/2-\zeta)/(1+2\sqrt\zeta)^2$ decreases on $[1/4,1/2)$, as shown in the proof of Lemma~\ref{prop:upper-near}, so $C$ is nondecreasing on $[1/4,1/2)$; its two formulas agree at $1/4$, and $C(\zeta)<1/2=C(1/2)$ for $\zeta<1/2$. The minimum $\aup$ is therefore nondecreasing. At zero, $U(0)=\beta_*<8/17<63/128=C(0)$, so $\aup(0)=\beta_*$. The strict inequality below $1/2$ follows from Lemma~\ref{lem:upper-u0}, and both functions equal $1/2$ for $\zeta\ge1/2$.
\end{proof}

\section{Offline and Online Results Transfer}\label{sec:combined}
The online algorithms of Section~\ref{sec:feedback} and the offline upper bounds of Section~\ref{sec:upper} each imply a result in the other model. The online-to-batch procedure OTB gives offline algorithms with explicit sample complexity, and running an arbitrary online learner on a constant objective sequence transfers the offline upper bounds to online optimization.

\subsection{Offline approximation from OTB}\label{sec:offline}
The OTB procedure of \citet[Theorem~8]{pedramfar2024linear} runs an online algorithm on the constant sequence $f_t=f$ and returns one of its played actions, chosen uniformly at random, so the returned point is feasible. Below, $\epsilon$ denotes additive accuracy and $\varepsilon$ denotes multiplicative accuracy.

\begin{corollary}[Offline first-order approximation]\label{cor:offline}
Let $f$ and $K$ satisfy the assumptions of Corollary~\ref{thm:first-order}, and let $\epsilon>0$. For
\begin{equation}\label{eq:offline-budget}
 T_\epsilon=
 \max\!\left\{1,\left\lceil
 \frac{\alow(\zeta)^2D^2 V_{\rm gen}}{\epsilon^2}
 \right\rceil\right\},
\end{equation}
OTB uses $T_\epsilon$ gradient queries and $T_\epsilon$ projections and returns $p\in K$ with
\begin{equation}\label{eq:offline-additive}
 \E f(p)\ge
 \alow(\zeta)\bigl[\OPT+c_\zeta f(\zero)\bigr]-\epsilon.
\end{equation}
Here $V_{\rm gen}$ is the bound in \eqref{eq:sample-moments}. If a positive number $v\le\OPT$ is supplied, setting $\epsilon=\varepsilon v$ gives
\begin{equation}\label{eq:offline-multiplicative}
 \E f(p)\ge
 (\alow(\zeta)-\varepsilon)\OPT+
 \alow(\zeta)c_\zeta f(\zero)
\end{equation}
with the corresponding query count in \eqref{eq:offline-budget}.
\end{corollary}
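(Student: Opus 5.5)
The plan is to run Algorithm~\ref{alg:gradient} on the constant sequence $f_t=f$ for $T=T_\epsilon$ rounds, with step size $\eta=D/\sqrt{TV_{\rm gen}}$, and to return $p=p_\tau$ for $\tau$ uniform on $[T]$, independent of everything else. Feasibility is immediate: Corollary~\ref{thm:geometry} gives $p_t=h_\zeta(x_t)\in K$ for every state $x_t\in K$. Each round uses one gradient query and one projection, so the totals are $T_\epsilon$ of each.

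For the value bound, we first average over $\tau$, which gives $\E f(p)=T^{-1}\E\sum_t f(p_t)$. We then fix a maximizer $o\in K$, so that $f(o)=\OPT$, and apply Corollary~\ref{thm:first-order} to the constant sequence. With $\alpha=\alow(\zeta)$, its left side is
\[
 \alpha T\,\OPT-\E\sum_t f(p_t)+\alpha c_\zeta T f(\zero),
\]
and its right side is at most $\alpha D\sqrt{TV_{\rm gen}}$. The uniform bound $V_{\rm gen}$ is the right quantity here, because the proof of that corollary uses \eqref{eq:projected-ascent-regret} with this step size. Dividing by $T$ gives
\[
 \E f(p)\ge\alpha[\OPT+c_\zeta f(\zero)]-\alpha D\sqrt{V_{\rm gen}/T}.
\]
The choice \eqref{eq:offline-budget} gives $T\ge\alpha^2D^2V_{\rm gen}/\epsilon^2$, hence $\alpha D\sqrt{V_{\rm gen}/T}\le\epsilon$, which is \eqref{eq:offline-additive}. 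The degenerate cases $D=0$ or $V_{\rm gen}=0$ make the error term zero, and the corollary already allows a fixed state in that case, so $T=1$ suffices. The $\max\{1,\cdot\}$ in \eqref{eq:offline-budget} covers this case.

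For the multiplicative form, we substitute $\epsilon=\varepsilon v$ and use $v\le\OPT$, so $\epsilon\le\varepsilon\OPT$ and \eqref{eq:offline-multiplicative} follows by rearranging.

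The only point that needs care is that Corollary~\ref{thm:first-order} is stated for an oblivious adversary with a deterministic comparator. The constant sequence is deterministic, and $o$ can be chosen before any randomness, so both conditions hold. The conditional unbiasedness in Lemma~\ref{lem:sampler} holds given $\mathcal F_t$ even though the states are random. Beyond checking these hypotheses, I expect no real obstacle.
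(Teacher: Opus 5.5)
Your proposal is correct and is essentially the paper's proof. You run Algorithm~\ref{alg:gradient} on the constant sequence $f_t=f$ for $T_\epsilon$ rounds and return a uniformly sampled played action. You then divide the bound of Corollary~\ref{thm:first-order}, where $\max_{o\in K}\sum_t f(o)=T\OPT$, by $T$, use $T_\epsilon\ge\alow(\zeta)^2D^2V_{\rm gen}/\epsilon^2$, and obtain the multiplicative form from $\epsilon=\varepsilon v\le\varepsilon\OPT$, exactly as the paper does. Your remarks on the degenerate case $D=0$ or $V_{\rm gen}=0$ and on the oblivious-adversary hypotheses agree with the note the paper places after Corollary~\ref{thm:first-order}.
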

\begin{proof}
Run Algorithm~\ref{alg:gradient} with $T=T_\epsilon$ on $f_t=f$, let $\tau$ be uniform on $[T]$ and independent of the run, and return $p=p_\tau$. For the constant sequence, the maximum in \eqref{eq:regret} equals $T\OPT$, so Corollary~\ref{thm:first-order} reads
\[
 \alow(\zeta)T\OPT-\E\sum_{t=1}^Tf(p_t)+\alow(\zeta)c_\zeta Tf(\zero)\le\alow(\zeta)D\sqrt{TV_{\rm gen}}.
\]
Dividing by $T$ and using the independence of $\tau$ gives
\begin{align}
 \E f(p_\tau)
 &=\frac1T\E\sum_{t=1}^T f(p_t)\nonumber\\
 &\ge\alow(\zeta)\bigl[\OPT+c_\zeta f(\zero)\bigr]
       -\alow(\zeta)D\sqrt{V_{\rm gen}/T}.
 \label{eq:offline}
\end{align}
Since $T\ge\alow(\zeta)^2D^2V_{\rm gen}/\epsilon^2$, the last term is at most $\epsilon$, which proves \eqref{eq:offline-additive}. Each round uses one gradient query and one projection. If $\epsilon=\varepsilon v$, then $\epsilon\le\varepsilon\OPT$, and \eqref{eq:offline-additive} gives \eqref{eq:offline-multiplicative}.
\end{proof}
The origin term in \eqref{eq:offline} comes from Lemma~\ref{lem:comparison}. The procedure never observes the earned rewards, and for fixed $d$, $D$, and gradient bounds its sample complexity is $O(\epsilon^{-2})$.

The same averaging applies to the value-feedback algorithms. With $\alpha=\alow(\zeta)$ and the constants $K_0,K_1,K_2$ from \eqref{eq:value-online-constants}, choose
\begin{equation}\label{eq:offline-value-budgets}
 T_\epsilon^{(1)}
 =\max\{1,\lceil(K_1/\epsilon)^4\rceil\},\qquad
 T_\epsilon^{(2)}
 =\max\left\{1,\left\lceil\left(\frac{K_2+\sqrt{K_2^2+4\epsilon K_0}}{2\epsilon}\right)^2\right\rceil\right\}.
\end{equation}
The one-query method uses $T_\epsilon^{(1)}$ noisy value responses and $T_\epsilon^{(1)}$ projections; the two-query method uses $2T_\epsilon^{(2)}$ exact value responses and $T_\epsilon^{(2)}$ projections. Dividing \eqref{eq:value-online-stochastic} and \eqref{eq:value-online-exact} by $T$ as in \eqref{eq:offline} bounds the average deficit by $K_1T^{-1/4}$ and $K_2T^{-1/2}+K_0T^{-1}$, respectively. The first is at most $\epsilon$ when $T\ge(K_1/\epsilon)^4$. For the second, put $u=\sqrt T$; the deficit is at most $\epsilon$ exactly when $\epsilon u^2-K_2u-K_0\ge0$, that is, when $u$ is at least the positive root $(K_2+\sqrt{K_2^2+4\epsilon K_0})/(2\epsilon)$, which holds for $T=T_\epsilon^{(2)}$. Hence $\E f(p)\ge\alow(\zeta)\OPT-\epsilon$ in either case, so the same coefficient is available offline from scalar values alone. A supplied scale $v\le\OPT$ again converts additive accuracy $\epsilon=\varepsilon v$ into multiplicative accuracy.

The quadratic and cubic online results extend to offline optimization with the same coefficients. For a fixed polynomial objective, one coefficient reconstruction serves every iteration, so all gradient or value queries occur at the start and the iterations use only arithmetic and projections.
\begin{corollary}[Offline quadratic and cubic achievability]\label{cor:polynomial-offline}
Let $m\in\{2,3\}$ and use the assumptions of Proposition~\ref{prop:polynomial-oracle}. Choose gradient or value feedback, with $(q_m,k_m,\sigma)$ as in \eqref{eq:polynomial-feedback-parameters}. The offline achievable coefficients are
\begin{equation}\label{eq:offline-polynomial-coefficients}
 \alpha_2(\zeta)=\frac12,\qquad
 \alpha_3(\zeta)=\max\!\left\{\frac8{17},
        \frac{2\sqrt{3-4r}}{(1+\sqrt{3-4r})^2}\right\},
 \qquad r=\min\{\zeta,1/2\}.
\end{equation}
Let $\alpha=\alpha_m(\zeta)$. For additive accuracy $\epsilon>0$, set
\begin{equation}\label{eq:polynomial-offline-batch}
 n=\max\!\left\{1,\left\lceil
       \left(\frac{8P_m k_m\sigma}{\epsilon}\right)^2
       \right\rceil\right\},
\end{equation}
and
\begin{equation}\label{eq:polynomial-otb-horizon}
 T=\max\!\left\{1,\left\lceil
       \left(\frac{\alpha D\Lambda_m}{\epsilon}
             (B\sqrt d+\epsilon)\right)^2
       \right\rceil\right\},
\end{equation}
with $\Lambda_m$ from \eqref{eq:polynomial-field-bound}. One initial reconstruction using $q_mn$ queries of the chosen type, followed by $T$ projected iterations with its field, returns a feasible point satisfying
\begin{equation}\label{eq:polynomial-offline-guarantee}
 \E f(p)\ge\alpha_m(\zeta)\OPT-\epsilon.
\end{equation}
With exact responses, $n=1$, the total query count is $Q_m$ for gradients or $\overline Q_m=\binom{d+m}{m}$ for values, and the output can be chosen deterministically. In this case the horizon may be reduced to
\begin{equation}\label{eq:polynomial-offline-exact-horizon}
 T=\max\!\left\{1,\left\lceil
       \left(\frac{\alpha D\Lambda_m B\sqrt d}{2\epsilon}\right)^2
       \right\rceil\right\}.
\end{equation}
With noisy responses, the total query count is
$O\!\left(q_m[1+P_m^2 k_m^2\sigma^2/\epsilon^2]\right)$.
There are $T$ projections and polynomially many arithmetic operations in $d,n,T$. For a supplied $0<v\le\OPT$, setting $\epsilon=\varepsilon v$ gives
$\E f(p)\ge(\alpha_m(\zeta)-\varepsilon)\OPT$ with the coefficients in \eqref{eq:offline-polynomial-coefficients}.
\end{corollary}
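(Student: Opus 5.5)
The plan is to treat the fixed objective as a constant online sequence, but to reconstruct its coefficients only once. Step one: apply the coefficient-reconstruction part of Proposition~\ref{prop:polynomial-oracle} with $n$ responses per node, which produces the LP-corrected coefficient vector $\widetilde a$ and a random error $\delta=\norm{\widetilde a-a}_1$. The bounds \eqref{eq:coefficient-moment-bound} and \eqref{eq:polynomial-projection-error} give $\E\delta\le\ell_{m,n}=2P_mk_m\sigma/\sqrt n$. The choice \eqref{eq:polynomial-offline-batch} makes $\ell_{m,n}\le\epsilon/4$, so the perturbation term $2\E\delta$ in \eqref{eq:polynomial-perturbed-comparison} is at most $\epsilon/2$.

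Step two fixes a realization of $\widetilde a$ and sets $F=q_{\widetilde a}-\min_Q q_{\widetilde a}$. By shift invariance, the field $s_F(x)=\widehat s_f(x)$ is then a deterministic function of $x$. Its norm obeys \eqref{eq:polynomial-surrogate-field}, giving $G:=\Lambda_m(B\sqrt d/2+2\delta)$. We run projected ascent $x_{t+1}=\Pi_K(x_t+\eta s_F(x_t))$ for $T$ steps, play $p_\zeta(x_t)$ at each step, and output $p_\tau$ with $\tau$ uniform on $[T]$, exactly as in OTB.

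For each $t$, inequality \eqref{eq:polynomial-perturbed-comparison} gives
\[
\alpha f(o)-\E_{\rm act}f(p_\zeta(x_t))\le\alpha\ip{s_F(x_t)}{o-x_t}+2\delta .
\]
Summing over $t$ and using \eqref{eq:projected-ascent-regret} with exact vectors, the average deficit is at most $\alpha DG/\sqrt T+2\delta$.

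The main obstacle is that the step size must not depend on the unknown $\delta$. There are two ways to handle this.
\begin{enumerate}
\item Use the computable norm bound $\max_t\norm{s_F(x_t)}$ adaptively.
\item Bound $\delta$ itself through $\E\delta\le\epsilon/4$, together with the fact that $\norm{s_F}$ is computable from $\widetilde a$ via \eqref{eq:polynomial-field-bound}, because $q_{\widetilde a}(\tfrac12\one)-\min q_{\widetilde a}$ is computable.
\end{enumerate}
I would take the second route. Set $\eta=D/(G_{\widetilde a}\sqrt T)$ with $G_{\widetilde a}=\Lambda_m(q_{\widetilde a}(\tfrac12\one)-\min q_{\widetilde a})$. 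The linear regret is then $\alpha DG_{\widetilde a}/\sqrt T$ per round, and $\E G_{\widetilde a}\le\Lambda_m(B\sqrt d/2+2\E\delta)\le\Lambda_m(B\sqrt d+\epsilon)/2$. The horizon \eqref{eq:polynomial-otb-horizon} makes this at most $\epsilon/2$. Adding the two halves, and using $\OPT=\max_K f$ as the comparator, gives \eqref{eq:polynomial-offline-guarantee}. Feasibility of each output follows from $p_\zeta(x)\in K$ almost surely in Theorem~\ref{thm:polynomial}.

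The remaining claims are bookkeeping.
\begin{enumerate}
\item The coefficients \eqref{eq:offline-polynomial-coefficients} are a rewriting of \eqref{eq:cubic-geometric-profile}, since $A_3(1/2)=1/2$.
\item For exact responses, $\delta=0$ and $n=1$, so only the linear-regret term remains. The horizon \eqref{eq:polynomial-offline-exact-horizon} then suffices.
\item Also for exact responses, the output can be made deterministic. The expected value $\E_{\rm act}f(p_\zeta(x_t))$ is computable from the known coefficients, with finitely many action outcomes for the random cubic choice, so we select the best iterate and the best deterministic outcome.
\item The query counts are $q_mn$, which equals $O(q_m[1+P_m^2k_m^2\sigma^2/\epsilon^2])$.
\item The multiplicative form follows by substituting $\epsilon=\varepsilon v\le\varepsilon\OPT$.
\end{enumerate}
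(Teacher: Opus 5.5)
Your proposal is correct and follows essentially the same route as the paper: one reconstruction with $\E\delta\le\ell_{m,n}\le\epsilon/4$, then projected ascent conditioned on $\widetilde a$ with the computable scale $\Lambda_m\bigl(q_{\widetilde a}(\tfrac12\one)-\min_Q q_{\widetilde a}\bigr)$ in the step size, an expectation over the reconstruction that is linear in $\delta$, and, in the exact case, returning the best candidate among the iterates and both cubic branches. The only detail the paper adds is to keep the state fixed when this scale is zero, a case in which your step size is undefined.
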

\begin{proof}
The coefficients in \eqref{eq:offline-polynomial-coefficients} are those of \eqref{eq:cubic-geometric-profile}, since $A_3(1/2)=1/2\ge8/17$. Apply the interpolation and shape-constrained reconstruction of Proposition~\ref{prop:polynomial-oracle} once, using $q_mn$ queries; the query nodes do not depend on the state. Let $q=f-f(\zero)$, let $\widetilde q$ be the reconstructed DR polynomial, and let
$\delta=\|\widetilde a-a\|_1$ be the coefficient error, so that
$\E\delta\le\ell_{m,n}=2P_m k_m\sigma/\sqrt n$.
Normalize the reconstruction as
$\widetilde F=\widetilde q-\min_Q\widetilde q$, a nonnegative DR-submodular polynomial of degree at most $m$, and compute its exact field $s_{\widetilde F}$. Condition on the reconstruction throughout the projected iterations. Then \eqref{eq:polynomial-perturbed-comparison} holds simultaneously for every state and comparator:
\[
 \alpha f(o)-\E_H f(p_\zeta(x))
 \le\alpha\ip{s_{\widetilde F}(x)}{o-x}+2\delta ,
\]
where $\E_H$ averages over the action law of Theorem~\ref{thm:polynomial}. Since $\min_Q\widetilde F=0$, \eqref{eq:polynomial-field-bound} applied to $\widetilde F$ shows that the computable number
$G=\Lambda_m\widetilde F(\tfrac12\one)$ bounds every field norm, and \eqref{eq:polynomial-surrogate-field} gives
$G\le\Lambda_m(B\sqrt d/2+2\delta)$.

If $G>0$, use projected linear ascent with step size $\eta=D/(G\sqrt T)$; if $G=0$, keep the state fixed. The update vectors $s_{\widetilde F}(x_t)$ are determined by the reconstruction, and \eqref{eq:projected-ascent-regret} bounds their linear regret by $D^2/(2\eta)+\eta TG^2/2=DG\sqrt T$. Let $o\in K$ attain $\OPT$, and let $p$ be the action of a uniformly sampled iteration, drawn from the action law at that state. Summing the displayed comparison over the $T$ states and dividing by $T$ gives
\[
 \E[f(p)\mid\widetilde q]
 \ge\alpha\OPT-2\delta-\frac{\alpha DG}{\sqrt T}
 \ge\alpha\OPT-2\delta-\frac{\alpha D\Lambda_m}{\sqrt T}\Bigl(\frac{B\sqrt d}2+2\delta\Bigr).
\]
Taking expectations over the reconstruction and using $\E\delta\le\ell_{m,n}$ gives
\begin{equation}\label{eq:polynomial-offline-error}
 \E f(p)\ge\alpha\OPT-2\ell_{m,n}
 -\frac{\alpha D\Lambda_m}{\sqrt T}
                (B\sqrt d/2+2\ell_{m,n}).
\end{equation}
If $\sigma>0$, the batch size \eqref{eq:polynomial-offline-batch} gives $\sqrt n\ge8P_mk_m\sigma/\epsilon$ and hence
$\ell_{m,n}\le\epsilon/4$; if $\sigma=0$, then $\ell_{m,n}=0$. With $\ell_{m,n}\le\epsilon/4$, the last term in \eqref{eq:polynomial-offline-error} is at most
$(\alpha D\Lambda_m/(2\sqrt T))(B\sqrt d+\epsilon)$, which is at most $\epsilon/2$ by \eqref{eq:polynomial-otb-horizon}. Together with $2\ell_{m,n}\le\epsilon/2$, this proves \eqref{eq:polynomial-offline-guarantee}.

Exact responses of either type allow $n=1$ and give $\delta=0$, so \eqref{eq:polynomial-offline-error} becomes $\E f(p)\ge\alpha\OPT-\alpha D\Lambda_mB\sqrt d/(2\sqrt T)$, and the horizon \eqref{eq:polynomial-offline-exact-horizon} makes the last term at most $\epsilon$. With noisy responses, $n\le1+64P_m^2k_m^2\sigma^2/\epsilon^2$, which gives the stated total query count $q_mn$. There is one projection per iteration, and the reconstruction, the linear program, and the fields use polynomially many arithmetic operations by Proposition~\ref{prop:polynomial-oracle} and Theorem~\ref{thm:polynomial}.

For the deterministic assertion with exact responses, fix the initial state and the tie rules, so that all states are deterministic. At every state, form the quadratic action or the analytic cubic action; when using the $8/17$ construction, form both $A(x_t)$ and $B(x_t)$ from \eqref{eq:cubic-baseline-maps}. Return the candidate maximizing the reconstructed polynomial, which with exact responses equals $q=f-f(\zero)$. This candidate also maximizes $f$ among the candidates, and its value is at least the average over iterations and branch probabilities that was bounded above. Evaluating the reconstructed polynomial uses arithmetic alone, so the query count is unchanged.
\end{proof}

\subsection{Transferring the oracle upper bounds to online optimization}\label{sec:online-upper-transfer}
Fix $\zeta$. An online instance consists of a set $K$ as in Section~\ref{sec:model-geometry} with $\zeta\one\in K$, an oblivious sequence of nonnegative continuously differentiable DR-submodular objectives satisfying \eqref{eq:regularity}, exact value-and-gradient queries, and exact projection onto $K$. A uniform online algorithm is a single algorithm that receives $d,T,D,M,B$ and is run on every such instance; it may depend on $\zeta$. Let $\aon(\zeta)$ be the supremum of coefficients $a$ for which there are a uniform online algorithm, a constant $\kappa<1$, and fixed polynomials $P_Q,P_R$ such that, on every instance and for every $T$,
\begin{equation}\label{eq:online-polynomial-convention}
 \begin{split}
 Q(T)&\le P_Q(d,T,1+D,1+M,1+B),\\
 \Reg_a(T)&\le P_R(d,1+D,1+M,1+B)\,T^\kappa ,
 \end{split}
\end{equation}
where $Q(T)$ is the total number of objective queries and projections on every execution. Objective queries on a round occur after the action for that round has been committed. Each query returns the full value-and-gradient response, as in the definition of the offline coefficient $\aor(\zeta)$.

The constant-sequence reduction applies to every online learner, including one that uses many queries on a round.
\begin{proposition}[Online coefficients satisfy the offline upper bound]\label{lem:oracle-online-transfer}
For every $\zeta\in[0,1]$,
\begin{equation}\label{eq:online-offline-order}
 \aon(\zeta)\le\aor(\zeta).
\end{equation}
\end{proposition}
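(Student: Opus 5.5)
Fix $a<\aon(\zeta)$ that is witnessed by a uniform online algorithm $\mathcal A$ with exponent $\kappa<1$ and polynomials $P_Q,P_R$ as in \eqref{eq:online-polynomial-convention}. The plan is to turn $\mathcal A$ into an offline algorithm that attains coefficient $a$ in the sense of Definition~\ref{def:oracle-coefficient}, using the constant-sequence reduction of OTB. Then $a\le\aor(\zeta)$, and letting $a\uparrow\aon(\zeta)$ gives \eqref{eq:online-offline-order}.

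Given an offline instance $(f,K)$ with bounds $M,B$, a scale $v\le\OPT$, diameter bound $D$, and accuracy $\varepsilon\in(0,1)$, we run $\mathcal A$ on the constant sequence $f_t=f$ with inputs $d,T,D,M,B$. Each online query is answered by the offline oracle for $f$, and each projection by the offline projection. The sequence is oblivious, so this is an admissible online instance. We then return $p=p_\tau$ with $\tau$ uniform on $[T]$ and independent of the run, so $p\in K$ almost surely.

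As in the proof of Corollary~\ref{cor:offline}, the maximum in \eqref{eq:regret} equals $T\OPT$, so
\[\E f(p)=\frac1T\E\sum_t f(p_t)\ge a\OPT-P_R(d,1+D,1+M,1+B)T^{\kappa-1}.\]
The remaining step is normalization. The regret polynomial involves $M$ and $B$ rather than $M/v$ and $B/v$, so we feed $\mathcal A$ the rescaled objective $f/v$, whose oracle is obtained by dividing responses by $v$. It satisfies the bounds $M/v$ and $B/v$, and its optimum is $\OPT/v\ge1$. The deficit for $f/v$ is at most $P_R(d,1+D,1+M/v,1+B/v)T^{\kappa-1}$. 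Multiplying by $v\le\OPT$, we obtain $\E f(p)\ge(a-\varepsilon)\OPT$ once
\[T\ge\bigl(P_R(d,1+D,1+M/v,1+B/v)/\varepsilon\bigr)^{1/(1-\kappa)}.\]
Since $\kappa$ is a fixed constant less than one, the smallest such integer $T$ is bounded by a polynomial in $d,\varepsilon^{-1},1+D,1+M/v,1+B/v$. The query count $P_Q(d,T,1+D,1+M/v,1+B/v)$ is then a composition of polynomials, so it is polynomial in the same quantities. This verifies \eqref{eq:oracle-polynomial-budget}. The resulting offline algorithm and its polynomial depend only on $\zeta$, $a$, $\mathcal A$, $\kappa$, $P_Q$, and $P_R$, so they are uniform over instances, as the definition requires.

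The main obstacle is the exponent bookkeeping. If $1/(1-\kappa)$ is not an integer, we bound $x^{1/(1-\kappa)}$ by $1+x^{\lceil1/(1-\kappa)\rceil}$ to keep everything polynomial. A second subtlety is that $\mathcal A$ may issue queries after committing its action in every round, including the last. These post-decision queries still count toward the offline budget, and they are harmless because the offline algorithm simply answers them. The offline algorithm also inherits $\mathcal A$'s randomness, which Definition~\ref{def:oracle-coefficient} permits.
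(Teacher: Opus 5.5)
Your proposal is correct and follows essentially the same route as the paper. Both run the online learner on the constant sequence $f/v$ with bounds $M/v,B/v$ and return a uniformly sampled played action. Both take $T=\max\{1,\lceil (C/\varepsilon)^{1/(1-\kappa)}\rceil\}$ with $C=P_R(d,1+D,1+M/v,1+B/v)$, and both keep the count polynomial through $x^{r}\le 1+x^{\lceil r\rceil}$. The one detail the paper adds is that, if the learner also observes its earned reward, the simulation supplies it with one extra value query at each played action, which adds at most $T$ calls to the budget.
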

\begin{proof}
Fix a coefficient $a$ attained by a uniform online algorithm as in \eqref{eq:online-polynomial-convention}, and consider an admissible offline instance from Definition~\ref{def:oracle-coefficient} with accuracy $\varepsilon\in(0,1)$. The function $\widetilde f=f/v$ is nonnegative, continuously differentiable, and DR-submodular, with value bound $\widetilde M=M/v$, gradient bound $\widetilde B=B/v$, and optimum $\OPT/v\ge1$.

Run the online algorithm with inputs $d,T,D,\widetilde M,\widetilde B$ on the constant sequence $\widetilde f_t=\widetilde f$. Each response is obtained by querying $f$ at the requested point and dividing the value and gradient by $v$. If the learner also observes its earned reward, one additional value query at the played action supplies it, for at most $T$ additional queries. Set
$C=P_R(d,1+D,1+M/v,1+B/v)$ and choose
\begin{equation}\label{eq:transfer-horizon}
 T=\max\!\left\{1,
  \left\lceil(C/\varepsilon)^{1/(1-\kappa)}\right\rceil\right\}.
\end{equation}
Return a uniformly sampled played action $p$. For the constant sequence, $\Reg_a(T)=aT\OPT/v-\E\sum_t\widetilde f(p_t)$, so averaging as in \eqref{eq:offline} gives
\[
 \E\widetilde f(p)
 \ge a\,\OPT/v-C T^{\kappa-1}
 \ge a\,\OPT/v-\varepsilon,
\]
where the second inequality holds because $T^{1-\kappa}\ge C/\varepsilon$. Multiplying by $v$ and using $v\le\OPT$ gives
$\E f(p)\ge a\OPT-\varepsilon v\ge(a-\varepsilon)\OPT$. The point $p$ is feasible because every online action is.

For the query count, put $r=1/(1-\kappa)$, a fixed number. Then $(C/\varepsilon)^r\le1+(C\varepsilon^{-1})^{\lceil r\rceil}$, so $T$ is bounded by a fixed polynomial in $C$ and $\varepsilon^{-1}$, and hence in the normalized arguments of \eqref{eq:oracle-polynomial-budget}. The number of calls to each oracle is at most $P_Q(d,T,1+D,1+M/v,1+B/v)+T$, which is again polynomial in the same arguments. Thus $a$ is an achievable offline coefficient, and taking suprema proves \eqref{eq:online-offline-order}.
\end{proof}

\subsection{The resulting approximation intervals}\label{sec:approximation-intervals}
Combining the general structural coefficient, the transfer above, and the offline upper function yields one interval for each model. The upper function concerns the full DR-submodular class, whereas the polynomial coefficients apply to the subclasses of Theorem~\ref{thm:polynomial}.

\begin{theorem}[Online and offline approximation intervals]\label{thm:intervals}
For every $\zeta\in[0,1]$,
\begin{equation}\label{eq:approximation-interval}
 \alow(\zeta)\le\aon(\zeta)\le\aor(\zeta)\le\aup(\zeta).
\end{equation}
At zero diagonal level \(\zeta=0\),
\begin{equation}\label{eq:endpoint-interval}
 \frac49\le\aon(0)\le\aor(0)\le
 \beta_*.
\end{equation}
For $\zeta\ge1/2$,
\begin{equation}\label{eq:matched-half}
 \aon(\zeta)=\aor(\zeta)=\frac12.
\end{equation}
If $1/4\le\zeta<1/2$, put $\Delta=1/2-\zeta$. For either
$a_\star(\zeta)=\aon(\zeta)$ or $a_\star(\zeta)=\aor(\zeta)$,
\begin{equation}\label{eq:deficit-bounds}
 \frac{2\Delta^2}{(1+2\sqrt\zeta)^4}
 \le\frac12-a_\star(\zeta)
 \le\frac{\Delta^2}{2(1+\Delta)^2}.
\end{equation}
In particular, the deficit is $\Theta(\Delta^2)$ as $\Delta\downarrow0$.
\end{theorem}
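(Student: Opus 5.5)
The chain \eqref{eq:approximation-interval} will be assembled from three earlier results. For the first inequality, $\alow(\zeta)\le\aon(\zeta)$, I will use Corollary~\ref{thm:first-order}. Algorithm~\ref{alg:gradient} with exact gradients is a uniform online algorithm: it receives $d,T,D,B$ and makes one gradient query and one projection per round, so $Q(T)=2T$. Dropping the nonnegative origin term gives $\Reg_{\alow(\zeta)}(T)\le\alow(\zeta)DB\sqrt{2dT}$, which has the form \eqref{eq:online-polynomial-convention} with $\kappa=1/2$ and $P_R(d,1+D,1+M,1+B)=(1+D)(1+B)\sqrt{2d}$. I will bound $\sqrt d$ by $d$ to keep $P_R$ polynomial. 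The middle inequality is Proposition~\ref{lem:oracle-online-transfer}, and the last is Theorem~\ref{thm:upper}. Specializing at $\zeta=0$ with $\alow(0)=4/9$ from \eqref{eq:geometry-special} and $\aup(0)=\beta_*$ from Theorem~\ref{thm:upper} gives \eqref{eq:endpoint-interval}. For $\zeta\ge1/2$, Corollary~\ref{thm:geometry} gives $\alow(\zeta)=1/2$ and Theorem~\ref{thm:upper} gives $\aup(\zeta)=1/2$, so the chain collapses to \eqref{eq:matched-half}.

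For \eqref{eq:deficit-bounds} with $1/4\le\zeta<1/2$, both $a_\star$ lie in $[\alow(\zeta),\aup(\zeta)]$. Hence $\tfrac12-\aup(\zeta)\le\tfrac12-a_\star\le\tfrac12-\alow(\zeta)$.

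\emph{Lower deficit bound.} On this range \eqref{eq:strengthened-profile} gives $\aup=C$. By \eqref{eq:upper-n2-function}, $\tfrac12-C(\zeta)=(1-2\zeta)^2/[2(1+2\sqrt\zeta)^4]$. Substituting $1-2\zeta=2\Delta$ turns this into $2\Delta^2/(1+2\sqrt\zeta)^4$.

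\emph{Upper deficit bound.} By \eqref{eq:geometry-profile}, $\tfrac12-\alow(\zeta)=(1-2\zeta)^2/[2(3-2\zeta)^2]$. With $3-2\zeta=2(1+\Delta)$ this is $4\Delta^2/[8(1+\Delta)^2]=\Delta^2/[2(1+\Delta)^2]$, as stated.

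The $\Theta(\Delta^2)$ claim follows because the denominators $(1+2\sqrt\zeta)^4$ and $(1+\Delta)^2$ are bounded above and below by positive constants on $[1/4,1/2)$.

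The only step needing care is verifying that Algorithm~\ref{alg:gradient} meets the uniformity conventions of \eqref{eq:online-polynomial-convention}. The step size $\eta=D/\sqrt{TV_{\rm gen}}$ uses only the supplied $d,T,D,B$. The degenerate cases $D=0$ or $B=0$ are covered by the fixed-state remark after Corollary~\ref{thm:first-order}. Everything else is substitution into closed forms.
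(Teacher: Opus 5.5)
Your proposal is correct and follows the paper's proof essentially step for step. The first inequality comes from Corollary~\ref{thm:first-order} with exact gradients and $\kappa=1/2$ (the paper takes the polynomial $2d(1+D)(1+B)$), the middle one from Proposition~\ref{lem:oracle-online-transfer}, and the last from Theorem~\ref{thm:upper}. The deficit bounds come from substituting $1-2\zeta=2\Delta$ and $3-2\zeta=2(1+\Delta)$ into $C(\zeta)$ and $\alow(\zeta)$. The only difference is cosmetic: you justify $\Theta(\Delta^2)$ by bounding the denominators uniformly on $[1/4,1/2)$, whereas the paper computes the limiting coefficients $34-24\sqrt2$ and $1/2$.
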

\begin{proof}
Algorithm~\ref{alg:gradient} with exact gradients is a uniform online algorithm that makes $Q(T)=2T$ objective queries and projections, and since $f_t(\zero)\ge0$, Corollary~\ref{thm:first-order} gives
\[
 \Reg_{\alow(\zeta)}(T)\le\alow(\zeta)DB\sqrt{2dT}\le2d(1+D)(1+B)\,T^{1/2}.
\]
Thus \eqref{eq:online-polynomial-convention} holds with $a=\alow(\zeta)$ and $\kappa=1/2$, which gives the first inequality of \eqref{eq:approximation-interval}. Proposition~\ref{lem:oracle-online-transfer} gives the second, and Theorem~\ref{thm:upper} gives the third. Corollary~\ref{cor:offline} also gives $\alow(\zeta)\le\aor(\zeta)$ through a constructive offline algorithm.

At $\zeta=0$, Corollary~\ref{thm:geometry} gives $\alow(0)=4/9$, and Corollary~\ref{cor:upper-endpoint} gives $\aor(0)\le\beta_*$. For $\zeta\ge1/2$, \eqref{eq:geometry-special} and Theorem~\ref{thm:upper} give $\alow(\zeta)=\aup(\zeta)=1/2$, so \eqref{eq:approximation-interval} proves \eqref{eq:matched-half}.

Let $1/4\le\zeta<1/2$. By \eqref{eq:strengthened-profile}, $\aup(\zeta)\le C(\zeta)$, with $C$ given by Lemma~\ref{prop:upper-near}. Since $1-2\zeta=2\Delta$, its formula gives
\[
 \frac12-C(\zeta)=\frac{(2\Delta)^2}{2(1+2\sqrt\zeta)^4}=\frac{2\Delta^2}{(1+2\sqrt\zeta)^4}.
\]
Since $3-2\zeta=2(1+\Delta)$, \eqref{eq:geometry-profile} gives
\[
 \frac12-\alow(\zeta)=\frac{(2\Delta)^2}{2\cdot4(1+\Delta)^2}=\frac{\Delta^2}{2(1+\Delta)^2}.
\]
By \eqref{eq:approximation-interval}, $\alow(\zeta)\le a_\star(\zeta)\le C(\zeta)$ for either choice of $a_\star$, which yields \eqref{eq:deficit-bounds}. As $\Delta\downarrow0$, $\sqrt\zeta\to1/\sqrt2$, so the two coefficients of $\Delta^2$ converge to
$2/(1+\sqrt2)^4=2/(17+12\sqrt2)=34-24\sqrt2>0$ and to $1/2$, respectively.
\end{proof}

\begin{figure}[H]
\centering
\begingroup
\definecolor{profileBlue}{HTML}{0072B2}
\definecolor{profileGreen}{HTML}{00865A}
\definecolor{profilePurple}{HTML}{7B3294}
\definecolor{profileOrange}{HTML}{D55E00}
\definecolor{profilePink}{HTML}{C05A9D}
\pgfmathsetmacro{\cubicswitch}{(47-9*sqrt(17))/128}
\def\endpointbeta{0.47043868138089445}
\begin{tikzpicture}
\begin{axis}[
 width=.95\textwidth,height=.52\textwidth,
 xmin=0,xmax=.5,ymin=.440,ymax=.5035,
 xlabel={Diagonal level $\zeta$},ylabel={Approximation coefficient},
 xtick={0,.125,.25,.375,.5},
 xticklabels={$0$,$1/8$,$1/4$,$3/8$,$1/2$},
 ytick={.444444444444,.46,.470588235294,.48,.5},
 yticklabels={$4/9$,$0.46$,$8/17$,$0.48$,$1/2$},
 tick align=outside,axis on top,
 legend columns=2,legend cell align=left,
 legend style={at={(.5,-.23)},anchor=north,draw=none,fill=none,font=\small,
 /tikz/every even column/.append style={column sep=10pt}},
 clip=false]
\addplot[black!25,thin,densely dashed,forget plot]
 coordinates {(.25,.440) (.25,.4921875)};
\addplot[profileBlue,very thick,solid,domain=0:.5,samples=201]
 {1/2-(1-2*x)^2/(2*(3-2*x)^2)};
\addlegendentry{General achievable $\alow(\zeta)$}
\addplot[profileOrange,very thick,dashed]
 table[x=zeta,y=upper,col sep=comma]{figures/upper_profile.csv};
\addlegendentry{Upper bound $U(\zeta)$, $\zeta\le1/4$}
\addplot[profileGreen,very thick,dashdotted,domain=0:\cubicswitch,samples=2]
 {8/17};
\addlegendentry{Cubic achievable $\alpha_3(\zeta)$}
\addplot[profileGreen,very thick,dashdotted,domain=\cubicswitch:.5,samples=201,forget plot]
 {2*sqrt(3-4*x)/(1+sqrt(3-4*x))^2};
\addplot[profilePink,very thick,dashdotdotted,domain=.25:.5,samples=201]
 {1/2-(1-2*x)^2/(2*(1+2*sqrt(x))^4)};
\addlegendentry{Upper bound $C(\zeta)$, $\zeta\ge1/4$}
\addplot[profilePurple,very thick,densely dotted,domain=0:.5,samples=2]
 {1/2};
\addlegendentry{Quadratic achievable $1/2$}
\addplot[profileOrange,only marks,mark=*,mark size=1.6pt,forget plot]
 coordinates {(0,\endpointbeta)};
\addplot[black!65,only marks,mark=o,mark size=1.8pt,
 mark options={fill=white},forget plot]
 coordinates {(.25,.4921875)};
\node[anchor=west,font=\small,text=profileOrange,fill=white,inner sep=1.5pt]
 (endpointLabel) at (axis cs:.022,.4955) {$U(0)=\beta_*<8/17$};
\draw[profileOrange,thin,->]
 (endpointLabel.south west) to[out=225,in=50] (axis cs:0,\endpointbeta);
\end{axis}
\end{tikzpicture}
\endgroup
\caption{Geometry-dependent approximation coefficients. The complete upper curve is $\aup(\zeta)=U(\zeta)$ for $0\le\zeta\le1/4$ and $\aup(\zeta)=C(\zeta)$ for $1/4\le\zeta\le1/2$, with the two portions distinguished by color and line style. For $0<\zeta<1/8$, the plotted values of $U$ are obtained numerically from the finite minimization in Proposition~\ref{prop:upper-evaluation}; its remaining branch and all other curves use the displayed formulas. The annotated endpoint satisfies $U(0)=\beta_*<8/17$. The achievable profiles are $\alow(\zeta)$ for general objectives, $\max\{8/17,A_3(\zeta)\}$ for cubics, and $1/2$ for quadratics. All profiles equal $1/2$ for $1/2\le\zeta\le1$.}\label{fig:profiles}
\end{figure}

\section{Extension to Discrete Submodular Optimization}\label{sec:discrete}
For offline maximization of a nonnegative submodular set function under one matroid or a fixed number of knapsack constraints, the multilinear extension and established rounding procedures turn the continuous guarantee into coefficient $4/9-\varepsilon$ for both families. This improves the $0.401$ guarantee of \citet[Corollary~1.3]{buchbinder2024} in the oracle model below, and for a matroid the coefficient is $\alow(\zeta)-\varepsilon$ at its largest diagonal level $\zeta$. In the other direction, Section~\ref{sec:discrete-upper} shows that the upper bound $\beta_*$ of Corollary~\ref{cor:upper-endpoint} also holds for set functions with value queries under a single cardinality constraint, and hence under one matroid.

\subsection{Multilinear relaxation and set-value access}\label{sec:discrete-model}
Let $N$ be a finite ground set, and let $f:2^N\to\R_{\ge0}$ be submodular:
$f(A)+f(B)\ge f(A\cup B)+f(A\cap B)$ for all $A,B\subseteq N$.
Let $\mathcal I\subseteq2^N$ be the feasible family. For both constraint families, an element whose singleton is infeasible lies in no feasible set, so we delete such elements and continue to write $N$ for the remaining ground set, with $d=|N|$. If no element remains, $\varnothing$ is optimal.

For $y\in[0,1]^N$, let $R_y$ contain each element $i$ independently with probability $y_i$. Define
\begin{equation}\label{eq:multilinear-extension}
 F(y)=\E f(R_y),\qquad
 O=\max_{S\in\mathcal I}f(S),\qquad
 O_{\rm frac}=\max_{y\in K}F(y),
\end{equation}
where $K$ is the relaxation specified below; it contains every feasible indicator vector $\one_S$, so $O_{\rm frac}\ge O$. The multilinear polynomial $F$ is nonnegative on the cube, its diagonal second derivatives vanish, and each mixed second derivative $\partial_{ij}F(y)$ is the expectation of the second difference $f(R+i+j)-f(R+i)-f(R+j)+f(R)\le0$ over $R=R_y\setminus\{i,j\}$, so $F$ is DR-submodular.

The oracle model consists of a set-value oracle, defined on every subset of the retained ground set including infeasible ones, and a Euclidean projection oracle for each relaxation to which the continuous algorithm is applied. For a matroid it also contains an independence oracle and an oracle that writes a requested point of the independence polytope as a convex combination of at most $d+1$ independent-set indicators; knapsack costs and budgets are given explicitly. Complexity bounds count calls to all of these oracles.

A positive objective scale is obtained from the empty set and the singletons:
\begin{equation}\label{eq:discrete-scale}
 s=\max\left\{f(\varnothing),\max_{i\in N}f(\{i\})\right\}\le O.
\end{equation}
Submodularity and nonnegativity imply $0\le f(A)\le ds$ for every $A\subseteq N$, as shown in the proof below; in particular $f\equiv0$ when $s=0$, and then $\varnothing$ is optimal. The same preprocessing and scale computation apply to every residual instance used below.
\begin{corollary}[Fractional approximation from set-value queries]\label{cor:discrete-fractional}
Suppose $d\ge1$, $s>0$, and $K\subseteq[0,1]^N$ is compact, convex, and down-closed, contains all feasible indicator vectors, and satisfies $\zeta\one\in K$. For $0<\eta<1/16$, the OTB procedure returns $X\in K$ satisfying
\begin{equation}\label{eq:discrete-fractional-mean}
 \E F(X)\ge(\alow(\zeta)-\eta)O_{\rm frac}
\end{equation}
using $O(d^5\eta^{-2})$ set-value queries and projections. Each iteration uses two set-value queries. For any $0<\rho<1$, independent repetition and sampled evaluation of $F$ give a feasible point $\widehat X$ such that
\begin{equation}\label{eq:discrete-fractional-probability}
 \Pr\left\{F(\widehat X)\ge(\alow(\zeta)-4\eta)O_{\rm frac}\right\}
 \ge1-\rho,
\end{equation}
with a number of these oracle calls polynomial in
$d,\eta^{-1},\log(\rho^{-1})$.
\end{corollary}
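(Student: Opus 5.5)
The plan is to apply the two-exact-value-query version of the value-feedback algorithm (Corollary~\ref{cor:value-feedback}) to the multilinear extension, and then run OTB as in Section~\ref{sec:offline}. One problem is that $F$ cannot be evaluated exactly. Two set-value queries, however, give an \emph{unbiased} estimate of $f(R_{W^+})-f(R_{W^-})$ if we couple the two random sets. Draw a single uniform vector $\Theta\in[0,1]^N$, set $R^\pm=\{i:\Theta_i<W^\pm_i\}$, and note that $W^+$ and $W^-$ differ only in coordinate $I$. Then $R^+$ and $R^-$ differ at most in element $I$, and
\[
 \E\bigl[f(R^+)-f(R^-)\bigr]=F(W^+)-F(W^-).
\]
Hence the estimator $\frac{ds_\rho}{2\rho}(f(R^+)-f(R^-))e_I$ is conditionally unbiased for $g_\zeta(F_\rho,x)$, where $F_\rho$ is the smoothed objective of \eqref{eq:value-smoothed-objective}, by the argument of Lemma~\ref{lem:value-smoothing}. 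Each iteration therefore uses two set-value queries.

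\textbf{Step 1: bounding the estimator.} I first prove the scale bounds. By submodularity, $f(A)\le f(\varnothing)+\sum_{i\in A}f(\{i\})\le (d+1)s$, which is at most $2ds$ (the statement's $ds$ can be recovered by bounding more carefully, for example using $f(A)\le\sum_{i\in A}f(\{i\})$ when $A\neq\varnothing$). This gives $M\le ds$. Next, every partial derivative of $F$ is an expected marginal $f(R+i)-f(R)$, so $|\partial_iF|\le ds$, $B\le d^{3/2}s$, and also $M_1$ is bounded. The unbiased estimator is no longer given by a gradient integral, so I bound it directly. Since $|f(R^+)-f(R^-)|\le ds$, the second moment is at most $d^2(ds)^2/(4\rho^2)$. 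That bound grows as $\rho\to0$, so I avoid it by noting that $R^+\ne R^-$ only when $\Theta_I$ lies in an interval of length $2\rho$. This gives second moment at most $d\cdot\frac{s_\rho^2}{4\rho^2}\cdot 2\rho\,(ds)^2\cdot d$, which is of order $d^4s^2/\rho$. Since this still depends on $\rho$, I will tune $\rho$ and $T$ together.

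\textbf{Step 2: OTB and the query count.} Using \eqref{eq:ombq-transfer} with $e_t=\frac{2}{\sqrt3}(1+\alpha)B\rho\sqrt d$, I take $D\le\sqrt d$ and $v=s$; note $s\le O\le O_{\rm frac}$. Averaging over a uniformly random iteration gives the deficit
\[
 \alpha\sqrt d\,\sqrt{V/T}+O(d^2 s\rho),\qquad V=O(d^4s^2/\rho).
\]
I choose $\rho\asymp\eta/d^2$ to make the second term at most $\eta s/2$, and then choose $T$ to make the first term at most $\eta s/2$. This gives $T=O(d\cdot d^4\rho^{-1}\eta^{-2})=O(d^7\eta^{-3})$, which is worse than the stated $O(d^5\eta^{-2})$. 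To reach the stated count, I would instead use the cruder bound on the estimator together with $\rho$ of order a constant times $\eta$ divided by a lower power of $d$. Alternatively, the marginal bound $|\partial_iF|\le \max(f(\{i\}),\ldots)$ should sharpen $B$ to roughly $\sqrt d\,s$, since the vector of marginals has $\ell_1$ norm at most about $2O$. I expect this bookkeeping to be the main obstacle. The goal is to show $\sum_i m_i(F)^2\le O(ds^2)$ via the gradient envelope $m_i(F)=\max\{|\partial_iF(\zero)|,|\partial_iF(\one)|\}$ (Lemma~\ref{lem:gradient-envelope}), with $|\partial_iF(\zero)|\le s$ and $\sum_i|\partial_iF(\one)|\le 2ds$. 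The resulting second moment is $O(d^3s^2/\rho)$. With $\rho\asymp\eta/d$ this gives $T=O(d^5\eta^{-2})$ after the constraint $\eta<1/16$ absorbs constants. The result is $\E F(X)\ge\alpha O_{\rm frac}-\eta s\ge(\alpha-\eta)O_{\rm frac}$.

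\textbf{Step 3: high probability.} Since $F\le O_{\rm frac}$, Markov's inequality applied to $O_{\rm frac}-F(X)\ge0$ shows that each run reaches $F(X)\ge(\alpha-2\eta)O_{\rm frac}$ with probability at least about $\eta$. I repeat the procedure $k=O(\eta^{-1}\log\rho^{-1})$ times. I estimate each $F(X_j)$ by averaging $f$ over independent samples of $R_{X_j}$; these samples lie in $[0,ds]$, so Hoeffding's inequality with $O(d^2\eta^{-2}\log(k/\rho))$ samples gives additive error $\eta s\le\eta O_{\rm frac}$. I return the candidate with the largest estimate. A union bound then shows that, with probability $1-\rho$, the returned point satisfies $F(\widehat X)\ge(\alpha-2\eta)O_{\rm frac}-2\eta O_{\rm frac}$. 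The total number of oracle calls is polynomial in $d,\eta^{-1},\log\rho^{-1}$.
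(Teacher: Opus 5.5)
\textbf{Gap: the stated query count is not reached.} Your Step~3 is the paper's argument: success probability at least $\eta$ per run, $O(\eta^{-1}\log\rho^{-1})$ repetitions, Hoeffding evaluation and a union bound. Your Steps~1--2 also give a valid mean guarantee $\E F(X)\ge(\alow(\zeta)-\eta)O_{\rm frac}$, but only with horizon $O(d^7\eta^{-3})$, not the stated $O(d^5\eta^{-2})$. Your attempted repair fails on two counts.
\begin{enumerate}
\item The inequality $\sum_i|\partial_iF(\one)|\le 2ds$ is false. Take $f(A)=s|A|$ for $A\ne N$ and $f(N)=0$. This is nonnegative and submodular, being a concave function of $|A|$, and has scale $s$. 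Yet $\partial_iF(\one)=f(N)-f(N\setminus\{i\})=-(d-1)s$ for every $i$, so $\sum_i m_i(F)^2$ is of order $d^3s^2$, not $O(ds^2)$.
\item Even granting that inequality, the count cannot be reached this way. Your coupled estimator is nonzero only on an event of probability $2\rho$, where it has magnitude $\frac{ds_\rho}{2\rho}|f(R^+)-f(R^-)|$, so its second moment scales like $1/\rho$. Balancing this against the smoothing bias, which is proportional to $\rho$, gives an average deficit of order $T^{-1/3}$, hence $T\propto\eta^{-3}$. The hypothesis $\eta<1/16$ bounds $\eta$ from above and cannot absorb a negative power of $\eta$.
\end{enumerate}

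\textbf{Missing idea: no smoothing is needed, and you already wrote down the key identity in Step~1.} Because $F$ is multilinear, draw $I,S,Y$ as in \eqref{eq:query-law} and sample $R\subseteq N\setminus\{I\}$ with inclusion probabilities $Y_j$. Then $\E[f(R\cup\{I\})-f(R)\mid I,Y]=\partial_IF(Y)$ exactly. Set $\widehat g=d[f(R\cup\{I\})-f(R)]e_I$. This estimator has the following properties.
\begin{enumerate}
\item It costs two set-value queries.
\item By the averaging in Lemma~\ref{lem:sampler}, it is conditionally unbiased for the exact field $g_\zeta(F,x)$, so there is no comparison error.
\item Both queried values lie in $[0,ds]$, so $\norm{\widehat g}_2^2\le d^2(ds)^2=d^4s^2$, with no dependence on any radius.
\end{enumerate}
Now run the proof of Corollary~\ref{cor:offline} with moment bound $d^4s^2$, $D^2\le d$, and additive accuracy $\eta s$. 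This gives horizon at most $\max\{1,\lceil\alow(\zeta)^2d^5/\eta^2\rceil\}$ and $\E F(X)\ge\alow(\zeta)O_{\rm frac}-\eta s\ge(\alow(\zeta)-\eta)O_{\rm frac}$, using $s\le O\le O_{\rm frac}$.

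Put another way, $F$ is affine in coordinate $I$. So the difference quotient you estimate over a width-$2\rho$ interval equals the full-width difference $F(W|_{I=1})-F(W|_{I=0})$. That difference is realized by $f(R\cup\{I\})-f(R)$ on every sample, not on a rare event.
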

\begin{proof}
For a nonempty $A=\{i_1,\ldots,i_q\}$, write $f(A)-f(\varnothing)$ as the sum of the marginal gains $f(\{i_1,\ldots,i_j\})-f(\{i_1,\ldots,i_{j-1}\})$ and bound each gain by $f(\{i_j\})-f(\varnothing)$ using submodularity:
\[
 f(A)\le f(\varnothing)+
   \sum_{j=1}^q\bigl[f(\{i_j\})-f(\varnothing)\bigr]
 \le qs\le ds.
\]
At the empty set, $f(\varnothing)\le s\le ds$.

At a current state $x$, draw $I,S,Y$ according to \eqref{eq:query-law} for the objective $F$. Conditional on these choices, sample
$R\subseteq N\setminus\{I\}$ with independent inclusion probabilities $Y_j$, and return
\begin{equation}\label{eq:set-value-field}
 \widehat g=d\bigl[f(R\cup\{I\})-f(R)\bigr]e_I.
\end{equation}
The identity
\[
 \E\bigl[f(R\cup\{I\})-f(R)\mid I,Y\bigr]
 =\partial_I F(Y)
\]
and the averaging in Lemma~\ref{lem:sampler} give
$\E[\widehat g\mid x]=g_\zeta(F,x)$.
Moreover, both queried values belong to $[0,ds]$, so
\begin{equation}\label{eq:set-value-field-moment}
 \|\widehat g\|_2^2\le d^4s^2.
\end{equation}
The proof of Corollary~\ref{cor:offline} therefore applies with update moment bound $d^4s^2$, diameter bound $D^2\le d$, and additive accuracy $\eta s$. Its horizon is at most
$\max\{1,\lceil \alow(\zeta)^2d^5/\eta^2\rceil\}$, and
$s\le O_{\rm frac}$ gives \eqref{eq:discrete-fractional-mean}.

For the probability guarantee, write $\alpha=\alow(\zeta)$ and let $\pi$ be the probability that a single run gives $F(X)\ge(\alpha-2\eta)O_{\rm frac}$. Every candidate $X$ lies in $K$, so $0\le F(X)\le O_{\rm frac}$, and $O_{\rm frac}\ge s>0$. Bounding $F(X)$ by $(\alpha-2\eta)O_{\rm frac}$ off this event and by $O_{\rm frac}$ on it, and comparing with \eqref{eq:discrete-fractional-mean}, gives
\[
 (\alpha-\eta)O_{\rm frac}
 \le(\alpha-2\eta)O_{\rm frac}(1-\pi)+O_{\rm frac}\,\pi,
 \qquad\text{hence}\qquad
 \pi\ge\frac{\eta}{1-\alpha+2\eta}\ge\eta,
\]
where the last step uses $2\eta<1/8<4/9\le\alpha$. After $r=\lceil\eta^{-1}\log(2/\rho)\rceil$ independent runs, the probability that no candidate meets this threshold is at most $(1-\eta)^r\le e^{-r\eta}\le\rho/2$.
Estimate $F$ at each candidate by the mean of
\begin{equation}\label{eq:extension-evaluation-samples}
 n=\left\lceil\frac{d^2}{2\eta^2}
             \log\frac{4r}{\rho}\right\rceil
\end{equation}
independent values $f(R_X)$, each in $[0,ds]$. By Hoeffding's inequality \citep{hoeffding1963} and a union bound over the $r$ candidates, all $r$ means are within $\eta s$ of the corresponding values of $F$ with probability at least $1-2r\exp(-2n\eta^2/d^2)\ge1-\rho/2$. On this event, the candidate with the largest estimate has value at least that of the best candidate minus $2\eta s\le2\eta O_{\rm frac}$, and a final union bound over the two failure events proves \eqref{eq:discrete-fractional-probability}. The repetition count $r=O(\eta^{-1}\log(2/\rho))$ does not depend on $d$, and the total number of oracle calls is $r$ times $O(d^5\eta^{-2}+n)$, which is polynomial in $d$, $\eta^{-1}$, and $\log(\rho^{-1})$.
\end{proof}

\subsection{One matroid}\label{sec:discrete-matroid}
Let $\mathcal M=(N,\mathcal I)$ be a matroid with rank function $r_{\mathcal M}$. Its independence polytope and largest diagonal level are
\begin{equation}\label{eq:matroid-diagonal}
 \begin{split}
 P(\mathcal M)
 &=\left\{x\ge\zero:
      \sum_{i\in A}x_i\le r_{\mathcal M}(A)
      \text{ for every }A\subseteq N\right\},\\
 \zeta_{\mathcal M}
 &=\min_{\varnothing\ne A\subseteq N}
          \frac{r_{\mathcal M}(A)}{|A|}.
 \end{split}
\end{equation}
The formula for $\zeta_{\mathcal M}$ follows by substituting $x=t\one$ into every rank inequality, and $\zeta_{\mathcal M}>0$ because loops have been deleted. Swap rounding preserves the multilinear value in expectation for every submodular objective, monotone or not \citep[Theorem~II.1]{chekuri2010swap}.

\begin{corollary}[A geometry-dependent matroid approximation]\label{cor:matroid-discrete}\label{cor:discrete-offline}
For every $0<\varepsilon<4/9$, there is an algorithm returning $S\in\mathcal I$ with
\begin{equation}\label{eq:matroid-discrete-guarantee}
 \E f(S)\ge
 \bigl(\alow(\zeta_{\mathcal M})-\varepsilon\bigr)O.
\end{equation}
It uses $O(d^5\varepsilon^{-2})$ set-value queries and projections, together with polynomially many supplied decomposition and independence operations for swap rounding. Any supplied lower bound on $\zeta_{\mathcal M}$ gives the corresponding guarantee with that level. For a uniform matroid of rank $r$, the level is $\zeta_{\mathcal M}=r/d$.
\end{corollary}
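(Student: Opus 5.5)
We plan to obtain Corollary~\ref{cor:matroid-discrete} by running the fractional procedure of Corollary~\ref{cor:discrete-fractional} on the relaxation $K=P(\mathcal M)$ and then rounding with randomized swap rounding. The relaxation is valid. The polytope $P(\mathcal M)$ is compact, convex, and down-closed. It contains every indicator $\one_S$ with $S\in\mathcal I$. By the definition of $\zeta_{\mathcal M}$ in \eqref{eq:matroid-diagonal}, the point $t\one$ satisfies every rank inequality exactly when $t|A|\le r_{\mathcal M}(A)$ for all nonempty $A$, so $\zeta_{\mathcal M}\one\in P(\mathcal M)$. Since loops have been deleted, $\zeta_{\mathcal M}\le1$, and the same holds for any supplied lower bound on $\zeta_{\mathcal M}$, by down-closedness. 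If $s=0$, we return $\varnothing$, which is optimal because $f\equiv0$.

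With $s>0$, we apply \eqref{eq:discrete-fractional-mean} with $\eta=\varepsilon$ and $\zeta=\zeta_{\mathcal M}$. This requires $\eta<1/16$, so for larger $\varepsilon$ we run with $\eta=\min\{\varepsilon,1/32\}$; the stronger guarantee only helps. The output is $X\in P(\mathcal M)$ with $\E F(X)\ge(\alow(\zeta_{\mathcal M})-\varepsilon)O_{\rm frac}$, using $O(d^5\varepsilon^{-2})$ set-value queries and projections. We then invoke the decomposition oracle to write $X$ as a convex combination of at most $d+1$ independent-set indicators and run swap rounding. This takes polynomially many independence operations and returns $S\in\mathcal I$ with $\E[f(S)\mid X]\ge F(X)$ by \citet[Theorem~II.1]{chekuri2010swap}, which holds for non-monotone submodular $f$. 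Taking expectations and using $O_{\rm frac}\ge O$ gives \eqref{eq:matroid-discrete-guarantee}. The coefficient $\alow(\zeta_{\mathcal M})-\varepsilon$ is positive because $\varepsilon<4/9\le\alow$.

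The step we expect to need the most care is the interface between the rounding and the fractional guarantee. First, we must check that the independence of swap rounding's randomness from $X$ justifies conditioning on $X$. Second, we must confirm that the query count is unaffected, since the rounding uses no value queries. The remaining claims are direct. A supplied lower bound $\zeta'\le\zeta_{\mathcal M}$ gives $\zeta'\one\in P(\mathcal M)$, and hence the same argument goes through with coefficient $\alow(\zeta')$. For the uniform matroid of rank $r$, $r_{\mathcal M}(A)=\min\{r,|A|\}$, so the ratio $r_{\mathcal M}(A)/|A|$ is minimized at $A=N$ and $\zeta_{\mathcal M}=r/d$ (for $r\le d$).
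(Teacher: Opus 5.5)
Your route (run Corollary~\ref{cor:discrete-fractional} on $P(\mathcal M)$, swap-round, and use $O_{\rm frac}\ge O$) is the paper's, but two steps are missing, and the first one changes the accuracy accounting.

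\emph{The level is not an input.} You run the fractional procedure ``with $\zeta=\zeta_{\mathcal M}$''. In the oracle model of Section~\ref{sec:discrete-model}, however, the number $\zeta_{\mathcal M}=\min_{A\ne\varnothing}r_{\mathcal M}(A)/|A|$ is not supplied. The procedure cannot run without a level, because the action $h_\zeta$ and the field both depend on $c_\zeta$. As written, you have proved only the clause about a supplied level, not the main guarantee with $\alow(\zeta_{\mathcal M})$. Moreover, since you take $\eta=\min\{\varepsilon,1/32\}$, no accuracy is left to pay for estimating the level.

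The paper closes this gap by binary search on $t\in[0,1]$, deciding $t\one\in P(\mathcal M)$ with the projection oracle via the test $\Pi_{P(\mathcal M)}(t\one)=t\one$. After $O(\log(1/\varepsilon))$ projections this gives $\zeta'\le\zeta_{\mathcal M}$ with $\zeta_{\mathcal M}-\zeta'\le\varepsilon/4$, and $\zeta'\one\in P(\mathcal M)$ by down-closedness. The derivative $4(1-2\zeta)/(3-2\zeta)^3$ from the proof of Corollary~\ref{thm:geometry} is at most $4/27$ on $[0,1/2)$, and $\alow$ is constant on $[1/2,1]$. Hence $\alow(\zeta')\ge\alow(\zeta_{\mathcal M})-\varepsilon/4$. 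Running with $\eta=\min\{\varepsilon/2,1/32\}$ then keeps the total loss below $\varepsilon$, within $O(d^5\varepsilon^{-2})$ queries and projections.

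\emph{Swap rounding needs bases.} Randomized swap rounding in \citet[Section~IV]{chekuri2010swap} merges a convex combination of \emph{bases} through the base exchange property. So Theorem~II.1 does not apply verbatim to the output of the decomposition oracle, whose independent sets may have different sizes. For non-monotone $f$ you also cannot raise $X$ into the base polytope, since that may decrease $F$.

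The fix is the padding reduction:
\begin{enumerate}
\item Add a set $Z$ of $r_{\mathcal M}(N)$ dummy elements, take the direct sum of $\mathcal M$ and the free matroid on $Z$, and truncate the sum to rank $r_{\mathcal M}(N)$.
\item Pad each set of the decomposition to a base of this matroid with dummies.
\item Replace $f$ by $S\mapsto f(S\cap N)$, which is nonnegative, submodular, and has multilinear value $F(X)$ at the padded point.
\item Swap-round, and delete $Z$ from the rounded base.
\end{enumerate}
The result is independent in $\mathcal M$ and satisfies $\E[f(S)\mid X]\ge F(X)$. By comparison, the points you single out as delicate, namely conditioning on $X$ and the absence of value queries in the rounding, are routine.
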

\begin{proof}
Apply Corollary~\ref{cor:discrete-fractional} to $K=P(\mathcal M)$ with $\eta=\min\{\varepsilon/2,1/32\}$, and swap-round its output. To use the base form of the procedure in \citet[Section~IV]{chekuri2010swap}, add a set $Z$ of $r_{\mathcal M}(N)$ dummy elements, form the direct sum of $\mathcal M$ with the free matroid on $Z$, and truncate it to rank $r_{\mathcal M}(N)$; every independent set of $\mathcal M$ extends by dummy elements to a base of this matroid. Pad each set in the convex decomposition in this way, extend the objective so that it ignores $Z$, apply base swap rounding, and delete $Z$ from the rounded base. Padding does not change the marginals on $N$, and every independent set of the truncated sum restricts to an independent set of $\mathcal M$. The result is an independent set, and Theorem~II.1 of that paper gives
$\E[f(S)\mid X]\ge F(X)$. Thus
\[
 \E f(S)\ge\E F(X)
 \ge\bigl(\alow(\zeta_{\mathcal M})-\varepsilon\bigr)O_{\rm frac}
 \ge\bigl(\alow(\zeta_{\mathcal M})-\varepsilon\bigr)O.
\]
The rounding uses polynomially many operations in the decomposition size and $d$. If $\zeta_{\mathcal M}$ is not supplied, binary search on $t$ with the test $t\one\in P(\mathcal M)$, which the projection oracle decides, approximates it from below to accuracy $\varepsilon/4$; by the derivative bound in \eqref{eq:geometry-profile}, this changes the coefficient by at most $\varepsilon/4$, and together with $\eta\le\varepsilon/2$ the loss stays within the stated accuracy. For a uniform matroid of rank $r$, the minimum in \eqref{eq:matroid-diagonal} is attained at $A=N$ and equals $r/d$.
\end{proof}

\subsection{A fixed number of knapsack constraints}\label{sec:discrete-knapsack}
Let $k\ge1$ be a fixed integer, $A\in\R_{\ge0}^{k\times d}$, and $b\in\R_{\ge0}^k$, and call $S$ feasible when $A\one_S\le b$. After deleting infeasible singletons and redundant zero-budget rows, the relaxation is
\begin{equation}\label{eq:knapsack-diagonal}
 K=\{x\in[0,1]^N:Ax\le b\}.
\end{equation}
The reduction of \citet{kulik2013knapsack} enumerates a bounded number of elements and rounds a fractional solution of each resulting residual instance. We apply it with the coefficient $\alow(0)=4/9$, because the residual relaxations can lose the original diagonal point (Remark~\ref{rem:residual-geometry}).
\begin{corollary}[Fixed-number knapsack constraints]\label{cor:knapsack-discrete}
Fix $k$ and $0<\varepsilon<4/9$. There is an algorithm returning a feasible set $S$ with
\begin{equation}\label{eq:knapsack-discrete-guarantee}
 \E f(S)\ge(4/9-\varepsilon)O.
\end{equation}
Its set-value, projection, and arithmetic counts are bounded by
\begin{equation}\label{eq:knapsack-complexity}
 (d+1)^{O_k(\varepsilon^{-4})}
       \operatorname{poly}(d,\varepsilon^{-1}).
\end{equation}
In particular, the bound is polynomial for fixed $k$ and fixed accuracy.
\end{corollary}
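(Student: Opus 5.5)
We follow the enumeration-and-rounding scheme of \citet{kulik2013knapsack}, using Corollary~\ref{cor:discrete-fractional} at diagonal level $\zeta=0$ as the fractional subroutine. Level zero is admissible for every residual relaxation, since each residual polytope is compact, convex, down-closed and contains $\zero$. This is why the uniform coefficient $\alow(0)=4/9$ is used rather than a geometry-dependent one.

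\emph{Enumeration.} Fix a constant $h=h(k,\varepsilon)$ of order $O_k(\varepsilon^{-4})$, as in the Kulik--Shachnai--Tamir analysis. Enumerate every feasible set $E$ with $|E|\le h$; there are at most $(d+1)^h$ of them. For each $E$, form the residual instance:
\begin{itemize}
\item the objective is the marginal function $f_E(S)=f(S\cup E)$, which is nonnegative and submodular on $N\setminus E$;
\item the budgets are $b-A\one_E$;
\item every element that is ``large'' relative to $E$ is deleted, meaning that some cost $A_{ji}$ exceeds $\varepsilon^{c}$ times the residual budget, or its value exceeds the smallest marginal recorded in $E$ (the standard filtering).
\end{itemize}

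\emph{Fractional solution and rounding.} On each residual relaxation $K_E$, run Corollary~\ref{cor:discrete-fractional} with $\zeta=0$, $\eta=\varepsilon/8$ and failure probability $\rho=\varepsilon/8$. This yields $X_E$ with $F_E(X_E)\ge(4/9-\varepsilon/2)\max_{K_E}F_E$ with probability at least $1-\rho$. Projection onto $K_E$ can be performed by quadratic programming over explicit linear constraints, and its counts enter only the polynomial factor. Then apply the rounding of \citet{kulik2013knapsack}:
\begin{itemize}
\item scale $X_E$ by $(1-\delta)$;
\item sample each element independently;
\item discard the sample if some knapsack is violated, or fix the violation by removing elements.
\end{itemize}
Because the residual elements are small, concentration bounds show that violations occur with probability $O_k(\delta)$. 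Because $F_E$ is concave along nonnegative directions, scaling costs at most a factor $(1-\delta)$. Submodularity, via the lemma of that paper that bounds the expected loss when elements are discarded, yields $\E f_E(S)\ge(1-O_k(\varepsilon))F_E(X_E)$. Output the best $E\cup S$ among all enumerated candidates, where candidates are compared by Hoeffding-estimated values as in Corollary~\ref{cor:discrete-fractional}.

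\emph{Analysis.} Take $E^*$ to be the $h$ elements of an optimal set $S^*$ that have the largest greedy marginals. Then $S^*\setminus E^*$ restricted to the retained small elements is residual-feasible. Its indicator lies in $K_{E^*}$, so $\max F_{E^*}\ge f(S^*)-O_k(\varepsilon)O$; the loss from deleting large elements is controlled by the standard counting argument, and the choice of $h$ enters exactly here.

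The main obstacle is the bookkeeping of this reduction in the non-monotone setting. The original lemma is stated for a generic $\alpha$-approximate fractional oracle on residual polytopes, so the task is to check that it accepts an oracle with expected multiplicative accuracy and a failure probability. That lemma would be quoted as a black box, and the accuracy parameters would be chosen so that all losses sum to at most $\varepsilon$. The count \eqref{eq:knapsack-complexity} then follows from $(d+1)^h$ enumerations times the polynomial cost of each run of Corollary~\ref{cor:discrete-fractional}, plus the cost of rounding.
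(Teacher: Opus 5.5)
Your proposal is correct and follows the paper's route. You enumerate sets of size $O_k(\varepsilon^{-4})$ as in \citet{kulik2013knapsack}, run Corollary~\ref{cor:discrete-fractional} at level $\zeta=0$ on each residual (which keeps $\zero$ but may lose the diagonal point), and round with their lemma; your greedy-marginal choice of $E^*$ with the marginal objective $f(S\cup E)$ is a valid self-contained substitute for their Lemma~3 with zero-cost retention of $T$, and the final sets can simply be compared by exact set values. The obstacle you flag is resolved exactly as in the paper: apply the rounding lemma conditionally on the success event of the fractional step, where $F_E(X_E)\ge(4/9-\varepsilon/2)O_E\ge O_E/5$ meets its hypothesis, and on the failure event use only that the rounded set is feasible with nonnegative value, which costs the factor $(1-\rho)$.
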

\begin{proof}
The rounding lemma of \citet[Section~2.3, Lemma~2]{kulik2013knapsack} gives constants $C_k>0$ and $\theta_k>0$, depending only on $k$, such that, for $0<\theta\le\theta_k$, a fractional point $x$ with $F(x)\ge O/5$ and costs $A_{ji}\le\theta^3 b_j$ can be rounded to a feasible set with
\begin{equation}\label{eq:kulik-rounding}
    \mathbb E f(S)\ge(1-C_k\theta)F(x),
\end{equation}
using polynomially many set-value queries and arithmetic operations.
Choose
\[
 \theta=\min\left\{\theta_k,\frac{\varepsilon}{8(C_k+1)}\right\}.
\]
Enumerate feasible $T\subseteq N$ with $|T|\le H_\theta:=\lceil k\theta^{-4}\rceil$. Section~2.4 of \citet{kulik2013knapsack} reserves cost $A\one_T$, uses the remaining budget $b^T=b-A\one_T$, retains the elements of $T$ at zero residual cost, and retains other elements only if $A_{\cdot i}\le\theta^3b^T$. Write $K_T$ for the resulting relaxation. The objective on each residual is the restriction of $f$, which is nonnegative and submodular, and every residual-feasible set is feasible for the original instance.

Let $O_T$ be the optimum of the residual instance for $T$. By \citet[Section~2.4, Lemma~3]{kulik2013knapsack}, either an optimal set is among the enumerated sets, or some enumerated $T$ has $O_T\ge(1-\theta)O$; call this residual the specified one. The algorithm returns the best of the enumerated sets and the rounded residual solutions, comparing them by exact set values, so the guarantee holds trivially in the first case and when $O=0$.

For each residual with a nonempty ground set and positive scale in \eqref{eq:discrete-scale}, apply \eqref{eq:discrete-fractional-probability} with $\zeta=0$, $\eta=\varepsilon/16$, and $\rho=\varepsilon/8$. For the specified residual, the selected point has value at least $(4/9-4\eta)O_T=(4/9-\varepsilon/4)O_T$ with probability at least $1-\rho$, and this value is at least $O_T/5$ because $\varepsilon<4/9$, so \eqref{eq:kulik-rounding} applies on that event. On the complementary event the rounding still returns a feasible set, whose value is nonnegative. Consequently,
\begin{align}
 \E f(S)
 &\ge(1-\rho)(1-C_k\theta)
          (4/9-\varepsilon/4)(1-\theta)O\nonumber\\
 &\ge(1-\varepsilon/4)(4/9-\varepsilon/4)O
 \ge(4/9-\varepsilon)O.
 \label{eq:knapsack-loss-accounting}
\end{align}
The second inequality uses $(1-\rho)(1-C_k\theta)(1-\theta)\ge1-\rho-(C_k+1)\theta\ge1-\varepsilon/4$, which holds because $\rho=\varepsilon/8$ and $(C_k+1)\theta\le\varepsilon/8$; the third uses $(1-\varepsilon/4)(4/9-\varepsilon/4)\ge4/9-\varepsilon/4-\varepsilon/9\ge4/9-\varepsilon$. Empty residuals and residuals with zero scale in \eqref{eq:discrete-scale} are solved directly. There are at most $(d+1)^{H_\theta}$ residuals, and since $C_k$ and $\theta_k$ depend only on $k$, the choice of $\theta$ gives
\[
    \theta^{-4}=O_k(\varepsilon^{-4}),
    \qquad
    H_\theta=O_k(\varepsilon^{-4}).
\]
Corollary~\ref{cor:discrete-fractional} and the rounding procedure of \citet{kulik2013knapsack} use polynomially many oracle calls on each residual, and multiplying by the number of residuals gives \eqref{eq:knapsack-complexity}.
\end{proof}

\begin{remark}[Geometry of the residual constraints]\label{rem:residual-geometry}
Because the enumeration reserves part of each budget and removes expensive elements, a residual relaxation $K_T$ generally has a smaller diagonal level than the original relaxation \eqref{eq:knapsack-diagonal}. The coefficient $4/9$ needs only $\zero\in K_T$, which every residual satisfies.
\end{remark}

\subsection{The upper bound under a cardinality constraint}\label{sec:discrete-upper}
The feasible set $P_n$ of the endpoint instances in Corollary~\ref{cor:upper-endpoint} is a partition-matroid polytope with blocks $\{u,v\}$, $\{a_1,\ldots,a_{2n}\}$, and $\{b_1,\ldots,b_{2n}\}$. Following the replication used by \citet[Section~4]{qi2022} for the instance of \citet{oveis2011}, we replace each $b_i$ by many copies, so that a single cardinality bound forces $\sum_ib_i\le1$ up to a vanishing error while the other two blocks become unconstrained.

Fix $n\ge2$, $\omega\in\Delta_4$, and an integer $\ell\ge1$. The ground set $X$ consists of $u$, $v$, $a_1,\ldots,a_{2n}$, and copies $b_{i,1},\ldots,b_{i,\ell}$ of each $b_i$, so $|X|=2n(\ell+1)+2$. For $y\in[0,1]^X$, let $\bar b_i(y)=\ell^{-1}\sum_{c=1}^{\ell}y_{b_{i,c}}$, and define
\begin{equation}\label{eq:cardinality-instance}
 F_\ell(y)=F_{n,\omega}\bigl(y_u,y_v,(y_{a_i})_{i\le2n},(\bar b_i(y))_{i\le2n}\bigr),
 \qquad f_\ell(S)=F_\ell(\one_S),
 \qquad k_\ell=\ell+2n+2,
\end{equation}
with $F_{n,\omega}$ from \eqref{eq:four-objective}. The polytope of the family $\{S\subseteq X:|S|\le k_\ell\}$ is $K_\ell=\{y\in[0,1]^X:\sum_{p\in X}y_p\le k_\ell\}$. Let $\mathcal G_\ell$ be the group generated by the permutations of the copies of each $b_i$ and by the action of the group $\mathcal G$ of Section~\ref{sec:upper-base} in which a permutation that maps $b_i$ to $b_{i'}$ maps $b_{i,c}$ to $b_{i',c}$ for every $c$.

\begin{lemma}[Cardinality constraint]\label{lem:cardinality-upper}
The set function $f_\ell$ is nonnegative and submodular, with multilinear extension $F_\ell$. The instance $\max\{f_\ell(S):|S|\le k_\ell\}$ has optimum at least one, and the symmetry gap of $(F_\ell,K_\ell)$ with respect to $\mathcal G_\ell$ satisfies
\begin{equation}\label{eq:cardinality-gap}
 \gamma\le\max_{j\in\{0,1\}}\max_{0\le x\le1/2}Q_{j,n,0}(x;\omega)
   +\frac1{2n-1}+\frac{2n+2}{\ell},
\end{equation}
with $Q_{j,n,0}$ from \eqref{eq:parameterized-branches}. Consequently, for every $\varepsilon>0$, attaining expected coefficient $\beta_*+\varepsilon$ for nonnegative submodular maximization under a cardinality constraint requires a number of value queries exponential in the size of the ground set.
\end{lemma}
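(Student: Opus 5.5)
Submodularity and the multilinear-extension claim come first. The map $y\mapsto(y_u,y_v,y_a,\bar b(y))$ is linear with an entrywise nonnegative matrix, so by Lemma~\ref{lem:affine-dr} the composite $F_\ell$ is DR-submodular and nonnegative. To see that $F_\ell$ is the multilinear extension of $f_\ell$, I will use that $F_{n,\omega}$ is multilinear and each $b_i$ enters only through $\bar b_i$. Distinct copies of $b_i$ never appear in the same monomial in a way that creates a square, because $F_{n,\omega}$ is multilinear in $b_i$. Hence $F_\ell$ is multilinear in $y$, and a multilinear polynomial agrees with the multilinear extension of its restriction to vertices. Submodularity of $f_\ell$ then follows from nonpositivity of the mixed partials of $F_\ell$ at vertices. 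For the optimum, I take the asymmetric witness of Lemma~\ref{lem:upper-u0}, with $b_j$ replaced by all $\ell$ copies of $b_j$. Its size is $1+(2n-1)+\ell\le k_\ell$, and every component equals one there.

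For the symmetry gap, $\mathsf T$ averages the copies within each $b_i$ and then averages over the orbits of $\mathcal G$. A symmetric point therefore has $u=v=q$, $a_i=s$, and all copies equal to $t$, and the objective equals the symmetric expression \eqref{eq:upper-symmetric-four}. The constraint $\sum y\le k_\ell$ now reads $2q+2ns+2n\ell t\le\ell+2n+2$, so $2nt\le1+(2n+2)/\ell$. The box still gives $q\le1$ and $s\le1$ in place of $q\le1/2$ and $s\le1-r_n$.

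The main work is bounding \eqref{eq:upper-symmetric-four} under these relaxed constraints by the $\zeta=0$ quadratics plus the stated error. For the $t$-relaxation, write $t\le r_n(1+\epsilon)$ with $\epsilon=(2n+2)/\ell$. The $\theta$ term $\theta Nt(1-s)$ and the coverage term $\Phi_n(t)\le\chi_n+\ell_nr_n\epsilon\le\chi_n+\epsilon/2$ then grow by at most an additive $\epsilon$ in total, since the weights lie in the simplex. Values $q>1/2$ can be discarded because every component term in $q$ is nonincreasing past $1/2$: $2q(1-q)$, $1-q^2$, and $1-q$. For $s\in[1-r_n,1]$, the objective is affine in $s$, so it suffices to evaluate at $s=1$. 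Compared with $s=1-r_n$, the $J_n$ term increases by at most $\lambda\,Nr_n/(N-1)\le1/(2n-1)$ and the $M_n$ term decreases. This yields exactly the extra $1/(2n-1)$. I expect this bookkeeping to be the only delicate step: checking that the slack from $q\le1$ costs nothing and that the $a$-block overshoot is charged precisely $1/(2n-1)$.

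Finally, $(F_\ell,K_\ell)$ is $\mathcal G_\ell$-invariant, and $K_\ell$ satisfies $y\in K_\ell\iff\mathsf Ty\in K_\ell$. I then apply the set-function symmetry-gap theorem of \citet{vondrak2013}, which gives exponential value-query hardness for refinements, and these refinements remain cardinality instances. To reach the stated conclusion, I choose $n$ and the endpoint weights $\omega_*$ from Corollary~\ref{cor:upper-endpoint}, so that the quadratic maximum is within $\varepsilon/3$ of $\beta_*$. I then choose $n$ large enough that $1/(2n-1)<\varepsilon/3$, and $\ell$ large enough that $(2n+2)/\ell<\varepsilon/3$.
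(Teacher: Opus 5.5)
Your proposal is correct and follows essentially the same route as the paper. The shared steps are composition with the nonnegative averaging map, the asymmetric witness of size $\ell+2n$, the bound $t\le r_n(1+(2n+2)/\ell)$ at symmetric points, the tangent bound on the coverage term, the reduction to $q\le1/2$ and to the endpoints in $s$, and Vondr\'ak's Theorem~1.8 on cardinality-preserving refinements. The differences are minor: the paper reflects $q\mapsto1-q$ and evaluates directly at $s=0,1$, whereas you lower $q$ to $1/2$ and compare $s=1$ with $s=1-r_n$.

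The one detail left implicit is that Vondr\'ak's theorem is stated for the multiplicative threshold $(1+\epsilon')\gamma$. Your strict choices of $n$ and $\ell$ give $\gamma<\beta_*+\varepsilon$, so a suitable $\epsilon'>0$ exists, but this conversion should be stated.
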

\begin{proof}
Every monomial of $F_{n,\omega}$ in \eqref{eq:four-components} has degree at most one in each $b_i$, and the averages $\bar b_i$ for distinct $i$ involve disjoint sets of copies, so $F_\ell$ is a multilinear polynomial on $[0,1]^X$. It agrees with $f_\ell$ at the vertices of the cube and is therefore the multilinear extension of $f_\ell$. The map $y\mapsto(y_u,y_v,(y_{a_i}),(\bar b_i(y)))$ is linear with an entrywise nonnegative matrix and maps the cube into the cube, so Lemma~\ref{lem:affine-dr}, applied to the nonnegative DR-submodular $F_{n,\omega}$ of Lemma~\ref{lem:upper-u0}, shows that $F_\ell$ is nonnegative and DR-submodular. For distinct $p,r\in X$ and $S\subseteq X\setminus\{p,r\}$, multilinearity gives $f_\ell(S+p+r)-f_\ell(S+p)-f_\ell(S+r)+f_\ell(S)=\partial_{pr}F_\ell(\one_S)\le0$, so $f_\ell$ is submodular.

The function $F_{n,\omega}$ is $\mathcal G$-invariant and each $\bar b_i$ is unchanged by permuting its copies, so $F_\ell$ is $\mathcal G_\ell$-invariant. The orbits of $\mathcal G_\ell$ are $\{u,v\}$, $\{a_1,\ldots,a_{2n}\}$, and the set of all copies, so a symmetrized point has $y_u=y_v=q$, $y_{a_i}=s$, and $y_{b_{i,c}}=t$, hence $\bar b_i=t$, and $F_\ell$ takes the value \eqref{eq:upper-symmetric-four} with $N=2n$; that expression equals $F_{n,\omega}$ at $(q,q,s\one,t\one)$ for all $(q,s,t)\in[0,1]^3$. Permutations preserve $|S|$, so feasibility depends only on the symmetrized indicator vector, and the instance is strongly symmetric in the sense of \citet[Definition~1.5]{vondrak2013}.

Choose $j\in I_R$ and let $S^\star=\{u\}\cup\{a_i:i\ne j\}\cup\{b_{j,1},\ldots,b_{j,\ell}\}$. Then $|S^\star|=\ell+2n\le k_\ell$, and $\one_{S^\star}$ is mapped to the asymmetric witness of Lemma~\ref{lem:upper-u0}, so $f_\ell(S^\star)=1$ and the optimum is at least one.

A symmetrized point $(q,s,t)$ of $K_\ell$ satisfies $2q+2ns+2n\ell t\le k_\ell$, so
\[
 t\le\frac{k_\ell}{2n\ell}=r_n(1+\delta_\ell),
 \qquad \delta_\ell=\frac{2n+2}{\ell},
\]
with $r_n=1/(2n)$ as in \eqref{eq:hard-parameters}. Replacing $q$ by $1-q$ when $q>1/2$ keeps $2q(1-q)$ and the bound on $t$, and does not decrease $1-q^2$ or $1-q$; by \eqref{eq:upper-symmetric-four}, it does not decrease the value. We may therefore assume $q\le1/2$ and write $x=1/2-q\in[0,1/2]$. The value is affine in $s\in[0,1]$, so it suffices to consider $s=0$ and $s=1$. Concavity of $\Phi_n$, which gives the first inequality in \eqref{eq:upper-coverage-tangent} for every $t\in[0,1]$, together with $\ell_n\ge0$ and $\ell_nr_n=(1-r_n)^{n-1}/2\le1/2$, gives
\[
 1-(1-t)^n\le\chi_n+\ell_n(t-r_n)\le\chi_n+\delta_\ell/2,
\]
and $2(1-q)=1+2x\le2$, so the coverage term of \eqref{eq:upper-symmetric-four} is at most $\mu(1+2x)\chi_n+\mu\delta_\ell$. At $s=0$, the matched term is $\theta\cdot2nt\le\theta(1+\delta_\ell)$ and the $J_n$ term vanishes, so the value is at most $Q_{0,n,0}(x;\omega)+(\theta+\mu)\delta_\ell$. At $s=1$, the matched term vanishes, which is at most the term $\theta r_n$ of $Q_{1,n,0}$, and the $J_n$ term is
\[
 \lambda\frac{2n}{2n-1}(1-q^2)
 =\lambda\Bigl(\frac34+x-x^2\Bigr)+\frac{\lambda(1-q^2)}{2n-1}
 \le\lambda\Bigl(\frac34+x-x^2\Bigr)+\frac{\lambda}{2n-1},
\]
so the value is at most $Q_{1,n,0}(x;\omega)+\lambda/(2n-1)+\mu\delta_\ell$. Since $\theta+\mu\le1$, $\lambda\le1$, and the optimum is at least one, these bounds prove \eqref{eq:cardinality-gap}.

For the consequence, fix $\varepsilon\in(0,1)$, which suffices because attaining a coefficient implies attaining every smaller one, and take the weights $\omega_*$ of \eqref{eq:endpoint-weights}. The proof of Corollary~\ref{cor:upper-endpoint} shows that the double maximum in \eqref{eq:cardinality-gap} is at most the right-hand side of \eqref{eq:finite-endpoint-gap}. Choose $n$ and then $\ell$ so that the resulting bound on $\gamma$ is at most $\beta_*+\varepsilon/2$. By \citet[Theorem~1.8]{vondrak2013}, every algorithm with expected coefficient $(1+\varepsilon/2)\gamma$ on the refinements of this instance requires exponentially many value queries. A refinement replaces each element by a cluster of $L$ copies and keeps a set feasible when its normalized cluster counts lie in $K_\ell$ \citep[Definition~1.7]{vondrak2013}; here this means $|\tilde S|\le Lk_\ell$, which is again a cardinality constraint. Finally, $(1+\varepsilon/2)\gamma\le\beta_*+\varepsilon/2+(\varepsilon/2)(\beta_*+\varepsilon/2)<\beta_*+\varepsilon$ because $\beta_*+\varepsilon/2<1$.
\end{proof}

\begin{corollary}[Upper bound under one matroid]\label{cor:matroid-upper}
For every $\varepsilon>0$, attaining expected coefficient $\beta_*+\varepsilon$ for a nonnegative submodular set function under one matroid constraint requires a number of value queries exponential in the size of the ground set, already for uniform matroids. Together with Corollary~\ref{cor:matroid-discrete}, the supremum of the coefficients attainable for one matroid with polynomially many oracle calls in the model of Section~\ref{sec:discrete-model} lies in $[4/9,\beta_*]$.
\end{corollary}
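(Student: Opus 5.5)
The plan is to deduce the corollary directly from Lemma~\ref{lem:cardinality-upper}, observing that a cardinality constraint $|S|\le k$ is exactly the independence family of the uniform matroid of rank $k$. An algorithm that attains expected coefficient $\beta_*+\varepsilon$ for every nonnegative submodular function under every matroid, with subexponentially many value queries, must in particular do so on the uniform-matroid instances $\max\{f_\ell(S):|S|\le k_\ell\}$. It must also do so on their refinements, which by the proof of Lemma~\ref{lem:cardinality-upper} are again cardinality constraints $|\tilde S|\le Lk_\ell$. This contradicts the lemma's exponential lower bound.

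One bookkeeping point needs checking: the matroid oracle model of Section~\ref{sec:discrete-model} also supplies an independence oracle, a decomposition oracle for the independence polytope, and projections. For a uniform matroid of known rank, all of these can be computed from the rank and the ground-set size alone, with no dependence on the hidden permutation in the symmetry-gap construction. The independence test is $|S|\le k$. A decomposition of a point $x$ with $\sum_ix_i\le k$ into at most $d+1$ sets of size at most $k$ is obtained by the standard pipage/column construction. Projection onto $\{y\in[0,1]^d:\sum y\le k\}$ is a one-shift clipping. These extra oracles therefore give no information beyond what the algorithm could compute itself, so the lower bound, counted in value queries, transfers to the full oracle count.

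For the interval statement, the lower endpoint follows from Corollary~\ref{cor:matroid-discrete}. Every matroid has $\zeta_{\mathcal M}\ge0$, and $\alow$ is nondecreasing by Corollary~\ref{thm:geometry}, so for every $\varepsilon$ the coefficient $\alow(\zeta_{\mathcal M})-\varepsilon\ge4/9-\varepsilon$ is attained with polynomially many calls. Taking the supremum gives at least $4/9$. The upper endpoint $\beta_*$ is the first part. The only step I expect to need care is the simulation argument for the auxiliary oracles: one must confirm that the refinement in \citet{vondrak2013} keeps the matroid uniform, which the lemma's proof already states, so that every oracle answer is permutation-independent.
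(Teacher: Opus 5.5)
Your proposal is correct and follows essentially the same route as the paper. The hard instances of Lemma~\ref{lem:cardinality-upper} and their refinements are uniform matroids, and their independence, projection, and decomposition oracles are computable from the known constraint without value queries. The lower endpoint then comes from Corollary~\ref{cor:matroid-discrete} at level $\zeta_{\mathcal M}\ge0$; you invoke monotonicity of $\alow$ there, whereas the paper simply uses $\alow(0)=4/9$, a cosmetic difference.
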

\begin{proof}
The hard instances in the proof of Lemma~\ref{lem:cardinality-upper} are refinements with ground sets of size $L|X|$ and constraint $|\tilde S|\le Lk_\ell$, whose feasible sets are the independent sets of the uniform matroid of rank $Lk_\ell$. These constraints are known to the algorithm, so independence tests, projections onto the corresponding cardinality polytopes, and convex decompositions of their points use no objective queries, and the lower bound, which counts value queries only, holds in the oracle model of Section~\ref{sec:discrete-model}. The lower end of the interval is Corollary~\ref{cor:matroid-discrete} with $\zeta_{\mathcal M}\ge0$ and $\alow(0)=4/9$.
\end{proof}

For a cardinality constraint, the previous value-query bounds were $0.491$ \citep{oveis2011} and $0.478$ \citep{qi2022}, and $\beta_*\approx0.4704$ is smaller than both. The pair $(F_\ell,K_\ell)$ also satisfies the hypotheses of Lemma~\ref{lem:continuous-refinement}: $F_\ell$ is a multilinear DR-submodular polynomial invariant under $\mathcal G_\ell$, the symmetrization preserves $\sum_py_p$, so $y\in K_\ell$ if and only if its symmetrization is in $K_\ell$, and the symmetric point with $q=1/4$, $s=1/2$, and $t=0$ lies in $K_\ell$, where $F_\ell\ge2\eta q(1-q)=3\eta/8>0$ for $\omega=\omega_*$. The replicated polytopes of \eqref{eq:refinement-symmetry} are then $\{z:\sum z\le mk_\ell\}$, so the bound $\aor(0)\le\beta_*$ of Corollary~\ref{cor:upper-endpoint} holds already for instances whose feasible sets are cardinality polytopes.

\section{Conclusions}\label{sec:conclusion}
A supplied feasible diagonal point $\zeta\one\in K$ yields geometry-dependent achievable coefficients and oracle upper bounds for non-monotone DR-submodular maximization over the class of instances that contain it. An upper-linearizable comparison with coefficient $\alow(\zeta)$, implemented from one gradient or from values, gives online regret through OMBQ and offline query complexity through OTB. In the other direction, two families of hard instances give the upper bound $\aup(\zeta)=\min\{U(\zeta),C(\zeta)\}$ against exact values and full gradients, with $U$ reduced exactly to finitely many convex minimization problems below $1/8$ and given in closed form above it.

At $\zeta=0$ the best online and offline coefficients therefore lie in $[4/9,\beta_*]$, both equal $1/2$ for $\zeta\ge1/2$, and their shortfall from $1/2$ is of order $(1/2-\zeta)^2$. For quadratic objectives we attain coefficient $1/2$, and for cubic objectives $\max\{8/17,A_3(\min\{\zeta,1/2\})\}$. For submodular set functions, the coefficient under one matroid lies in $[4/9,\beta_*]$, with $\alow(\zeta)-\varepsilon$ attainable when the matroid polytope contains $\zeta\one$; the same upper bound $\beta_*$ holds under a cardinality constraint, and $(4/9-\varepsilon)$ is attainable under a fixed number of knapsack constraints. The optimal coefficient between $\alow(\zeta)$ and $\aup(\zeta)$ remains open for $0\le\zeta<1/2$.

We note that the previous version of the paper \citep{aggarwal2026geometryv1} has improved achievability of cubic functions and upper bound of general function for general $\zeta$, while we have removed the computer-aided parts in this version and kept the analytical components. Concurrent independent work of \citet{feldman2026stronger} gives the upper bound $8/17\approx0.47059$ under a matroid or cardinality constraint, slightly above $\beta_*\approx0.47044$ of Corollary~\ref{cor:matroid-upper}. The second version of their paper remarks, following an observation of Siyuan Chen, that our earlier bound $\beta_*$ transfers to cardinality and matroid constraints; Lemma~\ref{lem:cardinality-upper} and Corollary~\ref{cor:matroid-upper} were added to this version after that remark, and we thank Moran Feldman, Alan Kuhnle, and Siyuan Chen for bringing it to our attention, and Moran Feldman for suggesting that we prove the matroid upper bound formally in this version.

\acks{The authors thank Mohammad Pedramfar and Christopher J. Quinn for helpful discussions, and Hillclimb for compute and generative AI credits. GPT and Claude assisted with checking the proofs and with writing and revising the manuscript.}
\label{main:end}

\appendix
\section{Standard Facts and Elementary Consequences}\label{app:base}
Differentiability on the cube follows the boundary convention in Definition~\ref{def:dr}: a continuously differentiable function on $[0,1]^d$ is continuous on the cube, differentiable in its interior, and has a gradient that extends continuously to the boundary.

\subsection{Integration on segments and finite summation}\label{app:calculus}
\begin{lemma}[Elementary calculus identities]\label{lem:calculus}
The following identities hold.
\begin{enumerate}[label=(\roman*)]
\item Let $f:[0,1]^d\to\R$ be continuously differentiable, with gradient continuously extended to the boundary. For any $a,b\in[0,1]^d$,
\begin{equation}\label{eq:segment-ftc}
 f(b)-f(a)
 =\int_0^1\ip{\nabla f(a+t(b-a))}{b-a}\,dt.
\end{equation}
\item For scalars $w_1,\ldots,w_n$ and $a_0,\ldots,a_n$,
\begin{equation}\label{eq:finite-parts}
 \sum_{k=1}^n w_k(a_k-a_{k-1})
 =w_na_n-w_1a_0-
   \sum_{k=1}^{n-1}(w_{k+1}-w_k)a_k.
\end{equation}
If $0\le w_1\le\cdots\le w_n\le W$ and $a_k\ge0$ for all $k$, the sum on the left is at most $Wa_n$.
\end{enumerate}
\end{lemma}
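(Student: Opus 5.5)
For part (i), I would reduce to the one-dimensional fundamental theorem of calculus along the segment. Set $\phi(t)=f(a+t(b-a))$ for $t\in[0,1]$. By convexity of the cube the segment stays in $[0,1]^d$, so $\phi$ is continuous on $[0,1]$.

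When the open segment lies in the interior of the cube, the chain rule gives $\phi'(t)=\ip{\nabla f(a+t(b-a))}{b-a}$ there. This derivative is continuous on the closed interval because the extended gradient is continuous. Integrating then yields \eqref{eq:segment-ftc}.

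The main obstacle is a segment lying partly or wholly on the boundary, such as the coordinate segments $\ell_{i,s}$ with coordinates fixed at $0$ or $1$. There the chain rule is not directly available. I would handle it by approximation. Let $c=\tfrac12\one$ and put $a_\epsilon=(1-\epsilon)a+\epsilon c$ and $b_\epsilon=(1-\epsilon)b+\epsilon c$. For $\epsilon\in(0,1)$ these segments lie in the open cube, since every coordinate lies in $[\epsilon/2,1-\epsilon/2]$. The identity therefore holds for $(a_\epsilon,b_\epsilon)$. Now let $\epsilon\downarrow0$. The left side converges by continuity of $f$. The integrand converges uniformly in $t$ by uniform continuity of the extended gradient on the compact cube, so the integrals converge as well. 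This proves (i) for all $a,b\in[0,1]^d$, including zero-length segments, where both sides vanish.

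For part (ii), I would expand the left side as $\sum_k w_ka_k-\sum_k w_ka_{k-1}$ and reindex the second sum to $\sum_{k=0}^{n-1}w_{k+1}a_k$. Separating the terms with $k=n$ and $k=0$ and combining the remaining terms for $1\le k\le n-1$ gives \eqref{eq:finite-parts}. For the inequality, each subtracted term $(w_{k+1}-w_k)a_k$ is nonnegative, and so is $w_1a_0$. Hence the left side is at most $w_na_n\le Wa_n$, using $a_n\ge0$. When $n=1$ the middle sum is empty and the same bound holds.
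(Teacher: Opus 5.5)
Your proposal is correct and follows the paper's proof essentially step for step. For (i), the paper also proves the identity first on the open cube and then contracts the endpoints toward $\tfrac12\one$, passing to the limit by continuity of $f$ and uniform continuity of the extended gradient. For (ii), it uses the same index shift and the same sign argument.
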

\begin{proof}
(i) First let $a,b$ lie in the open cube $(0,1)^d$. The segment between them lies in the open cube, where $f$ is differentiable, so $\phi(t)=f(a+t(b-a))$ has the continuous derivative $\phi'(t)=\ip{\nabla f(a+t(b-a))}{b-a}$ by the chain rule, and the fundamental theorem of calculus gives \eqref{eq:segment-ftc}. For general $a,b$, put $c=\tfrac12\one$, $a_n=(1-1/n)a+c/n$, and $b_n=(1-1/n)b+c/n$. These points lie in the open cube, so the identity holds for $a_n,b_n$, and $a_n\to a$, $b_n\to b$. As $n\to\infty$, the left side converges to $f(b)-f(a)$ by continuity of $f$. The points $a_n+t(b_n-a_n)$ converge to $a+t(b-a)$ uniformly in $t$, and $\nabla f$ is uniformly continuous on the compact cube, so the integrands converge uniformly on $[0,1]$. Passing to the limit proves \eqref{eq:segment-ftc}.

(ii) Shifting the index in the second part of the left side gives
\[
 \sum_{k=1}^n w_ka_k-\sum_{k=0}^{n-1}w_{k+1}a_k
 =w_na_n-w_1a_0+\sum_{k=1}^{n-1}(w_k-w_{k+1})a_k,
\]
which is \eqref{eq:finite-parts}. Under the stated monotonicity and sign conditions, $w_1a_0\ge0$ and $(w_{k+1}-w_k)a_k\ge0$, so the left side is at most $w_na_n\le Wa_n$.
\end{proof}

\subsection{Consequences of gradient antitonicity}\label{app:dr-base}
Concavity along a nonnegative or nonpositive direction is Proposition~4 of \citet{bian2017guaranteed}. We prove the form used here, for a segment in the cube, together with the first-order inequality it implies.

\begin{lemma}[Concavity on comparable segments]\label{lem:directional-dr}
Let $f:[0,1]^d\to\R$ be continuously differentiable and DR-submodular. If $a,a+v\in[0,1]^d$ and either $v\ge\zero$ or $v\le\zero$, then $t\mapsto f(a+tv)$ is concave on $[0,1]$. In particular, for comparable $a,b\in[0,1]^d$,
\begin{equation}\label{eq:directional-dr-tangent}
 f(b)-f(a)\le\ip{\nabla f(a)}{b-a}.
\end{equation}
\end{lemma}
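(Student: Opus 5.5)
The plan is to reduce both assertions to one-variable statements along the segment. Set $\phi(t)=f(a+tv)$ for $t\in[0,1]$. Every point $a+tv$ lies in the cube, since the cube is convex and contains both endpoints. Lemma~\ref{lem:calculus}(i), applied to the subsegment from $a+sv$ to $a+tv$, gives
\[
 \phi(t)-\phi(s)=\int_s^t\ip{\nabla f(a+rv)}{v}\,dr .
\]
Hence $\phi$ is continuously differentiable on $[0,1]$, including the endpoints, with $\phi'(t)=\ip{\nabla f(a+tv)}{v}$. This step only uses continuity of $\nabla f$ on the closed cube, so the boundary convention causes no extra work.

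Concavity then reduces to showing that $\phi'$ is nonincreasing.

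\emph{Case $v\ge\zero$.} For $s\le t$ we have $a+sv\le a+tv$, so \eqref{eq:dr} gives $\nabla f(a+sv)\ge\nabla f(a+tv)$ coordinatewise. Pairing with $v\ge\zero$ preserves the inequality, so $\phi'(s)\ge\phi'(t)$.

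\emph{Case $v\le\zero$.} For $s\le t$ the order of the points reverses, $a+tv\le a+sv$, so $\nabla f(a+tv)\ge\nabla f(a+sv)$. Pairing with $v\le\zero$ reverses the inequality once more, and again $\phi'(s)\ge\phi'(t)$.

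A $C^1$ function with nonincreasing derivative is concave. If one wants this without appealing to a standard fact, write $\phi(t)-\phi(s)=\int_s^t\phi'$ and compare the averages of $\phi'$ over adjacent intervals to get the chord inequality.

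For \eqref{eq:directional-dr-tangent}, take comparable $a,b$ and set $v=b-a$, which is either $\ge\zero$ or $\le\zero$. Concavity of $\phi$ on $[0,1]$ gives $\phi(1)-\phi(0)\le\phi'(0)$; explicitly, $\phi(1)-\phi(0)=\int_0^1\phi'(r)\,dr\le\phi'(0)$ because $\phi'$ is nonincreasing. This is exactly $f(b)-f(a)\le\ip{\nabla f(a)}{b-a}$.

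There is no real obstacle. The only point needing care is that $\phi'$ exists and equals the inner product at the endpoints when the segment touches the boundary of the cube, and Lemma~\ref{lem:calculus}(i) supplies this. The sign bookkeeping in the case $v\le\zero$ is the only place where an error could slip in.
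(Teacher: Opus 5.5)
Your proposal is correct and follows the paper's proof essentially step for step. The paper uses the same parametrization $\phi(t)=f(a+tv)$ and the integral identity from Lemma~\ref{lem:calculus}(i), the same two-case sign argument showing that $t\mapsto\langle\nabla f(a+tv),v\rangle$ is nonincreasing, and the same bound $\int_0^1\phi'(r)\,dr\le\phi'(0)$ for the tangent inequality.
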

\begin{proof}
Put $\phi(t)=f(a+tv)$ and $\psi(t)=\ip{\nabla f(a+tv)}{v}$. Applying Lemma~\ref{lem:calculus}(i) to the segment from $a+sv$ to $a+tv$ gives $\phi(t)-\phi(s)=\int_s^t\psi(r)\,dr$ for $0\le s\le t\le1$, and $\psi$ is continuous. If $v\ge\zero$, then $a+sv\le a+tv$, so \eqref{eq:dr} gives $\nabla f(a+sv)\ge\nabla f(a+tv)$, and the inner product with $v\ge\zero$ gives $\psi(s)\ge\psi(t)$. If $v\le\zero$, then $a+sv\ge a+tv$, so $\nabla f(a+sv)\le\nabla f(a+tv)$, and the inner product with $v\le\zero$ again gives $\psi(s)\ge\psi(t)$. Thus $\phi$ is the integral of a nonincreasing continuous function, hence concave.

For comparable $a,b$, take $v=b-a$, which is either nonnegative or nonpositive. Since $\psi$ is nonincreasing,
\[
 f(b)-f(a)=\phi(1)-\phi(0)=\int_0^1\psi(r)\,dr\le\psi(0)=\ip{\nabla f(a)}{b-a},
\]
which is \eqref{eq:directional-dr-tangent}.
\end{proof}

The refinement construction in Section~\ref{sec:upper} composes objectives with nonnegative affine maps.
\begin{lemma}[Nonnegative affine composition]\label{lem:affine-dr}
Let $f:[0,1]^d\to\R$ be continuously differentiable and DR-submodular. Let $\mathsf R\in\R^{d\times m}$ be entrywise nonnegative, and suppose $b+\mathsf R z\in[0,1]^d$ for every $z\in[0,1]^m$. Then $F(z)=f(b+\mathsf R z)$ is DR-submodular on $[0,1]^m$. If $f\ge0$, then also $F\ge0$.
\end{lemma}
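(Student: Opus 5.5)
The plan is to reduce both claims to one-dimensional facts along the linear map and then use the chain rule. Write $\phi(z)=b+\mathsf R z$, so that $F=f\circ\phi$. First we check that $F$ is continuously differentiable on $[0,1]^m$ in the sense of the boundary convention. Since $\phi$ is affine and maps the cube into the cube, it maps the open cube into the cube. For $z$ in the open cube, the chain rule gives
\[
 \nabla F(z)=\mathsf R^{\top}\nabla f(\phi(z)),
\]
and this formula extends continuously to the boundary because $\nabla f$ is continuous on $[0,1]^d$.

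The image $\phi(z)$ may lie on the boundary of $[0,1]^d$, where $f$ is only differentiable in the extended sense. To justify the chain rule there, I would use Lemma~\ref{lem:calculus}(i). For $z,z'$ in the cube, the segment from $\phi(z)$ to $\phi(z')$ lies in the cube, so
\[
 F(z')-F(z)=\int_0^1\ip{\nabla f(\phi(z)+t\mathsf R(z'-z))}{\mathsf R(z'-z)}\,dt .
\]
Uniform continuity of $\nabla f$ then shows that $F$ is differentiable with gradient $\mathsf R^{\top}\nabla f(\phi(z))$, continuously up to the boundary.

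The DR property follows next. If $z\le z'$, then $\mathsf R(z'-z)\ge\zero$ because $\mathsf R$ is entrywise nonnegative, so $\phi(z)\le\phi(z')$. By \eqref{eq:dr} for $f$, this gives $\nabla f(\phi(z))\ge\nabla f(\phi(z'))$ coordinatewise. Multiplying by the nonnegative matrix $\mathsf R^{\top}$ preserves the coordinatewise order, so $\nabla F(z)\ge\nabla F(z')$. Nonnegativity is immediate: if $f\ge0$ on the cube, then $F(z)=f(\phi(z))\ge0$ since $\phi(z)$ lies in the cube.

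The only delicate step is the boundary differentiability, which is handled by the segment identity above. The order-preservation argument itself is routine.
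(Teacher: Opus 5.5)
Your proposal is correct and follows the paper's proof essentially step for step: the segment identity from Lemma~\ref{lem:calculus}(i), combined with uniform continuity of $\nabla f$, gives $\nabla F(z)=\mathsf R^{\top}\nabla f(b+\mathsf R z)$ including at the boundary. Order preservation under the nonnegative matrices $\mathsf R$ and $\mathsf R^{\top}$ then gives the DR property, and nonnegativity is immediate. Your first-paragraph appeal to the ordinary chain rule on the open cube is not needed, since $\phi$ can send an interior point to the boundary of $[0,1]^d$, but your second paragraph already covers that case.
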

\begin{proof}
For $z,z+h\in[0,1]^m$, Lemma~\ref{lem:calculus}(i) gives
\[
 F(z+h)-F(z)
 =\int_0^1\ip{\nabla f(b+\mathsf R z+t\mathsf R h)}{\mathsf R h}\,dt
 =\ip{\mathsf R^\top\nabla f(b+\mathsf R z)}{h}+o(\norm h_2),
\]
where the error bound follows from uniform continuity of $\nabla f$ on the cube. Hence $F$ is continuously differentiable, with $\nabla F(z)=\mathsf R^\top\nabla f(b+\mathsf R z)$, including the boundary convention. If $z\le z'$, then $b+\mathsf R z\le b+\mathsf R z'$ because $\mathsf R$ is entrywise nonnegative, so \eqref{eq:dr} gives $\nabla f(b+\mathsf R z)\ge\nabla f(b+\mathsf R z')$. Multiplication by the entrywise nonnegative matrix $\mathsf R^\top$ preserves this order:
\[
 \nabla F(z)=\mathsf R^\top\nabla f(b+\mathsf R z)
 \ge\mathsf R^\top\nabla f(b+\mathsf R z')
 =\nabla F(z').
\]
This is DR-submodularity of $F$. Every value of $F$ is a value of $f$ on the cube, so $f\ge0$ implies $F\ge0$.
\end{proof}

Each partial derivative of a DR-submodular function lies between its values at the two cube endpoints, and summing the squared coordinate bounds gives an estimate without a dimension factor.
\begin{lemma}[Endpoint bounds for partial derivatives]\label{lem:gradient-envelope}\label{lem:base-dr}
Let $f:[0,1]^d\to\R$ be continuously differentiable and DR-submodular, and define
\begin{equation}\label{eq:coordinate-envelope}
 m_i(f)=\max\{|\partial_i f(\zero)|,|\partial_i f(\one)|\}.
\end{equation}
For every $y\in[0,1]^d$, $|\partial_i f(y)|\le m_i(f)$. If
$\sup_{y\in[0,1]^d}\norm{\nabla f(y)}_2\le B$, then
\begin{equation}\label{eq:envelope-sum}
 \sum_{i=1}^d m_i(f)^2\le2B^2.
\end{equation}
\end{lemma}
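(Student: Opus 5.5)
The first claim follows from monotonicity of the gradient along the order of the cube. For every $y\in[0,1]^d$ we have $\zero\le y\le\one$, so \eqref{eq:dr} gives $\partial_i f(\one)\le\partial_i f(y)\le\partial_i f(\zero)$. A number lying between two reals has absolute value at most the larger of their absolute values, and this gives $|\partial_i f(y)|\le m_i(f)$.

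For \eqref{eq:envelope-sum}, bound $m_i(f)^2\le\partial_i f(\zero)^2+\partial_i f(\one)^2$. This holds because the maximum of two nonnegative numbers is at most their sum. Summing over $i$ gives
\[
 \sum_{i=1}^d m_i(f)^2\le\norm{\nabla f(\zero)}_2^2+\norm{\nabla f(\one)}_2^2\le2B^2 .
\]
Here the gradients at the vertices are the continuous boundary extensions, and the bound $B$ applies to them: it holds on the interior, so by continuity it holds on the closed cube.

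There is no real obstacle. The only points to check are that \eqref{eq:dr} is applied with the boundary convention of Definition~\ref{def:dr} at $\zero$ and $\one$, and that the supremum bound transfers to these boundary gradients by continuity. The argument is dimension-free because each coordinate's envelope is charged to one of only two gradient vectors.
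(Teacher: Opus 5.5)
Your proof is correct and matches the paper's argument step for step: you sandwich $\partial_i f(y)$ between $\partial_i f(\one)$ and $\partial_i f(\zero)$ using \eqref{eq:dr}, then bound $m_i(f)^2$ by $\partial_i f(\zero)^2+\partial_i f(\one)^2$ and sum over $i$. Your continuity remark is harmless but unnecessary, because the hypothesis $\sup_{y\in[0,1]^d}\norm{\nabla f(y)}_2\le B$ is already taken over the closed cube, which contains $\zero$ and $\one$.
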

\begin{proof}
For $y\in[0,1]^d$, $\zero\le y\le\one$, so \eqref{eq:dr} gives
\[
 \partial_i f(\one)\le\partial_i f(y)\le\partial_i f(\zero),
\]
and hence $|\partial_i f(y)|\le\max\{|\partial_i f(\zero)|,|\partial_i f(\one)|\}=m_i(f)$. Since the maximum of two nonnegative numbers is at most their sum, $m_i(f)^2\le\partial_i f(\zero)^2+\partial_i f(\one)^2$, and summing over $i$ gives
\[
 \sum_{i=1}^d m_i(f)^2
 \le\norm{\nabla f(\zero)}_2^2+
     \norm{\nabla f(\one)}_2^2
 \le2B^2.
\]
\end{proof}

\section{Proof of Theorem~\ref{thm:polynomial}}\label{app:polynomial}
The proof writes the objective as a nonnegative combination of simple components, bounds each component by a concave majorant, and proves that the scalar actions retain a fixed fraction of every majorant. All constructions are applied to
\begin{equation}\label{eq:polynomial-minimum-shift}
 F=f-\min_{[0,1]^d}f .
\end{equation}
A comparison for $F$ implies the same comparison for $f$ with the same field, since
$\alpha f(o)-\E f(p)=\alpha F(o)-\E F(p)-(1-\alpha)\min f$, where $\alpha=\alpha_m(\zeta)\le1$ and $\min f\ge0$. We fix an order of all vertices and arcs and a deterministic maximum-flow algorithm throughout. The arc capacities and supergradient choices below then depend only on the nonconstant coefficients, and the constant affects only the residual minimum.

\subsection{Proof of the quadratic part}\label{app:quadratic}
Let $\mathcal Q_d$ denote the nonnegative DR-submodular polynomials on $[0,1]^d$ of total degree at most two. For a quadratic polynomial, DR-submodularity means that every Hessian entry, including the diagonal ones, is nonpositive. Hence every $f\in\mathcal Q_d$ can be written as
\begin{equation}\label{eq:quad-form}
 f(z)=c_0+a^\top z-\sum_{i<j}w_{ij}z_i z_j-\sum_i d_i z_i^2,
 \qquad w_{ij},d_i\ge0.
\end{equation}

Following the graph-cut representation of pairwise submodular binary energies \citep{kolmogorov2004}, with the residual-flow step written out, we express $f$ as its vertex minimum plus a nonnegative sum of unary terms, directed cuts, and concave diagonal terms.

\begin{lemma}[Nonnegative representation]\label{lem:quadratic-representation}
For every $f\in\mathcal Q_d$, there are nonnegative coefficients $c_*,u_i,v_i,W_{ij}$ such that
\begin{equation}\label{eq:quad-representation}
 f(z)=c_*+\sum_i\bigl(u_i z_i+v_i(1-z_i)\bigr)
      +\sum_{i\ne j}W_{ij}z_i(1-z_j)
      +\sum_i d_i z_i(1-z_i),
\end{equation}
where the $d_i$ are those in \eqref{eq:quad-form}. The representation is computable by one source--sink maximum-flow calculation on a graph with $d+2$ vertices and $O(d^2)$ arcs.
\end{lemma}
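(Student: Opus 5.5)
The plan is to rewrite each bilinear term $-w_{ij}z_iz_j$ as a directed cut plus unary terms, to absorb the diagonal terms $-d_iz_i^2$ exactly, and then to run one maximum-flow computation to turn the remaining signed unary coefficients into nonnegative ones plus a constant equal to the vertex minimum.

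\textbf{Algebraic rewrite.} First, $-d_iz_i^2=d_iz_i(1-z_i)-d_iz_i$, so the diagonal part is exactly the stated term $d_iz_i(1-z_i)$ with the original $d_i$, plus a unary correction. Second, for $i<j$,
\[
 -w_{ij}z_iz_j=w_{ij}z_i(1-z_j)-w_{ij}z_i .
\]
This yields \eqref{eq:quad-representation} with $W_{ij}=w_{ij}\ge0$ for $i<j$ and $W_{ji}=0$, except that the unary coefficients $a_i':=a_i-d_i-\sum_{j>i}w_{ij}$ may have either sign. Write $a_i'z_i=(a_i')_+z_i+(a_i')_-(1-z_i)-(a_i')_-$ with $(s)_-=\max\{-s,0\}$. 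This gives nonnegative $u_i,v_i$ and a constant $c'=c_0-\sum_i(a_i')_-$, which may be negative.

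\textbf{Making the constant nonnegative via max-flow.} Build the standard graph on $\{s,t\}\cup[d]$. Put arcs $s\to i$ with capacity $v_i$ (paid when $z_i=0$), arcs $i\to t$ with capacity $u_i$ (paid when $z_i=1$), and arcs $i\to j$ with capacity $W_{ij}$ (paid when $z_i=1,z_j=0$). At a vertex $z\in\{0,1\}^d$, the quantity $f(z)-c'$ equals the capacity of the cut whose source side is $\{s\}\cup\{i:z_i=0\}$. Compute a maximum flow of value $\Phi$. The residual capacities represent the same pseudo-boolean function minus $\Phi$, by flow decomposition: pushing $\delta$ along a path $s\to i_1\to\cdots\to i_r\to t$ uses the identity
\[
 v_{i_1}(1-z_{i_1})+\sum_k z_{i_k}(1-z_{i_{k+1}})+z_{i_r}
 =1+\sum_k z_{i_{k+1}}(1-z_{i_k})
\]
as multilinear polynomials. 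The key check is that this identity holds as a polynomial identity on the whole cube and not only at the vertices. Each term is multilinear in distinct variables and each side telescopes, so this holds. Reverse residual arcs then contribute nonnegative directed-cut terms $W_{ji}$, and the constant increases to $c_*=c'+\Phi$.

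\textbf{Sign of $c_*$, the main obstacle.} Since $f(z)-c_*$ equals the residual cut function, and by max-flow/min-cut some cut has zero residual capacity, $c_*=\min_{\{0,1\}^d}f$. This minimum is $\ge0$ because $f\ge0$, so all coefficients are nonnegative. The graph has $d+2$ vertices and $O(d^2)$ arcs. The obstacle is the bookkeeping of the residual graph. Any arc pair $W_{ij},W_{ji}$ appearing in both directions is allowed, since both are nonnegative. The diagonal terms $d_i$ never enter the flow, so they survive unchanged, as the statement requires.
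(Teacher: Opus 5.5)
Your argument is correct and is essentially the paper's proof. You use the same graph, with $a_i'=a_i-d_i-\sum_{j>i}w_{ij}$ routed to $t$ or from $s$ according to its sign, the same constant $c'$, one maximum flow with residual arcs read off as unary and directed-cut terms, and $c_*=\min_{\{0,1\}^d}f\ge0$ from a zero-capacity minimum cut. The one difference is how you verify $E=\Phi+E_{\rm res}$: you do it as a polynomial identity path by path (the original graph is acyclic, so the flow decomposition has no cycles), whereas the paper checks it at vertices by counting net flow across each cut and then extends by multilinearity.

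Two slips need fixing. With your labels (arc $s\to i$ paid when $z_i=0$, and so on), the source side must be $\{s\}\cup\{i:z_i=1\}$, not $\{s\}\cup\{i:z_i=0\}$. Also, the factor $v_{i_1}$ in your unit-path identity must be deleted, because with it the identity is false.
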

\begin{proof}
Subtracting the concave diagonal terms gives
\[
 F(z):=f(z)-\sum_i d_i z_i(1-z_i)
      =c_0+\sum_i(a_i-d_i)z_i-\sum_{i<j}w_{ij}z_i z_j .
\]
The polynomial $F$ is multilinear and agrees with $f$ at every vertex of the cube, so its vertex values are nonnegative. A multilinear polynomial at $z\in[0,1]^d$ equals the average of its vertex values under independent Bernoulli$(z_i)$ coordinates, so $F\ge0$ on the cube.

Introduce a source $s$ and a sink $t$ with fixed labels $z_s=1$ and $z_t=0$, and say that an arc $i\to j$ is cut when $z_i=1$ and $z_j=0$. For each $i<j$, add an arc $i\to j$ of capacity $w_{ij}$; its cut polynomial is $w_{ij}z_i(1-z_j)$. Put
\[
 \lambda_i=a_i-d_i-\sum_{j>i}w_{ij}.
\]
If $\lambda_i\ge0$, add the arc $i\to t$ with capacity $\lambda_i$, whose cut polynomial is $\lambda_iz_i$; if $\lambda_i<0$, add $s\to i$ with capacity $-\lambda_i$, whose cut polynomial is $-\lambda_i(1-z_i)$. Let $E(z)$ be the sum of these cut polynomials and
\[
 C=c_0+\sum_{i:\lambda_i<0}\lambda_i .
\]
Expanding, the coefficient of $z_i$ in $E$ is $\sum_{j>i}w_{ij}+\lambda_i=a_i-d_i$, the coefficient of $z_iz_j$ is $-w_{ij}$, and the constant is $-\sum_{i:\lambda_i<0}\lambda_i$. Hence $F(z)=C+E(z)$ as polynomials.

Take a maximum $s$--$t$ flow of value $\nu$, and let $E_{\rm res}$ be the cut polynomial of the residual graph. For a binary $z$, the arcs leaving $S=\{s\}\cup\{i:z_i=1\}$ in the residual graph have total capacity equal to the original capacity $E(z)$, minus the flow leaving $S$, plus the flow entering $S$. The net flow out of every $s$--$t$ cut is $\nu$, so $E(z)=\nu+E_{\rm res}(z)$ at every vertex. Since $E_{\rm res}\ge0$ and a minimum cut has residual capacity zero,
\[
 c_*:=C+\nu=\min_{z\in\{0,1\}^d}F(z)\ge0 .
\]
Residual arcs into the source or out of the sink are never cut, because $z_s=1$ and $z_t=0$. Every other residual arc of positive capacity is an original arc or the reversal of an original arc that carries positive flow; in particular, positive residual capacity joins two variable vertices only if an original arc of positive capacity joins them. Residual arcs $i\to t$, $s\to i$, and $i\to j$ contribute terms $u_i z_i$, $v_i(1-z_i)$, and $W_{ij}z_i(1-z_j)$ with nonnegative coefficients. Therefore $F(z)=c_*+\sum_i(u_iz_i+v_i(1-z_i))+\sum_{i\ne j}W_{ij}z_i(1-z_j)$ at every vertex, and since both sides are multilinear, the identity holds on the whole cube. Adding back $\sum_i d_i z_i(1-z_i)$ gives \eqref{eq:quad-representation}. The graph has $d+2$ vertices and at most $\binom d2+d$ arcs.
\end{proof}

For a fixed representation \eqref{eq:quad-representation}, define
\begin{equation}\label{eq:quad-majorant}
 C_f(x)=c_*+\sum_i\bigl(u_i x_i+v_i(1-x_i)\bigr)
       +\sum_{i\ne j}W_{ij}\min\{x_i,1-x_j\}
       +\sum_i d_i x_i(1-x_i).
\end{equation}
The function $C_f$ is concave, and $f\le C_f$ on the cube because $z_i(1-z_j)\le\min\{z_i,1-z_j\}$ for $z_i,z_j\in[0,1]$. A supergradient of $C_f$ at $x$ is
\begin{equation}\label{eq:quad-supergradient}
 s_f(x)=u-v+\sum_{i\ne j}W_{ij}\xi_{ij}(x)
             +\sum_i d_i(1-2x_i)e_i,
 \qquad
 \xi_{ij}(x)=
 \begin{cases}
 e_i,&x_i\le1-x_j,\\
 -e_j,&x_i>1-x_j,
 \end{cases}
\end{equation}
since $\xi_{ij}(x)$ is the gradient of an affine function attaining $\min\{x_i,1-x_j\}$ at $x$.

For the quadratic action $h_Q(x)=x\wedge\tfrac12\one$ we prove
\begin{equation}\label{eq:quad-half-comparison}
 \tfrac12 f(o)-f(h_Q(x))\le\tfrac12\ip{s_f(x)}{o-x}
 \qquad(x,o\in[0,1]^d).
\end{equation}
\begin{proof}[Completion of the quadratic part of Theorem~\ref{thm:polynomial}]
Write $p=h_Q(x)$. Since $\zero\le p\le x$, down-closedness gives $p\in K$ whenever $x\in K$. For every directed term in \eqref{eq:quad-representation},
\begin{equation}\label{eq:quad-directed-bound}
 p_i(1-p_j)\ge\frac12\min\{x_i,1-x_j\}.
\end{equation}
If $x_i\le1/2$, then $p_i=x_i$ and $1-p_j\ge1/2$, so the left side is at least $x_i/2$. If $x_i>1/2$, then $p_i=1/2$ and $1-p_j\ge1-x_j$, so it is at least $(1-x_j)/2$. The remaining terms satisfy
\[
 p_i\ge\frac{x_i}{2},\qquad
 1-p_i\ge1-x_i,\qquad
 p_i(1-p_i)\ge x_i(1-x_i);
\]
the last holds with equality if $x_i\le1/2$, and otherwise $p_i(1-p_i)=1/4\ge x_i(1-x_i)$. Since all coefficients in \eqref{eq:quad-representation} are nonnegative, summing these bounds gives
\begin{equation}\label{eq:quad-action-majorant}
 f(h_Q(x))\ge\frac12 C_f(x).
\end{equation}
Concavity of $C_f$ and the supergradient inequality give
\[
 f(o)\le C_f(o)\le C_f(x)+\ip{s_f(x)}{o-x}.
\]
Multiplying by $1/2$ and using \eqref{eq:quad-action-majorant} proves \eqref{eq:quad-half-comparison}, which is \eqref{eq:polynomial-comparison} with $m=2$.

For the field bound, apply the construction to $F=f-\min f$, which has the same coefficients $u,v,W,d$. By \eqref{eq:quad-supergradient},
\[
 \norm{s_f(x)}_2\le\norm{s_f(x)}_1
 \le\sum_i(u_i+v_i)+\sum_{i\ne j}W_{ij}+\sum_i d_i .
\]
Evaluating the representation of $F$ at $\tfrac12\one$ gives
$F(\tfrac12\one)=c_*+\tfrac12\sum_i(u_i+v_i)+\tfrac14\sum_{i\ne j}W_{ij}+\tfrac14\sum_id_i$, which is at least one quarter of the right side. Hence $\norm{s_f(x)}_2\le4F(\tfrac12\one)$. The field is computed from one maximum flow and the active-branch rule in \eqref{eq:quad-supergradient}. Since $F=f-\min f$ gives $F(\tfrac12\one)=f(\tfrac12\one)-\min f$, this bound reads $\norm{s_f(x)}_2\le\Lambda_2\bigl(f(\tfrac12\one)-\min f\bigr)$ with $\Lambda_2=4$, which is the first inequality in \eqref{eq:polynomial-field-bound} for $m=2$, and the second follows as at the end of Appendix~\ref{app:cubic-assembly}.
\end{proof}

\subsection{Cubic components}\label{app:cubic}\label{app:cubic-rounding}
For the cubic part of Theorem~\ref{thm:polynomial}, the concave majorant is built from the components
\begin{align}
 D_{ij}(z)&=z_i(1-z_j),&C_{D_{ij}}(x)&=\min\{x_i,1-x_j\},\label{eq:cubic-D}\\
 H_{i;jk}(z)&=(1-z_i)(z_j+z_k-z_jz_k),&C_{H_{i;jk}}(x)&=\min\{1-x_i,x_j+x_k\},\label{eq:cubic-H}\\
 J_{i;jk}(z)&=z_i(1-z_jz_k),&C_{J_{i;jk}}(x)&=\min\{x_i,2-x_j-x_k\},\label{eq:cubic-J}
\end{align}
together with constants and the unary terms $z_i$ and $1-z_i$, each of which is its own majorant. We call $D$, $H$, and $J$ the directed, coverage, and product components. The displayed minima are concave and majorize the corresponding components: for $z\in[0,1]^d$, $uv\le\min\{u,v\}$ whenever $u,v\in[0,1]$,
$z_j+z_k-z_jz_k\le\min\{1,z_j+z_k\}$, and
$1-z_jz_k\le2-z_j-z_k$ because $(1-z_j)(1-z_k)\ge0$.
The decomposition below combines the classical cubic quadratization with graph-cut representations \citep{billionnet1985,kolmogorov2004}, and tracks the nonnegativity of the coefficients and the support of the auxiliary variables.

\subsection{Nonnegative decomposition of multilinear cubics}\label{app:cubic-decomposition}
We first decompose functions of three binary variables and then reduce an arbitrary multilinear cubic to such functions.

For a nonnegative multilinear quadratic, Lemma~\ref{lem:quadratic-representation} gives a nonnegative sum of a constant, unary terms, and directed cuts, with all $d_i=0$. Its proof shows that positive residual capacity joins two variable vertices only if they were joined in the original graph, and this support property keeps the auxiliary variables of the cubic construction separate.

\Needspace{8\baselineskip}
\begin{lemma}[Three-variable decomposition]\label{lem:cubic-ternary}
The multilinear extension of every nonnegative submodular function on three binary variables is a nonnegative sum of constants, unary terms, directed cuts, and the terms $H$ and $J$ in \eqref{eq:cubic-H}--\eqref{eq:cubic-J}.
\end{lemma}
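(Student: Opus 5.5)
We would work directly with the coefficients. Write the multilinear extension as
\[
 f(z)=c+\sum_i a_iz_i+\sum_{i<j}b_{ij}z_iz_j+g\,z_1z_2z_3 .
\]
On $\{0,1\}^3$, submodularity says that the second difference in $(i,j)$ is nonpositive at both values of the third variable $z_k$. That is, $b_{ij}\le0$ and $b_{ij}+g\le0$ for every pair. We first replace $f$ by $f-\min_{\{0,1\}^3}f$, which changes only the constant, and we fix a minimizing vertex $z^*$ with $f(z^*)=0$. The idea is to remove the cubic monomial with one $H$ or one $J$ term. What remains is a multilinear quadratic with nonpositive pair coefficients, to which Lemma~\ref{lem:quadratic-representation} applies with all $d_i=0$.

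\emph{Case $g\ge0$.} Expanding \eqref{eq:cubic-H} gives $H_{i;jk}=z_j+z_k-z_jz_k-z_iz_j-z_iz_k+z_iz_jz_k$. Hence $R=f-gH_{i;jk}$ is a multilinear quadratic, and each of its pair coefficients equals $b_{\cdot\cdot}+g\le0$. We choose $i$ so that $H_{i;jk}(z^*)=0$. If $z^*$ has a coordinate equal to one, we take $i$ to be that coordinate; if $z^*=\zero$, every choice works.

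\emph{Case $g<0$.} Here \eqref{eq:cubic-J} gives $g\,z_1z_2z_3=|g|J_{i;jk}-|g|z_i$. Hence $R=f-|g|J_{i;jk}$ is a multilinear quadratic whose pair coefficients are still $b_{\cdot\cdot}\le0$. We choose $i$ with $J_{i;jk}(z^*)=0$, namely $z^*_i=0$, or else $z^*=\one$.

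In both cases $R$ is a submodular multilinear quadratic. The max-flow construction in the proof of Lemma~\ref{lem:quadratic-representation} does not use nonnegativity until the last step. It writes $R=\min_{\{0,1\}^3}R+{}$(a nonnegative sum of unary terms and directed cuts), and this representation is valid whatever the sign of the minimum. It then remains only to show that $\min_{\{0,1\}^3}R\ge0$, so that the constant is an admissible nonnegative term. Adding back $g H$ or $|g|J$ would then prove the lemma.

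The main obstacle is exactly this nonnegativity. The choice of $i$ only gives $R(z^*)=0$. At other vertices, subtracting the nonnegative cubic term could make $R$ negative. To handle this, we would first try to use the freedom left in the construction. One option is to split the cubic coefficient among the three choices of $i$. Another is to subtract only part of $|g|$ through $H$ or $J$ and absorb the rest into directed cuts, using $b_{ij}+g\le0$. We would then check the eight vertex inequalities by hand.

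If that bookkeeping fails, the fallback is a finite cone argument. The nonnegative submodular functions on $\{0,1\}^3$ form a polyhedral cone in $\R^8$, cut out by nonnegativity and six local submodularity inequalities. It suffices to list its extreme rays, up to the symmetries of permuting the three variables, and to check that each one is a constant, a unary term, a directed cut, an $H_{i;jk}$ or a $J_{i;jk}$. Multilinearity then transfers the vertex identity to the whole cube. We expect this enumeration of extreme rays, or equivalently the nonnegativity check for $R$, to be the only step requiring real work.
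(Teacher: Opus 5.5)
Your setup is correct: submodularity gives $b_{ij}\le0$ and $b_{ij}+g\le0$, and your expansions of $H_{i;jk}$ and $J_{i;jk}$ are right. Lemma~\ref{lem:quadratic-representation} then reduces everything to showing that the residual quadratic is nonnegative at the eight vertices. But that vertex check is the whole content of the lemma. You do not carry it out, and your primary construction fails it.

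Take $f=z_1+z_2+z_3-3z_1z_2z_3$. It is nonnegative, with vertex values $0,1,2,0$ according to the number of ones. It is submodular, since $b_{ij}=0$ and $g=-3$, and its minimizers are $\zero$ and $\one$. For every $i$, $f-3J_{i;jk}$ equals $1-3=-2$ at $e_i$. Complementing all variables gives $\sum_iH_{i;jk}$, which has $g=3$, and there $f-3H_{i;jk}$ equals $-2$ at $e_j+e_k$ for every $i$.

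Your second repair is not available at all. Constants, unary terms and directed cuts contain no cubic monomial, so the cubic coefficient must equal the total $H$-weight minus the total $J$-weight. None of it can be absorbed into directed cuts. The extreme-ray enumeration would work in principle, since the cone is pointed and polyhedral, but without the actual list of rays it is only a plan.

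The missing step is your first repair, splitting the cubic coefficient, made quantitative; this is what the paper does.
\begin{itemize}
\item Notation: for $g=-\beta<0$, write $b=f(\zero)$, $v=f(\one)$, $s_i=f(e_i)$ and $p_i=f(\one-e_i)$. M\"obius inversion gives $\beta=b-\sum_is_i+\sum_ip_i-v$.
\item Split: subtract $\sum_i\lambda_iJ_{i;jk}$ with $\lambda_i\ge0$ and $\sum_i\lambda_i=\beta$. Since $J_{i;jk}=z_i-z_1z_2z_3$, this removes the cubic term and leaves the pair coefficients unchanged.
\item Residual values: $b$ at $\zero$, $v$ at $\one$, $s_i-\lambda_i$ at $e_i$, and $p_i-(\beta-\lambda_i)$ at $\one-e_i$. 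So you need $\max\{0,\beta-p_i\}\le\lambda_i\le s_i$ for each $i$.
\item Feasibility: substitute the submodularity bounds $p_i\le s_j+s_k-b$ into the formula for $\beta$ and use nonnegativity. This gives $\beta\le s_i+p_i$, $\beta\le\sum_is_i$, $\beta\le p_i+p_j$ for $i\ne j$, and $2\beta\le\sum_ip_i$. These say exactly that each interval is nonempty, the upper endpoints sum to at least $\beta$, and the lower endpoints sum to at most $\beta$, so a feasible split exists.
\item The case $g>0$ follows by complementation, which turns each $J_{i;jk}$ into $H_{i;jk}$.
\end{itemize}
In the example above these constraints force $\lambda_1=\lambda_2=\lambda_3=1$. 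So a genuine split across all three indices, rather than a good choice of a single index, is unavoidable.
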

\begin{proof}
Write $G_S$ $(S\subseteq\{1,2,3\})$ for the eight vertex values and $\tau$ for the cubic coefficient of the multilinear extension. If $\tau=0$, the extension is a nonnegative multilinear quadratic whose quadratic coefficients $G_{\{i,j\}}-G_{\{i\}}-G_{\{j\}}+G_\varnothing$ are nonpositive by submodularity, and Lemma~\ref{lem:quadratic-representation} applies. Suppose $\tau=-\beta<0$, and put
\[
 b=G_\varnothing,\qquad v=G_{\{1,2,3\}},\qquad
 s_i=G_{\{i\}},\qquad p_i=G_{\{1,2,3\}\setminus\{i\}}.
\]
By M\"obius inversion, $\tau=\sum_S(-1)^{3-|S|}G_S$, so
\begin{equation}\label{eq:cubic-ternary-beta}
 \beta=b-\sum_i s_i+\sum_i p_i-v.
\end{equation}
For distinct $i,j,k$, submodularity gives $p_i=G_{\{j,k\}}\le s_j+s_k-b$. Together with nonnegativity of the vertex values, these bounds imply
\begin{equation}\label{eq:cubic-ternary-intervals}
 \beta\le s_i+p_i,\qquad
 \beta\le\sum_i s_i,\qquad
 \beta\le p_i+p_j\ (i\ne j),\qquad
 2\beta\le\sum_i p_i.
\end{equation}
For the first inequality, substituting the bounds for $p_j$ and $p_k$ into \eqref{eq:cubic-ternary-beta} gives $\beta\le s_i+p_i-b-v$. For the third, substituting the bound for $p_k$ gives $\beta\le p_i+p_j-s_k-v$. Substituting all three bounds gives $\beta\le\sum_is_i-2b-v$, which is the second; and bounding $\sum_is_i$ below by $(\sum_ip_i+3b)/2$ gives $2\beta\le\sum_ip_i-b-2v$, which is the fourth.

Next we choose numbers $\lambda_i$ with
\begin{equation}\label{eq:cubic-lambda}
 \max\{0,\beta-p_i\}\le\lambda_i\le s_i,
 \qquad\sum_i\lambda_i=\beta.
\end{equation}
Each interval is nonempty by the first inequality in \eqref{eq:cubic-ternary-intervals}, and the upper endpoints sum to at least $\beta$ by the second. The lower endpoints sum to at most $\beta$: if at most one of them is positive this follows from $p_i\ge0$, if two are positive from $\beta\le p_i+p_j$, and if three are positive from $2\beta\le\sum_i p_i$. Starting at the lower endpoints and distributing the remaining mass within the intervals in a fixed order gives \eqref{eq:cubic-lambda}.

Subtract $\sum_i\lambda_iJ_{i;jk}$, where $\{j,k\}=\{1,2,3\}\setminus\{i\}$. Since $J_{i;jk}=z_i-z_1z_2z_3$, the cubic coefficient of the residual is $\tau+\beta=0$, and its quadratic coefficients are unchanged and nonpositive. At the vertex $\{i\}$ only $J_{i;jk}$ is nonzero, so the residual value is $s_i-\lambda_i\ge0$. At the opposite pair $\{j,k\}$, $J_{j;ik}$ and $J_{k;ij}$ equal one and $J_{i;jk}$ vanishes, so the residual value is $p_i-(\beta-\lambda_i)\ge0$. All $J$ terms vanish at the empty and full sets, where the residual values remain $b$ and $v$. The residual is therefore a nonnegative multilinear DR-submodular quadratic, to which Lemma~\ref{lem:quadratic-representation} applies.

If $\tau>0$, apply the preceding case to $z\mapsto G(\one-z)$. Complementing all coordinates preserves nonnegativity and submodularity and changes the sign of the cubic coefficient. Undoing the complementation turns each $J_{i;jk}$ into $H_{i;jk}$, exchanges the unary terms $z_i$ and $1-z_i$, and reverses the directed cuts, which proves the claim.
\end{proof}

The auxiliary variables of the reduction can then be eliminated one at a time.
\begin{lemma}[Cubic decomposition]\label{lem:cubic-multilinear}
Every nonnegative multilinear DR-submodular polynomial of degree at most three is a nonnegative sum of the components in \eqref{eq:cubic-D}--\eqref{eq:cubic-J}, together with constants and unary terms. The decomposition can be constructed from its coefficients in polynomially many arithmetic operations.
\end{lemma}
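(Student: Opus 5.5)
The approach is the classical quadratization of negative cubic monomials \citep{billionnet1985}, followed by the graph-cut representation of Lemma~\ref{lem:quadratic-representation} and the elimination of the auxiliary variables. The output is a nonnegative sum of directed cuts in which each auxiliary vertex touches only the three variables of a single triple. Grouping the residual terms by auxiliary vertex then leaves three-variable functions, and Lemma~\ref{lem:cubic-ternary} decomposes each of these into the components \eqref{eq:cubic-D}--\eqref{eq:cubic-J}.

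Write the polynomial as $f(z)=c_0+\sum_ia_iz_i+\sum_{i<j}b_{ij}z_iz_j+\sum_{i<j<k}\tau_{ijk}z_iz_jz_k$. Because $f$ is multilinear and DR-submodular, every mixed second derivative
\[
 \partial_{ij}f(z)=b_{ij}+\sum_k\tau_{ijk}z_k
\]
is nonpositive on the cube. Evaluating at vertices shows that $b_{ij}\le0$ and that $b_{ij}+\sum_{k}(\tau_{ijk})_+\le0$. This is the budget that pays for positive cubic coefficients.

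Triples are handled according to the sign of the cubic coefficient. For $\tau_{ijk}<0$, write
\[
 -z_iz_jz_k=\min_{w\in\{0,1\}}\bigl(-\,w\,(z_i+z_j+z_k-2)\bigr),
\]
and express the result in cut form. For $\tau_{ijk}>0$, complement the variables, or equivalently use the symmetric gadget $\max$ form with $w$ attached to $1-z$. The budget $b_{ij}+\sum_k(\tau_{ijk})_+\le0$ ensures that the leftover pairwise terms stay nonpositive, so the lifted function $E(z,w)$ is a submodular quadratic pseudo-Boolean function with
\[
 f(z)=\min_w E(z,w)
\]
at every vertex.

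The cleaner route, and the one I will take, avoids the min over $w$ and works directly by vertex values. Split $f$ into a sum of three-variable pieces $G_{ijk}$ plus a pairwise remainder $R$. Each $G_{ijk}$ receives the cubic term and a fraction of the adjacent pairwise coefficients, allocated so that its own quadratic coefficients are nonpositive; the constraint $\partial_{ij}f\le0$ on all vertices provides this allocation. Run the maximum flow of Lemma~\ref{lem:quadratic-representation} on the graph encoding $R$ together with the quadratic parts of the pieces. The residual then redistributes the constant so that every piece and the remainder become nonnegative on $\{0,1\}^d$. By the support property noted before Lemma~\ref{lem:cubic-ternary}, no residual arc joins variables that were not already joined, so each piece remains a function of its own three variables. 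A nonnegative submodular function on three binary variables is then decomposed by Lemma~\ref{lem:cubic-ternary}. Multilinearity transfers the vertex identity to the whole cube. All steps use one flow and $O(d^3)$ local computations, which gives the polynomial bound.

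The main obstacle is the step that makes each three-variable piece individually nonnegative, and not merely the sum. The min-cut value $c_*$ is the global minimum, and distributing nonnegativity requires the auxiliary-vertex formulation. I would therefore introduce one auxiliary vertex $w_{ijk}$ per triple with $\tau_{ijk}\neq0$, adjacent only to $i,j,k$ and the terminals. I would compute the residual graph of $E$ and define
\[
 G_{ijk}(z_i,z_j,z_k)=\min_{w_{ijk}}\bigl[\text{residual arcs incident to }w_{ijk}\bigr].
\]
Minimizing a nonnegative submodular function over one variable yields a nonnegative submodular function. The remaining residual arcs among the original variables form a nonnegative quadratic. Since each $w_{ijk}$ appears only in its own arcs, the minimization over $w$ separates, and the identity $f=c_*+\sum G_{ijk}+\text{(cuts, unaries)}$ follows at the vertices.
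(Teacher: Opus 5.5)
Your final plan is essentially the paper's proof and is correct. It uses one auxiliary vertex per nonzero cubic coefficient, and one maximum flow on the lifted quadratic $E$, which is nonnegative on the cube because $E(z,w)\ge\min_{w'}E(z,w')=f(z)\ge0$ at vertices and $E$ is multilinear. It then confines each auxiliary's residual arcs to its own triple by the support property, minimizes over each auxiliary separately, and finishes with Lemma~\ref{lem:cubic-ternary}. For $\tau_{ijk}>0$, the gadget should be the min form $\tau_{ijk}z_iz_jz_k=\tau_{ijk}(z_iz_j+z_iz_k+z_jz_k)+\tau_{ijk}\min_{w\in\{0,1\}}w(1-z_i-z_j-z_k)$ rather than a ``max form''.

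Drop the ``cleaner route'' of your fourth paragraph, even though you say you will take it. A flow on a graph encoding only the quadratic parts never sees the cubic coefficients, so nothing forces the three-variable pieces to be individually nonnegative. Residual arcs between two original variables also cannot be attributed to a single triple, and this is exactly why the auxiliary vertices of your last paragraph are needed.
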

\begin{proof}
Write
\[
 F(z)=b_0+\sum_i a_i z_i+\sum_{i<j}b_{ij}z_i z_j
       +\sum_{|T|=3}t_T\prod_{i\in T}z_i .
\]
For $i\ne j$, the mixed Hessian entry $\partial_i\partial_jF(z)=b_{ij}+\sum_{T\supseteq\{i,j\}}t_Tz_{T\setminus\{i,j\}}$ is affine, and its maximum over the cube is attained by setting $z_{T\setminus\{i,j\}}=1$ exactly when $t_T>0$. Its nonpositivity on the cube is therefore equivalent to
\[
 b_{ij}+\sum_{\substack{T\supseteq\{i,j\}\\t_T>0}}t_T\le0 .
\]
Set $w_{ij}=-b_{ij}-\sum_{T\supseteq\{i,j\},\,t_T>0}t_T\ge0$.
For binary $z$, the following identities replace a cubic term by a quadratic in one auxiliary binary variable:
\begin{align}
 -\prod_{i\in T}z_i
 &=\min_{u_T\in\{0,1\}}u_T\left(2-\sum_{i\in T}z_i\right),\label{eq:cubic-quadratization-negative}\\
 \prod_{i\in T}z_i-\sum_{\{i,j\}\subset T}z_i z_j
 &=\min_{u_T\in\{0,1\}}u_T\left(1-\sum_{i\in T}z_i\right).\label{eq:cubic-quadratization-positive}
\end{align}
If $\kappa\in\{0,1,2,3\}$ coordinates of $z_T$ equal one, both sides of \eqref{eq:cubic-quadratization-negative} equal $-1$ for $\kappa=3$ and $0$ otherwise, and both sides of \eqref{eq:cubic-quadratization-positive} equal $0,0,-1,-2$ for $\kappa=0,1,2,3$. These are the classical cubic quadratization identities \citep{billionnet1985,kolmogorov2004}.

Write $t_T\prod_Tz_i=t_T\sum_{\{i,j\}\subset T}z_iz_j+t_T(\prod_Tz_i-\sum_{\{i,j\}\subset T}z_iz_j)$ when $t_T>0$ and $t_T\prod_Tz_i=|t_T|(-\prod_Tz_i)$ when $t_T<0$. The pair terms produced in the first case combine with $b_{ij}$ into $-w_{ij}$, so the two identities give $F(z)=\min_u Q(z,u)$ on binary vectors, where
\begin{align}\label{eq:cubic-auxiliary-Q}
 Q(z,u)={}&b_0+\sum_i a_i z_i-\sum_{i<j}w_{ij}z_i z_j\nonumber\\
 &+\sum_{T:t_T>0}t_Tu_T\left(1-\sum_{i\in T}z_i\right)
  +\sum_{T:t_T<0}|t_T|u_T\left(2-\sum_{i\in T}z_i\right).
\end{align}
Every quadratic coefficient of $Q$ is nonpositive, so $Q$ is a multilinear DR-submodular quadratic. Each auxiliary variable interacts only with the three original variables in its set $T$, and no two auxiliary variables interact. For every binary $(z,u)$,
\[
 Q(z,u)\ge\min_{u'}Q(z,u')=F(z)\ge0,
\]
and multilinearity makes $Q$ nonnegative on its whole cube. Applying Lemma~\ref{lem:quadratic-representation} to $Q$ gives a residual representation with nonnegative coefficients that, by the support property, contains no term joining two auxiliary variables.

Group the residual terms involving a fixed auxiliary $u=u_T$ into
\begin{equation}\label{eq:cubic-auxiliary-block}
 E_T(z_T,u)=Uu+V(1-u)
       +\sum_{j\in T}\bigl[A_jz_j(1-u)+B_ju(1-z_j)\bigr],
 \quad U,V,A_j,B_j\ge0.
\end{equation}
On binary $z_T$, define $\psi_T(z_T)=\min_{u\in\{0,1\}}E_T(z_T,u)$, which is nonnegative. To see that it is submodular, identify binary vectors with sets and, for $A,B\subseteq T$, take minimizers $u_A,u_B$. Since $E_T$ is a nonnegative combination of unary terms and directed cuts, it is submodular on binary $(z_T,u)$, so
\begin{align}\label{eq:ternary-partial-minimum}
 \psi_T(A)+\psi_T(B)
 &\ge E_T(A\cup B,u_A\vee u_B)+E_T(A\cap B,u_A\wedge u_B)\\
 &\ge\psi_T(A\cup B)+\psi_T(A\cap B).
\end{align}
Since the auxiliaries do not interact, $\min_uQ(z,u)$ separates into one minimization per auxiliary. On binary vectors, $F$ is therefore the sum of the residual constant, the residual unary and directed-cut terms on original variables, and the functions $\psi_T$. Replacing each $\psi_T$ by its multilinear extension, both sides are multilinear and agree at every vertex, so the identity holds on the whole cube. Applying Lemma~\ref{lem:cubic-ternary} to each $\psi_T$ completes the decomposition.

There are at most $\binom d3$ auxiliaries, each with three edges to original variables, in addition to $O(d^2)$ edges between original variables. The residual calculation uses one graph of size $O(d^3)$, and each ternary decomposition has constant size, which proves the construction claim.
\end{proof}

\subsection{Polarization and preservation of the minimum}\label{app:cubic-polarization}
Monomials with repeated factors are removed by replacing each coordinate with three interchangeable copies, called clones. For total degree at most three, all Hessian entries are affine, and the DR property passes to the polarized polynomial.

For $f(x)=\sum_{|\nu|\le3}a_\nu x^\nu$, define
\begin{equation}\label{eq:cubic-polarization}
 \widetilde f(z)=\sum_{|\nu|\le3}a_\nu
 \prod_{i=1}^d\frac{e_{\nu_i}(z_{i,1},z_{i,2},z_{i,3})}{\binom3{\nu_i}}.
\end{equation}
Here $\nu\in\mathbb N_0^d$, $|\nu|=\sum_i\nu_i$, and $e_k$ is the elementary symmetric polynomial of degree $k$, with $e_0=1$; thus $e_1$ is the sum of the three variables, $e_2$ the sum of their pairwise products, and $e_3$ their product. The diagonal map $L:[0,1]^d\to[0,1]^{3d}$ is $(Lx)_{i,r}=x_i$. Since $e_k(t,t,t)=\binom3kt^k$,
\begin{equation}\label{eq:cubic-diagonal-identity}
 \widetilde f(Lx)=f(x).
\end{equation}
\begin{lemma}[Polarization]\label{lem:cubic-polarization}
If $f$ is a DR-submodular polynomial of total degree at most three, then $\widetilde f$ is multilinear, has total degree at most three, is DR-submodular, and satisfies
\begin{equation}\label{eq:cubic-minimum-preservation}
 \min_{z\in[0,1]^{3d}}\widetilde f(z)
 =\min_{x\in[0,1]^d}f(x).
\end{equation}
In particular, nonnegativity of $f$ implies nonnegativity of $\widetilde f$.
\end{lemma}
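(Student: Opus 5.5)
Multilinearity and the degree bound come first, since they are read off the definition. In \eqref{eq:cubic-polarization} each factor $e_{\nu_i}(z_{i,1},z_{i,2},z_{i,3})$ has degree at most one in each clone $z_{i,r}$. Different original coordinates use disjoint clone triples, so every monomial of $\widetilde f$ is multilinear, and its total degree is $\sum_i\nu_i=|\nu|\le3$.

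For DR-submodularity, Definition~\ref{def:dr} reduces to showing that every Hessian entry of $\widetilde f$ is nonpositive on $[0,1]^{3d}$. The diagonal entries vanish by multilinearity. For a mixed entry $\partial_{(i,r)}\partial_{(j,s)}\widetilde f$ with $(i,r)\ne(j,s)$, the degree is at most three, so this entry is affine in $z$. An affine function is nonpositive on the cube exactly when it is nonpositive at every vertex, or equivalently when its constant term plus the positive parts of its linear coefficients is nonpositive.

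I would compare these coefficients with the Hessian of $f$, which is likewise affine because $\deg f\le3$, and nonpositive on $[0,1]^d$ by assumption. Write $\partial_i\partial_j f(x)=h_{ij,0}+\sum_k h_{ij,k}x_k$. The plan is to show that the mixed Hessian entry of $\widetilde f$ at clones $(i,r),(j,s)$ is itself an explicit nonnegative average of the affine function $\partial_i\partial_jf$.

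Symmetry gives the relation to that average. Because $\widetilde f$ is symmetric within each clone triple, averaging $\widetilde f$ over clone permutations gives $\widetilde f$ back. Differentiating $\widetilde f(Lx)=f(x)$ twice gives
\[
\partial_i\partial_jf(x)=\sum_{r,s}\partial_{(i,r)(j,s)}\widetilde f(Lx).
\]
Away from the diagonal this identity alone is not enough. I therefore expect a monomial-by-monomial computation. For a term $a_\nu x^\nu$, the coefficient that $\widetilde f$ assigns to the clone monomial $z_{(i,r)}z_{(j,s)}z_{(k,t)}$ is $a_\nu$ divided by the number of clone monomials of type $\nu$. Hence each linear coefficient of $\partial_{(i,r)(j,s)}\widetilde f$ in a variable $z_{(k,t)}$ equals $h_{ij,k}$ divided by a positive count (roughly $3$ or $2$ depending on repetitions). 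The constant term equals $h_{ij,0}/c$ for a positive count $c$ as well.

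Vertex nonpositivity of $\partial_i\partial_jf$ reads $h_{ij,0}+\sum_k(h_{ij,k})_+\le0$. Splitting each $h_{ij,k}$ equally among the available clones of $k$ then transfers this to the polarized entry. The \textbf{main obstacle} is exactly this coefficient bookkeeping when $i=j$ or $k\in\{i,j\}$, because fewer free clones of $k$ remain there. I would handle those cases separately, checking that the normalizations $\binom3{\nu_i}$ still make the positive parts sum to at most the corresponding quantity for $f$.

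For \eqref{eq:cubic-minimum-preservation}, the inequality $\min\widetilde f\le\min f$ follows from \eqref{eq:cubic-diagonal-identity}. For the reverse inequality, I would use that $\widetilde f$ is multilinear, so its minimum over the cube is attained at a vertex $z$. At a binary $z$, set $x_i=\tfrac13\sum_rz_{i,r}\in\{0,\tfrac13,\tfrac23,1\}$. It is tempting to claim that symmetrizing each clone triple gives $\widetilde f(z)=f(x)$, but this is false, since $e_k$ at a binary point is not $\binom3kx^k$.

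Instead, the plan is to move within each triple toward the diagonal without increasing $\widetilde f$. Along the direction $e_{(i,r)}-e_{(i,s)}$, the function $\widetilde f$ is a quadratic in the step whose leading coefficient is $-\partial_{(i,r)(i,s)}\widetilde f\ge0$, so it is convex along that line. Symmetry under swapping clones $r$ and $s$ makes this restriction symmetric about the midpoint, so the midpoint is its minimum. Repeated averaging of clone pairs therefore decreases $\widetilde f$ and converges to $Lx$; by continuity, $\widetilde f(Lx)\le\widetilde f(z)$. Then \eqref{eq:cubic-diagonal-identity} gives $\min f\le\min\widetilde f$. Nonnegativity follows immediately from the equality of the minima.
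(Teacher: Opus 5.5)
Your proposal is correct. The DR-submodularity step is in substance the paper's. Once the bookkeeping is done, it gives $\partial_{z_{i,r}}\partial_{z_{j,s}}\widetilde f(z)=\tfrac1c\bigl(h_{ij,0}+\sum_k h_{ij,k}\bar z_k\bigr)$. Here $\bar z_k$ averages the clones of block $k$ that remain available: the two others in blocks $i$ and $j$ when $i\ne j$, the single remaining clone when $k=i=j$, and all three otherwise. The constant is $c=9$ across blocks and $c=6$ within a block. So the per-clone coefficient is $h_{ij,k}/(c\,n_k)$, where $n_k$ is the number of available clones, rather than roughly $h_{ij,k}/3$ or $h_{ij,k}/2$. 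This common factor $c$, shared with the constant term, is what your transfer of $h_{ij,0}+\sum_k(h_{ij,k})_+\le0$ needs. The resulting expression is exactly the paper's identity $\tfrac19\partial_i\partial_jf(y)$, resp.\ $\tfrac16\partial_i^2f(y')$, evaluated at a cube point, so the vertex criterion becomes unnecessary.

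For the minimum you take a genuinely different route. The paper works only at binary clone vectors. With the other counts fixed, the values $v_0,\dots,v_3$ of block $i$ form a concave sequence, because the within-block mixed partials are nonpositive. Each $v_k$ then lies above its chord, and iterating over blocks reaches vertex values of $f$. The paper finishes with multilinearity of $\widetilde f$ and coordinatewise concavity of $f$, which places its minimum at a vertex.

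You use the same sign fact in the complementary direction. Along $e_{(i,r)}-e_{(i,s)}$ the restriction is a convex quadratic symmetric about the midpoint, so pairwise averaging within a block never increases $\widetilde f$. This gives the pointwise bound $\widetilde f(z)\ge f(\bar z)$ on the whole clone cube, with $\bar z$ the vector of block averages. It yields $\min\widetilde f\ge\min f$ at once and dispenses with both the reduction to binary points and the vertex-minimum property of $f$; the paper's argument, in exchange, stays entirely discrete.

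Starting from a vertex is unnecessary, since your argument applies to every $z$. Your limiting step is valid but can be avoided. The restriction is nonincreasing as the two clones approach each other, so you can move a pair straddling the block mean until one of them equals it and then average the other two: two moves per block suffice.
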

\begin{proof}
Multilinearity and the degree bound are immediate from \eqref{eq:cubic-polarization}. For distinct blocks $i,j$ and clones $r,s$,
\begin{equation}\label{eq:cubic-polarization-cross}
 \partial_{z_{i,r}}\partial_{z_{j,s}}\widetilde f(z)
 =\frac19\partial_i\partial_j f(y),
\end{equation}
where $y_i$ is the average of the two other clones in block $i$, $y_j$ is the analogous average in block $j$, and every other coordinate of $y$ is the average of its three clones. For distinct clones $r,s$ of the same block,
\begin{equation}\label{eq:cubic-polarization-diagonal}
 \partial_{z_{i,r}}\partial_{z_{i,s}}\widetilde f(z)
 =\frac16\partial_i^2 f(y'),
\end{equation}
where $y'_i$ is the remaining clone of block $i$ and every other coordinate is its three-clone average. Both identities are linear in $f$, so it suffices to check them on monomials $x^\nu$ with $|\nu|\le3$. For \eqref{eq:cubic-polarization-cross} we may assume $\nu_i,\nu_j\ge1$, so $\nu_i,\nu_j\le2$ and every other exponent is at most one. For $k\in\{1,2\}$, the derivative of $e_k(z_{i,1},z_{i,2},z_{i,3})/\binom3k$ in $z_{i,r}$ is $e_{k-1}$ of the other two clones divided by $\binom3k$, which equals $(k/3)y_i^{k-1}=\frac13\frac{d}{dt}t^k\big|_{t=y_i}$. An exponent $\nu_l\le1$ in another block contributes $y_l^{\nu_l}$. Multiplying the factors gives \eqref{eq:cubic-polarization-cross}. For \eqref{eq:cubic-polarization-diagonal} we may assume $\nu_i\in\{2,3\}$, so every other exponent is at most one. The second derivative of $e_{\nu_i}/\binom3{\nu_i}$ in two distinct clones is $1/3$ for $\nu_i=2$ and the remaining clone $y'_i$ for $\nu_i=3$; these equal $\frac16\frac{d^2}{dt^2}t^{\nu_i}$ at $t=y'_i$, which gives \eqref{eq:cubic-polarization-diagonal}. All evaluation points $y,y'$ lie in the cube, so both derivatives are nonpositive because $f$ is DR-submodular. Second derivatives twice in the same clone vanish by multilinearity. Hence every Hessian entry of $\widetilde f$ is nonpositive, and $\widetilde f$ is DR-submodular.

For \eqref{eq:cubic-minimum-preservation}, consider a binary clone vector. Since $\widetilde f$ is symmetric within each block, its value depends only on the counts $k_i\in\{0,1,2,3\}$ of clones equal to one. With all other counts fixed, write the four values for block $i$ as $v_0,v_1,v_2,v_3$. For $k\in\{0,1\}$, start from a vector with $k$ clones of block $i$ equal to one and switch on two further clones $r,s$; because $\widetilde f$ is multilinear, the resulting second difference equals $\partial_{z_{i,r}}\partial_{z_{i,s}}\widetilde f\le0$. Thus $v_{k+2}-2v_{k+1}+v_k\le0$ for $k=0,1$, so $(v_k)$ is concave and lies above its chord:
\begin{equation}\label{eq:clone-chord}
 v_k\ge\left(1-\frac{k}{3}\right)v_0+\frac{k}{3}v_3
 \qquad(k=0,1,2,3).
\end{equation}
Applying \eqref{eq:clone-chord} to one block after another, every binary value of $\widetilde f$ is at least a convex combination of values with count vectors in $\{0,3\}^d$. By \eqref{eq:cubic-diagonal-identity}, these are values of $f$ at vertices of $[0,1]^d$, so the lower bound is at least $\min_{x\in\{0,1\}^d}f(x)$. Since $\widetilde f$ is multilinear, its values on the clone cube are averages of its binary values, and the same lower bound holds there.

By Lemma~\ref{lem:directional-dr} applied to single-coordinate segments, $f$ is concave in each coordinate, so a coordinate can be moved to zero or one without increasing the value. Repeating this in every coordinate shows that the minimum of $f$ over the cube equals its minimum over the vertices. Hence $\min\widetilde f\ge\min f$, and \eqref{eq:cubic-diagonal-identity} gives $\min\widetilde f\le\min f$, which proves \eqref{eq:cubic-minimum-preservation}.
\end{proof}

\subsection{Two scalar maps with coefficient \texorpdfstring{$8/17$}{8/17}}\label{app:cubic-action}
The baseline comparison uses the two maps of Remark~\ref{rem:cubic-history}:
\begin{equation}\label{eq:cubic-baseline-maps}
\begin{aligned}
 A(t)&=
 \begin{cases}
  t,&0\le t\le1/2,\\
  t/4+3/8,&1/2\le t\le3/4,\\
  \frac94(1-t),&3/4\le t\le1,
 \end{cases}
&
 B(t)&=
 \begin{cases}
  t,&0\le t\le1/4,\\
  t/3+1/6,&1/4\le t\le3/4,\\
  (7t-4)/3,&3/4\le t\le1.
 \end{cases}
\end{aligned}
\end{equation}
The branches agree at the breakpoints ($A(1/2)=1/2$, $A(3/4)=9/16$, $B(1/4)=1/4$, $B(3/4)=5/12$), and each branch satisfies
$0\le A(t),B(t)\le t$ on its interval. Put $h_0=A$, $h_1=B$, and draw one random variable $Z$ with
\begin{equation}\label{eq:cubic-baseline-action}
 \Pr(Z=0)=\frac8{17},\qquad
 \Pr(Z=1)=\frac9{17},\qquad
 p_i(x)=h_Z(x_i)\quad(1\le i\le d).
\end{equation}
The same draw is used for every coordinate, and since $\zero\le p(x)\le x$, the action belongs to every down-closed set containing $x$. We prove the coverage and product inequalities on the whole cube; the directed and unary inequalities are special cases of the coverage inequality.

In the coverage inequality, the head $a$ and the tails $b,c$ can be moved to the plane $a+b+c=1$, where the two branches of the majorant agree.
\begin{lemma}[Coverage inequality for the two-map action]\label{lem:cubic-baseline-coverage}
For $a,b,c\in[0,1]$, define
\begin{align}
 R_0(b,c)&=A(b)+A(c)-A(b)A(c),\nonumber\\
 R_1(b,c)&=B(b)+B(c)-B(b)B(c),\label{eq:cubic-baseline-coverage-slack}\\
 W(a,b,c)&=8(1-A(a))R_0(b,c)+9(1-B(a))R_1(b,c),\nonumber\\
 \Delta_H(a,b,c)&=W(a,b,c)-8\min\{1-a,b+c\}.\nonumber
\end{align}
Then $\Delta_H(a,b,c)\ge0$.
\end{lemma}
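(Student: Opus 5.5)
The plan follows the hint stated just before the lemma: move the head $a$ and the tails $b,c$ onto the plane $a+b+c=1$, where the two branches of the majorant agree, and prove a one-parameter family of scalar inequalities there.

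\emph{Reduction to the plane.} Both $A$ and $B$ are nondecreasing on $[0,3/4]$ and decrease on $[3/4,1]$. The decrease is harmless, because it only makes $1-A(a)$ and $1-B(a)$ larger, and it never lowers the tail terms below the values at $3/4$ that we compare with. I therefore first reduce to the region where $a$, $b$ and $c$ are at most $3/4$.

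\begin{itemize}
\item For the head: if $a>3/4$, then $1-A(a)\ge 1-A(3/4)$ and similarly for $B$. Also $\min\{1-a,b+c\}$ only decreases as $a$ grows. So the case $a>3/4$ reduces to $a=3/4$.
\item For a tail, say $b>3/4$: each $R_z$ is nondecreasing in each of its arguments through $A(b)$ or $B(b)$. So I need $A(b)\ge A(\cdot)$ at a smaller point that keeps the minimum unchanged. Since $A(b)\ge A(1)=0$ is too weak, I handle this case directly instead. When $b\ge 3/4$, the right side is at most $8(1-a)$. Then $R_0\ge A(b)$ and $R_1\ge B(b)$ give $W\ge 8(1-A(a))A(b)+9(1-B(a))B(b)$, a two-variable piecewise-linear-times-linear check.
\end{itemize}

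\emph{Case split on the plane.} In the main region:
\begin{itemize}
\item If $b+c\ge 1-a$, I decrease $b$ and $c$ until $b+c=1-a$. This lowers $W$ by monotonicity of $A$ and $B$ on $[0,3/4]$ and of $R_z$, while the minimum stays $1-a$.
\item If $b+c<1-a$, I increase $a$ to $1-b-c$. This lowers $W$, since $1-A$ and $1-B$ are decreasing there, and the minimum stays $b+c$.
\end{itemize}
So it suffices to show $W(1-b-c,b,c)\ge 8(b+c)$ on the triangle $b,c\ge0$, $b+c\le1$. The side condition $a\le 3/4$ means $b+c\ge 1/4$; the complementary part, where the head branch is $a>3/4$, I would treat with the head values at $3/4$.

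\emph{Piecewise verification.} On this triangle every quantity is piecewise polynomial of degree at most $3$. The breakpoints are $1/4$, $1/2$ and $3/4$ for $b$, $c$ and $a=1-b-c$. I split the triangle into the resulting polygons and, by symmetry in $b,c$, assume $b\le c$. On each cell the inequality is a cubic, at most bilinear in each variable, so I would minimize it by checking vertices and edge stationary points. As sanity checks:
\begin{itemize}
\item At $b=c=0$ we get $W=0$ and $8(b+c)=0$, so equality holds.
\item For small $b$ with $c=0$, the left side is about $8b+9b=17b\ge 8b$, with slack.
\end{itemize}
Tight points are expected where the weights $8/17$ and $9/17$ were tuned, for example $b+c=1/2$ with $a=1/2$.

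The main obstacle will be this last cell-by-cell verification. I would try first to write $\Delta_H$ on each cell as a sum of visibly nonnegative products of the local slack variables $(t-\text{breakpoint})$. This is an analytic replacement for the rational certificate used in the first version.
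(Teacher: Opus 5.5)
\paragraph{The gap: the head coordinate on $(3/4,1]$.}
On this interval $B(t)=(7t-4)/3$ \emph{increases}, from $B(3/4)=5/12$ to $B(1)=1$. So $1-B(a)$ decreases with slope $7/3$, while $1-A(a)$ increases with slope $9/4$. Your premise that both maps decrease there is false for $B$.

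Consequently neither ``the case $a>3/4$ reduces to $a=3/4$'' nor ``increasing $a$ lowers $W$ because $1-A$ and $1-B$ decrease'' is justified. The first one fails in the direction you need. For $b=c=0.05$ one has $\Delta_H(3/4,b,c)\approx0.053$ but $\Delta_H(0.9,b,c)\approx0.009$, so a check at $a=3/4$ says nothing about $a=0.9$.

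For the same reason, treating the plane piece $b+c<1/4$ ``with the head values at $3/4$'' overestimates $W$. The true $B$-coefficient there is $9(1-B(1-s))=21s<21/4$. Your sanity check shows the same slip: at $c=0$, $a=1-b$, one gets $W=(8-18b)b+21b^2=8b+3b^2$, not $17b$. So the slack on the plane near $b=c=0$ is only quadratic.

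The repair, which is what the paper does:
\begin{itemize}
\item If $s=b+c<1-a$, move $a$ \emph{up} to $1-s$. On $[0,3/4]$ both head factors are nonincreasing. If the motion enters $a>3/4$, then $s<1/4$, so both tails lie in $[0,1/4]$, where $A$ and $B$ are the identity. Hence $R_0=R_1$ and $\partial_aW=18R_0-21R_1=-3R_0\le0$.
\item If $s>1-a$, leave $a$ alone and lower the tails. Your tail-lowering step needs no restriction on $a$.
\item On the plane with $s\le1/4$, use the true head coefficients $(8-18s,21s)$. They give $\Delta_H=3s^2-(8+3s)bc\ge s^2(1-3s/4)\ge0$.
\end{itemize}

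\paragraph{What works, and what is still missing.}
Your separate treatment of a tail $b\ge3/4$ is correct, and it makes the paper's plane cases with $b>3/4$ unnecessary. The lower bound $8(1-A(a))A(b)+9(1-B(a))B(b)-8(1-a)$ is affine in $b$ on $[3/4,1]$. At $b=1$ it equals $9(1-B(a))-8(1-a)\ge0$. At $b=3/4$ it equals $\tfrac92(1-A(a))+\tfrac{15}4(1-B(a))-8(1-a)$, which is nonnegative on every branch of $a$.

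The plane verification is the substance of the lemma, and you have not carried it out. After substituting $a=1-b-c$, the slack is quadratic, not bilinear, in each tail. So checking vertices and edge stationary points is not a valid test on a cell, since interior critical points must also be considered.

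The paper parametrizes by $s$ and $p=bc$ (with $0\le p\le s^2/4$), or by $s$ and the smaller tail. It then writes $\Delta_H$ on the ranges $s\le1/4$, $1/4\le s\le1/2$, $1/2\le s\le3/4$, $3/4\le s\le1$ as explicit sums of nonnegative terms. This is the route you list as your first try.
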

\begin{proof}
Both maps are nondecreasing on $[0,3/4]$ and affine on $[3/4,1]$, and their values lie in $[0,1]$, so $R_0=1-(1-A(b))(1-A(c))$ and $R_1$ lie in $[0,1]$ and are nondecreasing in each tail on $[0,3/4]$. Since $B(a)\le a$,
\begin{equation}\label{eq:cubic-baseline-head-bound}
 9(1-B(a))\ge8(1-a).
\end{equation}
Since $A(1)=0$ and $B(1)=1$, we have $R_0(1,c)=A(c)$ and $R_1(1,c)=1$, so a tail equal to one satisfies
\begin{equation}\label{eq:cubic-baseline-coverage-endpoint}
 W(a,1,c)=8(1-A(a))A(c)+9(1-B(a))\ge8(1-a)
\end{equation}
by \eqref{eq:cubic-baseline-head-bound}, and hence $\Delta_H(a,1,c)\ge0$.

\emph{Reduction to $a+b+c=1$.}
Write $s=b+c$. Suppose first that $s<1-a$, and increase $a$ to $1-s$ with the tails fixed. Along this motion the subtracted term remains $8s$, and while $a\le3/4$, the factors $1-A(a)$ and $1-B(a)$ do not increase, so $W$ does not increase. If the motion enters $a>3/4$, then $s<1/4$, so both tails lie in $[0,1/4]$, where $A$ and $B$ are the identity, and $R_0=R_1=s-bc$. On that part of the motion,
\begin{equation}\label{eq:cubic-baseline-coverage-head-derivative}
 \frac{\partial W}{\partial a}=18R_0-21R_1=-3R_0\le0,
\end{equation}
using $A'=-9/4$ and $B'=7/3$ on $(3/4,1)$. The motion therefore reaches the plane $a+b+c=1$ with no larger slack.

Suppose next that $s>1-a$, so that the subtracted term is $8(1-a)$. If both tails are at most $3/4$, decrease them within $[0,3/4]$ until their sum is $1-a$. Monotonicity of $R_0,R_1$ shows that $W$ does not increase, and the subtracted term stays $8(1-a)$. If $b>3/4$, instead hold $a,c$ fixed and vary $b$ in
\begin{equation}\label{eq:cubic-baseline-coverage-tail-interval}
 b\in[\max\{3/4,1-a-c\},1].
\end{equation}
On this interval $A(b)$ and $B(b)$ are affine in $b$, $R_0$ and $R_1$ are affine in $A(b)$ and $B(b)$, and the subtracted term equals $8(1-a)$, so the slack is affine in $b$ and its minimum occurs at an endpoint. The endpoint $b=1$ is covered by \eqref{eq:cubic-baseline-coverage-endpoint}; at the other endpoint, either $b=3/4$ or the point lies on the plane. After the same argument is applied to $c$ if it exceeds $3/4$, both tails are at most $3/4$, and the preceding monotonicity argument reaches the plane. It therefore suffices to prove $\Delta_H\ge0$ on the plane.

\emph{The plane inequality.}
Order the tails as $b\ge c$ and set $s=b+c=1-a$, $p=bc$. Then
\begin{equation}\label{eq:cubic-baseline-tail-product}
 0\le p\le\frac{s^2}{4}\le\frac s4,\qquad 0\le s\le1.
\end{equation}
Substituting $a=1-s$ into \eqref{eq:cubic-baseline-maps}, the coefficients $P=8(1-A(a))$ and $Q=9(1-B(a))$ are
\begin{equation}\label{eq:cubic-baseline-coverage-heads}
 (P,Q)=
 \begin{cases}
  (8-18s,21s),&0\le s\le1/4,\\
  (3+2s,9/2+3s),&1/4\le s\le1/2,\\
  (8s,9/2+3s),&1/2\le s\le3/4,\\
  (8s,9s),&3/4\le s\le1.
 \end{cases}
\end{equation}
In each of the following ranges, the displayed identity for $\Delta_H=PR_0+QR_1-8s$ follows by substituting \eqref{eq:cubic-baseline-maps} and \eqref{eq:cubic-baseline-coverage-heads} and expanding.

\emph{(i) $0\le s\le1/4$.}
Both tails are at most $1/4$, so $R_0=R_1=s-p$ and
\begin{equation}\label{eq:cubic-baseline-coverage-small}
 \Delta_H=s^2\left(1-\frac{3s}{4}\right)
          +(8+3s)\left(\frac{s^2}{4}-p\right)\ge0.
\end{equation}

\emph{(ii) $1/4\le s\le1/2$.}
If $b\le1/4$, both maps are the identity at both tails. With $r=s-1/4\in[0,1/4]$,
\begin{equation}\label{eq:cubic-baseline-coverage-second-low}
 \Delta_H=s\left[\frac{13}{64}+\frac54r(2-r)\right]
          +\left(\frac{15}{2}+5s\right)\left(\frac{s^2}{4}-p\right)\ge0.
\end{equation}
If $b\ge1/4$, then $b\le s\le1/2$ and $c\le s/2\le1/4$, and
\begin{equation}\label{eq:cubic-baseline-coverage-second-high}
 \Delta_H=3\left(s-\frac12\right)^2
       +\left(\frac92+3s\right)c\left(\frac12-s+c\right)\ge0,
\end{equation}
because $1/2-s+c=1/2-b\ge0$.

\emph{(iii) $1/2\le s\le3/4$.}
Here $c\le s/2\le3/8$. If $b\ge1/2$, then $b\le3/4$ and $c=s-b\le1/4$, and
\begin{equation}\label{eq:cubic-baseline-coverage-third-high}
 \Delta_H=3\left(s-\frac12\right)^2
       +\left(3s+\frac32\right)c\left(\frac32-s+c\right)\ge0.
\end{equation}
If $b\le1/2$ and $c\le1/4$, put $d=c-(s-1/2)=1/2-b\ge0$. Expanding around $b=1/2$ gives
\begin{equation}\label{eq:cubic-baseline-coverage-third-low}
 \Delta_H=3s(2s-1)+\frac34(2s-1)(6s-1)d
                    +\left(9s+\frac32\right)d^2\ge0.
\end{equation}
In the remaining case $b,c\in[1/4,1/2]$, and
\begin{equation}\label{eq:cubic-baseline-central-coverage}
 R_0=s-p,\qquad R_1=\frac{5s}{18}+\frac{11}{36}-\frac p9.
\end{equation}
With $t=s-1/2\ge0$, substitution gives
\begin{equation}\label{eq:cubic-baseline-coverage-third-central}
 \Delta_H=\frac{2+19t+162t^2}{24}
        +\left(\frac{25s}{3}+\frac12\right)\left(\frac s4-p\right)\ge0.
\end{equation}

\emph{(iv) $3/4\le s\le1$.}
Now $W=sF$ with $F=8R_0+9R_1$, so $\Delta_H=s(F-8)$. Since $c\le s/2\le1/2$, the possible tail regions give
\begin{equation}\label{eq:cubic-baseline-coverage-large}
 F-8=
 \begin{cases}
  \dfrac{33s-21}{4}+9\left(\dfrac s4-p\right),
       &b\le1/2,\\[4pt]
  5s-\dfrac72+5c\left(\dfrac32-s+c\right),
       &1/2\le b\le3/4,\ c\le1/4,\\[4pt]
  \dfrac{9s}{2}-\dfrac94+3c(1-s+c),
       &1/2\le b\le3/4,\ c\ge1/4,\\[4pt]
  3s-2+c(8-3s)+3c^2,&b\ge3/4.
 \end{cases}
\end{equation}
In the first region both tails lie in $[1/4,1/2]$, since $c=s-b\ge1/4$; in the last, $c\le1/4$. For $3/4\le s\le1$, the constant terms satisfy $33s-21\ge15/4$, $5s-7/2\ge1/4$, $9s/2-9/4\ge9/8$, and $3s-2\ge1/4$, while $3/2-s+c$, $1-s+c$, and $8-3s$ are nonnegative and $s/4-p\ge0$ by \eqref{eq:cubic-baseline-tail-product}. Every expression in \eqref{eq:cubic-baseline-coverage-large} is therefore nonnegative, which proves the plane inequality in all cases; the reduction extends it to the whole cube.
\end{proof}

\begin{corollary}[Directed and unary inequalities]\label{cor:cubic-baseline-directed}
For all $a,b\in[0,1]$,
\begin{align}
 8A(a)(1-A(b))+9B(a)(1-B(b))&\ge8\min\{a,1-b\},\label{eq:cubic-baseline-directed}\\
 8A(a)+9B(a)&\ge8a.\label{eq:cubic-baseline-unary}
\end{align}
\end{corollary}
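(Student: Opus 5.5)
The plan is to obtain both inequalities as specializations of Lemma~\ref{lem:cubic-baseline-coverage}, choosing the head and tails so that the coverage component and its majorant reduce to the directed cut or to the unary term.

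For \eqref{eq:cubic-baseline-directed}, complementing coordinates is the natural idea, but the maps $A$ and $B$ are not symmetric under $t\mapsto1-t$. So I would instead set the tails so that the coverage factor becomes the directed factor. Take head $a'=b$, tail $b'=a$, and the remaining tail $c'=0$. Since $A(0)=B(0)=0$, we get $R_0(a,0)=A(a)$ and $R_1(a,0)=B(a)$. Hence
\[
 W(b,a,0)=8(1-A(b))A(a)+9(1-B(b))B(a),
\]
and the majorant term is $8\min\{1-b,a+0\}=8\min\{a,1-b\}$. The inequality $\Delta_H(b,a,0)\ge0$ is then exactly \eqref{eq:cubic-baseline-directed}.

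For \eqref{eq:cubic-baseline-unary}, I would specialize \eqref{eq:cubic-baseline-directed} further by setting $b=0$. This gives $1-A(0)=1-B(0)=1$, and $\min\{a,1\}=a$, so the left side is $8A(a)+9B(a)$ and the right side is $8a$.

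A complementary unary inequality $8(1-A(a))+9(1-B(a))\ge8(1-a)$ may be needed later. If so, I would take $a=0$ in \eqref{eq:cubic-baseline-directed} with $b$ free, or derive it directly from $A,B\le t$.

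The only point to check is that $A(0)=B(0)=0$ and that the lemma holds for all $a,b,c\in[0,1]$, including tails equal to zero. Both are immediate from \eqref{eq:cubic-baseline-maps} and the statement of the lemma. I expect no real obstacle.
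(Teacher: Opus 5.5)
Your proof is correct and is exactly the paper's argument: apply Lemma~\ref{lem:cubic-baseline-coverage} with head $b$ and tails $a,0$, use $A(0)=B(0)=0$ to get $R_0(a,0)=A(a)$ and $R_1(a,0)=B(a)$, and then set $b=0$ to obtain the unary bound. Your aside is not needed for this statement, but it needs a correction: setting $a=0$ in \eqref{eq:cubic-baseline-directed} only yields $0\ge0$. The complemented unary bound should instead come from $A(t),B(t)\le t$, as the paper does in Lemma~\ref{lem:cubic-rounding}, or from $a=1$, which gives $9(1-B(b))\ge8(1-b)$.
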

\begin{proof}
Since $A(0)=B(0)=0$, we have $R_0(a,0)=A(a)$ and $R_1(a,0)=B(a)$. Lemma~\ref{lem:cubic-baseline-coverage} with head $b$ and tails $a,0$ therefore gives \eqref{eq:cubic-baseline-directed}. Setting $b=0$ gives \eqref{eq:cubic-baseline-unary}.
\end{proof}

In the product inequality, the minimum over the head coordinate occurs at $a=1$ or on the plane where the two branches of the majorant agree, and both sets are checked separately.
\begin{lemma}[Product inequality for the two-map action]\label{lem:cubic-baseline-product}
For all $a,b,c\in[0,1]$,
\begin{equation}\label{eq:cubic-baseline-product-slack}
\begin{split}
 \Delta_J(a,b,c):={}&8A(a)(1-A(b)A(c))
                    +9B(a)(1-B(b)B(c))\\
                  &-8\min\{a,2-b-c\}\ge0.
\end{split}
\end{equation}
\end{lemma}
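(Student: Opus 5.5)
The plan mirrors the proof of Lemma~\ref{lem:cubic-baseline-coverage}: move to a small set of extremal configurations by monotonicity and piecewise affineness, then verify polynomial identities on the finitely many linear pieces of $A$ and $B$. Write $P(a)=8A(a)+9B(a)$, $X_0=A(b)A(c)$, $X_1=B(b)B(c)$, so that $\Delta_J=8A(a)(1-X_0)+9B(a)(1-X_1)-8\min\{a,2-b-c\}$.

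\emph{Step 1: reduction in the head.} For fixed tails, consider $\Delta_J$ as a function of $a$. The first two terms are a nonnegative combination of $A(a)$ and $B(a)$, with weights $8(1-X_0)$ and $9(1-X_1)$. Both maps are piecewise affine, so this part is piecewise affine in $a$. The subtracted term is concave in $a$, so $-8\min\{a,2-b-c\}$ is convex and piecewise affine.
\begin{itemize}
\item On $[0,1/2]$ and for $a\le2-b-c$, we have $A(a)=a$ and $B(a)\ge a/2$ for $a\le1/2$. Hence $\Delta_J\ge8a(1-X_0)+\tfrac92a(1-X_1)-8a$. This is not immediately signed, so I would keep $a$ free and treat the inequality piecewise.
\item On each linear piece of $A$ and $B$ in $a$ (breakpoints $1/4,1/2,3/4$) and on each side of the kink $a=2-b-c$, $\Delta_J$ is affine in $a$. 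The minimum over $a$ is therefore attained at $a\in\{0,1/4,1/2,3/4,1\}$ or at $a=2-b-c$.
\end{itemize}
At $a=0$ both sides vanish. The statement's hint says the minimum occurs at $a=1$ or on the plane $a=2-b-c$. I would justify this by checking the slope sign between breakpoints. For $a\le\min\{3/4,2-b-c\}$, the slope of the positive part is at least $8\cdot\tfrac14(1-X_0)+9\cdot\tfrac13(1-X_1)$, while the subtracted slope is $8$. So interior breakpoints cannot be discarded blindly; instead I would check the values there directly, as below.

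\emph{Step 2: tails.} For fixed $a$, $\Delta_J$ depends on $(b,c)$ through $-X_0$, $-X_1$ and through $\min\{a,2-b-c\}$. On $[0,3/4]$ both maps are increasing, so $1-X_z$ decreases in each tail while $2-b-c$ also decreases. Off the plane the min equals $a$, which is constant in the tails. There $\Delta_J$ is minimized by pushing the tails up, either until the plane is reached or into the region $b>3/4$ or $c>3/4$. In that region $A$ and $B$ are affine, so $\Delta_J$ is affine in that tail and the extremes are $b=3/4$, $b=1$, or the plane. The endpoint $b=1$ gives $A(1)=0$ and $B(1)=1$, so $X_0=0$ and $X_1=B(c)$. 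The inequality then becomes $8A(a)+9B(a)(1-B(c))\ge8\min\{a,1-c\}$, which is exactly Corollary~\ref{cor:cubic-baseline-directed}. This leaves two cases: the plane $a+b+c=2$, and $a=1$ with the tails at most $3/4$.

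\emph{Step 3: finite checks.} At $a=1$ we have $A=0$ and $B=1$, so the claim is $9(1-X_1)\ge8(2-b-c)$ for $b+c\ge1$, with $b,c\le3/4$ (otherwise use $\min=1$ and $X_1\le 1/9$?). I would split by which linear pieces of $B$ contain $b$ and $c$, and expand in $s=b+c$ and $p=bc$ using $p\le s^2/4$, as in~\eqref{eq:cubic-baseline-tail-product}. On the plane $a=2-s$ with $s\in[1,2]$, the head lies in $[0,1]$. I would tabulate $(8A(a),9B(a))$ on its four pieces, as in~\eqref{eq:cubic-baseline-coverage-heads}. Then, for each tail region, write $\Delta_J$ as a nonnegative constant plus nonnegative multiples of $(s^2/4-p)$ or of products like $c(\cdot)$.

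\emph{Main obstacle.} I expect the hardest part to be the central plane region, where both tails lie in $[1/4,3/4]$ and $a\in[1/4,3/4]$, because the slack is tightest near the $8/17$-extremal configuration. There I would substitute $b=s/2+e$ and $c=s/2-e$, and verify that the result is a polynomial in $(s,e^2)$ with nonnegative coefficients after shifting $s$ to the local breakpoint.
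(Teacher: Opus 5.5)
Your proposal has a genuine gap, and it sits in the step the lemma hinges on: the reduction in the head coordinate.

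\textbf{The head reduction is never completed.} In Step~1 you correctly note that, for fixed tails, $\Delta_J$ is piecewise affine in $a$, with candidate minima at $a\in\{1/4,1/2,3/4\}$ as well as at $a=1$ and $a=2-b-c$. You also note that a crude slope count cannot discard the interior breakpoints. You promise to check them directly, but Step~3 treats only $a=1$ and the plane.

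These breakpoints are not a formality. For $a=1/2$ and $b+c\le3/2$ one has $A(1/2)=1/2$ and $B(1/2)=1/3$, hence $\Delta_J=3-4A(b)A(c)-3B(b)B(c)$. The claim there is therefore the two-tail inequality $8A(b)A(c)+6B(b)B(c)\le6$, which nothing in your plan establishes.

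\textbf{The missing idea: a uniform bound on the tail products.} The paper first proves three map bounds on $[0,1]$: $B(t)\ge\frac23A(t)$, $A(t)\le\frac9{16}$, and $8A(t)^2+6B(t)^2\le6$. Put $u=A(b)A(c)$ and $v=B(b)B(c)$. Cauchy--Schwarz applied to $(\sqrt8A(b),\sqrt6B(b))$ and $(\sqrt8A(c),\sqrt6B(c))$ gives $8u+6v\le6$. Since $9B(a)\ge6A(a)$ and $v\le1$, this yields
\[
8A(a)(1-u)+9B(a)(1-v)\ge8A(a)+A(a)(6-8u-6v)\ge8A(a)
\]
for every $a$. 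That settles the whole range $a\le1/2$ at once, because $A(a)=a$ there.

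On $[1/2,1]$ the paper then uses signed slopes.
\begin{itemize}
\item For $a<2-b-c$, the slope is $-3-2u-3v<0$ on $(1/2,3/4)$. On $(3/4,1)$ it is $-5+18u-21v\le-5+\frac{26}{3}u\le-\frac{289}{128}$, using $v\ge\frac49u$ and $u\le(\frac9{16})^2$.
\item For $a>2-b-c$, the slope is $2(1-u)+3(1-v)\ge0$ on $(1/2,3/4)$ and constant afterwards.
\end{itemize}
Only with these does the minimum move to $a=1$ or to $a=2-b-c\in[1/2,1]$.

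\textbf{Step~2 does not fill this hole.} Your assertion that off the plane the minimum equals $a$ holds only on the side $a\le2-b-c$. For $a>2-b-c$ it equals $2-b-c$. As the tails grow, the positive part decreases and so does the subtracted term $8(2-b-c)$. The two effects on $\Delta_J$ compete, so pushing the tails up is not monotone there.

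Even on the correct side the reduction can stall. The slope in a tail $b\in[3/4,1]$ is $18A(a)A(c)-21B(a)B(c)$, which can be positive; for $a=c=1/2$ it equals $13/6$. So the reduction can stop at $b=3/4$. For instance, for $0<a<1/2$ the configuration $b=c=3/4$ is an extreme that lies neither on the plane, nor at $a=1$, nor at a tail equal to one. It is missing from your list of remaining cases.

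For the same reason, restricting the case $a=1$ to tails at most $3/4$ is unjustified when $b+c>1$; tails in $[3/4,1]$ must still be checked there, as the paper does.

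Your observation that $b=1$ reduces to Corollary~\ref{cor:cubic-baseline-directed}, via $8A(a)\ge8A(a)(1-A(c))$, is correct. Once the head reduction above is in place, the plane is needed only for $b+c\in[1,3/2]$. The smaller tail is then automatically at most $3/4$, and the remaining case analysis can proceed along the lines you sketch.
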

\begin{proof}
We first establish three bounds for the maps:
\begin{equation}\label{eq:cubic-baseline-map-bounds}
 B(t)\ge\frac23A(t),\qquad A(t)\le\frac9{16},\qquad
 8A(t)^2+6B(t)^2\le6\qquad(0\le t\le1).
\end{equation}
For the first bound, subtracting branch by branch gives
\begin{equation}\label{eq:cubic-baseline-map-difference}
 B(t)-\frac23A(t)=
 \begin{cases}
  t/3,&0\le t\le1/4,\\
  (1-2t)/6,&1/4\le t\le1/2,\\
  (2t-1)/12,&1/2\le t\le3/4,\\
  (23t-17)/6,&3/4\le t\le1,
 \end{cases}
\end{equation}
and each branch is nonnegative on its interval. The map $A$ increases on $[0,3/4]$ and decreases on $[3/4,1]$, so its maximum is $A(3/4)=9/16$. For $t\le3/4$, also $B(t)\le B(3/4)=5/12$, so
\begin{equation}\label{eq:cubic-baseline-square-small}
 8A(t)^2+6B(t)^2\le8\left(\frac9{16}\right)^2
                   +6\left(\frac5{12}\right)^2=\frac{343}{96}<6.
\end{equation}
For $t\ge3/4$, put $u=1-t\in[0,1/4]$, so that $A(t)=9u/4$ and $B(t)=1-7u/3$. Then
\begin{equation}\label{eq:cubic-baseline-square-large}
 8A(t)^2+6B(t)^2=6-u\left(28-\frac{439}{6}u\right)\le6,
\end{equation}
because $28-439u/6\ge28-439/24=233/24>0$. This proves \eqref{eq:cubic-baseline-map-bounds}.

\emph{Reduction in the head coordinate.}
Fix $b,c$ and write $u=A(b)A(c)$, $v=B(b)B(c)$, and $d=2-b-c$. The Cauchy--Schwarz inequality for the vectors $(\sqrt8A(b),\sqrt6B(b))$ and $(\sqrt8A(c),\sqrt6B(c))$, together with the third bound in \eqref{eq:cubic-baseline-map-bounds}, gives $8u+6v\le6$. Since $v\le1$ and $9B(a)\ge6A(a)$,
\begin{equation}\label{eq:cubic-baseline-product-small-head}
 8A(a)(1-u)+9B(a)(1-v)
 \ge8A(a)+A(a)(6-8u-6v)\ge8A(a).
\end{equation}
For $a\le1/2$, $A(a)=a\ge\min\{a,d\}$, which proves the product inequality on that range.

On $[1/2,1]$, the slack $\Delta_J(\cdot,b,c)$ is continuous and piecewise affine, with possible breakpoints $3/4$ and $d$. On $(1/2,3/4)$, where $A'=1/4$ and $B'=1/3$, its slope is $-3-2u-3v<0$ when $a<d$ and $2(1-u)+3(1-v)\ge0$ when $a>d$. On $(3/4,1)$, where $A'=-9/4$ and $B'=7/3$, its slope is constant when $a>d$, and when $a<d$ it satisfies
\begin{equation}\label{eq:cubic-baseline-product-head-derivative}
 \frac{\partial\Delta_J}{\partial a}
 =-5+18u-21v
 \le-5+\frac{26}{3}u
 \le-5+\frac{26}{3}\left(\frac9{16}\right)^2
 =-\frac{289}{128}<0,
\end{equation}
where $v\ge4u/9$ follows from the first bound in \eqref{eq:cubic-baseline-map-bounds} and $u\le(9/16)^2$ from the second. Thus the slack decreases on $[1/2,\min\{d,1\}]$. On $[\max\{d,1/2\},1]$ it is nondecreasing up to $3/4$ and affine after $3/4$, so there its minimum is attained at $\max\{d,1/2\}$ or at $1$. Consequently the minimum over $[1/2,1]$ is attained either at $a=1/2$, which happens only when $d\le1/2$ and is covered by \eqref{eq:cubic-baseline-product-small-head}, or at
\begin{equation}\label{eq:cubic-baseline-product-reduction}
 a=1\quad\text{or}\quad a=2-b-c\in[1/2,1].
\end{equation}

\emph{The endpoint $a=1$.}
Here $A(1)=0$ and $B(1)=1$, so the inequality to prove is
\begin{equation}\label{eq:cubic-baseline-product-endpoint-target}
 B(b)B(c)\le1-\frac89\min\{1,2-b-c\}.
\end{equation}
Order $b\le c$ and put $s=b+c$. On $[0,3/4]$, $B(t)\le(t+1/2)/3$: equality holds on $[1/4,3/4]$, and $t\le t/3+1/6$ on $[0,1/4]$. If $s\le1$ and $c\le3/4$, the arithmetic--geometric mean inequality gives
\begin{equation}\label{eq:cubic-baseline-product-endpoint-central}
 B(b)B(c)\le\frac{(b+1/2)(c+1/2)}{9}\le\frac{(s+1)^2}{36}\le\frac19.
\end{equation}
If $s\le1$ and $c\ge3/4$, then $b\le1/4$ and $c\le1-b$, so monotonicity of $B$ gives $B(c)\le B(1-b)=(3-7b)/3$ and
\begin{equation}\label{eq:cubic-baseline-product-endpoint-low}
 B(b)B(c)\le\frac{b(3-7b)}3\le\frac3{28}<\frac19,
\end{equation}
where the quadratic $b(3-7b)/3$ attains its maximum $3/28$ at $b=3/14$. Thus \eqref{eq:cubic-baseline-product-endpoint-target} holds for $s\le1$.

For $s\ge1$, the right side of \eqref{eq:cubic-baseline-product-endpoint-target} is $(8s-7)/9$. If $c\le3/4$, then $s\le3/2$, and the bound $(s+1)^2/36$ in \eqref{eq:cubic-baseline-product-endpoint-central} suffices because
\begin{equation}\label{eq:cubic-baseline-product-endpoint-sum}
 (s+1)^2-4(8s-7)=(s-1)(s-29)\le0.
\end{equation}
If $b\le1/4$ and $c\ge3/4$, substituting $B(b)=b$, $B(c)=(7c-4)/3$, and $c=s-b$ gives
\begin{align}
 9B(b)B(c)-(8s-7)
 &=(21b-8)s-21b^2-12b+7\nonumber\\
 &\le9b-21b^2-1\le-\frac1{28}.\label{eq:cubic-baseline-product-endpoint-low-high}
\end{align}
The first inequality uses $21b-8<0$ and $s\ge1$; the quadratic in the second attains its maximum at $b=3/14$.
If $1/4\le b\le3/4$ and $c\ge3/4$, then
\begin{equation}\label{eq:cubic-baseline-product-endpoint-middle-high}
 9B(b)B(c)-(8s-7)=c\left(7b-\frac92\right)-12b+5.
\end{equation}
The right side decreases in $b$, since its derivative is $7c-12<0$, so it is at most its value $2-11c/4\le-1/16$ at $b=1/4$.
Finally, if $b,c\ge3/4$, then $3/2\le s\le2$ and, by $bc\le s^2/4$,
\begin{align}
 9B(b)B(c)-(8s-7)
 &=49bc-36s+23\nonumber\\
 &\le\frac{49s^2}{4}-36s+23
 =\frac{(s-2)(49s-46)}4\le0.\label{eq:cubic-baseline-product-endpoint-high}
\end{align}
This proves the endpoint inequality in every tail region.

\emph{The switching plane.}
Now let $a=2-b-c\in[1/2,1]$, so that $\min\{a,2-b-c\}=a$. Order $b\le c$ and put $s=b+c=2-a\in[1,3/2]$; in particular, $b\le3/4$. We distinguish the positions of the tails relative to $1/4$ and $3/4$.

If $c\le3/4$, then $b=s-c\ge1/4$, so $b,c\in[1/4,3/4]$, an interval on which $A$ is nonnegative and concave and $B$ is affine. The arithmetic--geometric mean inequality followed by concavity of $A$ (respectively, affinity of $B$) gives
\begin{equation}\label{eq:cubic-baseline-product-central-tails}
 A(b)A(c)\le A(s/2)^2=\frac{(5-a)^2}{64},\qquad
 B(b)B(c)\le B(s/2)^2=\frac{(3-a)^2}{36},
\end{equation}
where $s/2=1-a/2\in[1/2,3/4]$. Substituting these upper bounds for the tail products into \eqref{eq:cubic-baseline-product-slack} gives
\begin{equation}\label{eq:cubic-baseline-product-central-slack}
 \Delta_J(a,b,c)\ge
 \begin{cases}
  \dfrac{567-732a+139a^2-22a^3}{192},&1/2\le a\le3/4,\\[5pt]
  \dfrac{(1-a)(29a^2-42a+189)}{96},&3/4\le a\le1.
 \end{cases}
\end{equation}
The first expression decreases on $[1/2,3/4]$, because the numerator of its derivative, $-732+278a-66a^2$, has negative discriminant and is therefore negative; its value at $a=3/4$ is $927/2048>0$. In the second expression, $29a^2-42a+189$ has negative discriminant and is positive, so the expression is nonnegative.

If $b\le1/4$ and $c\ge3/4$, then $s\le5/4$, so $a\ge3/4$. Put $t=1-a$ and $x=b$, so that $c=1+t-x$ and $c\le1$ gives $0\le t\le x\le1/4$. Expanding and completing the square,
\begin{align}
 \Delta_J
 &=1-9x+21x^2+5t-\frac{179}{2}xt(x-t)\nonumber\\
 &=21\left(x-\frac3{14}\right)^2+\frac1{28}
   +\left(5-\frac{179}{2}x^2\right)t+\frac{179}{2}xt^2.
 \label{eq:cubic-baseline-product-low-high}
\end{align}
For $x\le1/8$, the coefficient of $t$ is at least $5-179/128=461/128>0$, so every term in the last line is nonnegative. For $1/8\le x\le1/4$, the bounds
\begin{equation}\label{eq:cubic-baseline-product-low-high-coefficients}
 5-\frac{179}{2}x^2\ge-\frac{19}{32}\ge-1,\qquad
 \frac{179}{2}x\ge\frac{179}{16}\ge8
\end{equation}
give
\begin{equation}\label{eq:cubic-baseline-product-low-high-positive}
 \Delta_J\ge\frac1{28}-t+8t^2
 =8\left(t-\frac1{16}\right)^2+\frac1{224}>0.
\end{equation}

In the last case, $1/4\le b\le3/4$ and $c\ge3/4$. For fixed $a$, define
\begin{equation}\label{eq:cubic-baseline-product-middle-interval}
 P_a(b)=\Delta_J(a,b,2-a-b),\qquad
 \max\{1/4,1-a\}\le b\le5/4-a,
\end{equation}
where the bounds on $b$ express $b\ge1/4$, $c\le1$, and $c\ge3/4$. Since $c\in[3/4,1]$ throughout, $P_a$ is quadratic on each side of $b=1/2$, where the branch of $A(b)$ changes.

First let $1/2\le a\le3/4$, so that the interval is $[1-a,5/4-a]\ni1/2$. On $[1-a,1/2]$, the quadratic coefficient of $P_a$ is $-(26a+67)/12<0$, so the minimum occurs at an endpoint. The endpoint values satisfy
\begin{align}
 P_a(1-a)&=a^2-4a+\frac{15}{4}\ge\frac{21}{16},\label{eq:cubic-baseline-product-middle-endpoints}\\
 P_a(1/2)&=\frac{245-300a+4a^2}{48}\ge\frac{89}{192},\nonumber
\end{align}
since both expressions decrease on $[1/2,3/4]$ (their derivatives are $2a-4$ and $(8a-300)/48$) and the lower bounds are their values at $a=3/4$. On $[1/2,5/4-a]$, the quadratic coefficient is $(58a-25)/48>0$, and
\begin{equation}\label{eq:cubic-baseline-product-middle-derivative}
 P_a'(5/4-a)=-\frac{58a^2-39a+155}{48}<0,
\end{equation}
because $58a^2-39a+155$ has negative discriminant. Since the derivative of a convex quadratic increases, $P_a$ decreases on this interval, and its minimum is at $b=5/4-a$, where $c=3/4$ and \eqref{eq:cubic-baseline-product-central-slack} applies.

Finally, let $3/4\le a\le1$ and put $t=1-a$. The interval in \eqref{eq:cubic-baseline-product-middle-interval} becomes $1/4\le b\le1/4+t\le1/2$, and expansion gives
\begin{equation}\label{eq:cubic-baseline-product-middle-quadratic}
 P_a(b)=\frac{42-341t}{6}b^2
       +\frac{3-49t+341t^2}{6}b
       +\frac{-3+30t+49t^2}{6}.
\end{equation}
If the quadratic coefficient is nonpositive, the minimum occurs at an endpoint. The endpoint $b=1/4$ is covered by \eqref{eq:cubic-baseline-product-low-high} with $x=1/4$, and the endpoint $b=1/4+t$ has $c=3/4$ and is covered by \eqref{eq:cubic-baseline-product-central-slack}. If the coefficient is positive, then $0\le t<42/341<1/8$, and
\begin{equation}\label{eq:cubic-baseline-product-middle-increasing}
 P_a'(1/4)=\frac{48-439t+682t^2}{12}\ge\frac{121}{384}>0,
\end{equation}
since the numerator has derivative $-439+1364t<0$ on $[0,1/8]$ and equals $121/32$ at $t=1/8$. The derivative of this convex quadratic is therefore positive on the whole interval, and its minimum occurs at the endpoint $b=1/4$, which is already covered.

These cases exhaust the switching plane. Together with the endpoint case and \eqref{eq:cubic-baseline-product-reduction}, they prove the product inequality on the whole cube.
\end{proof}

\begin{lemma}[The $8/17$ scalar comparison]\label{lem:cubic-rounding}
For the common scalar action in \eqref{eq:cubic-baseline-action}, every component $\phi$ in \eqref{eq:cubic-D}--\eqref{eq:cubic-J}, every unary component, and every nonnegative constant satisfy
\begin{equation}\label{eq:cubic-component-rounding}
 \E\phi(h_Z(x))\ge\frac8{17}C_\phi(x)\qquad(x\in[0,1]^d).
\end{equation}
Moreover, $h_Z(x)\in K$ whenever $x\in K$ and $K$ is down-closed.
\end{lemma}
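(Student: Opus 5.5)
The proof reduces to the three inequalities already established: Lemma~\ref{lem:cubic-baseline-coverage}, Corollary~\ref{cor:cubic-baseline-directed}, and Lemma~\ref{lem:cubic-baseline-product}. The one identity needed is that, since $Z$ is a single draw applied to every coordinate,
\[
 17\,\E\phi(h_Z(x))=8\phi(A(x))+9\phi(B(x)),
\]
where $A$ and $B$ act coordinatewise. Each case of \eqref{eq:cubic-component-rounding} is then equivalent to $8\phi(A(x))+9\phi(B(x))\ge 8C_\phi(x)$. Every component depends on at most three coordinates, so each check is a scalar inequality in the relevant coordinates of $x$.

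The cases go as follows.

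\emph{Constants.} A nonnegative constant $c$ gives $17c\ge8c$.

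\emph{Unary $z_i$.} This is \eqref{eq:cubic-baseline-unary} with $a=x_i$.

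\emph{Unary $1-z_i$.} Since $A(t)\le t$ and $B(t)\le t$, we have $8(1-A(x_i))+9(1-B(x_i))\ge17(1-x_i)\ge8(1-x_i)$.

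\emph{Directed $D_{ij}$.} This is \eqref{eq:cubic-baseline-directed} with $a=x_i$, $b=x_j$.

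\emph{Coverage $H_{i;jk}$.} Take $a=x_i$, $b=x_j$, $c=x_k$. We have $H_{i;jk}(A(x))=(1-A(a))R_0(b,c)$ and the analogous identity with $B$ and $R_1$, so $8H(A(x))+9H(B(x))=W(a,b,c)$. Lemma~\ref{lem:cubic-baseline-coverage} then gives $W\ge8\min\{1-a,b+c\}=8C_{H_{i;jk}}(x)$.

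\emph{Product $J_{i;jk}$.} The left side is exactly the first two terms of $\Delta_J(x_i,x_j,x_k)$, so Lemma~\ref{lem:cubic-baseline-product} gives the inequality directly.

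Feasibility follows from the branch bounds recorded after \eqref{eq:cubic-baseline-maps}, namely $0\le A(t)\le t$ and $0\le B(t)\le t$. These give $\zero\le h_Z(x)\le x$ for both values of $Z$, so down-closedness puts $h_Z(x)$ in $K$ whenever $x\in K$.

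No step is a real obstacle. The only care needed is matching index roles: the head of $H$ is the complemented coordinate and the head of $J$ is the uncomplemented one, consistent with \eqref{eq:cubic-H}--\eqref{eq:cubic-J} and the notation of the two lemmas.
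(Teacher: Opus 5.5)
Your proof is correct and follows the same route as the paper. The paper likewise uses the single shared draw of $Z$ to write $\E\phi(h_Z(x))=\tfrac8{17}\phi(A(x))+\tfrac9{17}\phi(B(x))$, and reads off the coverage, product, directed and positive unary cases from Lemmas~\ref{lem:cubic-baseline-coverage} and~\ref{lem:cubic-baseline-product} and Corollary~\ref{cor:cubic-baseline-directed}; it then handles complemented unaries, constants and feasibility through $\zero\le h_Z(x)\le x$, exactly as you do.
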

\begin{proof}
Because the same $Z$ is used in every coordinate, $\E\phi(h_Z(x))=\tfrac8{17}\phi(A(x))+\tfrac9{17}\phi(B(x))$, where $A$ and $B$ act coordinatewise. Dividing Lemmas~\ref{lem:cubic-baseline-coverage} and~\ref{lem:cubic-baseline-product} and Corollary~\ref{cor:cubic-baseline-directed} by $17$ therefore proves \eqref{eq:cubic-component-rounding} for the coverage, product, directed, and positive unary components. For a complemented unary term, $h_Z(t)\le t$ gives
$1-h_Z(t)\ge1-t\ge(8/17)(1-t)$. Constants satisfy the comparison because $8/17\le1$, and feasibility follows from $\zero\le h_Z(x)\le x$.
\end{proof}

\subsection{Geometry-dependent cubic comparison}\label{app:cubic-geometry}
A deterministic scalar map attains the coefficient $A_3(r)$ of \eqref{eq:cubic-analytic-coefficient}. Set
\begin{equation}\label{eq:cubic-geometry-parameters}
 r=\min\{\zeta,1/2\},\qquad b=\sqrt{3/4-r},\qquad
 k=\frac{4}{(2b+1)^2},\qquad \lambda=1-kr,\qquad a_b=bk.
\end{equation}
Then $0\le r\le1/2$, $1/2\le b\le\sqrt3/2$, and $a_b=A_3(r)$. Since $b^2=3/4-r$, we have $b+1-r=(b+1/2)^2$, so
\begin{equation}\label{eq:cubic-geometry-k-identity}
 k(b+1-r)=1,
\end{equation}
and $kr=r/(b+1-r)\le r/(b+1/2)\le1/2$, so $1/2\le\lambda\le1$. Define
\begin{equation}\label{eq:cubic-geometric-action}
 R(t)=\frac{t}{b+t},\qquad
 L(t)=\lambda t+(1-\lambda)r,\qquad
 H_r(t)=\min\{R(t),L(t)\}\quad(0\le t\le1).
\end{equation}
In the products below, the scalar map is applied to distinct component slots, which may be different clones of one original coordinate.

\begin{lemma}[Geometric cubic component inequalities]\label{lem:cubic-geometry}
For every $r\in[0,1/2]$ and all $u,v,w\in[0,1]$, the map in \eqref{eq:cubic-geometric-action} satisfies
\begin{gather}
 H_r(u)\ge a_bu,\qquad 1-H_r(u)\ge a_b(1-u),\label{eq:cubic-geometry-unaries}\\
 H_r(u)(1-H_r(v))\ge a_b\min\{u,1-v\},\label{eq:cubic-geometry-directed}\\
 (1-H_r(u))(H_r(v)+H_r(w)-H_r(v)H_r(w))
 \ge a_b\min\{1-u,v+w\},\label{eq:cubic-geometry-coverage}\\
 H_r(u)(1-H_r(v)H_r(w))\ge a_b\min\{u,2-v-w\}.\label{eq:cubic-geometry-product}
\end{gather}
Moreover, $H_r(x)\in K$ whenever $x\in K$ and $r\one\in K$.
\end{lemma}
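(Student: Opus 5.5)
I would reduce all four component inequalities to scalar facts about $H_r$ and then combine them. The three basic facts are
\[
 H_r(u)\ge a_bu,\qquad 1-H_r(u)\ge a_b(1-u),\qquad H_r(u)\le\frac{u}{b+u}.
\]
For the first, $R(u)\ge a_bu$ is equivalent to $b+u\le1/a_b=(2b+1)^2/(4b)$. The right side equals $b+1+1/(4b)$, so this holds for $u\le1$. Similarly $L(u)=\lambda u+kr(1-\lambda)\ldots$ is affine and at least $a_bu$ at $u=0$ and $u=1$. At $u=1$ this needs $L(1)=1-kr(1-r)\ge a_b$; I would verify it using \eqref{eq:cubic-geometry-k-identity}, since $kr(1-r)+kb=k(b+r-r^2)\le k(b+1-r)=1$ because $r-r^2\le1-r$. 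For the complement, $1-R(u)=b/(b+u)$, which is convex in $u$ and exceeds $a_b(1-u)$ at both ends. One checks this via the tangent at $u$ where equality would occur; the cleanest route is $\frac{b}{b+u}-a_b(1-u)=\frac{k(b+u-\frac{1}{2}-\ldots)^2}{\cdots}$, mirroring the identity \eqref{eq:tangent-feasibility} in Corollary~\ref{thm:geometry}. Since $H_r\le R$, the bound transfers to $H_r$. This gives \eqref{eq:cubic-geometry-unaries}.

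For the multi-slot inequalities I would follow the $4/9$ analysis. Write $p=H_r(u)$, and split on which argument of the minimum is active. For \eqref{eq:cubic-geometry-directed}, if $u\le1-v$ then $1-H_r(v)\ge1-R(1-u)=b/(b+1-u)$. I would then show $R(u)\cdot b/(b+1-u)\ge a_bu$ on $u\in[0,1]$ and handle the $L$ branch separately; the minimum over $u$ of $b(b+1-u)^{-1}(b+u)^{-1}$ is at $u=1/2$, giving $4b/(2b+1)^2=a_b$ exactly. This is the identity defining $A_3$, and it is the heart of the matter. The other case is symmetric via $1-H_r(v)\ge a_b(1-v)$ combined with $H_r(u)\ge H_r(1-v)$, since $H_r$ is increasing. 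Coverage and product are then handled by reducing to the plane where both branches of the min agree, exactly as in Lemmas~\ref{lem:cubic-baseline-coverage} and~\ref{lem:cubic-baseline-product}. Off the plane, monotonicity of $H_r$ lets me move coordinates toward it without increasing the slack. On the plane, the coverage factor $1-(1-H_r(v))(1-H_r(w))$ is handled using the concavity of $R$ and AM--GM to pass to $v=w=s/2$. The product factor $1-H_r(v)H_r(w)$ is handled similarly. This leaves a one-variable inequality in $s$ for the $R$ branch, which should again be minimized where $a_b$ is tight.

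The main obstacle will be the $L$ branch. Here $H_r=L$ for large $t$, and the tight one-variable inequalities must still hold. I would first locate the crossing point $t_0$ where $R=L$ and check that $L$ only clips values above what the $R$-analysis needs. Concretely, I would verify that in each reduced one-variable inequality, replacing $R$ by $L$ on $[t_0,1]$ keeps the slack nonnegative because the endpoint value $L(1)$ was already checked. Finally, feasibility: $L(x)=\lambda x+(1-\lambda)r\one$ is a convex combination of $x\in K$ and $r\one\in K$, so $L(x)\in K$. Since $\zero\le H_r(x)\le L(x)$, down-closedness gives $H_r(x)\in K$.
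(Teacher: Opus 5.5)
Your treatment of the $L$ branch rests on a reversed picture of where clipping occurs, and this is a genuine gap. $H_r$ is not clipped at large arguments. We have $L(\tfrac12)-R(\tfrac12)=(b-\tfrac12)^2\ge0$ and $L'(\tfrac12)-R'(\tfrac12)=2(b-\tfrac12)/(b+\tfrac12)\ge0$, and $L'-R'$ is nondecreasing because $R$ is concave. Hence $H_r=R$ on $[\tfrac12,1]$, and in particular $H_r(1)=R(1)$. The clipping happens inside $(0,\tfrac12)$; for example, $L(r)=r<r/(b+r)=R(r)$ for $0<r<\tfrac12$, since $b+r<1$. So the check $L(1)\ge a_b$ says nothing about the directed, coverage, or product inequalities. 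Moreover, your bounds that use $H_r(u)=R(u)$ for the head factor fail precisely where $L$ is active.

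The missing idea is the quadratic lower envelope $T(t)=kt(b+1-t)$. From $k(b+1-r)=1$ one gets $L-T=k(t-r)^2$ and $R-T=kt(t-\tfrac12)^2/(b+t)$, so $T\le H_r\le R$. Your $R$-branch computation for the directed term is exactly the inequality $R\ge T$. With $T$ in hand, your first directed case closes for both branches at once: $H_r(u)\,b/(b+1-u)\ge T(u)\,b/(b+1-u)=kbu=a_bu$. The same envelope handles a clipped head in the product inequality, where clipping forces the head below $\tfrac12$.

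There are two smaller slips. In the directed case $u\ge1-v$, multiplying $1-H_r(v)\ge a_b(1-v)$ by $H_r(1-v)$ leaves you short by the factor $H_r(1-v)<1$; instead, lower $u$ to $1-v$ by monotonicity and reuse the first case. For the complemented unary bound, agreement of the convex function $b/(b+u)$ with a line at the endpoints proves nothing, whereas $(b+u)(1-u)\le(b+1)^2/4\le(2b+1)^2/4$ gives the bound directly.

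For coverage, moving to the plane $u+v+w=1$ is legitimate, since $H_r$ is monotone on all of $[0,1]$. However, symmetrizing the tails to $v=w=s/2$ goes the wrong way where $H_r=R$. There $1-(1-R(v))(1-R(w))=1-b^2/(b^2+bs+vw)$ is smallest when $vw$ is smallest, so AM--GM only gives an upper bound. Where both tails are clipped, $H_r$ is affine and the coverage factor is smallest at the balanced split instead, so no single symmetrization covers both regimes.

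The paper does not use the plane here. It bounds $1-H_r(u)\ge b/(b+u)$ and maximizes $(b+u)\min\{1-u,s\}$ over $u$. This reduces the claim to $H_r(v)+H_r(w)-H_r(v)H_r(w)\ge kM(v+w)$ with $M(s)=\max_{0\le u\le1}(b+u)\min\{1-u,s\}$. It then splits according to how many tails are clipped. The case of two clipped tails with $b<1/\sqrt2$ needs the further fact that a clipped tail exceeds $a_br^2$; this keeps the bracket in the expansion of $T(v)+T(w)-T(v)T(w)$ nonnegative.

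By contrast, your product plan goes in the right direction: plane reduction followed by $H_r(v)H_r(w)\le R\bigl((v+w)/2\bigr)^2$. It matches the paper once the clipped head is handled with $T$. Your feasibility argument is the paper's.
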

\begin{proof}
Set
\[
 T(t)=kt(b+1-t),\qquad d_0=(b+1)/2.
\]
By \eqref{eq:cubic-geometry-k-identity} and $(b+1/2)^2-(b+1-t)(b+t)=(t-1/2)^2$,
\begin{equation}\label{eq:cubic-geometric-slacks}
 L(t)-T(t)=k(t-r)^2,\qquad
 R(t)-T(t)=\frac{kt(t-1/2)^2}{b+t}.
\end{equation}
Consequently $T\le H_r\le R$ on $[0,1]$; in particular $0\le T\le H_r\le1$, and $T(t)\le\max_{s\in\R}ks(b+1-s)=k(b+1)^2/4=(b+1)^2/(2b+1)^2\le9/16$ for $b\ge1/2$. Since $R(0)=0$, also $H_r(0)=0$. Both $R$ and $L$ are nondecreasing, and $R(t)/t=1/(b+t)$ and $L(t)/t=\lambda+(1-\lambda)r/t$ are nonincreasing on $(0,1]$, so $H_r$ is nondecreasing and $H_r(t)/t$ is nonincreasing on $(0,1]$. Also
\begin{equation}\label{eq:cubic-geometric-upperhalf}
 L(1/2)-R(1/2)=(b-1/2)^2,\qquad
 L'(1/2)-R'(1/2)=\frac{2(b-1/2)}{b+1/2}\ge0.
\end{equation}
Since $R$ is concave, $L'-R'$ is nondecreasing, so $L-R$ is nondecreasing on $[1/2,1]$ and remains nonnegative there. Hence $H_r(t)=R(t)$ for $t\ge1/2$; in particular $H_r(1)=R(1)$.

\emph{Unary terms.} Since $T(t)=kt(b+1-t)\ge kbt=a_bt$, the first inequality in \eqref{eq:cubic-geometry-unaries} follows. For the second, $1-H_r(t)\ge1-R(t)=b/(b+t)$ and $(b+t)(1-t)\le(b+1)^2/4$, so
\[
 \frac{1-H_r(t)}{1-t}\ge\frac{b}{(b+t)(1-t)}
 \ge\frac{4b}{(b+1)^2}\ge\frac{4b}{(2b+1)^2}=a_b\qquad(t<1).
\]
At $t=1$ the inequality reduces to $1-H_r(1)\ge0$.

\emph{Directed terms.} Fix $v$ and put $s=1-v$. If $s>0$ and $u>0$, then $H_r(u)/\min\{u,s\}\ge H_r(s)/s$: for $u\le s$ this is the monotonicity of $H_r(t)/t$, and for $u\ge s$ it is the monotonicity of $H_r$. Using $H_r(s)\ge T(s)$, $H_r(v)\le R(v)$, and $b+1-s=b+v$,
\[
 \frac{H_r(s)(1-H_r(v))}{s}
 \ge k(b+1-s)\frac{b}{b+v}=bk=a_b .
\]
Multiplying the two bounds proves \eqref{eq:cubic-geometry-directed}. If $s=0$ or $u=0$, the right side vanishes and the left side is nonnegative.

\emph{Product terms.} Put $s=2-v-w$. For fixed $v+w$, the product
\[
 R(v)R(w)=\frac{vw}{b^2+b(v+w)+vw}
\]
is increasing in $vw$, which is largest when $v=w$; it is also nondecreasing in each coordinate. Suppose $s\ge1$, so $v+w\le1$ and $\min\{u,s\}=u$. Then $H_r(v)H_r(w)\le R(v)R(w)\le R(1/2)^2$, and the monotonicity of $H_r(t)/t$ and $H_r(1)=R(1)$ give, for $u>0$,
\[
 \frac{H_r(u)(1-H_r(v)H_r(w))}{u}
 \ge R(1)\bigl(1-R(1/2)^2\bigr)
 =\frac{1}{b+1}\cdot\frac{4b(b+1)}{(2b+1)^2}=a_b .
\]

For $0<s<1$, the head satisfies $H_r(u)\ge(H_r(s)/s)\min\{u,s\}$ as in the directed case, and the tails have sum $2-s$, so $H_r(v)H_r(w)\le R(1-s/2)^2$ and
\[
 1-H_r(v)H_r(w)\ge1-R(1-s/2)^2=\frac{b(b+2-s)}{(b+1-s/2)^2}.
\]
It suffices to show that $(H_r(s)/s)\,b(b+2-s)/(b+1-s/2)^2\ge a_b$. If $H_r(s)=R(s)$, this reads
\[
 \frac{b(b+2-s)}{(b+s)(b+1-s/2)^2}\ge\frac{4b}{(2b+1)^2},
\]
which, after clearing positive denominators, follows from
\begin{equation}\label{eq:cubic-geometric-J-rational}
 (b+2-s)(2b+1)^2-4(b+s)(b+1-s/2)^2
 =(1-s)\bigl[4b^2+b(5-3s)+(1-s)(2-s)\bigr]\ge0.
\end{equation}
Otherwise $H_r(s)=L(s)<R(s)$, so $s<1/2$ by \eqref{eq:cubic-geometric-upperhalf}, and $H_r(s)/s\ge T(s)/s=k(b+1-s)$. Since $a_b=bk$, it then suffices that
\[
 (b+1-s)(b+2-s)\ge(b+1-s/2)^2 .
\]
The difference equals $b(1-s)+1-2s+3s^2/4$, which is nonnegative for $s\le1/2$. If $s=0$, the right side of \eqref{eq:cubic-geometry-product} vanishes.

\emph{Coverage terms.} Write $s=v+w$ and
\[
 M(s)=\max_{0\le u\le1}(b+u)\min\{1-u,s\}
 =\begin{cases}s(b+1-s),&0\le s\le d_0,\\(b+1)^2/4,&d_0\le s\le2.\end{cases}
\]
The formula holds because $(b+u)s$ increases in $u$ on $u\le1-s$, while $(b+u)(1-u)$ has its unconstrained maximum $(b+1)^2/4$ at $u=(1-b)/2\in[0,1/4]$, which satisfies $u\ge1-s$ exactly when $s\ge d_0$. Since $1-H_r(u)\ge b/(b+u)$,
\[
 (1-H_r(u))\,kM(s)\ge\frac{b}{b+u}\,k(b+u)\min\{1-u,s\}=a_b\min\{1-u,s\},
\]
so it is enough to prove
\begin{equation}\label{eq:cubic-geometric-H-reduction}
 H_r(v)+H_r(w)-H_r(v)H_r(w)\ge kM(v+w).
\end{equation}
Its left side equals $1-(1-H_r(v))(1-H_r(w))$ and is therefore nondecreasing in each of the values $H_r(v)$ and $H_r(w)$, which lie in $[0,1]$.

If both tails are at least $1/2$, then $H_r=R$ there, and the left side is at least $2R(1/2)-R(1/2)^2=(4b+1)/(2b+1)^2$. Dividing by $k$ gives $b+1/4$, which exceeds $(b+1)^2/4\ge M(s)$ by $b(2-b)/4\ge0$.

Next suppose that one tail, denoted $v$, has $H_r(v)=R(v)$ and the other tail $w$ is at most $1/2$. Since $1-R(v)=b/(b+v)$ and $H_r(w)\ge T(w)$, the left side of \eqref{eq:cubic-geometric-H-reduction} is at least
\[
 R(v)+T(w)(1-R(v))=\frac{v+bT(w)}{b+v}.
\]
For $s=v+w\le d_0$, multiplying the required inequality by $(b+v)/k$ and using $1/k=(2b+1)^2/4$, the difference of the two sides is
\begin{equation}\label{eq:cubic-geometric-H-mixed}
 \frac vk+bw(b+1-w)-(b+v)s(b+1-s)
 =\frac v4\bigl[(2s-1)^2+4bw\bigr]\ge0.
\end{equation}
For $s\ge d_0$, replace $M(s)$ by its constant value. The resulting difference $v/k+bw(b+1-w)-(b+v)(b+1)^2/4$ is affine and increasing in $v$, with slope
\[
 \frac1k-\frac{(b+1)^2}{4}=\frac{b(3b+2)}4>0.
\]
Since $w\le1/2\le d_0$, the allowed values satisfy $v\ge d_0-w\ge0$, and at $v=d_0-w$ the difference is nonnegative by \eqref{eq:cubic-geometric-H-mixed} with $s=d_0$. This settles the case, including two tails with $H_r=R$ when one of them is at most $1/2$.

In the remaining case $H_r(v)<R(v)$ and $H_r(w)<R(w)$, and both tails are below $1/2$ because $H_r=R$ on $[1/2,1]$. As $H_r\ge T$ and $T\in[0,1]$, the left side of \eqref{eq:cubic-geometric-H-reduction} is at least $T(v)+T(w)-T(v)T(w)$, and expanding with $v(b+1-v)+w(b+1-w)=s(b+1-s)+2vw$ and $(b+1-v)(b+1-w)=(b+1)^2-(b+1)s+vw$ shows that this quantity divided by $k$ equals
\begin{equation}\label{eq:cubic-geometric-H-double}
 s(b+1-s)+vw\left[2-k\bigl((b+1)^2-(b+1)s+vw\bigr)\right].
\end{equation}
Suppose $s\le d_0$. Since $vw\le s^2/4$, the bracket is at least $2-k(b+1-s/2)^2$, which is nonnegative if and only if $b+1-s/2\le\sqrt2(b+1/2)$. When $b\ge1/\sqrt2$, this holds for every $s\ge0$, because $b+1\le\sqrt2(b+1/2)$ is equivalent to $b\ge1/\sqrt2$. If $1/2\le b<1/\sqrt2$, define
\[
 e_0=b+1-\sqrt2(b+1/2)>0 .
\]
For a clipped tail $t$, the inequalities $R(t)>L(t)\ge(1-\lambda)r=kr^2$ and $R(t)\le t/b$ imply $t>bkr^2=a_br^2$. The two identities
\begin{align}
 a_br^2-\frac{1/2-b^2}{2}
 &=\frac{(b+1)(2b-1)^2(4b^2+2b-1)}{4(2b+1)^2}\ge0,\label{eq:cubic-geometric-small-tail}\\
 \frac{1/2-b^2}{2}-e_0
 &=-\frac12(b-1/\sqrt2)(b+2-3/\sqrt2)\ge0\label{eq:cubic-geometric-small-tail-two}
\end{align}
give $a_br^2\ge e_0$; the signs follow from $4b^2+2b-1\ge1$, $b<1/\sqrt2$, and $b+2-3/\sqrt2>0$ for $b\ge1/2$. Therefore $s>2a_br^2\ge2e_0$, so $b+1-s/2<\sqrt2(b+1/2)$, and the bracket is again nonnegative. In both cases \eqref{eq:cubic-geometric-H-double} is at least $s(b+1-s)=M(s)$.

If instead $s\ge d_0$, decrease the two tails within $[0,1/2]$ until their sum is $d_0$. Since $T'(t)=k(b+1-2t)>0$ on $[0,1/2]$ and $T$ takes values in $[0,1]$, the quantity $T(v)+T(w)-T(v)T(w)$ does not increase. At sum $d_0$, $vw\le d_0^2/4$, so the subtracted term in the bracket of \eqref{eq:cubic-geometric-H-double} is at most
\[
 k(b+1)^2\left(1-\frac12+\frac1{16}\right)
 =\frac94\left(\frac{b+1}{2b+1}\right)^2
 \le\frac94\cdot\frac9{16}=\frac{81}{64}<2,
\]
since $(b+1)/(2b+1)\le3/4$ for $b\ge1/2$. Thus the value at sum $d_0$ is at least $d_0(b+1-d_0)=(b+1)^2/4=M(s)$. This proves \eqref{eq:cubic-geometric-H-reduction}, and hence the coverage inequality, in every case.

\emph{Feasibility.} Since $r\le\zeta$ and $K$ is down-closed, $r\one\in K$ whenever $\zeta\one\in K$. The bound $\zero\le H_r(x)\le\lambda x+(1-\lambda)r\one$, with $0\le\lambda\le1$, places the action below a convex combination of two points of $K$, and down-closedness gives $H_r(x)\in K$.
\end{proof}

\subsection{Completion of the cubic comparison}\label{app:cubic-assembly}
Apply the preceding constructions to $F=f-\min f$ from \eqref{eq:polynomial-minimum-shift}. By Lemma~\ref{lem:cubic-polarization}, $\widetilde F$ is a nonnegative multilinear DR-submodular polynomial of degree at most three, and Lemma~\ref{lem:cubic-multilinear} gives
\begin{equation}\label{eq:cubic-nonnegative-sum}
 \widetilde F(z)=b_0+\sum_{\ell=1}^N\rho_\ell\phi_\ell(z),
 \qquad b_0\ge0,\quad\rho_\ell\ge0,
\end{equation}
where the $\phi_\ell$ are unary, directed, coverage, or product components. Define
\begin{equation}\label{eq:cubic-majorant}
 C_F(x)=b_0+\sum_{\ell=1}^N\rho_\ell C_{\phi_\ell}(Lx).
\end{equation}
Each summand is concave and majorizes its component, so $C_F$ is concave and, by \eqref{eq:cubic-diagonal-identity}, $C_F(x)\ge\widetilde F(Lx)=F(x)$ on the cube.

For each minimum of affine functions, select its first active branch in a fixed order, and let $v_\ell(z)$ be the gradient of that branch; for unary terms take the gradient of the affine term. Define
\begin{equation}\label{eq:cubic-supergradient}
 s_f(x)=L^\top\sum_{\ell=1}^N\rho_\ell v_\ell(Lx).
\end{equation}
If the selected branch of $C_{\phi_\ell}$ at $z$ is the affine function $\mathsf M$, then $C_{\phi_\ell}(z')\le\mathsf M(z')=C_{\phi_\ell}(z)+\ip{v_\ell(z)}{z'-z}$ for all $z'$. Multiplying by $\rho_\ell$, summing, setting $z=Lx$ and $z'=Lo$, and using $\ip{v}{Lo-Lx}=\ip{L^\top v}{o-x}$ gives
\begin{equation}\label{eq:cubic-majorant-supergradient}
 C_F(o)\le C_F(x)+\ip{s_f(x)}{o-x}.
\end{equation}

For $0\le\zeta\le1/2$, use the two-map action of Lemma~\ref{lem:cubic-rounding} if $8/17\ge A_3(\zeta)$ and the map $H_r$ of Lemma~\ref{lem:cubic-geometry} with $r=\zeta$ otherwise. For $\zeta\ge1/2$, use $H_r$ with $r=1/2$, for which $b=1/2$ and $a_b=1/2=\alpha_3(\zeta)$. In every case the selected action is feasible and satisfies $\E\phi(p)\ge\alpha_3(\zeta)C_\phi(z)$ when it is applied coordinatewise to $z$, for every component $\phi$. A scalar map applied to every coordinate commutes with $L$, so \eqref{eq:cubic-diagonal-identity}, \eqref{eq:cubic-nonnegative-sum}, the component inequalities at $z=Lx$, and $b_0\ge\alpha_3(\zeta)b_0$ give
\begin{equation}\label{eq:cubic-rounding-sum}
 \E F(p_\zeta(x))
 =\E\widetilde F(Lp_\zeta(x))
 \ge\alpha_3(\zeta)\left(b_0+\sum_{\ell=1}^N\rho_\ell C_{\phi_\ell}(Lx)\right)
 =\alpha_3(\zeta)C_F(x).
\end{equation}
Combining $F(o)\le C_F(o)$, \eqref{eq:cubic-majorant-supergradient}, and \eqref{eq:cubic-rounding-sum} proves
\begin{equation}\label{eq:cubic-final-comparison}
 \alpha_3(\zeta)F(o)-\E F(p_\zeta(x))
 \le\alpha_3(\zeta)\ip{s_f(x)}{o-x},
\end{equation}
and the argument after \eqref{eq:polynomial-minimum-shift} transfers this inequality to $f$.

For the field bound, an active branch of a unary or directed component has $\ell_1$ norm one, and an active branch of a coverage or product component has $\ell_1$ norm at most two. At the all-half point the component values are $1/2$ (unary), $1/4$ (directed), and $3/8$ (coverage and product). Hence, for every component,
\begin{equation}\label{eq:cubic-component-field-size}
 \norm{v_\ell(z)}_1\le\Lambda_3\,\phi_\ell(\tfrac12\one),
 \qquad\Lambda_3=\frac{16}{3},
\end{equation}
with equality for coverage and product components. The map $L^\top$ sums the clone coordinates and does not increase the $\ell_1$ norm. Therefore, using $b_0\ge0$ and $L\tfrac12\one=\tfrac12\one$,
\begin{align}
 \norm{s_f(x)}_2
 &\le\norm{s_f(x)}_1
 \le\Lambda_3\sum_{\ell=1}^N\rho_\ell\phi_\ell(\tfrac12\one)\nonumber\\
 &\le\Lambda_3\widetilde F(\tfrac12\one)
 =\Lambda_3\left(f(\tfrac12\one)-\min f\right).
 \label{eq:cubic-field-bound}
\end{align}
For a minimizer $y$ of $f$ on the cube, the gradient bound and \eqref{eq:segment-ftc} on the segment from $y$ to $\tfrac12\one$ give
$f(\tfrac12\one)-f(y)\le B\norm{\tfrac12\one-y}_2\le B\sqrt d/2$, so $\norm{s_f(x)}_2\le(\Lambda_3/2)B\sqrt d$. This proves \eqref{eq:polynomial-field-bound} for $m=3$.

Polarization has $3d$ variables and $O(d^3)$ monomials, and the auxiliary graph has $O(d^3)$ vertices and arcs. One maximum-flow calculation and $O(d^3)$ constant-size ternary decompositions produce \eqref{eq:cubic-nonnegative-sum}, after which every coefficient and active branch can be evaluated in polynomial time. This proves all assertions of Theorem~\ref{thm:polynomial}.

\section{Proof of the smoothing step in Lemma~\ref{lem:continuous-refinement}}\label{app:upper-smoothing}\label{app:upper}

\begin{lemma}[Smooth symmetrization with local first-order agreement]
\label{lem:upper-smoothing}
Let $F:[0,1]^{d_0}\to\mathbb R_+$ be a multilinear
DR-submodular polynomial invariant under a finite group
$\mathcal G$ of coordinate permutations, and let
\[
    \mathsf T y
    =\frac{1}{|\mathcal G|}\sum_{\sigma\in\mathcal G}\sigma(y)
\]
denote the corresponding averaging operator.

For every $\nu>0$, there exist $\delta>0$ and nonnegative,
twice continuously differentiable DR-submodular functions
$\widehat F,\widehat G:[0,1]^{d_0}\to\mathbb R_+$
such that
\begin{equation}\label{eq:appendix-smoothing}
\begin{gathered}
    \widehat G(y)=\widehat F(\mathsf T y),
    \qquad
    \|\widehat F-F\|_\infty\le\nu,\\
    \bigl(\widehat F(y),\nabla\widehat F(y)\bigr)
    =
    \bigl(\widehat G(y),\nabla\widehat G(y)\bigr)
    \quad\text{if }
    \|y-\mathsf T y\|_2^2\le\delta.
\end{gathered}
\end{equation}
\end{lemma}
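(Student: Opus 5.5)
The plan is to interpolate between $F$ and $F\circ\mathsf T$ with a cutoff in the squared distance $D(y)=\|y-\mathsf Ty\|_2^2$, and then add a small invariant concave quadratic that absorbs the positive curvature created by the cutoff. Concretely, I would set
\[
 \widehat F(y)=F(\mathsf Ty)+\varphi\bigl(D(y)\bigr)\bigl(F(y)-F(\mathsf Ty)\bigr)
 +\kappa\Bigl(d_0^2-\bigl(\textstyle\sum_iy_i\bigr)^2\Bigr),
 \qquad \widehat G=\widehat F\circ\mathsf T .
\]
Here $\varphi\in C^\infty$ is nondecreasing, with $\varphi=0$ on $[0,2\delta]$ and $\varphi=1$ on $[\delta',\infty)$. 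The constants $\kappa,\delta,\delta'$ are chosen in the order $\kappa$, then $\delta'$, then $\delta$.

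Nonnegativity holds because the first two terms form a convex combination of $F(y)\ge0$ and $F(\mathsf Ty)\ge0$, and the quadratic term is nonnegative on the cube. The quadratic is invariant, since permutations preserve $\sum_iy_i$, and $\mathsf T$ is idempotent. Hence on $\{D<2\delta\}$ the function $\widehat F$ equals the invariant function $F\circ\mathsf T+\kappa(\cdots)$, and so it agrees with $\widehat G$ on this open set. This gives equality of values and gradients whenever $D(y)\le\delta$, including boundary points by continuity of gradients.

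For $\widehat G$, note that $\varphi(D(\mathsf Ty))=\varphi(0)=0$. So $\widehat G=F\circ\mathsf T+\kappa(\cdots)$, which is DR-submodular by Lemma~\ref{lem:affine-dr}, since $\mathsf T$ is an entrywise nonnegative map of the cube into itself. The concave quadratic has Hessian $-2\kappa$ in every entry, so it only helps.

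For the sup-norm bound, I use that $\nabla F(\mathsf Ty)$ is $\mathcal G$-invariant, by invariance of $F$, and is therefore orthogonal to $y-\mathsf Ty$. Taylor's theorem then gives $h(y):=F(y)-F(\mathsf Ty)=O(D(y))$, with a constant $C_F$ depending only on $F$. Hence $|\widehat F-F|\le(1-\varphi)|h|+\kappa d_0^2\le C_F\delta'+\kappa d_0^2$. With $\kappa=\nu/(2d_0^2)$ and $\delta'\le\nu/(2C_F)$, this is at most $\nu$.

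The main obstacle is DR-submodularity of $\widehat F$ in the transition zone. There $F$ and $F\circ\mathsf T$ are both DR-submodular, and so is their convex combination with constant weight. The trouble comes from the terms produced by differentiating $\varphi(D)$:
\[
 \varphi''(D)\,h\,\nabla D\nabla D^\top,\qquad
 \varphi'(D)\bigl(\nabla D\nabla h^\top+\nabla h\nabla D^\top\bigr),\qquad
 \varphi'(D)\,h\,\nabla^2D .
\]
These can have either sign. Using $|h|=O(D)$, $\|\nabla h\|=O(\sqrt D)$ (for the same orthogonality reason), $\|\nabla D\|=2\sqrt D$, and $\nabla^2D$ bounded, the terms are of order $|\varphi''|D^2+|\varphi'|D$. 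A linear cutoff over $[\delta,2\delta]$ makes these $O(1)$, which is too big. I would instead use a logarithmic-scale cutoff, $\varphi(D)=\psi\bigl(\log(D/2\delta)/\log(\delta'/2\delta)\bigr)$ with a fixed smooth step $\psi$. This gives $|\varphi'|D+|\varphi''|D^2\le c/\log(\delta'/2\delta)$. Every Hessian entry of the interpolation part is then at most $c'/\log(\delta'/2\delta)$, and choosing $\delta$ small enough that this is at most $2\kappa$ makes all Hessian entries of $\widehat F$ nonpositive.

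The $C^2$ regularity follows because $D$ is a polynomial and $\varphi$ is smooth on $[0,\infty)$.
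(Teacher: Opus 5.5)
Your proposal is correct and follows essentially the same route as the paper: interpolate between $F$ and $F\circ\mathsf T$ with a cutoff that is smooth on a logarithmic scale in $D(y)=\|y-\mathsf Ty\|_2^2$, so that $|D\varphi'(D)|+|D^2\varphi''(D)|=O(1/\log)$ together with $|h|=O(D)$ and $\|\nabla h\|_2=O(\sqrt D)$ controls the cross terms, and absorb those terms with an invariant concave quadratic in $\sum_iy_i$. The differences are only cosmetic. You use the quadratic $\kappa\bigl(d_0^2-(\sum_iy_i)^2\bigr)$ where the paper uses $a(k^2+3ks-s^2)$. You also fix $\kappa$ from $\nu$ and then shrink $\delta$, whereas the paper fixes the logarithmic width $L$ from $\nu$ and sets $a=22\Lambda/L$.
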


\begin{proof}
Write $k=d_0$ and identify $\mathsf T$ with the matrix $|\mathcal G|^{-1}\sum_{\sigma}\Pi_\sigma$, where $\Pi_\sigma$ is the permutation matrix of $\sigma$. Since $\Pi_\sigma^\top=\Pi_{\sigma^{-1}}$ and $\mathsf T\Pi_\sigma=\mathsf T$, the matrix $\mathsf T$ is symmetric and idempotent, so it is the orthogonal projection onto the subspace of invariant vectors. It is also entrywise nonnegative with unit row and column sums, so it maps the cube into itself and preserves $\sum_iy_i$. Set
\begin{equation}\label{eq:smoothing-auxiliaries}
 G_0(y)=F(\mathsf T y),\qquad R_0(y)=F(y)-G_0(y),\qquad
 D_0(y)=\|y-\mathsf T y\|_2^2.
\end{equation}
We use the interpolation and invariant concave correction from the proof of \citet[Lemma~3.2]{vondrak2013}, with a twice continuously differentiable cutoff.

Differentiating $F(\Pi_\sigma y)=F(y)$ gives $\Pi_\sigma^\top\nabla F(\Pi_\sigma y)=\nabla F(y)$, so the gradient of $F$ at an invariant point is invariant and is fixed by $\mathsf T$. Since $\mathsf Ty$ is invariant and $\mathsf T^2=\mathsf T$, this gives
\[
 \nabla G_0(y)=\mathsf T\nabla F(\mathsf T y)=\nabla F(\mathsf T y),
 \qquad R_0(\mathsf T y)=0,\qquad \nabla R_0(\mathsf T y)=0.
\]
Choose
\[
 \Lambda=\max\left\{1,\max_{y\in[0,1]^k}
                         \|\nabla^2R_0(y)\|_{\mathrm{op}}\right\}.
\]
The segment from $\mathsf T y$ to $y$ lies in the cube and has length $\sqrt{D_0(y)}$. Taylor's formula along it, with the vanishing value and gradient of $R_0$ at $\mathsf Ty$, gives
\begin{equation}\label{eq:smoothing-remainder-bounds}
 |R_0(y)|\le\frac{\Lambda}{2}D_0(y),\qquad
 \|\nabla R_0(y)\|_2\le\Lambda\sqrt{D_0(y)}.
\end{equation}
Since $I-\mathsf T$ is a symmetric idempotent matrix whose diagonal entries lie in $[0,1]$ and whose off-diagonal entries lie in $[-1,0]$,
\begin{equation}\label{eq:smoothing-distance-bounds}
 \nabla D_0(y)=2(y-\mathsf T y),\qquad
 \nabla^2D_0=2(I-\mathsf T),\qquad
 |\partial_{ij}D_0|\le2.
\end{equation}
In particular $|\partial_iD_0(y)|\le2\|y-\mathsf Ty\|_2=2\sqrt{D_0(y)}$.

For the required error $\nu>0$, choose
\begin{equation}\label{eq:smoothing-scales}
 b=\min\{1,\nu/\Lambda\},\qquad
 L=\max\{1,132\Lambda k^2/\nu\},\qquad
 \delta=b\exp(-L).
\end{equation}
Let $\psi(s)=1-10s^3+15s^4-6s^5$ for $0\le s\le1$ and define
\begin{equation}\label{eq:smoothing-cutoff}
 \phi(t)=
 \begin{cases}
 1,&0\le t\le\delta,\\
 \psi\!\left(\log(t/\delta)/L\right),&\delta<t<b,\\
 0,&t\ge b.
 \end{cases}
\end{equation}
The argument $\log(t/\delta)/L$ increases from $0$ at $t=\delta$ to $1$ at $t=b$. Since $\psi(0)=1$, $\psi(1)=0$, $\psi'(s)=-30s^2(1-s)^2$, and
$\psi''(s)=-60s(1-s)(1-2s)$, the function $\psi$ decreases from one to zero and its first two derivatives vanish at both endpoints. Thus $\phi$ is twice continuously differentiable on $[0,\infty)$, takes values in $[0,1]$, and satisfies
\begin{equation}\label{eq:smoothing-cutoff-bounds}
 |t\phi'(t)|\le\frac2L,\qquad
 |t^2\phi''(t)|\le\frac{17}{L}.
\end{equation}
On the transition interval, $t\phi'=\psi'/L$ and $t^2\phi''=\psi''/L^2-\psi'/L$, so these bounds follow from $|\psi'|\le15/8$, $|\psi''|\le15$, and $L\ge1$; outside it, $\phi'$ and $\phi''$ vanish.

Write $s(y)=\sum_i y_i$ and set
\begin{equation}\label{eq:smoothing-construction}
 \begin{split}
 J_0(y)&=k^2+3ks(y)-s(y)^2,\qquad a=22\Lambda/L,\\
 \widetilde F(y)&=F(y)-\phi(D_0(y))R_0(y),\\
 \widehat F(y)&=\widetilde F(y)+aJ_0(y),\qquad
 \widehat G(y)=G_0(y)+aJ_0(y).
 \end{split}
\end{equation}
Both functions are nonnegative on the cube, because $\widetilde F=(1-\phi(D_0))F+\phi(D_0)G_0$ is a convex combination of $F\ge0$ and $G_0\ge0$, and $s(3k-s)\ge0$ for $0\le s\le k$ gives $k^2\le J_0\le3k^2$. The same formulas define twice continuously differentiable functions on $\R^k$, so the gradients on the cube boundary are the continuous extensions required by Definition~\ref{def:dr}. Since $\mathsf T$ preserves $s(y)$, we have $J_0(\mathsf Ty)=J_0(y)$; together with $D_0(\mathsf Ty)=0$, $R_0(\mathsf Ty)=0$, and $F(\mathsf Ty)=G_0(y)$, this gives
\begin{equation}\label{eq:smoothing-composition}
 \widehat G(y)=\widehat F(\mathsf T y).
\end{equation}
Moreover, $\widehat F-\widehat G=(1-\phi(D_0))R_0$ has gradient $(1-\phi(D_0))\nabla R_0-\phi'(D_0)R_0\nabla D_0$. Whenever $D_0(y)\le\delta$, we have $\phi(D_0(y))=1$ and $\phi'(D_0(y))=0$, so
\begin{equation}\label{eq:smoothing-exact-derivatives}
 \widehat F(y)=\widehat G(y),\qquad
 \nabla\widehat F(y)=\nabla\widehat G(y).
\end{equation}

Since $\phi(D_0)=0$ when $D_0\ge b$, the bounds \eqref{eq:smoothing-remainder-bounds}, $J_0\le3k^2$, and \eqref{eq:smoothing-scales} give
\begin{equation}\label{eq:smoothing-uniform-error}
 \|\widehat F-F\|_\infty
 \le\frac{\Lambda b}{2}+3ak^2
 =\frac{\Lambda b}{2}+\frac{66\Lambda k^2}{L}\le\frac\nu2+\frac\nu2=\nu.
\end{equation}

For DR-submodularity, the Hessian of $F$ is entrywise nonpositive, with zero diagonal by multilinearity, and
$\nabla^2G_0(y)=\mathsf T\nabla^2F(\mathsf T y)\mathsf T$ is entrywise nonpositive because $\mathsf T$ is entrywise nonnegative. Differentiating $\widetilde F=F-\phi(D_0)R_0$ twice gives, with $D=D_0(y)$ and all derivatives evaluated at $y$,
\begin{align}\label{eq:smoothing-hessian}
 \partial_{ij}\widetilde F
 ={}&(1-\phi(D))\partial_{ij}F+\phi(D)\partial_{ij}G_0\nonumber\\
 &-\phi'(D)\{(\partial_iD_0)(\partial_jR_0)
                  +(\partial_jD_0)(\partial_iR_0)
                  +(\partial_{ij}D_0)R_0\}\nonumber\\
 &-\phi''(D)(\partial_iD_0)(\partial_jD_0)R_0.
\end{align}
By $|\partial_iD_0|\le2\sqrt D$, \eqref{eq:smoothing-remainder-bounds}, and \eqref{eq:smoothing-distance-bounds}, the two products in braces are each at most $2\sqrt D\cdot\Lambda\sqrt D=2\Lambda D$ in absolute value and the third is at most $2\cdot\Lambda D/2$, so the braces are at most $5\Lambda D$; the final product is at most $4D\cdot\Lambda D/2=2\Lambda D^2$. The first line of \eqref{eq:smoothing-hessian} is nonpositive, so \eqref{eq:smoothing-cutoff-bounds} gives
\[
 \partial_{ij}\widetilde F
 \le5\Lambda D|\phi'(D)|+2\Lambda D^2|\phi''(D)|
 \le\frac{10\Lambda}{L}+\frac{34\Lambda}{L}=\frac{44\Lambda}{L}.
\]
Every second partial derivative of $J_0$ equals $-2$, so $a=22\Lambda/L$ yields
\[
 \partial_{ij}\widehat F=\partial_{ij}\widetilde F-2a\le0,\qquad
 \partial_{ij}\widehat G=\partial_{ij}G_0-2a\le0
 \quad\text{on }[0,1]^k,
 \quad\text{for all }i,j.
\]
If $u\le w$ in the cube, integrating $\partial_i\widehat F$ along the segment from $u$ to $w$ gives $\partial_i\widehat F(w)-\partial_i\widehat F(u)=\int_0^1\sum_j\partial_{ij}\widehat F(u+\tau(w-u))(w_j-u_j)\,d\tau\le0$, and similarly for $\widehat G$. Hence both functions are DR-submodular. Equations~\eqref{eq:smoothing-composition}, \eqref{eq:smoothing-exact-derivatives}, and \eqref{eq:smoothing-uniform-error} give \eqref{eq:appendix-smoothing}.
\end{proof}

\section{Proof of Proposition~\ref{prop:upper-evaluation}}\label{app:upper-evaluation}
\begin{proof}
The equality at zero is Corollary~\ref{cor:upper-endpoint}. We prove the finite reduction for $0<\zeta<1/8$ and the rational formula for $1/8\le\zeta<1/2$; for $\zeta\ge1/2$ the formula is the definition of $U$. Throughout the proof, abbreviate
\[
 h=\frac12-\zeta,\qquad \xi_n=\xi_{n,\zeta},\qquad
 z_n=\ell_n\xi_n,\qquad \upsilon_n=1-2\chi_n.
\]
For $0\le x\le1/2$, define the component contributions in the quadratic branches by
\begin{equation}\label{eq:upper-dual-components}
 c(x)=\frac12-2x^2,\qquad
 j(x)=\frac34+x-x^2,\qquad
 \mathcal H_n(x)=(1+2x)(\chi_n+z_nx).
\end{equation}
If the first branch is selected with probability $w$ and the second with probability $1-w$, both at the same $x$, their expected contributions from $F_{\mathrm{cut}},M_n,J_n,H_n$ are
\begin{equation}\label{eq:upper-common-contact}
 c(x),\qquad
 \mathcal M_n(x,w):=(r_n+\xi_nx)(w/r_n+1-w),\qquad
 (1-w)j(x),\qquad \mathcal H_n(x),
\end{equation}
where the second uses $1+N\xi_nx=(r_n+\xi_nx)/r_n$. Suppose all four numbers are at least $\vartheta$. Then for every $\omega=(\eta,\theta,\lambda,\mu)\in\Delta_4$,
\[
 \begin{split}
 \max_{j,x}Q_{j,n,\zeta}(x;\omega)
 &\ge wQ_{0,n,\zeta}(x;\omega)+(1-w)Q_{1,n,\zeta}(x;\omega)\\
 &=\eta c(x)+\theta\mathcal M_n(x,w)+\lambda(1-w)j(x)+\mu\mathcal H_n(x)\ge\vartheta,
 \end{split}
\]
so $V_n(\zeta)\ge\vartheta$.

For the upper bounds, the weights in \eqref{eq:upper-two-feasible-weights} and \eqref{eq:upper-strict-half} give
\begin{equation}\label{eq:upper-contact-values}
 x_n=\frac{\upsilon_n}{2(1+z_n)},\qquad
 v_n=c(x_n)=\frac12-\frac{\upsilon_n^2}{2(1+z_n)^2},
 \qquad V_n(\zeta)\le v_n.
\end{equation}
We call $x_n$ the contact point; it satisfies $\mathcal H_n(x_n)=c(x_n)$, since
\[
 \mathcal H_n(x)-c(x)
 =(1+2x)\{(1+z_n)x-\upsilon_n/2\}.
\]
This identity also shows that $\mathcal H_n(x)\ge c(x)$ for $x\ge x_n$.

For the finite reduction, put $b_n=2n\xi_n$. Whenever $b_n\ge1$, use \eqref{eq:upper-common-contact} with $x=1/(2b_n)\le1/2$ and $w=1/3$. Since $r_n+\xi_nx=r_n+1/(4n)=3/(4n)$ and $j$ increases on $[0,1/2]$, the first three expectations are
\[
 \frac12-\frac1{2b_n^2},\qquad
 \frac3{4n}\cdot\frac{2n+2}3=\frac12+r_n,\qquad
 \frac23j(x)\ge\frac23j(0)=\frac12.
\]
The fourth is
\[
 \mathcal H_n(x)
 =\left(1+\frac1{b_n}\right)
       \left(\chi_n+\frac{\ell_n}{4n}\right)>\frac12,
\]
because $\ell_n/n=(1-r_n)^{n-1}\ge(1-r_n)^n=1-\chi_n$ and $\chi_n>1/3$ (shown in the proof of Lemma~\ref{lem:upper-u0}) make the second factor at least $1/4+3\chi_n/4>1/2$. Consequently
\begin{equation}\label{eq:upper-tail-bound}
 V_n(\zeta)\ge\frac12-\frac1{2(2n\xi_n)^2}
 \qquad\text{when }2n\xi_n\ge1.
\end{equation}
Now let $0<\zeta<1/8$. Then $\xi_2=0$, and \eqref{eq:upper-contact-values} for $n=2$ gives $V_2(\zeta)\le v_2=63/128$. If $n\ge5/(2\zeta)-4$, then $n\ge1/(2\zeta)$ because $\zeta<1/2$, so $\zeta\ge r_n$ and
\[
 b_n=\frac{2n\zeta-1}{h}\ge8,
 \qquad\text{since}\qquad
 \frac{2n\zeta-1}{h}\ge8
 \quad\Longleftrightarrow\quad
 n\ge\frac5{2\zeta}-4.
\]
By \eqref{eq:upper-tail-bound}, every $n\ge N_\zeta$ satisfies $V_n(\zeta)\ge63/128\ge V_2(\zeta)$.

For $n\le m_\zeta$, we have $\zeta\le r_n$, so $\xi_n=0$, $P_{n,\zeta}=P_n$, and $V_n(\zeta)=V_n(0)$. Both $r_n$ and $\chi_n$ decrease with $n$. For $\chi_n=1-(1-1/(2n))^n$, set $r=1/(2t)$ for real $t\ge2$ and differentiate:
\[
 \frac{d}{dt}\log(1-1/(2t))^t
 =\log(1-r)+\frac{r}{1-r}
 =\sum_{k\ge2}\frac{k-1}{k}r^k>0.
\]
At $\xi_n=0$, the branch quadratics are $\eta c(x)+\theta+\mu(1+2x)\chi_n$ and $\eta c(x)+\theta r_n+\lambda j(x)+\mu(1+2x)\chi_n$, which decrease pointwise in $n$. Hence $V_n(0)$ is nonincreasing in $n$, and every $n\le m_\zeta$ satisfies $V_n(\zeta)=V_n(0)\ge V_{m_\zeta}(0)=V_{m_\zeta}(\zeta)$. Since $m_\zeta\ge4>2$, this applies to $n=2$, and with the preceding paragraph every index outside $[m_\zeta,N_\zeta]$ has $V_n(\zeta)\ge V_{m_\zeta}(\zeta)$. With $m_\zeta\le N_\zeta$, this proves \eqref{eq:upper-finite-evaluation}.

The next comparison of coverage contributions applies on every interval of \eqref{eq:upper-rational-evaluation}. For integers $n>m\ge2$, we claim
\begin{equation}\label{eq:upper-coverage-index-comparison}
 \zeta\ge r_m,\quad x\ge\frac{h}{4m}
 \quad\Longrightarrow\quad
 \mathcal H_n(x)\ge\mathcal H_m(x).
\end{equation}
Since $\mathcal H_n(x)-\mathcal H_m(x)=(1+2x)\{(z_n-z_m)x-(\chi_m-\chi_n)\}$, it suffices to show $(z_n-z_m)x\ge\chi_m-\chi_n$. For real $t\ge2$, let
\[
 A(t)=(1-1/(2t))^t,\qquad
 B(t)=(1-1/(2t))^{t-1},\qquad \ell(t)=tB(t).
\]
With $r=1/(2t)$, the inequality $\log(1-r)\ge-r/(1-r)$ gives
\[
 \frac{d}{dt}\log\ell(t)
 =\frac1t+\log(1-r)+\frac{(t-1)r}{t(1-r)}
 \ge\frac{1-2r}{t(1-r)}>0,
\]
so $\ell_n\ge\ell_m$. Also, since $(k-1)/k\le(k-1)/2$ for $k\ge2$,
\[
 \frac{d}{dt}\log A(t)
 =\sum_{k\ge2}\frac{k-1}{k}r^k
 \le\frac12\sum_{k\ge2}(k-1)r^k
 =\frac{r^2}{2(1-r)^2}
 =\frac1{8(t-1/2)^2}.
\]
Integrating from $m$ to $n$ gives $a:=\log\{A(n)/A(m)\}\le(n-m)/[8(m-1/2)(n-1/2)]$, and $1-e^{-a}\le a$ yields
\[
 \chi_m-\chi_n=A(n)-A(m)
 =A(n)\bigl(1-e^{-a}\bigr)
 \le A(n)a
 \le\frac{B(n)(n-m)}{8n(m-1/2)},
\]
where the last step uses $A(n)=B(n)(n-1/2)/n$. Since $\ell_n(r_m-r_n)=B(n)(n-m)/(2m)$, we obtain
\begin{equation}\label{eq:upper-coverage-ratio}
 \frac{\chi_m-\chi_n}{\ell_n(r_m-r_n)}
 \le\frac{m}{4n(m-1/2)}\le\frac1{4m}.
\end{equation}
The last inequality uses the integer condition $n\ge m+1$ and $(m+1)(m-1/2)\ge m^2$. Finally, $\zeta\ge r_m>r_n$ gives $\xi_n=(\zeta-r_n)/h$ and $\xi_m=(\zeta-r_m)/h$, so
\[
 z_n-z_m
 =\frac{(\ell_n-\ell_m)(\zeta-r_m)+\ell_n(r_m-r_n)}h
 \ge\frac{\ell_n(r_m-r_n)}h.
\]
Together with $x/h\ge1/(4m)$ and \eqref{eq:upper-coverage-ratio}, this gives $(z_n-z_m)x\ge\chi_m-\chi_n$, proving \eqref{eq:upper-coverage-index-comparison}.

For the matched contributions, fix $\zeta,x,w$ and put $\kappa=x/h$. For $r_n\le\zeta$ we have $r_n+\xi_nx=\kappa\zeta+(1-\kappa)r_n$, so $\mathcal M_n(x,w)=\mathcal M(r_n)$ for the function of a real variable $r\in(0,\zeta]$ given by
\begin{equation}\label{eq:upper-matched-index-derivative}
 \begin{split}
 \mathcal M(r)
 &=\frac{w\kappa\zeta}{r}+w(1-\kappa)
       +(1-w)\kappa\zeta+(1-w)(1-\kappa)r,\\
 \mathcal M'(r)
 &=-\frac{w\kappa\zeta}{r^2}+(1-w)(1-\kappa).
 \end{split}
\end{equation}
If $\mathcal M'<0$ on $(0,r_m]$ and $r_m\le\zeta$, then $\mathcal M_n(x,w)\ge\mathcal M_m(x,w)$ for every $n\ge m$. The data for the three relevant contact points are
\begin{equation}\label{eq:upper-small-index-data}
 \begin{array}{c|ccc}
 n&2&3&4\\\hline
 \chi_n&7/16&91/216&1695/4096\\[2pt]
 \ell_n&3/2&25/12&343/128\\[2pt]
 x_n(0)&1/16&17/216&353/4096,
 \end{array}
\end{equation}
where $x_n(0)=\upsilon_n/2$ is the contact point when $\xi_n=0$. Substituting $z_n=\ell_n(\zeta-r_n)/h$ into \eqref{eq:upper-contact-values} gives, on the ranges $\zeta\ge r_n$,
\begin{equation}\label{eq:upper-explicit-contacts}
 x_2=\frac{h}{2(1+4\zeta)},\qquad
 x_3=\frac{17h}{3(11+78\zeta)},\qquad
 x_4=\frac{353h}{4(169+1720\zeta)}.
\end{equation}
Each of these decreases in $\zeta$. Substituting them into $c(x)$ gives the three nonconstant rational expressions in \eqref{eq:upper-rational-evaluation}; for instance $c(x_2)=1/2-2h^2/[4(1+4\zeta)^2]=1/2-(1-2\zeta)^2/[8(1+4\zeta)^2]$. The breakpoints satisfy
\begin{equation}\label{eq:upper-contact-crossings}
 x_4(\zeta_{43})=\frac{17}{216},\qquad
 x_3(\zeta_{32})=\frac1{16}.
\end{equation}
On each interval below, we first record the upper bound from \eqref{eq:upper-contact-values} and then choose $x$ and $w$ so that the four expectations in \eqref{eq:upper-common-contact} are at least the same value for every $n\ge2$.

For $1/8\le\zeta\le\zeta_{43}$, we have $\zeta\ge r_4$, and \eqref{eq:upper-contact-values} gives $U(\zeta)\le v_4=c(x_4)$, the first formula. Choose $x=x_4$ and $w=1-c(x)/j(x)$. Then
\[
 c(x)=\mathcal H_4(x)=(1-w)j(x)=v_4.
\]
Since $x_4$ decreases in $\zeta$, \eqref{eq:upper-small-index-data} and \eqref{eq:upper-contact-crossings} give $17/216\le x\le353/4096$, and $h\le3/8$ gives $\kappa\ge(17/216)(8/3)=17/81$. As $c$ decreases and $j$ increases on this interval, $w$ increases with $x$, so
\[
 w\ge1-\frac{c(17/216)}{j(17/216)}=\frac{125}{307}>\frac{13}{32}.
\]
If $r\le1/8\le\zeta$, then
\[
 \frac{w\kappa\zeta}{r^2}
 \ge\frac{13}{32}\cdot\frac{17}{81}\cdot\frac18\cdot64
 =\frac{221}{324}>\frac{19}{32}
 \ge(1-w)(1-\kappa),
\]
so $\mathcal M'<0$ on $(0,1/8]$, and $\mathcal M_n(x,w)\ge\mathcal M_4(x,w)$ for $n\ge4$. Put $t=r_4+\xi_4x$. The contact equation $(1+z_4)x=\upsilon_4/2=353/4096$ gives
$t=r_4+(353/4096-x)/\ell_4$. Also $w/r_4+1-w=\{j(x)-(1-r_4)c(x)\}/\{r_4j(x)\}$, so
\begin{align}\label{eq:upper-four-matched-slack}
 j(x)\{\mathcal M_4(x,w)-c(x)\}
 &=\frac{t}{r_4}\{j(x)-(1-r_4)c(x)\}-c(x)j(x)\nonumber\\
 &=\frac{393}{21952}-\frac{241}{1372}x
   -\frac{233}{5488}x^2-\frac{82}{343}x^3-2x^4.
\end{align}
Every nonconstant coefficient is negative, so for $x\le353/4096<3/32$ the last expression exceeds its value $127737/179830784>0$ at $x=3/32$. Hence $\mathcal M_n(x,w)\ge c(x)$ for $n\ge4$. For $n>4$, \eqref{eq:upper-coverage-index-comparison} with $m=4$ gives $\mathcal H_n(x)\ge\mathcal H_4(x)$, since $\zeta\ge r_4$ and $\kappa\ge17/81>1/16$.

For $n=2,3$ in this interval, $\zeta<1/6\le r_n$, so $\xi_n=0$. Since $x\ge17/216=x_3(0)$, the contact identity gives $\mathcal H_3(x)\ge c(x)$, and $\chi_2>\chi_3$ gives $\mathcal H_2(x)\ge\mathcal H_3(x)$. Moreover, with $\xi_n=0$,
\[
 \mathcal M_2(x,w)=w+\frac{1-w}4\ge\mathcal M_3(x,w)
 =w+\frac{1-w}6\ge\frac{97}{192}>\frac12\ge c(x).
\]
All four expectations in \eqref{eq:upper-common-contact} are therefore at least $v_4$ for every $n$, so $U(\zeta)\ge v_4$, proving the first piece.

For $\zeta_{43}\le\zeta\le1/6$, we have $\zeta\le r_3$, so $\xi_3=0$ and \eqref{eq:upper-contact-values} gives $U(\zeta)\le v_3=c(17/216)=11375/23328$. Use $x=17/216$ and $w=13/32$. Then
\[
 c(x)=\mathcal H_3(x)=\frac{11375}{23328},\qquad
 (1-w)j(x)-c(x)=\frac{125}{165888}>0.
\]
For $n=2,3$, the matched and coverage bounds of the preceding paragraph apply unchanged. For $n\ge4$, we still have $\zeta\ge1/8$ and $\kappa\ge(17/216)/(3/8)=17/81$, so the derivative bound of the first piece gives $\mathcal M_n(x,w)\ge\mathcal M_4(x,w)$. At fixed $x,w$, the latter increases with $\zeta$ through $\xi_4$, and its slack at the left endpoint is
\[
 \mathcal M_4(x,w)\big|_{\zeta=\zeta_{43}}-c(x)
 =\frac{917591}{256048128}>0.
\]
At that endpoint, \eqref{eq:upper-contact-crossings} gives $x=x_4(\zeta_{43})$, so $\mathcal H_4(x)=c(x)$; this contribution also increases with $\zeta$. Equation~\eqref{eq:upper-coverage-index-comparison} with $m=4$ compares every $n>4$ with $n=4$, which proves the second piece.

For $1/6\le\zeta\le\zeta_{32}$, we have $\zeta\ge r_3$, and \eqref{eq:upper-contact-values} gives $U(\zeta)\le v_3=c(x_3)$, the third formula. Take $x=x_3$ and $w=9/23$. Since $x_3$ decreases in $\zeta$, \eqref{eq:upper-contact-crossings} gives $x\ge1/16$, and since $j$ increases and $c$ decreases,
\[
 c(x)=\mathcal H_3(x)=v_3,\qquad
 (1-w)j(x)\ge\frac{14}{23}j(1/16)=\frac{63}{128}=c(1/16)\ge c(x).
\]
By \eqref{eq:upper-explicit-contacts}, $\kappa=17/[3(11+78\zeta)]$, which is at least $1/5$ for $\zeta\le2/9$, hence on this interval; and $\kappa\zeta$ increases in $\zeta$, so $\kappa\zeta\ge17/432$, its value at $\zeta=1/6$. For $r\le1/6$, the two terms in \eqref{eq:upper-matched-index-derivative} satisfy
\[
 \frac{w\kappa\zeta}{r^2}\ge\frac9{23}\cdot\frac{17}{432}\cdot36
 =\frac{153}{276}>\frac{56}{115}
 \ge(1-w)(1-\kappa).
\]
Thus $\mathcal M_n(x,w)\ge\mathcal M_3(x,w)$ for $n\ge3$, and
\[
 \mathcal M_3(x,w)\ge r_3\Bigl(\frac w{r_3}+1-w\Bigr)=w+\frac{1-w}6
 =\frac{34}{69}>\frac{63}{128}\ge c(x).
\]
Equation~\eqref{eq:upper-coverage-index-comparison} with $m=3$ gives $\mathcal H_n(x)\ge\mathcal H_3(x)$ for $n>3$, because $\kappa\ge1/5>1/12$. For $n=2$, $\zeta<1/4$ gives $\xi_2=0$; then $x\ge1/16=x_2(0)$ gives $\mathcal H_2(x)\ge c(x)$, while $\mathcal M_2(x,w)=w+(1-w)/4=25/46>c(x)$. This proves the third piece.

For $\zeta_{32}\le\zeta\le1/4$, we have $\xi_2=0$, and \eqref{eq:upper-contact-values} gives $U(\zeta)\le v_2=63/128$. Use $x=1/16$ and $w=9/23$. Then
\[
 c(x)=\mathcal H_2(x)=(1-w)j(x)=\frac{63}{128},\qquad
 \mathcal M_2(x,w)=\frac{25}{46}>c(x).
\]
Here $\zeta\ge\zeta_{32}>1/5$, so $h<3/10$ and $\kappa\ge5/24$. For $r\le1/6$,
\[
 \frac{w\kappa\zeta}{r^2}
 \ge\frac9{23}\cdot\frac5{24}\cdot\frac15\cdot36
 =\frac{27}{46}>\frac{133}{276}
 \ge(1-w)(1-\kappa).
\]
It follows that $\mathcal M_n(x,w)\ge\mathcal M_3(x,w)$ for $n\ge3$. The latter increases with $\zeta$ and satisfies
\[
 \mathcal M_3(x,w)\big|_{\zeta=\zeta_{32}}-c(x)
 =\frac{15607}{662400}>0.
\]
At this left endpoint, $x=x_3(\zeta_{32})$ by \eqref{eq:upper-contact-crossings}, so $\mathcal H_3(x)=c(x)$, and $\mathcal H_3(x)$ increases with $\zeta$. Equation~\eqref{eq:upper-coverage-index-comparison} with $m=3$ and $\kappa>1/12$ gives the coverage comparison for every $n>3$, which proves the fourth piece.

Finally, for $1/4\le\zeta<1/2$, \eqref{eq:upper-contact-values} gives $U(\zeta)\le v_2=c(x_2)$, the fifth formula. Choose $x=x_2$ and $w=1/3$. Then
\[
 c(x)=\mathcal H_2(x)=v_2,\qquad
 (1-w)j(x)=\frac23j(x)\ge\frac12\ge c(x),\qquad
 \kappa=\frac1{2+8\zeta}\ge\frac16.
\]
For $r=r_n\le1/4\le\zeta$, multiplication by the positive number $6r(2+8\zeta)$ shows that $\mathcal M_n(x,w)\ge1/2$ is equivalent to
\[
 2\zeta(1-2r+16r^2)-4r+4r^2\ge0.
\]
Since $1-2r+16r^2>0$ and $\zeta\ge1/4$, the left side is at least
\[
 \frac12-5r+12r^2=(3r-1/2)(4r-1).
\]
This is nonnegative for $r\le1/6$ and for $r=1/4$, which covers every $r_n$ with integer $n\ge2$. Hence $\mathcal M_n(x,w)\ge1/2\ge c(x)$. The coverage comparison \eqref{eq:upper-coverage-index-comparison} applies with $m=2$, since $\zeta\ge r_2$ and $\kappa\ge1/6>1/8$. Thus every $V_n(\zeta)\ge v_2$, proving the fifth piece.

At the breakpoints, \eqref{eq:upper-contact-crossings} gives $c(x_4(\zeta_{43}))=c(17/216)=11375/23328$ and $c(x_3(\zeta_{32}))=c(1/16)=63/128$, while \eqref{eq:upper-explicit-contacts} gives $x_3(1/6)=17/216$ and $x_2(1/4)=1/16$. Hence all neighboring pieces agree, and the last rational piece tends to $1/2$ as $\zeta\uparrow1/2$.

For the limit at zero, let $a>0$. Corollary~\ref{cor:upper-endpoint} supplies a finite $n$ with $V_n(0)<\beta_*+a$. For $0\le\zeta\le r_n$, monotonicity of $U$ and $\xi_n=0$ give
\[
 \beta_*=U(0)\le U(\zeta)\le V_n(\zeta)=V_n(0)<\beta_*+a.
\]
This proves $U(\zeta)\to\beta_*$ as $\zeta\downarrow0$.
\end{proof}

\bibliography{references}

@article{aggarwal2026,
  author={V. Aggarwal and Y. Lu},
  title={Online Non-Monotone {DR}-Submodular Maximization Matching the Offline 0.401 Factor},
  journal={arXiv preprint arXiv:2609.02145},
  year={2026}
}

@article{feldman2026stronger,
  title={Stronger Hardness for Submodular Maximization Subject to a Matroid Constraint},
  author={Feldman, Moran and Kuhnle, Alan},
  journal={arXiv preprint arXiv:2610.07387},
  year={2026}
}

@inproceedings{lu2026,
  author={Y. Lu and H. Jadav and M. Pedramfar and R. Singh and V. Aggarwal},
  title={Upper-Linearizability of Online Non-Monotone {DR}-Submodular Maximization over Down-Closed Convex Sets},
  booktitle={Proceedings of the International Conference on Machine Learning},
  year={2026},
  note={Full version: arXiv:2602.20578v2}
}

@inproceedings{pedramfar2024linear,
  author={M. Pedramfar and V. Aggarwal},
  title={From Linear to Linearizable Optimization: A Novel Framework with Applications to Stationary and Non-stationary {DR}-submodular Optimization},
  booktitle={Advances in Neural Information Processing Systems},
  year={2024},
  note={Extended version: arXiv:2405.00065v4}
}

@inproceedings{buchbinder2024,
  author={N. Buchbinder and M. Feldman},
  title={Constrained Submodular Maximization via New Bounds for {DR}-Submodular Functions},
  booktitle={Proceedings of the 56th Annual ACM Symposium on Theory of Computing},
  year={2024}
}

@inproceedings{bian2017guaranteed,
  author={Bian, A. A. and Mirzasoleiman, B. and Buhmann, J. M. and Krause, A.},
  title={Guaranteed Non-convex Optimization: Submodular Maximization over Continuous Domains},
  booktitle={Proceedings of the 20th International Conference on Artificial Intelligence and Statistics},
  year={2017}
}

@inproceedings{bian2017continuous,
  author={Bian, A. and Levy, K. Y. and Krause, A. and Buhmann, J. M.},
  title={Continuous {DR}-Submodular Maximization: Structure and Algorithms},
  booktitle={Advances in Neural Information Processing Systems},
  year={2017}
}

@inproceedings{bian2019,
  author={Y. Bian and J. Buhmann and A. Krause},
  title={Optimal Continuous {DR}-Submodular Maximization and Applications to Provable Mean Field Inference},
  booktitle={Proceedings of the 36th International Conference on Machine Learning},
  year={2019}
}

@inproceedings{feldman2011,
  author={M. Feldman and J. Naor and R. Schwartz},
  title={A Unified Continuous Greedy Algorithm for Submodular Maximization},
  booktitle={Proceedings of the 52nd Annual IEEE Symposium on Foundations of Computer Science},
  year={2011}
}

@inproceedings{ene2016,
  author={A. Ene and H. L. Nguyen},
  title={Constrained Submodular Maximization: Beyond 1/e},
  booktitle={Proceedings of the 57th Annual IEEE Symposium on Foundations of Computer Science},
  year={2016}
}

@article{buchbinder2019,
  author={N. Buchbinder and M. Feldman},
  title={Constrained Submodular Maximization via a Nonsymmetric Technique},
  journal={Mathematics of Operations Research},
  year={2019}
}

@inproceedings{chen2018,
  author={L. Chen and H. Hassani and A. Karbasi},
  title={Online Continuous Submodular Maximization},
  booktitle={Proceedings of the 21st International Conference on Artificial Intelligence and Statistics},
  year={2018}
}

@inproceedings{thang2021,
  author={N. K. Thang and A. Srivastav},
  title={Online Non-Monotone {DR}-Submodular Maximization},
  booktitle={Proceedings of the AAAI Conference on Artificial Intelligence},
  year={2021}
}

@inproceedings{zhang2023,
  author={Q. Zhang and Z. Deng and Z. Chen and K. Zhou and H. Hu and Y. Yang},
  title={Online Learning for Non-Monotone {DR}-Submodular Maximization: From Full Information to Bandit Feedback},
  booktitle={Proceedings of the 26th International Conference on Artificial Intelligence and Statistics},
  year={2023}
}

@inproceedings{pedramfar2024unified,
  author={M. Pedramfar and Y. Y. Nadew and C. J. Quinn and V. Aggarwal},
  title={Unified Projection-Free Algorithms for Adversarial {DR}-Submodular Optimization},
  booktitle={The Twelfth International Conference on Learning Representations},
  year={2024}
}

@inproceedings{roughgarden2018,
  author={T. Roughgarden and J. R. Wang},
  title={An Optimal Learning Algorithm for Online Unconstrained Submodular Maximization},
  booktitle={Proceedings of the 31st Conference on Learning Theory},
  year={2018}
}

@inproceedings{zinkevich2003,
  author={M. Zinkevich},
  title={Online Convex Programming and Generalized Infinitesimal Gradient Ascent},
  booktitle={Proceedings of the 20th International Conference on Machine Learning},
  year={2003}
}

@inproceedings{oveis2011,
  author={Oveis Gharan, S. and Vondr{\'a}k, J.},
  title={Submodular Maximization by Simulated Annealing},
  booktitle={Proceedings of the 22nd Annual ACM--SIAM Symposium on Discrete Algorithms},
  year={2011}
}

@article{vondrak2013,
  author={Vondr{\'a}k, J.},
  title={Symmetry and Approximability of Submodular Maximization Problems},
  journal={SIAM Journal on Computing},
  year={2013}
}

@article{kolmogorov2004,
  author={V. Kolmogorov and R. Zabih},
  title={What Energy Functions Can Be Minimized via Graph Cuts?},
  journal={IEEE Transactions on Pattern Analysis and Machine Intelligence},
  year={2004}
}

@article{billionnet1985,
  author = {A. Billionnet and M. Minoux},
  title = {Maximizing a Supermodular Pseudo-Boolean Function: A Polynomial Algorithm for Supermodular Cubic Functions},
  journal = {Discrete Applied Mathematics},
  year = {1985}
}

@inproceedings{chekuri2010swap,
 author={C. Chekuri and J. Vondr{\'a}k and R. Zenklusen},
 title={Dependent Randomized Rounding via Exchange Properties of Combinatorial Structures},
 booktitle={Proceedings of the 51st Annual IEEE Symposium on Foundations of Computer Science},
 year={2010}
}

@article{kulik2013knapsack,
 author={A. Kulik and H. Shachnai and T. Tamir},
 title={Approximations for Monotone and Nonmonotone Submodular Maximization with Knapsack Constraints},
 journal={Mathematics of Operations Research},
 year={2013}
}

@article{hoeffding1963,
 author={W. Hoeffding},
 title={Probability Inequalities for Sums of Bounded Random Variables},
 journal={Journal of the American Statistical Association},
 year={1963}
}

@article{tardos1986,
 author={E. Tardos},
 title={A Strongly Polynomial Algorithm to Solve Combinatorial Linear Programs},
 journal={Operations Research},
 year={1986}
}

@inproceedings{mualem2023,
 author={L. Mualem and M. Feldman},
 title={Resolving the Approximability of Offline and Online Non-monotone {DR}-Submodular Maximization over General Convex Sets},
 booktitle={Proceedings of the 26th International Conference on Artificial Intelligence and Statistics},
 year={2023}
}

@article{aggarwal2026geometryv1,
 author={V. Aggarwal},
 title={Geometry-Dependent Bounds for Online Non-Monotone {DR}-Submodular Maximization},
 journal={arXiv preprint arXiv:2610.00545v1},
 year={2026}
}

@article{du2026roofs,
 author={D. Du},
 title={Concave Roofs and Downward Recovery for {DR}-Submodular Polynomial Maximization},
 journal={arXiv preprint arXiv:2610.04027v1},
 year={2026}
}

@article{qi2022,
  author={B. Qi},
  title={On Maximizing Sums of Non-monotone Submodular and Linear Functions},
  journal={arXiv preprint arXiv:2205.15874},
  year={2022}
}

@inproceedings{pedramfar2026weakly,
  author={M. Pedramfar and V. Aggarwal},
  title={$\gamma$-Weakly $\theta$-Up-Concavity: A Unified Framework for Non-Convex Optimization Beyond {DR}-Submodular and {OSS} Functions},
  booktitle={Advances in Neural Information Processing Systems},
  year={2026},
  note={Available at arXiv:2602.13506}
}
\end{document}